%% file: MatrixIRLS_stats_ConvAnalysis.tex
\documentclass[12pt]{article}

\usepackage[T1]{fontenc}
\usepackage[english]{babel}
\usepackage{amsmath,amsfonts,amssymb,amsthm}
\usepackage{mathtools}
\usepackage{graphicx,xcolor}
\usepackage[round,authoryear]{natbib}
\usepackage{bm}
\usepackage{booktabs}
\usepackage{array,tabularx}
\usepackage{url}
\usepackage{nicefrac}
\usepackage[expansion=false]{microtype}
\usepackage{algorithm,algorithmic}
\usepackage{dsfont}
\usepackage{amscd}
\usepackage{multirow}
\usepackage{enumerate}
\usepackage{float}
\usepackage{caption}
\usepackage{subcaption}
\usepackage{enumitem}

\definecolor{citeblue}{RGB}{0,45,154}
\usepackage[colorlinks=true, citecolor=citeblue, linkcolor=citeblue, urlcolor=citeblue]{hyperref}
\usepackage{cleveref}

\usepackage{tikz}
\usepackage{pgfplots}
\pgfplotsset{compat=newest}
\usetikzlibrary{plotmarks}
\usetikzlibrary{arrows.meta}
\usepgfplotslibrary{patchplots}

\newcommand{\BlackBox}{\rule{1.5ex}{1.5ex}}
\renewenvironment{proof}[1][Proof]{\par\noindent{\bf #1\ }}{\hfill\BlackBox\\[2mm]}

\newtheorem{theorem}{Theorem}

\newenvironment{keywords}
{\par\noindent{\footnotesize\textbf{Keywords:}\ }}
{\par}

\input{MatrixIRLS_preamble.tex}

\allowdisplaybreaks[4]

\title{\texorpdfstring{Tight Majorizations and Convergence Rates\\ of Nuclear Norm Minimization IRLS}{Tight Majorizations and Convergence Rates of Nuclear Norm Minimization IRLS}}
\author{%
Christian K\"ummerle\thanks{School of Data, Mathematical, and Statistical Sciences, Department of Computer Science \& Institute of Artificial Intelligence, University of Central Florida, Oviedo, FL 32816, USA (\href{mailto:kuemmerle@ucf.edu}{\texttt{kuemmerle@ucf.edu}}).}
\and Tomas Masak\thanks{Institute for Statistics and Mathematics, Wirtschaftsuniversit\"at Wien, 1020 Vienna, Austria (\href{mailto:tomas.masak@wu.ac.at}{\texttt{tomas.masak@wu.ac.at}}).}
\and Dominik St\"oger\thanks{Department of Mathematics, KU Eichst\"att-Ingolstadt, 85049 Ingolstadt, Germany (\href{mailto:dominik.stoeger@ku.de}{\texttt{dominik.stoeger@ku.de}}).}
}

\begin{document}
\renewcommand*{\thefootnote}{\fnsymbol{footnote}}
\maketitle
\renewcommand*{\thefootnote}{\arabic{footnote}}
\setcounter{footnote}{0}

\input{abstract.tex}
\newpage
\tableofcontents

\input{introduction.tex}
\input{related_work.tex}
\input{algorithm.tex}
\input{main_results.tex}
\input{numerical_experiments.tex}
\input{conclusion.tex}

\input{acknowledgements.tex}

\appendix

\input{contribution_proofs.tex}
\input{majorization_property.tex}
\input{proofs_power_means.tex}
\input{proofs_globalconvergencep1.tex}
\input{proofs_fastlocallinearp1.tex}

\input{literature_proofs.tex}

\bibliography{LinConv_MatrixIRLS}

\end{document}

%% file: MatrixIRLS_preamble.tex
\xdefinecolor{citeblue}{RGB}{0,45,154}
\xdefinecolor{citered}{RGB}{190, 69, 32}
\xdefinecolor{bronze}{RGB}{190,117,45}

\pgfplotsset{plot coordinates/math parser=false}
\newlength\figureheight
\newlength\figurewidth

\DeclareMathOperator{\trace}{tr}

\DeclareMathOperator{\Id}{Id}

\DeclareMathOperator*{\argmin}{arg\,min}

\DeclareMathOperator{\rank}{rank}

\DeclareMathOperator{\N}{\mathbb{N}}
\DeclareMathOperator{\R}{\mathbb{R}}

\DeclareMathOperator{\diag}{diag}

\DeclareRobustCommand{\MatrixIRLSHeading}{%
  \texorpdfstring
    {\textnormal{\texttt{MatrixIRLS}}}
    {MatrixIRLS}%
}

\newcommand{\X}{\mathbf{X}}

\newcommand{\Up}{\mathbf{U}_{\perp}}
\newcommand{\Vp}{\mathbf{V}_{\perp}}

\renewcommand\Re{\operatorname{\mathfrak{Re}}}

\newcommand{\besterrNuc}[2]{\beta_{#2}(#1)_{*}}

\DeclareMathOperator{\dg}{dg}

\newcommand{\onenorm}[1]{\Vert #1 \Vert_1}
\newcommand{\twonorm}[1]{\Vert #1 \Vert_2}

\newcommand\f[1]{\mathbf{#1}} 
\newcommand\hk{^{(k)}}

\newcommand{\xzero}{\f{x}_{\star}}
\newcommand{\xzeromin}{\xzero^{\operatorname{min}}}

\newcommand{\Xzero}{\f{X}_\star}
\newcommand{\innerproduct}[1]{\langle #1 \rangle_{F}}
\newcommand{\NN}{\f{N}}
\newcommand{\XX}{\f{X}}
\newcommand{\Xk}[1]{\XX^{#1}}
\newcommand{\UU}{\f{U}}
\newcommand{\UUstar}{\f{U}_{\star}}
\newcommand{\VV}{\f{V}}
\newcommand{\VVstar}{\f{V}_{\star}}

\newcommand{\SSigma}{\f{\Sigma}}
\newcommand{\vertiii}[1]{{\left\vert\kern-0.25ex\left\vert\kern-0.25ex\left\vert #1 
    \right\vert\kern-0.25ex\right\vert\kern-0.25ex\right\vert}}

\newcommand\Rdd{\R^{d_1 \times d_2}}

\newcommand{\NNk}{\f{N}^{(k)}}
\newcommand{\XXk}{\f{X}^{(k)}}
\newcommand{\XXl}{\f{X}^{(\ell)}}
\newcommand{\XXkplus}{\f{X}^{(k+1)}}

\newcommand{\maxD}{D}
\newcommand{\mind}{d}

\newcommand{\specnorm}[1]{\left\lVert #1 \right\rVert}
\newcommand{\nucnorm}[1]{\left\lVert #1 \right\rVert_{\ast}}
\newcommand{\fronorm}[1]{\left\lVert #1 \right\rVert_{F}}

\newcommand{\overleq}[1]{\overset{#1}{\le}}
\newcommand{\overgeq}[1]{\overset{#1}{\ge}}
\newcommand{\overeq}[1]{\overset{#1}{=}}

\newcommand{\bracing}[2]{\underset{{#1}}{\underbrace{#2}}  }
\newcommand{\epsk}{\varepsilon_k}
\newcommand{\epsl}{\varepsilon_{\ell}}
\newcommand{\W}[2]{W_{#1,#2}}

\newcommand{\DDelta}{\f{\Delta}}

\newcommand{\ZZ}{\f{Z}}

\newcommand{\LL}{\f{L}}
\newcommand{\uu}{\f{u}}
\newcommand{\vv}{\f{v}}
\newcommand{\PUU}[1]{\f{P}_{\UU, #1}}
\newcommand{\PVV}[1]{\f{P}_{\VV, #1}}
\newcommand{\WW}{\f{W}}

\newcommand{\llambdaa}{\boldsymbol{\lambda}}
\newcommand{\PP}{\f{P}}
\newcommand{\AAA}{\f{A}}

\newcommand{\wwdiag}{\f{w}_{\text{diag}}}
\newcommand{\wwoff}{\f{w}_{\text{off}}}
\newcommand{\wwtildediag}{\widetilde{\f{w}}_{\text{diag}}}
\newcommand{\wwtildeoff}{\widetilde{\f{w}}_{\text{off}}}
\newcommand{\aadiag}{\f{a}_{\text{diag}}}
\newcommand{\aaoff}{\f{a}_{\text{off}}}
\newcommand{\zzdiag}{\f{z}_{\text{diag}}}
\newcommand{\zzoff}{\f{z}_{\text{off}}}

\newcommand{\ssigma}{\boldsymbol{\sigma}}
\newcommand{\mmuu}{\boldsymbol{\mu}}

\newtheorem{lemma}{Lemma}
\newtheorem{proposition}{Proposition}
\newtheorem{definition}{Definition}
\newtheorem{corollary}{Corollary}
\newtheorem{remark}{Remark}
\newtheorem{example}{Example}
\crefname{theorem}{theorem}{theorems}
\Crefname{theorem}{Theorem}{Theorems}
\crefname{lemma}{lemma}{lemmas}
\Crefname{lemma}{Lemma}{Lemmas}
\crefname{proposition}{proposition}{propositions}
\Crefname{proposition}{Proposition}{Propositions}
\crefname{definition}{definition}{definitions}
\Crefname{definition}{Definition}{Definitions}
\crefname{corollary}{corollary}{corollaries}
\Crefname{corollary}{Corollary}{Corollaries}
\crefname{remark}{remark}{remarks}
\Crefname{remark}{Remark}{Remarks}
\crefname{example}{example}{examples}
\Crefname{example}{Example}{Examples}
\crefname{appendix}{appendix}{appendices}
\Crefname{appendix}{Appendix}{Appendices}
\makeatletter
\g@addto@macro\appendix{%
  \crefalias{section}{appendix}%
  \crefalias{subsection}{appendix}%
  \crefalias{subsubsection}{appendix}%
}
\makeatother

\crefformat{equation}{\textup{#2(#1)#3}}
\crefrangeformat{equation}{\textup{#3(#1)#4--#5(#2)#6}}
\crefmultiformat{equation}{\textup{#2(#1)#3}}{ and \textup{#2(#1)#3}}
{, \textup{#2(#1)#3}}{, and \textup{#2(#1)#3}}
\crefrangemultiformat{equation}{\textup{#3(#1)#4--#5(#2)#6}}%
{ and \textup{#3(#1)#4--#5(#2)#6}}{, \textup{#3(#1)#4--#5(#2)#6}}{, and \textup{#3(#1)#4--#5(#2)#6}}

\Crefformat{equation}{#2Equation~\textup{(#1)}#3}
\Crefrangeformat{equation}{Equations~\textup{#3(#1)#4--#5(#2)#6}}
\Crefmultiformat{equation}{Equations~\textup{#2(#1)#3}}{ and \textup{#2(#1)#3}}
{, \textup{#2(#1)#3}}{, and \textup{#2(#1)#3}}
\Crefrangemultiformat{equation}{Equations~\textup{#3(#1)#4--#5(#2)#6}}%
{ and \textup{#3(#1)#4--#5(#2)#6}}{, \textup{#3(#1)#4--#5(#2)#6}}{, and \textup{#3(#1)#4--#5(#2)#6}}

%% file: abstract.tex
\begin{abstract}
    Iteratively reweighted least squares (IRLS) methods constitute a natural approach to nuclear norm minimization, but their convergence rates and the role of the weight operator have remained poorly understood.
    This paper establishes sharp convergence rates for IRLS methods for constrained nuclear norm minimization in low-rank recovery.
    A central ingredient is a new majorization analysis for the smoothed nuclear norm: we prove that the harmonic-mean weight operator defines a valid global quadratic majorizer. Furthermore, we show that this weight operator is optimal within the family of power-mean weights, clarifying why it improves over classical one-sided reweighting schemes that use only row- or column-space information. Under a Schatten-1 null space property, we prove global linear convergence of IRLS algorithms using a variety of weight operators, including the harmonic-mean weights. For IRLS with harmonic-mean weights, we prove a dimension-independent, locally linear convergence rate. We provide a counterexample showing that this dimension-independent local rate cannot in general be obtained for IRLS algorithms using one-sided weight operators, which predominate in the literature. Numerical experiments corroborate the theoretical results and illustrate the practical advantage of harmonic-mean reweighting across square, rectangular, and adversarially initialized recovery problems.
\end{abstract}

\begin{keywords}
iteratively reweighted least squares, low-rank matrix recovery, majorization-minimization, harmonic-mean weight operator, global and local convergence.
\end{keywords}

%% file: introduction.tex
\section{Introduction}\label{sec:introduction}
Optimization methods that treat a matrix variable through its \emph{spectrum} rather than through its entries have moved to the center of large-scale machine learning. The \texttt{Muon} optimizer \citep{JordanJinBoza-Muon2024} replaces the raw gradient (or momentum) matrix $\f{G} = \UU \diag(\ssigma) \VV^\top$ by its \emph{orthogonalized} counterpart $\operatorname{msign}(\f{G}) = \UU_{r} \VV_{r}^\top$, computed in practice by a few Newton--Schulz iterations \citep{Amsel-PolarExpress2025}, and has been adopted for the pre-training of large language models at the trillion-parameter scale \citep{Liu-Moonlight2025,KimiK2-2025,DeepSeekV4-2026}. The fact that makes steps involving $\operatorname{msign}(\f{G})$ principled using spectral geometry is a duality statement: since $\operatorname{msign}(\f{G})$ attains the maximum in $\nucnorm{\f{G}} = \max_{\specnorm{\f{\Delta}} \leq 1} \innerproduct{\f{G}, \f{\Delta}}$, so a \texttt{Muon} step is exactly steepest descent with respect to the spectral norm, with the nuclear norm $\nucnorm{\cdot}$ as the dual norm in which progress is measured \citep{Carlson-AISTATS2015,BernsteinNewhouse-ModularDuality2024,ChenLiLiu-MuonSpectral2025,Pethick-ICML2025}; $\f{G} \mapsto  \operatorname{msign}(\f{G})$ is one of the subgradients of the nuclear norm.

Beyond its role as a dual norm in this context, the nuclear norm has long served as the canonical convex surrogate for matrix rank \citep{Fazel02,rechtfazel_2011}. This second role is central to low-rank recovery: from underdetermined linear measurements $\f{y}=\mathcal{A}(\Xzero)$, where $\mathcal{A}:\Rdd\to\R^m$ and $m\ll d_1d_2$, one seeks to reconstruct a matrix $\Xzero$ of rank $r_\star\ll\min(d_1,d_2)$, or one that is well approximated by such a matrix. Problems for which this type of modeling arises have been prominently studied in signal processing, data science, and machine learning. Instances include phase retrieval problems \citep{Candes13}, the design of recommender systems \citep{CandesRechtMC_09,koren_bell_volinsky,chi2019nonconvex}, blind demixing \citep{ling2017_demixing,jung2018_demixing} and the quantum state tomography \citep{Gross-Quantum2010,tariq2024efficient} problem. For these problems, the benchmark convex estimator is
\begin{equation}\label{eq:nucnorm:min}
    \underset{\XX \in \Rdd}{\min}
    \nucnorm{\XX}
    \qquad\text{subject to }
    \qquad\mathcal{A} (\XX)= \f{y}. 
\end{equation}
Under standard random sensing models, \cref{eq:nucnorm:min} succeeds at near-intrinsic sample sizes. For example, $m\gtrsim r_\star(d_1+d_2)$ Gaussian rank-one measurements suffice, optimally up to constant factors \citep{ROP_CaiZhang2015,NSP_kueng,NSP_Kabanava}, and precise phase transitions have been studied in \citet{romanov2018near}. Thus, the statistical rationale for nuclear norm minimization is mature. However, the computational aspects of this problem remain challenging at scale: \cref{eq:nucnorm:min} admits an exact semidefinite programming formulation, which allows for polynomial time guarantees using generic SDP algorithms \citep{NesterovNemirovskii1994}. The non-smoothness and spectral structure of $\nucnorm{\XX}$ pose challenges for specialized solvers, which may suffer from a slow (e.g., sublinear) convergence rate and the need for repeated full singular value decompositions. 

A structurally suitable algorithmic paradigm for solving \cref{eq:nucnorm:min} is based on a framework going back to the 1930s \citep{Weiszfeld37,Beck-JOTA2015} known as iteratively reweighted least squares (IRLS), which iteratively solves a sequence of weighted least squares problems after implicitly smoothing and majorizing the non-smooth objective by a sequence of quadratic model functions.
IRLS has been widely used for separable non-smooth optimization in computer vision, robust statistics, or compressed sensing, just to name a few \citep{ochs_dosovitskiy_brox_pock,holland_welsch,Daubechies-CPAM2010}.
IRLS is also known as half-quadratic minimization \citep{NikolavaNg05} and is related to the so-called \emph{$\eta$-trick} \citep{bach2019eta} in machine learning.

While IRLS is relatively well understood for separable objectives, both its analysis and its design for spectral objectives such as \cref{eq:nucnorm:min} remain much less settled. IRLS methods for spectral objectives such as \cref{eq:nucnorm:min} use updates of the form
\begin{align} \label{eq:IRLS:matrix:formulation}
    \f{X}^{(k)} &:= 
    \argmin_{ \f{Z} \in \R^{d_1 \times d_2 }}  \ 
    \innerproduct{ \f{Z}, W^{(k)}(\f{Z}) } \; \text{ subject to } \; \mathcal{A}(\f{Z}) = \f{y},
\end{align}
where the positive-definite weight operator $W^{(k)}$ is constructed from spectral information of $\f{X}^{(k-1)}$. The main design choice is precisely how to define this operator.
Following the initial works of \citet{Fornasier11,mohan_fazel}, the majority of papers on IRLS methods for spectral objective minimization \citep{CaiLi2017,Radhakrishnan2025linear,Zhu2025iteratively} use \emph{one-sided} reweighting, where $W^{(k)}$ encodes only column space information of $\Xk{(k-1)}$ and corresponds to a multiplication by a matrix from the left (or the analogue for row space information and multiplication by a matrix from the right), whereas another line of research \citep{Kummerle-JMLR2018,KummerleMayrinkVerdun-ICML2021,Kraemer-2025OneSided} considers different weight operators that use information from both the row and column spaces of $\Xk{(k-1)}$. 
This raises the question of whether different weight operators merely reflect different trade-offs, or whether there is a principled optimal choice for the spectral minimization problem at hand.

The understanding of IRLS methods with iterates \cref{eq:IRLS:matrix:formulation} for nuclear norm minimization is also limited by a lack of available convergence guarantees. The best available results \citep{mohan_fazel,Fornasier11} only imply (global) convergence to a ground truth low-rank matrix, in case where the nuclear norm minimization problem has a unique low-rank solution,
and subsequence convergence to limiting matrices satisfying error bounds in the case of noisy measurements and/or only appropriately low-rank matrices to be recovered \citep{CaiLi2017}. This is somewhat at odds with the mature trajectory analysis of other low-rank matrix recovery methodologies for which rigorous convergence guarantees involving \emph{convergence rates} are available \citep{tu2016low,Chen2018harnessing}.

This paper aims to provide, on the one hand, principled answers to the IRLS design question for \cref{eq:nucnorm:min} and, on the other hand, the first convergence rate analysis of IRLS methods for this problem. These two goals are intertwined. In \Cref{fig:introduction}, we illustrate, for a representative rank-two matrix recovery problem involving rank-one measurements, a typical convergence trajectory of IRLS iterates $\Xk{(k)}$ following the updates of \cref{eq:IRLS:matrix:formulation} simply depending on different weight operator choices. We observe that the objective gap decreases linearly for different choices, but with a notably different linear rate, in which a so-called harmonic-mean weight operator $W^{(k)}$ leads to a notably improved linear rate.

\begin{figure}[h]
     \centering
     \includegraphics[width=0.82\textwidth]{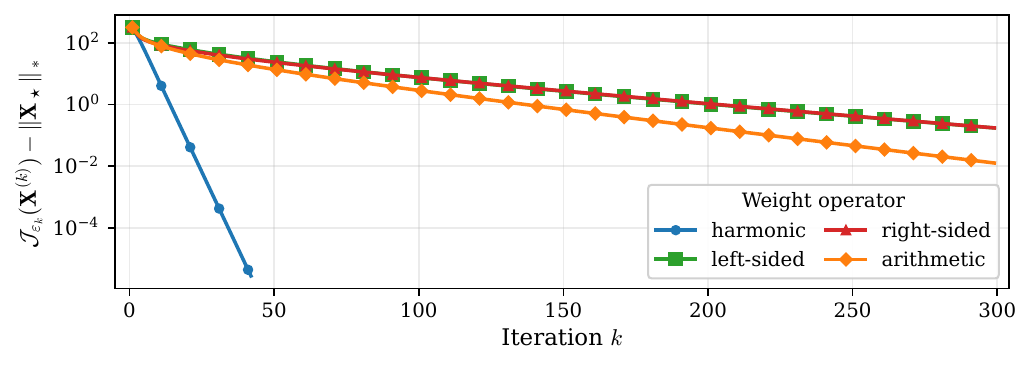}
     \caption{Smoothed nuclear norm gap $\mathcal{J}_{\varepsilon_k}(\XXk)-\nucnorm{\Xzero}$ of IRLS iterates for harmonic, left-sided, right-sided, and arithmetic mean weight operators. 
     	The rank-$2$ ground truth $\Xzero\in\R^{140\times 140}$ and the $m=2520$ rank-one measurements are generated using independent $\mathcal{N}(0,1)$ entries.
     	}
     \label{fig:introduction}
 \end{figure}
In \Cref{fig:introduction}, the objective gap is measured with respect to a family of smoothed nuclear norm objectives $\mathcal{J}_{\varepsilon}: \Rdd \to \R$ given by
 \begin{equation} \label{eq:smoothedell1:objective}
    \mathcal{J}_{\varepsilon}(\XX) := \sum_{i=1}^d j_{\varepsilon}(\sigma_i(\XX)), \quad \text{ with } \quad     j_{\varepsilon}(\sigma) := \begin{cases}
        |\sigma|, & \text{ if } |\sigma| > \varepsilon, \\
        \frac{\sigma^2}{2 \varepsilon}+ \frac{\varepsilon}{2}, & \text{ if } |\sigma| \leq \varepsilon,
    \end{cases}
\end{equation}
which are implicitly minimized by the IRLS updates \cref{eq:IRLS:matrix:formulation}---specifically, by virtue of \cref{eq:IRLS:matrix:formulation} minimizing a quadratic model $Q_{\varepsilon_k}(\,\cdot\mid \Xk{(k)})$ of $\mathcal{J}_{\varepsilon_k}$ about the current iterate, together with a smoothing-parameter update for $\varepsilon$.
An accurate convergence analysis of the resulting IRLS iterates requires that the quadratic model $Q_{\varepsilon_k}(\,\cdot\mid \Xk{(k)})$ \emph{majorizes pointwise} the smoothed objective $\mathcal{J}_{\varepsilon_k}$, which has been only known for models defined by one-sided weight operators \citep{Fornasier11,mohan_fazel}. For the harmonic-mean IRLS method, however, no majorization result has been established, impeding its convergence analysis.

In this paper, we show that the harmonic-mean weight operator not only induces a valid majorizing quadratic model of $\mathcal{J}_{\varepsilon}$, providing the foundation for global convergence guarantees, but also that it is optimal within a family of power-mean weight operators. Building on this, we provide the first convergence analysis for IRLS methods for nuclear norm minimization involving convergence rates, leading also to a sharper local linear rate for the harmonic-mean IRLS method than for classical IRLS methods. We summarize our contributions as follows:

\begin{itemize}[leftmargin=1.0em]
\item
\textbf{The Harmonic-mean Quadratic Model is a Valid Majorizer.}
In \Cref{thm:majorization}, we prove that the quadratic model induced by the harmonic mean weight operator globally majorizes the smoothed nuclear norm $\mathcal{J}_{\varepsilon}$. The proof develops a new spectral comparison argument based on a Sylvester-equation characterization and iterative pinching, thereby overcoming the noncommutativity that obstructs standard separability-based arguments.

\item
\textbf{Harmonic Mean is the Tightest Power Mean.}
Taking $q$-power means of the left- and right-sided weights
yields a Loewner-ordered family of weight operators, and hence an ordered family of quadratic models.
In \Cref{thm:power_means_majorize}, we show that $q\geq-1$ is precisely the
range that guarantees global majorization of
$\mathcal{J}_{\varepsilon}$. Consequently, the harmonic-mean choice
$q=-1$ induces the smallest, and thus tightest, quadratic
majorizer within this family.

\item
\textbf{Global and Local Convergence with Linear Rates.}
\Cref{mainresult:lowrank} establishes global linear convergence of IRLS for every admissible one-sided or power-mean weight operator and any positive-definite initialization of the weights, provided that $\mathcal{A}$
satisfies the null space property (NSP) of order $r$.
\Cref{mainresult:approximatelowrank} extends this linear decay to
approximately low-rank ground truths, up to their best rank-$r$
approximation error. The global contraction factors retain a dependence
on the matrix dimension. By contrast, once the iterates enter a
specified neighborhood of the ground truth,
\Cref{thm:locallinearp1} establishes a local linear rate for
harmonic-mean IRLS whose contraction factor is independent of
$d_1$ and $d_2$. Complementing this positive result,
\Cref{thm:counterexample:leftsided:weight:operator} constructs local
instances satisfying the NSP for which a one-sided IRLS step reduces the
relative error by at most order $r/d$, ruling out an analogous
dimension-independent local rate in general. The separation theoretically justifies the performance gap between the IRLS variants observed in \Cref{fig:introduction}. \Cref{tab:comparison:theory} places these guarantees in the context of the existing IRLS convergence theory, for both sparse vector and low-rank matrix recovery.
\end{itemize}

\begin{table}[!t]
\caption{IRLS convergence results at the convex endpoint $p=1$: the sparse
$\ell_1$ benchmark and the direct nuclear-norm comparison. The final row
records the harmonic-mean specialization of this paper.}
\label{tab:comparison:theory}
\centering
\footnotesize
\setlength{\tabcolsep}{4pt}
\renewcommand{\arraystretch}{1.12}
\begin{tabularx}{\linewidth}{@{}
>{\raggedright\arraybackslash}p{.245\linewidth}
>{\raggedright\arraybackslash}p{.12\linewidth}
>{\raggedright\arraybackslash}p{.25\linewidth}
>{\raggedright\arraybackslash}X@{}}
\toprule
Reference & Weight & Global guarantee & Local / distinguishing result \\
\midrule
\multicolumn{4}{@{}l}{\textcolor{citeblue}{\bfseries
Sparse recovery ($\ell_1$): benchmark}} \\
\addlinespace[1pt]
\citet{Daubechies-CPAM2010}
& entrywise
& conv. under NSP; no rate
& local linear rate; basin $O(\xzeromin)$ \\

\citet{Kummerle-NeurIPS2021}
& entrywise
& linear rate, $1-\frac{C}{\eta_1 d}$
& arbitrary initialization \\

\midrule
\multicolumn{4}{@{}l}{\textcolor{citeblue}{\bfseries
Low-rank recovery (nuclear norm): direct comparison}} \\
\addlinespace[1pt]
\citet{Fornasier11,mohan_fazel}
& one-sided
& conv. under NSP; no rate
& no local rate established \\

\citet{CaiLi2017}
& one-sided
& conv. under RIP; no rate
& approx. rank, noise stability \\

\citet{Kummerle-JMLR2018}
& harm. mean
& n/a
& superlinear of order $2-p$, does not apply to nuclear norm ($p=1$) \\

\addlinespace[2pt]
\textcolor{citeblue}{\textbf{This paper}}
& harm. mean
& \textbf{linear rate, $1-\frac{C}{\eta_1 d}$}
& \textbf{local linear rate $1-c_{\eta_r}$}; basin $O(\sigma_r(\Xzero)/\sqrt{d})$; \textbf{tight majorizer} \\
\bottomrule
\end{tabularx}

\smallskip
\parbox{\linewidth}{\scriptsize
For visual comparison, $d$ denotes the ambient vector dimension in the sparse
block and $d=\min(d_1,d_2)$ in the low-rank block; $\eta_1$ is
the respective order-one NSP constant.}
\end{table}

A preliminary version of the global linear convergence result for nuclear norm IRLS was presented at the 2024 IEEE 13th Sensor Array and Multichannel Signal
Processing Workshop (SAM) and appeared in its proceedings \citep{KuemmerleStoeger-2024SAM}. This four-page workshop paper did not contain proofs nor the majorization, optimality, or fast local-rate results developed here.

\paragraph*{Outline.}
In \Cref{section:algorithm}, we present relevant notions of weight operators and outline the IRLS methodology for nuclear norm minimization, as well as the majorization and optimality results for the harmonic-mean weight operator. \Cref{sec:main:results} contains the convergence results. The numerical experiments of \Cref{sec:simulations} corroborate the predicted advantage of harmonic-mean reweighting across square, rectangular, and adversarially initialized recovery problems (showing in particular that this advantage is not merely an artifact), and investigate the practical trade-offs between different smoothing-parameter schedules. \Cref{sec:conclusion} contains a conclusion and discussion of future research directions. Finally, all proofs of presented main results are contained in \Cref{sec:contribution_proofs}, with some complementary proofs implicitly known in the literature being provided in \Cref{sec:appendix:complementary}.

\paragraph*{Notation.}
Matrices are denoted by bold uppercase letters (such as $\XX \in \R^{d_1 \times d_2}$)
and vectors are denoted by bold lowercase letters ($\f{x} \in \R^d$).
The singular values of a matrix $\XX$ are denoted by $\sigma_1(\XX) \geq \sigma_2(\XX) \geq \ldots \geq 
\sigma_{\min(d_1,d_2)}(\XX) \geq 0$. We use the convention of $\sigma_i = 0$ for $i > \min(d_1, d_2)$.
The Frobenius inner product of two matrices $\XX, \f{Y} \in \R^{d_1 \times d_2}$ is denoted by 
$\innerproduct{\XX, \f{Y}} = \trace(\XX^{\top} \f{Y})$. The spectral norm of a matrix $\XX$ is denoted by $\specnorm{\XX} = \sigma_1(\XX)$ and the nuclear norm of a matrix $\XX$ is denoted by 
$\nucnorm{\XX} = \sum_{i=1}^{\min(d_1,d_2)} \sigma_i(\XX)$. We denote by $\XX \circ \f{Y}$ the Hadamard (entrywise) product of two matrices $\XX$ and $\f{Y}$.
Whenever applicable,
we use the notation of $d= \min(d_1,d_2)$ and $D = \max(d_1,d_2)$, and $[d]:=\{1,2,\ldots,d\}$. Finally,
for $r\in\{0,\ldots,d\}$ and $\XX\in\Rdd$, we denote the error
of the best rank-$r$ approximation of $\XX$ in nuclear norm by 
$
\besterrNuc{\XX}{r}
:=
\min_{\rank(\ZZ)\le r}\nucnorm{\XX-\ZZ}
=
\sum_{i=r+1}^{d}\sigma_i(\XX)$. Equivalently, $\besterrNuc{\XX}{r}$ is the nuclear-norm tail of
the singular-value vector of $\XX$ after its first $r$ entries.

%% file: related_work.tex
\section{Related Work}\label{sec:related_work}
\paragraph{Low-Rank Matrix Recovery.}
Popular methodologies for low-rank matrix recovery impose a rank-\(r\) constraint by construction, using
iterative rank projections \citep{jain2010guaranteed}, factorizations such as
\(\XX=\f{U}\f{V}^{\top}\) \citep{burer2005}, or Riemannian optimization
\citep{tu2016low,Vandereycken13}. See \citet{chi2019nonconvex} for an overview. 
Under a rank-restricted isometry property,
singular value projection converges globally at a linear rate from zero, whereas factorized and Riemannian gradient
methods are commonly shown to converge linearly only after a certified
initialization enters a local basin. 
These methods are generally analyzed together with spectral initialization, 
which is often costly to compute and may be unstable in the presence of noise or outliers.
Global convergence from random initialization has only been proven so far for specific measurement models,
e.g., for rank-one phase retrieval \citep{Chen-MP2019},
and for small random initialization \citep{stoger2021small,soltanolkotabi2025implicit}.
Moreover, standard factorized gradient descent slows as
the condition number of the ground truth increases.
More recent variants can remove condition-number dependence from the local contraction or attain faster local rates, although end-to-end guarantees still depend on the initialization \citep{tong2021_accelerating,CaiWuXia2025,Zilber2022_GNMR,luo2023recursive}. All these formulations constitute nonconvex estimators 
and require the target rank for provable convergence. 

Nuclear norm minimization \cref{eq:nucnorm:min}, in contrast, is the
standard convex estimator and does not fix the rank in its formulation.
Under suitable RIP or null space conditions, it achieves stable recovery
at the degrees-of-freedom sampling order $r(d_1+d_2)$, up to
model-dependent logarithmic factors and independently of $\kappa$
\citep{rechtfazel_2011,Gross2011_recovering,NSP_kueng,NSP_Kabanava},
see also \citet{davenport2016overview,fuchs22_lowrank} for an overview.
Standard first-order schemes such as singular value thresholding may provide 
faster per-iteration complexity than generic semidefinite programming solvers, 
but are typically analyzed with sublinear rates \citep{Cai2010-SingularValueThresholding}. Unlike for nonconvex estimators, algorithms with provable global linear rates under near-optimal sample complexity without $\kappa$-dependence are available for \cref{eq:nucnorm:min} or closely related unconstrained formulations, for example for the restarted accelerated primal-dual method of \citet{Colbrook2022WARPd} and the restarted mirror descent method of \cite{DingWang2026Sharpness}, under suitable sharpness conditions (e.g., Frobenius-robust NSP, cf. \Cref{sec:preliminaries}). The global linear rate we show for IRLS variants in \Cref{mainresult:lowrank} is likewise independent of $\kappa$, but neither requires a specific step size nor restart schedule. The faster local rate of \Cref{thm:locallinearp1} for harmonic mean \texttt{MatrixIRLS} only depends on the NSP constant $\eta_r$ and not directly on the dimension, unlike the certified rates of \citet{Colbrook2022WARPd} and
\citet{DingWang2026Sharpness}.

\paragraph{Low-Rank IRLS.}
The foundational works by \citet{Fornasier11} and \citet{mohan_fazel} proposed the first IRLS algorithms for low-rank matrix recovery, using left-sided reweighting for nuclear norm minimization and right-sided reweighting for Schatten-$p$ quasi-norm minimization with $p \in (0,1]$ (see \Cref{def:weightcore}), respectively. Both works establish global convergence of the respective IRLS algorithm under the NSP assumption (see \Cref{def:NSP:statement}), albeit \emph{without any convergence rate}.
As a tool, they implicitly establish the majorization property \Cref{prop:global:majorization:onesided} for the respective quadratic model functions (cf. \Cref{sec:appendix:concavity:majorization_proof,sec:appendix:variational:majorization_proof} for details).
\citet[Theorem 3.6]{Lai-SIAM-J-NA2013} provide an error bound for the limiting iterate of an IRLS algorithm of an unconstrained variant of \cref{eq:nucnorm:min} under an RIP assumption. On the other hand, the local linear rate analysis that is claimed by \citet[p. 950]{Lai-SIAM-J-NA2013} is not substantiated. \citet{CaiLi2017} established stability guarantees for the limiting iterate of one-sided IRLS for nuclear norm minimization subject to a residual norm inequality constraint. 

\citet{Kummerle-JMLR2018} proposed IRLS algorithm variants for Schatten-$p$ quasi-norm minimization with $p \in (0,1]$ using harmonic-mean weight operators motivated from a perspective to treat column and row space information symmetrically, and showed local superlinear convergence of order $2-p$ in a neighborhood of the ground truth matrix $\Xzero$ under NSP assumptions for these IRLS variants; however, the relevant result \citep[Theorem 11]{Kummerle-JMLR2018} does \emph{not apply} for the nuclear norm case of $p=1$, but is only meaningful in the quasi-norm case where $p \in (0,1)$. \Cref{alg:algo1} studied in this paper is a variant of \citet[Alg.~1]{Kummerle-JMLR2018} specified to $p=1$, but using a different smoothing parameter schedule (see \Cref{sec:smoothing:parameter}). Finally, the proof of the technical result in \citet[Lemma 14]{Kummerle-JMLR2018}, which would imply a majorization property of the harmonic-mean quadratic model, is faulty, as we point out in \Cref{sec:appendix:challenges:harmonic:majorization_proof}. Our proof of \Cref{thm:majorization} overcomes this issue and provides the foundation for a global analysis of \MatrixIRLSHeading{}. A related line of literature \citep{KummerleMayrinkVerdun-ICML2021,GTK24} focuses on the nonconvex log-determinant minimization problem and establishes locally quadratic convergence rates for suitable geometric mean weight operator-based IRLS variants under near-optimal sample complexity assumptions for the respective problems of matrix completion and Euclidean distance geometry. Recently, \citet{Kraemer-2025OneSided} established asymptotic convergence properties of IRLS methods for log-determinant minimization and emphasizes the importance of choosing an appropriate smoothing parameter schedule, particularly at the information-theoretic limit. 

On a related note, the framework of recursive feature machines (RFM), a novel methodology for sample-efficient feature learning \citep{Radhakrishnan2024mechanism}, has been identified to be related to IRLS methods for spectral optimization problems such as \cref{eq:nucnorm:min} if specified to linear models \citep{Radhakrishnan2025linear}. \citet{Radhakrishnan2025linear} provides a derivation of one-sided IRLS methods for log-determinant and nuclear norm minimization and related objectives based on the RFM framework and proposes an SVD-free implementation of one-sided IRLS for log-determinant minimization. However, the proposed SVD-free implementation does not apply to the nuclear norm case, and no rigorous convergence analysis of low-rank IRLS methods has emerged from this framework.

\paragraph{IRLS for Sparse Recovery.}
IRLS algorithms for sparse vector recovery, which precede low-rank IRLS methods, are also relevant for the context of this work \citep{gorodnitsky_rao,rao_kreutz-delgado,wipf_nagarajan}. These algorithms have been well-known as efficient solvers for $\ell_p$-quasi-norm minimization problems for $0<p \leq 1$. \citet{chartrand_yin} and \citet{wipf_nagarajan} proposed and empirically studied 
adaptive smoothing parameter schedules for $\varepsilon$, observing their importance for the success of the methodology. \citet{Daubechies-CPAM2010} analyzes 
an adaptive scheme akin to \cref{eq:IRLS:step_2} for the smoothing parameter $\varepsilon$ of an IRLS algorithm for $\ell_1$-minimization under underdetermined linear measurements and provides a global convergence guarantee for the method under a sparse NSP assumption on the measurement matrix, while also showing locally linear convergence (independent of the ambient dimension), the analogue of \Cref{thm:locallinearp1} in this work, and locally superlinear convergence for IRLS targeted for $\ell_p$-quasi-norm minimization with $0<p<1$. Global linear convergence rates were established in \citet{Kummerle-NeurIPS2021} for $\ell_1$-minimization and in \citet{PKV22,PengKuemmerleVidal-CVPR2023} for related robust estimation problems. 

The fact that vector $\ell_p$-objectives of vectors are separable significantly simplifies the design and analysis of relevant IRLS methods compared to IRLS for spectral optimization, as studied in this paper. 
In particular, sparse IRLS offers fewer degrees of freedom in the design of the weight operator, with no analogue of the off-diagonal entries of the weight operator core matrix (\Cref{def:weightcore}).
We note that the usage of particularly tight quadratic model designs as studied in this work is necessary to achieve convergence rate results for \MatrixIRLSHeading{} that are in line with the ones for sparse recovery (see \Cref{tab:comparison:theory}).

%% file: algorithm.tex
\section{The \MatrixIRLSHeading{} Algorithm and Its Majorization Properties}\label{section:algorithm}

In this section, we first introduce the \texttt{MatrixIRLS} algorithm, the associated weight operators, and the resulting quadratic models. We then show that the harmonic-mean weight operator yields a valid majorizer, before discussing its optimality within the power-mean family.

\subsection{IRLS for Low-Rank Recovery and Basic Properties} \label{sec:algo_IRLS} 
Iteratively reweighted least squares can be interpreted as a \emph{smoothing method} \citep{Chen-MP2012}, where a Huber-type smoothing \citep{Huber-1964} of the nuclear norm is minimized via quadratic majorizing models \citep{Lange-MM2016,SunBabuPalomar-IEEESP2017}.
In particular, IRLS mitigates the non-smoothness of the nuclear norm objective \cref{eq:nucnorm:min}
by working with \emph{$\varepsilon$-smoothed nuclear norms} $\mathcal{J}_{\varepsilon_k}$ of \cref{eq:smoothedell1:objective}.  $\mathcal{J}_{\varepsilon_k}$ is in turn minimized approximately by constructing a quadratic model $Q_{\varepsilon_k}(\,\cdot\mid \Xk{(k)})$ about the current iterate $\XX^k$.

It can be shown that $\mathcal{J}_{\varepsilon}(\cdot)$ is differentiable with a Lipschitz-continuous gradient (see \Cref{sec:appendix:lipschitz:gradients}). But despite the surrogate $\mathcal{J}_{\varepsilon}(\cdot)$ being smooth, minimizing it directly remains challenging because it depends nonlinearly on the singular values of its argument. The IRLS strategy is thus to replace $\mathcal{J}_{\varepsilon}$ locally by quadratic models $Q_{\varepsilon}\left(\cdot \mid \XX \right): \Rdd \to \R$ given suitable reference points $\XX \in \Rdd$ (see \Cref{eq:smoothedell1:IRLSmajorizer} below). If $Q_{\varepsilon}\left(\cdot \mid \XX \right)$ \emph{majorizes} $\mathcal{J}_{\varepsilon}$ globally, IRLS can be interpreted as a Majorization-Minimization (MM) method \citep{Lange-MM2016,SunBabuPalomar-IEEESP2017} intertwined with smoothing \citep{Chen-MP2012}. Since $\mathcal{J}_{\varepsilon}$ is a spectral function, the curvature information entering the quadratic model is naturally expressed in the singular-vector coordinates of the reference matrix $\XX$. Analogously to IRLS algorithms for separable problems \citep{wipf_nagarajan,Daubechies-CPAM2010}, which encode inverse magnitude information into entrywise weights, we incorporate the singular value information into the \emph{weight operator core matrix}, which in turn fixes the curvature of the quadratic model. However, as we will see, the non-diagonal entries of this matrix contain degrees of freedom which can be chosen in different ways, leading to a multitude of possible quadratic models. 

\begin{definition}[Weight Operator Core Matrix]\label{def:weightcore}
	Let $\ssigma \in \R^{\maxD}$ be a nonincreasing vector of singular values
	(padded by zeros for indices larger than $\mind$), and let $\varepsilon>0$ be a smoothing parameter.
	The associated \emph{weight operator core matrix}
	$\f{H}_{\ssigma,\varepsilon} \in \R^{d_1 \times d_2}$ is defined entrywise for $i\in[d_1]$ and $j\in[d_2]$, by one of the following choices:
	\begin{align}
		(\f{H}_{\ssigma,\varepsilon})_{ij}
		&= [\max(\sigma_i,\varepsilon)]^{-1},
		&& \text{left-sided \citep{Fornasier11}},
		\label{eq:leftsided:mean}
		\\
		(\f{H}_{\ssigma,\varepsilon})_{ij}
		&= [\max(\sigma_j,\varepsilon)]^{-1},
		&& \text{right-sided \citep{mohan_fazel}},
		\label{eq:rightsided:mean}
		\\
		(\f{H}_{\ssigma,\varepsilon})_{ij}
		&= 2[\max(\sigma_i,\varepsilon)+\max(\sigma_j,\varepsilon)]^{-1},
		&& \text{harmonic-mean \citep{Kummerle-JMLR2018}}.
		\label{eq:harmonic:mean}
	\end{align}
\end{definition}

The three core matrix types \cref{eq:leftsided:mean,eq:rightsided:mean,eq:harmonic:mean} differ only in the way they compute non-diagonal entries, coinciding on the diagonal. $\f{H}_{\ssigma,\varepsilon}$ of \cref{eq:harmonic:mean} uses the harmonic mean of \cref{eq:leftsided:mean} and \cref{eq:rightsided:mean} of indices corresponding to row and column indices of the matrix, whereas the entries of \cref{eq:leftsided:mean} and \cref{eq:rightsided:mean} depend only on singular values associated to the matrix's row and column indices, respectively. With \Cref{def:weightcore}, we can define the \emph{weight operator} $\W{\XX}{\varepsilon}: \Rdd \to \Rdd$,
associated to a smoothing parameter $\varepsilon > 0$ and a matrix iterate $\XX$, 
specifying a quadratic model $Q_{\varepsilon}(\cdot \mid\XX)$ of $\mathcal{J}_{\varepsilon}(\cdot)$ about $\XX$.

\begin{definition}[{\citeauthor{Kummerle-JMLR2018}, \citeyear{Kummerle-JMLR2018}; \citeauthor{KummerleMayrinkVerdun-ICML2021}, \citeyear{KummerleMayrinkVerdun-ICML2021}}] \label{def:optimalweightoperator}
Let $\varepsilon > 0$ and $\XX \in \Rdd$ be a matrix with the full singular value decomposition 
$\XX= \UU_{\XX} \diag (\ssigma) \VV_{\XX}^\top$, 
where $\UU_{\XX}  \in \R^{d_1 \times d_1}$ and $\VV_{\XX} \in \R^{d_2 \times d_2}$ are orthogonal
and $\diag (\ssigma) \in \Rdd$ is the rectangular diagonal matrix with the extended vector of singular values $\ssigma \in \R^{\maxD}$ of $\XX$ on the diagonal.
Then the  \emph{weight operator}  $\W{\XX}{\varepsilon}: \Rdd \to \Rdd$ 
is defined by
\begin{equation} \label{eq:W:operator:action}
	\W{\XX}{\varepsilon}(\ZZ) = \UU_{\XX} \left[\f{H}_{\ssigma,\varepsilon} \circ (\UU_{\XX}^{\top} \ZZ \VV_{\XX})\right] \VV_{\XX}^{\top}
\end{equation}
for each $\ZZ \in \Rdd$.
Here $\f{H}_{\ssigma,\varepsilon}$ is the 
weight operator core matrix of \Cref{def:weightcore}.
\end{definition}

The weight operator $\W{\XX}{\varepsilon}(\cdot)$ is self-adjoint.
Using \Cref{def:optimalweightoperator}
with any of the core matrices, we define the \emph{quadratic model function of $\mathcal{J}_{\varepsilon}(\cdot)$ about $\XX$} associated to the weight operator $\W{\XX}{\varepsilon}(\cdot)$,
written $Q_{\varepsilon}\left(\cdot \mid \XX \right): \Rdd \to \R$, such that for all $\ZZ \in \Rdd$,
\begin{equation} \label{eq:smoothedell1:IRLSmajorizer}
\begin{split}
Q_{\varepsilon}(\ZZ \mid \XX) 
&:= \mathcal{J}_{\varepsilon}(\XX) 
 + \innerproduct{ \nabla \mathcal{J}_{\varepsilon}(\XX) , \ZZ-\XX }
+ \frac{1}{2} \innerproduct{ \ZZ-\XX, \W{\XX}{\varepsilon} (\ZZ-\XX)}.
\end{split}
\end{equation}

\begin{example}\label{ex:toy-example}
	To make the geometry of the smoothed nuclear norm and its quadratic surrogates (which depend on the weighting choice, be it left-sided, right-sided, or harmonic-mean) explicit, consider a simple $2 \times 2$ example chosen to be nonsymmetric so that the harmonic, left-sided, and right-sided weights are genuinely different.
	Let
	\begin{equation}\label{eq:toy-example}
		X(t)=
		\begin{pmatrix}
			4&3\\
			2&t
		\end{pmatrix},
		\qquad t\in\mathbb R,
	\end{equation}
	corresponding to the situation where only the last entry of a $2\times 2$ matrix is unknown.
	One can immediately see that $t=3/2$ gives the only rank-deficient candidate. It is also straightforward to calculate that
	\[
	\|X(t)\|_*
	=
	\sqrt{29+t^2+8|t-3/2|}
	\]
	and verify that on the $t<3/2$ branch the nuclear norm is decreasing in $t$ while on the $t > 3/2$ branch it is increasing.
	Thus, $t_\star=3/2$ is also the unique nuclear norm minimizer.
	
	\Cref{fig:toy-example} (left) 
	displays the nuclear norm (although it appears piecewise linear in $t$, it is not) 
	with its several smoothed versions. 
	\Cref{fig:toy-example} (right) then shows one step of \texttt{MatrixIRLS} 
	from the initial point $t_0=1/2$ 
	and the smoothing parameter $\varepsilon=0.5$. 
	The three weighting schemes introduced in \Cref{def:weightcore} lead to different surrogates, 
	of which the harmonic-mean surrogate $Q_{0.5}^{\mathrm{harm}}$ is the tightest. 
	In the next step of \texttt{MatrixIRLS}, this surrogate is minimized to obtain $t_1$, $\varepsilon$ is reduced, 
	and a new surrogate is constructed at $t_1$, to be minimized in the next step.
\end{example}

\begin{figure}[t]
	\centering
	\begin{tabular}{@{}cc@{}}
		\includegraphics[width=0.49\textwidth]{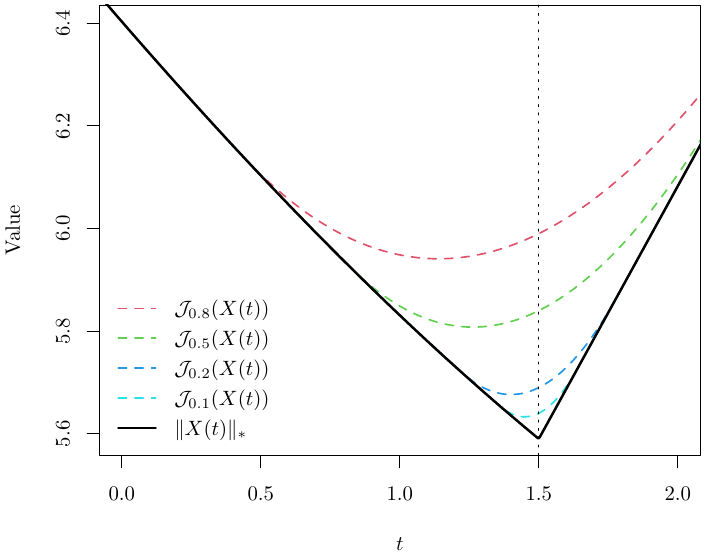} &
		\includegraphics[width=0.49\textwidth]{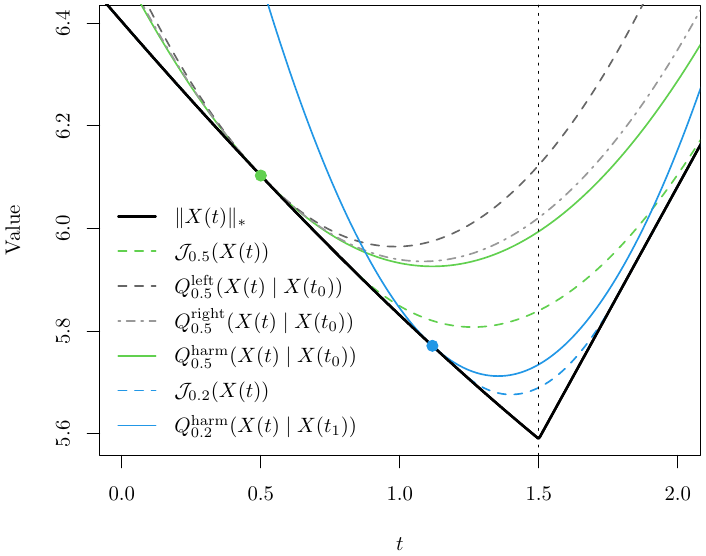}
	\end{tabular}
	\caption{The nuclear norm of \eqref{eq:toy-example} from \Cref{ex:toy-example}. \emph{Left}: The smoothed nuclear norm $\mathcal{J}_\varepsilon$ 
	for $\varepsilon=0.8, 0.5, 0.2$ and $0.1$. 
	\emph{Right}: The left-sided, right-sided, and harmonic-mean surrogates of $\mathcal{J}_{0.5}$ at $t_0=1/2$. The next step of the algorithm with $\varepsilon$ reduced to $0.2$ is also indicated.
	}
	\label{fig:toy-example}
\end{figure}

The following proposition states elementary properties of this expression for weighting schemes introduced above. Exactness of the quadratic model at $\ZZ = \XX$  follows from its definition, and a specific gradient matching leads to a simplification of the quadratic model.
\begin{proposition}[{Gradient Condition and Symmetry of $\boldsymbol{Q}_{\boldsymbol{\varepsilon}}$}] \label{proposition:IRLS:basicproperties}
	\allowbreak Let $\varepsilon >0$,  $\mathcal{J}_{\varepsilon}: \Rdd  \to \R$ be defined as in \cref{eq:smoothedell1:objective} 
  and $Q_{\varepsilon}(\cdot \mid \XX): \Rdd \to \R$ 
  as defined in \cref{eq:smoothedell1:IRLSmajorizer} via weight operator $\W{\XX}{\varepsilon}: \Rdd \to \Rdd$ defined in \Cref{def:optimalweightoperator}. Then the quadratic model and the smoothed surrogate function $\mathcal{J}_{\varepsilon}(\cdot)$ coincide at the reference point, i.e., $Q_{\varepsilon}(\XX\mid\XX) = \mathcal{J}_{\varepsilon}(\XX)$ for $\XX \in \Rdd$. Furthermore, if the weight operator core matrix $\f{H}_{\ssigma, \varepsilon} \in \Rdd$ satisfies $(\f{H}_{\ssigma, \varepsilon})_{ii} = \max(\sigma_i,\varepsilon)^{-1}$ for all $i \in [d]$, then, for any  $\XX \in \Rdd$, the following gradient condition holds: 
		\begin{equation} \label{eq:gradientcondition}
			\W{\XX}{\varepsilon}(\XX) = \nabla\mathcal{J}_{\varepsilon}(\XX) \quad \text{ for each } \XX \in \Rdd,
		\end{equation}
		and the quadratic model satisfies for each $\ZZ,\XX \in \Rdd$ that
		\begin{equation}\label{eq:QZX:equality}
		Q_{\varepsilon}(\ZZ\mid\XX)  
    = 
    \mathcal{J}_{\varepsilon}(\XX) 
    + \frac{1}{2} \innerproduct{ \ZZ, \W{\XX}{\varepsilon}(\ZZ) }
    - \frac{1}{2} \innerproduct{ \XX,\W{\XX}{\varepsilon} (\XX) }.
		\end{equation}
\end{proposition}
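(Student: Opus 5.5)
The proof splits into three parts that build on one another.

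\emph{Exactness at the reference point.} Setting $\ZZ=\XX$ in \cref{eq:smoothedell1:IRLSmajorizer} makes both the linear and the quadratic term vanish. Hence $Q_{\varepsilon}(\XX\mid\XX)=\mathcal{J}_{\varepsilon}(\XX)$, and this holds for every choice of core matrix.

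\emph{Gradient condition \cref{eq:gradientcondition}.} First I need an explicit formula for $\nabla\mathcal{J}_{\varepsilon}$. The function $\mathcal{J}_{\varepsilon}=\sum_i j_\varepsilon(\sigma_i(\cdot))$ is a spectral function generated by the absolutely symmetric, differentiable function $\f{s}\mapsto\sum_i j_\varepsilon(s_i)$. Lewis' theorem on the differentiability of singular value functions (equivalently, the result referenced in \Cref{sec:appendix:lipschitz:gradients}) therefore gives
\[
\nabla\mathcal{J}_{\varepsilon}(\XX)=\UU_{\XX}\diag\big(j_\varepsilon'(\sigma_1),\dots,j_\varepsilon'(\sigma_d)\big)\VV_{\XX}^\top .
\]
Here $j_\varepsilon'(\sigma)=\sigma/\max(\sigma,\varepsilon)$ for $\sigma\ge0$: this equals $1$ when $\sigma>\varepsilon$ and $\sigma/\varepsilon$ otherwise. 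The formula does not depend on which SVD is chosen. The potential non-uniqueness of singular vectors is harmless, because $j'_\varepsilon$ is applied consistently to repeated singular values, and $j'_\varepsilon(0)=0$ handles the null spaces.

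Next I compute $\W{\XX}{\varepsilon}(\XX)$ from \cref{eq:W:operator:action}. Since $\UU_{\XX}^\top\XX\VV_{\XX}=\diag(\ssigma)$ is rectangular diagonal, the Hadamard product with $\f{H}_{\ssigma,\varepsilon}$ only involves its diagonal entries. Under the hypothesis $(\f{H}_{\ssigma,\varepsilon})_{ii}=\max(\sigma_i,\varepsilon)^{-1}$, this gives
\[
\f{H}_{\ssigma,\varepsilon}\circ\diag(\ssigma)=\diag\big(\sigma_i/\max(\sigma_i,\varepsilon)\big)_{i\in[d]} .
\]
Conjugating back with $\UU_{\XX}$ and $\VV_{\XX}$ yields exactly $\nabla\mathcal{J}_{\varepsilon}(\XX)$.

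There is one subtle point to address. The weight operator should be well defined, i.e., independent of the SVD chosen, or else \cref{eq:gradientcondition} should be read with the same SVD used on both sides. I would simply fix one SVD in \Cref{def:optimalweightoperator} and use that same SVD for the gradient formula. Since the gradient formula holds for any SVD, the identity follows.

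\emph{Formula \cref{eq:QZX:equality}.} Write $W=\W{\XX}{\varepsilon}$ and substitute $\nabla\mathcal{J}_{\varepsilon}(\XX)=W(\XX)$ into \cref{eq:smoothedell1:IRLSmajorizer}. Using the self-adjointness of $W$, which holds because $\ZZ\mapsto\f{H}\circ\ZZ$ is self-adjoint in the Frobenius inner product and the conjugation is orthogonal, the expansion reads
\[
\innerproduct{W\XX,\ZZ-\XX}+\tfrac12\innerproduct{\ZZ-\XX,W(\ZZ-\XX)}
=\innerproduct{W\XX,\ZZ}-\innerproduct{W\XX,\XX}+\tfrac12\innerproduct{\ZZ,W\ZZ}-\innerproduct{W\XX,\ZZ}+\tfrac12\innerproduct{\XX,W\XX}.
\]
The cross terms cancel, and what remains is $\tfrac12\innerproduct{\ZZ,W\ZZ}-\tfrac12\innerproduct{\XX,W\XX}$, which is \cref{eq:QZX:equality}.

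The main obstacle is the rigorous gradient formula for the spectral function in the rectangular case, together with its SVD-independence. I would handle this by citing Lewis' result rather than re-deriving it.
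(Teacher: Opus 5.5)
Your proposal is correct and follows the paper's proof essentially step for step. Both use the Lewis--Sendov spectral gradient formula with $j_\varepsilon'(\sigma)=\sigma/\max(\sigma,\varepsilon)$, then the diagonal-entry hypothesis to identify $\W{\XX}{\varepsilon}(\XX)$ with $\nabla\mathcal{J}_{\varepsilon}(\XX)$, and then Frobenius self-adjointness of the Hadamard-type operator to cancel the cross terms. The one minor difference favors you: you read off $Q_{\varepsilon}(\XX\mid\XX)=\mathcal{J}_{\varepsilon}(\XX)$ directly from \cref{eq:smoothedell1:IRLSmajorizer}, which holds for every core matrix as the statement asserts, whereas the paper's written proof derives it at the end from \cref{eq:QZX:equality} and so formally uses the diagonal hypothesis.
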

The proof of \Cref{proposition:IRLS:basicproperties} is straightforward and provided for completeness in \Cref{sec:appendix:proofbasiclemma}. Inspecting the weight operator core matrix of \Cref{def:weightcore}, we observe that the diagonal condition $(\f{H}_{\ssigma, \varepsilon})_{ii} = \max(\sigma_i,\varepsilon)^{-1}$ for all $i \in [d]$, and thus, the assertion of \Cref{proposition:IRLS:basicproperties}, is satisfied for both one-sided weight operator notions \cref{eq:leftsided:mean} and \cref{eq:rightsided:mean}, and for the harmonic mean weight operator \cref{eq:harmonic:mean}.

Observe that, by \cref{eq:QZX:equality}, minimizing $Q_{\varepsilon}(\ZZ\mid\XX)$ corresponds to minimizing a \emph{reweighted} least squares objective $\innerproduct{ \ZZ, \W{\XX}{\varepsilon}(\ZZ) }$ once the reference point $\XX$ and the smoothing parameter $\varepsilon$ are fixed. This fact gives \emph{iteratively reweighted least squares (IRLS)} methods their name and motivates methods that alternate between updating the reference point $\XX$, the smoothing parameter $\varepsilon$, and the weight operator $\W{\XX}{\varepsilon}(\cdot)$. Instantiating such a method with harmonic-mean weight operators in the quadratic model $Q_{\varepsilon}(\cdot \mid\XX)$ yields the IRLS method of main interest in this paper, which we call \texttt{MatrixIRLS} and state as \Cref{alg:algo1}. When discussing the convergence properties of \texttt{MatrixIRLS} in \Cref{sec:main:results}, we will also refer to variants of \Cref{alg:algo1} which use alternative weight operators instead of harmonic-mean weight operators---in these cases, only the update rule for the governing weight operator $W^{(k)}$ of \cref{eq:IRLS:step_1} changes, whereas all other aspects of the algorithm remain unchanged.

\begin{algorithm}[h]
	\caption{\MatrixIRLSHeading{} for Nuclear Norm Minimization}
	\label{alg:algo1}
	\begin{algorithmic}
		\STATE{\textbf{Input:} Operator $\mathcal{A}: \Rdd \to \R^{m}$,
			data vector $\f{y} \in \mathbb{R}^m$, 
			initial positive definite weight operator $W^{(0)}: \Rdd \to \Rdd$ (default: $W^{(0)} = \Id$),
			rank estimate $\widetilde{r} < \mind$.}
	\STATE{Set $\varepsilon_{-1} = \infty$.}
	\FOR{$k = 0, 1, 2,\ldots$}
	\STATE \textbf{Solve weighted least squares problem:}
	\begin{align} 
		\Xk{(k)} &:= 
		\argmin_{ \ZZ \in \R^{d_1 \times d_2 }}  \ 
		\innerproduct{ \ZZ, W^{(k)}(\ZZ) } \; \text{ subject to } \; \mathcal{A}(\ZZ) = \f{y}. \label{eq:IRLS:step_1} 
	\end{align}
	\vspace*{-2mm}
	\STATE \textbf{Update smoothing parameter} $\varepsilon$:
	\begin{align}
		\varepsilon_{k}&:=  \min\left(\varepsilon_{k-1} ,\,  \besterrNuc{\Xk{(k)}}{\widetilde{r}}/d \right), \label{eq:IRLS:step_2}
	\end{align}
	where $\besterrNuc{\Xk{(k)}}{\widetilde{r}}$
	is the nuclear norm error of the best rank-$\widetilde{r}$ approximation of $\Xk{(k)}$.
	\STATE \textbf{If} $\varepsilon_{k}=0$, i.e., if $\besterrNuc{\Xk{(k)}}{\widetilde{r}}=0$, \textbf{return} $\Xk{(k)}$.
	\STATE \textbf{Update weight operator}: Define $W^{(k+1)} := \W{\Xk{(k)}}{\varepsilon_{k}}$ using weight operator \cref{eq:W:operator:action} from \Cref{def:optimalweightoperator}, where the weight operator core matrix is defined using the \emph{harmonic-mean weights} \cref{eq:harmonic:mean}. 
	\ENDFOR
	\RETURN Sequence $(\Xk{(k)})_{k\geq 0} $.
\end{algorithmic}
\end{algorithm}

We note that it is \emph{not} necessary to compute the entire spectral information of $\Xk{(k)}$ to implement the weight update in \Cref{alg:algo1}. On the contrary, the weight operator update in \Cref{alg:algo1} can be implemented by computing \emph{only the first} $r_{k}$ singular values $\sigma_i(\Xk{(k)})$ and matrices $\f{U}\hk \in \R^{d_1 \times r_{k}}$ and $\f{V}\hk \in \R^{d_2 \times r_{k}}$ with leading $r_{k}$ left and right singular vectors of $\Xk{(k)}$, where $r_{k} := |\{i \in [d]: \sigma_i(\Xk{(k)}) > \varepsilon_{k}\}|$. While not the focus of the present paper, we refer to \citet{KummerleMayrinkVerdun-ICML2021} and \citet{GTK24} for more details on the implementation of such low-rank IRLS algorithms. Furthermore, we note that the smoothing update \cref{eq:IRLS:step_2} is different from the ones proposed in the existing literature \citep{Fornasier11,Kummerle-JMLR2018,KummerleMayrinkVerdun-ICML2021}. The reasons for this choice are detailed in \Cref{sec:smoothing:parameter} and \Cref{sec:smoothing:parameter:experiments}.

The update \cref{eq:IRLS:step_2} returns $\varepsilon_k = 0$ precisely if $\Xk{(k)}$ has rank at most $\widetilde{r}$; the weight operator update is then undefined, which is why \Cref{alg:algo1} returns, and the returned matrix is a feasible matrix of rank at most $\widetilde{r}$, hence equal to a ground truth $\Xzero$ of rank at most $\widetilde{r} = r$ if $\mathcal{A}$ satisfies the NSP of \Cref{def:NSP:statement}: the difference $\f{N} := \Xk{(k)} - \Xzero \in \ker(\mathcal{A})$ has rank at most $2r$, so that $\sum_{i>r} \sigma_i(\f{N}) \leq \sum_{i \leq r} \sigma_i(\f{N}) \leq \eta_r \sum_{i>r} \sigma_i(\f{N})$ by \cref{eq:NSP:definition}, and $\eta_r < 1$ leaves only $\f{N} = \f{0}$. Accordingly, all statements about $\Xk{(k)}$ and $\varepsilon_k$ in the following refer to the iterations carried out by \Cref{alg:algo1}, where a final iteration with $\varepsilon_k = 0$ is covered by the convention $j_{0}(\sigma) := |\sigma|$ in \cref{eq:smoothedell1:objective}, i.e., $\mathcal{J}_{0} = \nucnorm{\cdot}$, which is the pointwise limit of $\mathcal{J}_{\varepsilon}$ as $\varepsilon \to 0^{+}$.

\subsection{Majorization of Quadratic Model Implied by Harmonic-Mean Weights} \label{sec:majorization:properties}

The basic properties of \Cref{proposition:IRLS:basicproperties} are algebraic consequences of the weight
construction. They explain why the quadratic model leads to a weighted least-squares subproblem, but they are insufficient for an analysis of the IRLS algorithm's properties. The missing piece is a statement that connects the weighted least-squares solution $\Xk{(k)}$ of \cref{eq:IRLS:step_1} to progress in the objective value of the $\varepsilon$-smoothed nuclear norm objective $\mathcal{J}_{\varepsilon}(\cdot)$, such as
\begin{equation} \label{eq:minimizer:majorization}
	 Q_{\varepsilon_{k-1}}(\Xk{(k)}\mid\Xk{(k-1)}) \geq \mathcal{J}_{\varepsilon_{k-1}}(\Xk{(k)})
\end{equation}
for a pair of algorithm iterates $\Xk{(k-1)}$ and $\Xk{(k)}$ of \Cref{alg:algo1}. Together with the basic properties from \Cref{proposition:IRLS:basicproperties}, the smoothing parameter update step \cref{eq:IRLS:step_2} and the fact that $\varepsilon \mapsto \mathcal{J}_{\varepsilon}(\XX)$ is monotonically nondecreasing, \cref{eq:minimizer:majorization} implies that the two iterates $\Xk{(k-1)}$ and $\Xk{(k)}$ of \Cref{alg:algo1} satisfy for all $k \geq 1$ that
\begin{equation} \label{eq:J:monotonicity:1}
	\mathcal{J}_{\varepsilon_{k}}(\Xk{(k)}) \leq \mathcal{J}_{\varepsilon_{k-1}}(\Xk{(k)}) \leq Q_{\varepsilon_{k-1}}(\Xk{(k)}\mid\Xk{(k-1)}) \leq Q_{\varepsilon_{k-1}}(\Xk{(k-1)}\mid\Xk{(k-1)}) = \mathcal{J}_{\varepsilon_{k-1}}(\Xk{(k-1)}).
\end{equation}
Therefore, the sequence 
$\left\{ \mathcal{J}_{\varepsilon_k}\left(\Xk{(k)}\right)\right\}_{k}$ 
is nonincreasing. If $\bar{\varepsilon}:=\lim_{k\to\infty}\varepsilon_k>0$, 
standard IRLS arguments, under the remaining assumptions of the respective analyses, then imply that every accumulation point is stationary for the constrained minimization of 
$\mathcal{J}_{\bar\varepsilon}$, cf.~\citet[Theorem 6.11]{Fornasier11} and 
\citet[Theorem 9]{Kummerle-JMLR2018}.
A majorization property such as \cref{eq:minimizer:majorization} is likewise essential for the convergence-rate analysis in \Cref{sec:main:results}.

However, as we discuss in \Cref{sec:appendix:challenges:harmonic:majorization_proof}, existing proofs and analogous strategies fail to establish a corresponding statement for the quadratic model functions defined by harmonic-mean weight operators (and thus, for the setting of \Cref{alg:algo1}), which constitute in some sense a tighter approximation of the smoothed nuclear norm objectives than their one-sided analogues.
As the main result of this paper, we establish such a \emph{global majorization} for $Q_{\varepsilon}(\cdot \mid \XX)$ defined by the harmonic-mean weight operator.

\begin{theorem}[{Global Majorization of Harmonic-Mean Quadratic Model}] \label{thm:majorization}
	Let $\varepsilon > 0$, let $\mathcal{J}_{\varepsilon}: \Rdd  \to \R$ be the $\varepsilon$-smoothed nuclear norm \cref{eq:smoothedell1:objective} and $Q_{\varepsilon}(\cdot \mid \XX): \Rdd \to \R$ be the quadratic model function of \cref{eq:smoothedell1:IRLSmajorizer} defined by the harmonic-mean weight operator \cref{eq:W:operator:action} with core matrix \cref{eq:harmonic:mean}. Then, $Q_{\varepsilon}(\cdot \mid \XX)$ majorizes $\mathcal{J}_{\varepsilon}$ globally, i.e.,
	\begin{equation} \label{eq:majorization:inequality}
	Q_{\varepsilon}(\ZZ \mid\XX) \geq \mathcal{J}_{\varepsilon}(\ZZ)
	\end{equation}
	for each $\ZZ,\XX \in \Rdd$.
\end{theorem}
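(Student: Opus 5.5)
By \Cref{proposition:IRLS:basicproperties}, in particular \cref{eq:QZX:equality}, the claim is equivalent to
\[
\mathcal{J}_{\varepsilon}(\ZZ) - \tfrac12\innerproduct{\ZZ,\W{\XX}{\varepsilon}(\ZZ)} \le \mathcal{J}_{\varepsilon}(\XX) - \tfrac12 \innerproduct{\XX,\W{\XX}{\varepsilon}(\XX)}.
\]
Both functionals are invariant under $\ZZ\mapsto \UU_\XX^\top \ZZ \VV_\XX$. We may therefore assume $\XX=\diag(\ssigma)$, so that $\W{\XX}{\varepsilon}(\ZZ)=\f{H}\circ\ZZ$ with $\f{H}_{ij}=2/(s_i+s_j)$, where $s_i:=\max(\sigma_i,\varepsilon)$. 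The right-hand side then equals $\sum_i \bigl(j_\varepsilon(\sigma_i)-\sigma_i^2/(2s_i)\bigr)=\sum_i s_i/2$; this is checked separately for $\sigma_i>\varepsilon$ and $\sigma_i\le\varepsilon$. So the goal is
\begin{equation*}
\mathcal{J}_{\varepsilon}(\ZZ) \le \tfrac12\sum_{i}s_i + \tfrac12\sum_{i,j}\frac{2}{s_i+s_j}Z_{ij}^2 .
\end{equation*}

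For the key reformulation, let $\f{S}_1=\diag(s_1,\dots,s_{d_1})$ and $\f{S}_2=\diag(s_1,\dots,s_{d_2})$, with $s_i=\varepsilon$ for padded indices. The harmonic-mean quadratic form is
\[
\innerproduct{\ZZ,\f{H}\circ\ZZ}=2\innerproduct{\ZZ,\f{Y}},
\]
where $\f{Y}$ is the unique solution of the Sylvester equation $\f{S}_1\f{Y}+\f{Y}\f{S}_2=\ZZ$. The strategy is to show that for every matrix $\f{Y}$,
\[
\mathcal{J}_\varepsilon(\f{S}_1\f{Y}+\f{Y}\f{S}_2)\le \tfrac12\operatorname{tr}(\f{S})+\innerproduct{\f{S}_1\f{Y}+\f{Y}\f{S}_2,\f{Y}}.
\]
For comparison, the one-sided model corresponds to $\ZZ=\f{S}_1\f{Y}'$, and there the variational bound $\mathcal{J}_\varepsilon(\ZZ)\le \min_{\f{S}\succeq\varepsilon\Id}\bigl[\tfrac12\operatorname{tr}\f{S}+\tfrac12\operatorname{tr}(\ZZ^\top\f{S}^{-1}\ZZ)\bigr]$ gives the result directly. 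This is the variational characterization underlying \citet{Fornasier11}, since $j_\varepsilon(\sigma)=\min_{s\ge\varepsilon}(s/2+\sigma^2/(2s))$.

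The plan is to symmetrize. Write $\ZZ=\tfrac12(\ZZ+\ZZ)$ and use convexity of $\mathcal{J}_\varepsilon$. We would like to compare the harmonic form with the average of the left form $\sum Z_{ij}^2/s_i$ and the right form $\sum Z_{ij}^2/s_j$. This comparison goes the wrong way: the harmonic mean is smaller than the arithmetic mean. So plain convexity does not suffice.

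Instead, we split $\ZZ=\ZZ_1+\ZZ_2$ with $\ZZ_1=\f{S}_1\f{Y}$ and $\ZZ_2=\f{Y}\f{S}_2$. The harmonic form is then $2\innerproduct{\ZZ_1+\ZZ_2,\f{Y}}=2\bigl(\sum_{ij} s_iY_{ij}^2+\sum_{ij} s_jY_{ij}^2\bigr)$, i.e.\ a sum of a left-sided form in $\ZZ_1$ and a right-sided form in $\ZZ_2$, each with weight $2/s$. We then need a subadditivity-type estimate
\[
\mathcal{J}_\varepsilon(\ZZ_1+\ZZ_2)\le \tfrac12\operatorname{tr}\f{S}+\operatorname{tr}(\ZZ_1^\top \f{S}_1^{-1}\ZZ_1)+\operatorname{tr}(\ZZ_2\f{S}_2^{-1}\ZZ_2^\top).
\]
We would prove it through the dual representation $\nucnorm{\ZZ}=\min_{\ZZ=\f{A}\f{B}^\top}\tfrac12(\fronorm{\f{A}}^2+\fronorm{\f{B}}^2)$, adapted to the Huber smoothing. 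Concretely, use $\mathcal{J}_\varepsilon(\ZZ)=\min\{\tfrac12\operatorname{tr}(\f{P})+\tfrac12\operatorname{tr}(\f{Q}) : \begin{pmatrix}\f{P}&\ZZ\\ \ZZ^\top&\f{Q}\end{pmatrix}\succeq0,\ \dots\}$ with a suitable $\varepsilon$-correction, and exhibit the explicit feasible block matrix
\[
\begin{pmatrix}\f{S}_1 & \ZZ\\ \ZZ^\top & \f{S}_2\end{pmatrix}+\begin{pmatrix}\f{Y}\f{S}_2\f{Y}^\top & 0\\ 0 & \f{Y}^\top\f{S}_1\f{Y}\end{pmatrix}.
\]
This matrix equals $\begin{pmatrix}\f{S}_1^{1/2} & \f{Y}\f{S}_2^{1/2}\\ \f{Y}^\top \f{S}_1^{1/2} & \f{S}_2^{1/2}\end{pmatrix}$ times its transpose-like counterpart, and is positive semidefinite because $\ZZ=\f{S}_1\f{Y}+\f{Y}\f{S}_2$. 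Its trace is $\operatorname{tr}\f{S}_1+\operatorname{tr}\f{S}_2+2\innerproduct{\ZZ,\f{Y}}$, which matches the target up to the factor $1/2$ and the handling of padded indices.

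The main obstacle is the correct $\varepsilon$-smoothed semidefinite characterization of $\mathcal{J}_\varepsilon$: the constraint $\f{P},\f{Q}\succeq\varepsilon\Id$ must be compatible with the trace count when $d_1\ne d_2$. The non-commutativity between the singular vectors of $\ZZ$ and the diagonal $\f{S}$ is handled there. If this fails, the fallback is iterative pinching: replace $\ZZ$ by the block-diagonal pinchings with respect to the level sets of $\ssigma$. These pinchings leave the right-hand side controlled and do not increase $\mathcal{J}_\varepsilon$, by the unitarily invariant convexity of the Ky Fan-type majorization. This reduces the problem to the case where $\f{S}$ is scalar on each block, where the harmonic and one-sided models coincide.
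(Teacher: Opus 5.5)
The gap is in your central step. Your reduction to $\mathcal{J}_{\varepsilon}(\ZZ)\le\frac12\sum_{i\in[d]}s_i+\innerproduct{\ZZ,\f{Y}}$ with $\f{S}_1\f{Y}+\f{Y}\f{S}_2=\ZZ$ is correct. The subadditivity-type estimate for $\ZZ_1=\f{S}_1\f{Y}$, $\ZZ_2=\f{Y}\f{S}_2$ is also exactly what is needed, reading $\operatorname{tr}\f{S}$ as $\sum_{i\in[d]}s_i$. But you never prove that estimate, and neither of your routes works as written.

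\begin{itemize}
\item Your block matrix is positive semidefinite, but its trace is $\operatorname{tr}\f{S}_1+\operatorname{tr}\f{S}_2+\innerproduct{\ZZ,\f{Y}}$, since $\operatorname{tr}(\f{Y}\f{S}_2\f{Y}^\top)+\operatorname{tr}(\f{Y}^\top\f{S}_1\f{Y})=\sum_{ij}(s_i+s_j)Y_{ij}^2$. For $d_1=d_2$, half of it therefore carries twice the required constant and half the required quadratic term.
\item More fundamentally, such a semidefinite certificate only controls $\nucnorm{\ZZ}$, which lies below $\mathcal{J}_{\varepsilon}(\ZZ)$. The Huber excess $\sum_{k:\sigma_k(\ZZ)<\varepsilon}(\varepsilon-\sigma_k(\ZZ))^2/(2\varepsilon)$ is precisely what your unspecified ``$\varepsilon$-correction'' would have to absorb, and you do not have such a characterization.
\item The pinching fallback runs in the wrong direction. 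Pinching $\ZZ$ along the level sets of $\ssigma$ can only \emph{decrease} $\mathcal{J}_{\varepsilon}$ (and the right-hand side), so verifying the inequality for the pinched matrix says nothing about $\ZZ$.
\item The pinching also discards exactly the cross-level entries $Z_{ij}$ with $s_i\neq s_j$, which is where harmonic and one-sided weights differ. The paper's pinching acts on a different object: it pinches the Sylvester coefficients $\LL_{\UU},\LL_{\VV}$ in the singular bases of $\ZZ$. A Schur-complement argument then shows $\innerproduct{\ZZ,\W{\XX}{\varepsilon}(\ZZ)}$ can only decrease, which is the useful direction.
\end{itemize}

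The missing idea is the one you discarded: convexity works once it is applied to your split rather than to $\ZZ=\frac12(\ZZ+\ZZ)$. Write $\ZZ=\tfrac12(2\f{S}_1\f{Y})+\tfrac12(2\f{Y}\f{S}_2)$ and use convexity of $\mathcal{J}_{\varepsilon}$. Combine it with the one-sided majorizations of \Cref{prop:global:majorization:onesided}, which in your coordinates read $\mathcal{J}_{\varepsilon}(\f{W})\le\frac12\sum_{i\in[d]}s_i+\frac12\operatorname{tr}(\f{W}^\top\f{S}_1^{-1}\f{W})$ and $\mathcal{J}_{\varepsilon}(\f{W})\le\frac12\sum_{i\in[d]}s_i+\frac12\operatorname{tr}(\f{W}\f{S}_2^{-1}\f{W}^\top)$ for all $\f{W}$. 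The padded indices are already accounted for there. This gives
\[
\mathcal{J}_{\varepsilon}(\ZZ)\le\tfrac12\mathcal{J}_{\varepsilon}(2\f{S}_1\f{Y})+\tfrac12\mathcal{J}_{\varepsilon}(2\f{Y}\f{S}_2)\le\tfrac12\sum_{i\in[d]}s_i+\operatorname{tr}(\f{Y}^\top\f{S}_1\f{Y})+\operatorname{tr}(\f{Y}\f{S}_2\f{Y}^\top)=\tfrac12\sum_{i\in[d]}s_i+\innerproduct{\ZZ,\f{Y}},
\]
which is the target.

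With this step your argument becomes a complete proof, considerably shorter than the paper's. It uses only that the harmonic-mean form is the parallel sum of the left and right forms, via the entrywise identity $\min_{x+y=z}(x^2/s_i+y^2/s_j)=z^2/(s_i+s_j)$. The paper instead proves the finer per-singular-value lower bound of \Cref{lemma:lower_bound_weighted_inner_product}. It does so through the Sylvester characterization, iterative pinching, Ky Fan's principle and a scalar case analysis. The same entrywise minimization also shows why your route cannot reach any power mean below $q=-1$, consistent with \Cref{thm:power_means_majorize}.
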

Our novel proof approach relies on directly establishing a suitable lower bound for the weighted inner product term
$\innerproduct{\ZZ, \W{\XX}{\varepsilon}(\ZZ)}$ using the spectral properties of  $\ZZ$. More precisely, in the harmonic-mean case, we can show that given $\ZZ$, the matrix $\W{\XX}{\varepsilon}(\ZZ)$ solves a Sylvester equation. Using pinching techniques, we can then replace this Sylvester equation iteratively by simpler ones, until we can finally obtain a closed-form expression for the weighted inner product term. We provide a detailed proof of \Cref{thm:majorization} in \Cref{sec:appendix:harmonic:majorization_proof}. We think that it might be of independent interest for other approximation problems involving spectral functions beyond nuclear norm-type problems. A specialization of the novel proof technique to the case of one-sided weights and \Cref{prop:global:majorization:onesided} is discussed in \Cref{sec:appendix:proof_majorization_one_sided_weights}.

\begin{remark}
	While the majorization-minimization (MM) framework \citep{LangeHunterYang-2000,Lange-MM2016} provides an insightful perspective for understanding IRLS, we note that the IRLS smoothing parameter update step of \Cref{alg:algo1} is uncharacteristic of classical MM approaches. Therefore, even with the majorization result at hand, analyzing the convergence properties of the algorithm does not reduce to standard MM arguments, and still poses a notable difficulty, in particular, as it pertains to the convergence rate analysis of \Cref{alg:algo1}.
\end{remark}
\subsection{Choice of the Smoothing Parameter} \label{sec:smoothing:parameter}
It is well-known in the IRLS literature \citep{Daubechies-CPAM2010,AravkinBurkeHe19,PKV22,LermanLiMaunuZhang-2025Global} that the choice of the smoothing parameter $\varepsilon$ is crucial for both the theoretical analysis of an IRLS algorithm and its empirical performance. 
The update rule \cref{eq:IRLS:step_2} proposed in \Cref{alg:algo1} differs from the prevailing update rules in the low-rank IRLS literature, using the scaled $\ell_1$-tail $\besterrNuc{\Xk{(k)}}{\widetilde{r}}/d = \sum_{i=\widetilde{r}+1}^{d} \sigma_i(\Xk{(k)}) / d$ to update the smoothing parameter $\varepsilon$, which requires information about the entire spectrum of $\Xk{(k)}$, whereas other works \citep{Fornasier11,Kummerle-JMLR2018,KummerleMayrinkVerdun-ICML2021,GTK24} all use the $(\widetilde{r}+1)$st singular value $\sigma_{\widetilde{r}+1}(\Xk{(k)})$ in \cref{eq:IRLS:step_2} in lieu of the scaled $\ell_1$-tail:
\begin{equation} \label{eq:IRLS:step_2:singularvalue}
\varepsilon_{k}:=  \min\left(\varepsilon_{k-1} , \sigma_{\widetilde{r}+1}(\Xk{(k)}) \right).
\end{equation}
This has certain computational advantages.

The works of \citet{Kummerle-JMLR2018}, \citet{KummerleMayrinkVerdun-ICML2021}, and \citet{GTK24}
primarily concern nonconvex rank surrogates, such as Schatten-$p$ quasi-norms or the log-determinant, rather than the convex nuclear norm considered here. In the present setting, the update rule \cref{eq:IRLS:step_2:singularvalue} remains computationally attractive, but it is not covered by our global convergence-rate analysis. That analysis exploits the specific coupling between $\varepsilon_k$ and the nuclear-norm tail provided by \cref{eq:IRLS:step_2}. Moreover, the experiments in \Cref{sec:smoothing:parameter:experiments} suggest that the scaled nuclear-norm-tail update recovers the nuclear-norm minimizer over a wider range of sampling factors in the considered experimental regime. These theoretical and empirical considerations motivate
our use of \cref{eq:IRLS:step_2}; they do not imply that this rule
is optimal for every problem instance or performance criterion. A compromise between these two update rules is the Frobenius tail based rule
\begin{equation} \label{eq:IRLS:step_2:frobenius}
 	\varepsilon_{k} := \min\left(\varepsilon_{k-1} , \sqrt{\frac{ \sum_{i=\widetilde{r}+1}^{d} \sigma_i^2(\Xk{(k)}) }{d}}\right).
\end{equation}
However, it is unclear whether either of the rules \cref{eq:IRLS:step_2:singularvalue} and \cref{eq:IRLS:step_2:frobenius} can be used to prove global convergence rates for IRLS as established in \Cref{sec:main:results}.

Alternative smoothing parameter update rules include a linear decrease according to a fixed schedule \citep{mohan_fazel,PengKuemmerleVidal-CVPR2023}, which could also be considered within \Cref{alg:algo1}. While simple in principle, tuning of the per-iteration decrease factor is challenging in practice. Moreover, it is unclear whether convergence guarantees can be obtained with such a rule.

In this paper, we focus on the update rule \cref{eq:IRLS:step_2} for our analyses. We further provide numerical evidence in \Cref{sec:smoothing:parameter:experiments} that suggests that \texttt{MatrixIRLS} with \cref{eq:IRLS:step_2} is able to find the nuclear norm minimizer for a larger range of sampling factors than the other update rules, often within a wall-clock time comparable to IRLS methods using other update rules.

\subsection{Optimality of Harmonic-Mean Weight Operator} \label{sec:majorization:optimality}

In the case of left- and right-sided weight operators \cref{eq:leftsided:mean} and \cref{eq:rightsided:mean}, it can be verified that the weight operator action reduces to left and right multiplication by the matrices $\LL_{\UU}^{-1}$ and $\LL_{\VV}^{-1}$ such that
\begin{equation} \label{eq:onesided:action}
	\W{\XX}{\varepsilon}(\ZZ)
	= \LL_{\UU}^{-1} \ZZ \quad \text{ and } \quad 
	\W{\XX}{\varepsilon}(\ZZ)
	= \ZZ \LL_{\VV}^{-1},
\end{equation}
respectively, where $\LL_{\UU} := \UU_{\XX} \diag ( \lambda_1, \ldots, \lambda_{d_1} ) \UU_{\XX}^\top$ and $\LL_{\VV} := \VV_{\XX} \diag ( \lambda_1, \ldots, \lambda_{d_2} ) \VV_{\XX}^\top$ are square matrices  with $\lambda_i := \max(\sigma_i,\varepsilon)$ for each $i \in [\max(d_1, d_2)]$. In fact, the left-reweighted matrix $\LL_{\UU}^{-1} \ZZ$ was originally used by \citet[equations (2.7) and (2.10)]{Fornasier11} without framing this as an action of a weight operator. The contemporaneous work of \citet{mohan_fazel} used right-reweighted matrices similar to $\ZZ \LL_{\VV}^{-1}$. By transposition of the underlying matrices $\XX$ and $\ZZ$, which preserves the rank, the left-sided and right-sided weight operator notions recover each other, respectively. The harmonic-mean weight operator $\W{\XX}{\varepsilon}$ using \cref{eq:harmonic:mean}, on the other hand, \emph{cannot} be expressed as a simple left or right multiplication by a matrix and was proposed in \citet{Kummerle-JMLR2018} with the rationale of providing a reweighting that appropriately acts both on row and column spaces. Acting on both row and column spaces in this balanced way also has desirable structural consequences, such as preserving symmetry in symmetric problems.

We recall our observation \cref{eq:onesided:action} that the left-sided weight operator using \cref{eq:leftsided:mean} is only informed by the column space of the current iterate $\XX$, and that its action amounts to matrix multiplication of the variable $\ZZ$ from the left. On the other hand, the right-sided weight operator is only informed by the row space of the current iterate $\XX$. This dichotomy between the row and column space information 
has undesirable consequences. For example, when the left-sided weights are applied to the transposed problem as opposed to the original problem \cref{eq:nucnorm:min}---i.e., when $\XX$ is replaced by $\XX^\top$ and the linear operator $\mathcal{A}: \Rdd \longrightarrow \R^m$ is replaced by a compatible one $\mathcal{A}_{\text{tr}}: \R^{d_2 \times d_1} \longrightarrow \R^m$ satisfying $\mathcal{A}(\XX)_{\ell} = \mathcal{A}_{\text{tr}}(\XX^{\top})_{\ell}$, the trajectory of the IRLS iterates $(\Xk{(k)})_{k\geq 0}$ will be different from what it would be if the left-sided weights were applied to the original problem. The same is true for the right-sided weights. 

The harmonic-mean weight operator \cref{eq:harmonic:mean,eq:W:operator:action} avoids this pitfall. However, more generally, an entire family of weight operators that is not subject to this issue can be defined via the notion of \emph{power means}.

\begin{definition}[Power means, \citeauthor{Bullen03}, \citeyear{Bullen03}]\label{def:powermean}
For $-\infty \leq q \leq \infty$, the \emph{$q$-power mean} \\ $\mathcal{M}_{q}(a,b)$ of two numbers $a, b > 0$ is given by
	\begin{equation} \label{eq:def:powermean}
		\mathcal{M}_{q}(a,b) = 
		\begin{cases}
			\min(a,b), & \text{ if } q = -\infty, \\
			\left(\frac{ a^q + b^q }{2}\right)^{\frac{1}{q}}, & \text{ if } q \in (-\infty,0) \cup (0,\infty), \\
			\sqrt{a b}, & \text{ if } q = 0, \\
			\max(a,b), & \text{ if } q = \infty. \\
		\end{cases}
	\end{equation}
\end{definition}

According to the previous definition, the arithmetic mean and the harmonic mean correspond to the $q$-power mean for $q=1$ and $q=-1$, respectively. By extension, power means of the two one-sided weight variants induce an entire family of weight operators. It is well-known \citep[Section III.3, Theorem 1]{Bullen03} that for $a,b\geq 0$ and $-\infty \leq q \leq q' \leq +\infty$, $\mathcal{M}_{q}(a,b) \leq \mathcal{M}_{q'}(a,b)$. This implies a Loewner ordering for the family of power-mean-induced weight operators, which we formalize in \Cref{lem:power_mean_loewner}.

\begin{lemma}[Monotonicity of Power Mean Weight Operators]\label{lem:power_mean_loewner}
	Fix $\varepsilon>0$ and $\XX\in\R^{d_1\times d_2}$ 
	with singular value decomposition $\XX=\UU_{\XX} \,\diag(\ssigma)\,\VV_{\XX}^\top$ as in \Cref{def:optimalweightoperator}. Define $
	\widetilde{\sigma}_i:= \max(\sigma_i,\varepsilon)^{-1}$ for $i \in [d]$ and $\widetilde{\sigma}_i:= \varepsilon^{-1}$ for $i \in [D] \setminus [d]$. For each $q\in[-\infty,\infty]$, 
	let $\f{H}^{(q)}_{\ssigma,\varepsilon}\in\Rdd$ 
	be given entrywise for $i \in [d_1]$ and $j \in [d_2]$ by
	\begin{equation} \label{eq:power:mean:core:matrix}
	\bigl(\f{H}^{(q)}_{\ssigma,\varepsilon}\bigr)_{ij}
	:=
	\mathcal{M}_q(\widetilde{\sigma}_i,\widetilde{\sigma}_j),
	\end{equation}
	where $\mathcal{M}_q(\cdot,\cdot)$ denotes the $q$-power mean from \cref{eq:def:powermean}. 
	Let $\W{\XX}{\varepsilon}^{(q)}$ be the corresponding weight operator defined as in \cref{eq:W:operator:action}, i.e., the operator mapping any $\ZZ\in\Rdd$ to
	\begin{equation} \label{eq:power:mean:weight:operator}
	\W{\XX}{\varepsilon}^{(q)}(\ZZ)
	:=\UU_{\XX}\Bigl(\f{H}^{(q)}_{\ssigma,\varepsilon}\circ(\UU_{\XX}^\top \ZZ\VV_{\XX})\Bigr)\VV_{\XX}^\top.
	\end{equation}
	Then for any $-\infty\le q\le q'\le \infty$, 
	one has the Loewner ordering $
	\W{\XX}{\varepsilon}^{(q)}  \preceq\ \W{\XX}{\varepsilon}^{(q')}
	$ and the associated quadratic models $Q_\varepsilon^{(q)}$ and $Q_\varepsilon^{(q')}$ (cf. \cref{eq:smoothedell1:IRLSmajorizer}) satisfy for all $\ZZ\in\Rdd$ that
	\begin{equation} \label{eq:pm:quad:model:ordering}
	Q_\varepsilon^{(q)}(\ZZ\mid\XX) \leq Q_\varepsilon^{(q')}(\ZZ\mid\XX).
	\end{equation}
\end{lemma}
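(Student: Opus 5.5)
The plan is to reduce the operator inequality to an entrywise inequality between core matrices, working in the singular-vector coordinates of $\XX$. The map $\ZZ \mapsto \UU_{\XX}^\top \ZZ \VV_{\XX}$ is an orthogonal transformation of $\Rdd$ with respect to the Frobenius inner product, since $\UU_{\XX}$ and $\VV_{\XX}$ are orthogonal. Setting $\widetilde{\ZZ} := \UU_{\XX}^\top \ZZ \VV_{\XX}$, we obtain from \cref{eq:power:mean:weight:operator}
\begin{equation*}
\innerproduct{\ZZ, \W{\XX}{\varepsilon}^{(q)}(\ZZ)} = \innerproduct{\widetilde{\ZZ}, \f{H}^{(q)}_{\ssigma,\varepsilon}\circ \widetilde{\ZZ}} = \sum_{i\in[d_1],\, j\in[d_2]} \mathcal{M}_q(\widetilde{\sigma}_i,\widetilde{\sigma}_j)\, \widetilde{Z}_{ij}^2 .
\end{equation*}
Thus each $\W{\XX}{\varepsilon}^{(q)}$ is self-adjoint and diagonal in the orthonormal basis $\{\f{u}_i \f{v}_j^\top\}$, with eigenvalue $\mathcal{M}_q(\widetilde{\sigma}_i,\widetilde{\sigma}_j)$ on $\f{u}_i\f{v}_j^\top$.

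Next I will invoke the classical power mean inequality \citep[Section III.3, Theorem 1]{Bullen03}. All $\widetilde{\sigma}_i>0$ because $\varepsilon>0$, so it gives $\mathcal{M}_q(\widetilde{\sigma}_i,\widetilde{\sigma}_j)\le \mathcal{M}_{q'}(\widetilde{\sigma}_i,\widetilde{\sigma}_j)$ for every pair $(i,j)$ whenever $q\le q'$. This includes the endpoint and geometric cases $q\in\{-\infty,0,\infty\}$ of \cref{eq:def:powermean}. Multiplying by $\widetilde{Z}_{ij}^2\ge 0$ and summing yields $\innerproduct{\ZZ, (\W{\XX}{\varepsilon}^{(q')}-\W{\XX}{\varepsilon}^{(q)})(\ZZ)}\ge 0$ for all $\ZZ$, which is the Loewner ordering.

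For \cref{eq:pm:quad:model:ordering}, I compare the three terms of \cref{eq:smoothedell1:IRLSmajorizer}. The terms $\mathcal{J}_\varepsilon(\XX)$ and $\innerproduct{\nabla\mathcal{J}_\varepsilon(\XX),\ZZ-\XX}$ do not depend on $q$. Hence
\begin{equation*}
Q_\varepsilon^{(q')}(\ZZ\mid\XX)-Q_\varepsilon^{(q)}(\ZZ\mid\XX)=\tfrac12\innerproduct{\ZZ-\XX,(\W{\XX}{\varepsilon}^{(q')}-\W{\XX}{\varepsilon}^{(q)})(\ZZ-\XX)}\ge 0
\end{equation*}
by the Loewner ordering applied to $\ZZ-\XX$.

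There is no real obstacle; the points needing care are bookkeeping.
\begin{itemize}
\item \textbf{Padded indices.} For $i>d$ we have $\widetilde{\sigma}_i=\varepsilon^{-1}$, which is positive, so the power mean inequality still applies there.
\item \textbf{Diagonal entries.} These agree across $q$, since $\mathcal{M}_q(a,a)=a$. This keeps the gradient condition of \Cref{proposition:IRLS:basicproperties} valid for every $q$, although the ordering argument does not need it.
\end{itemize}
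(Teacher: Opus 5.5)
Your proof is correct and follows the paper's argument for the Loewner ordering: you rotate into the singular-vector coordinates of $\XX$, write the quadratic form as $\sum_{i,j}\mathcal{M}_q(\widetilde{\sigma}_i,\widetilde{\sigma}_j)\widetilde{Z}_{ij}^2$, and compare entrywise via the power mean inequality. The one difference is in \cref{eq:pm:quad:model:ordering}: you argue directly from the definition \cref{eq:smoothedell1:IRLSmajorizer}, where only the quadratic term depends on $q$, whereas the paper goes through the simplified form \cref{eq:QZX:equality} and needs the diagonal entries of the core matrices to coincide so that $\innerproduct{\XX,\W{\XX}{\varepsilon}^{(q)}(\XX)}$ cancels; your route is slightly more direct and does not rely on the gradient condition.
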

A proof of \Cref{lem:power_mean_loewner} is given in \Cref{sec:proofs:power_means:optimality}. 

Recalling that the majorization property \cref{eq:majorization:inequality} of the quadratic model $Q_\varepsilon^{(-1)}(\cdot \mid\XX)$ associated to the harmonic-mean weight operator with respect to the smoothed nuclear norm objective  was shown in \Cref{thm:majorization}, we can infer from \Cref{lem:power_mean_loewner} that the majorization property holds for any $Q_\varepsilon^{(q)}(\cdot \mid\XX)$ with $q \geq -1$, which includes geometric and arithmetic mean weight operators. On the other hand, one can ask whether the harmonic mean is at the boundary or whether other $q$-means for $q<-1$, possibly the most extreme min-mean corresponding to $q=-\infty$, also lead to majorization. In \Cref{thm:power_means_majorize}, we establish that within the power-mean weight operator family, the harmonic mean is indeed \emph{optimal}: any power-mean weight operator that is smaller in the Loewner ordering defines a quadratic model that \emph{violates} majorization locally. 

\begin{theorem}[Optimality of Harmonic-Mean Weight Operator] \label{thm:power_means_majorize}
Let $\XX \in \mathbb{R}^{d_1 \times d_2}$
have SVD $\XX=\UU \diag(\boldsymbol{\sigma}) \VV^\top$
such that there are two indices $i,j \in [d]$ with $i \neq j$
whose singular values satisfy $\sigma_i \neq \sigma_j$
and $\sigma_i,\sigma_j > \varepsilon > 0$. Fix $q \in [-\infty , \infty]$ and consider the weight operator $\W{\XX}{\varepsilon}(\cdot):=\W{\XX}{\varepsilon}^{(q)}(\cdot)$ of \cref{eq:power:mean:weight:operator} based on the weight core matrix $\f{H}^{(q)}_{\ssigma,\varepsilon}$ arising as entrywise $q$-power mean of the left- and right-sided core matrices via \cref{eq:power:mean:core:matrix}.
Then the associated quadratic model $Q_\varepsilon^{(q)}(\cdot\mid\XX)$ (cf. \cref{eq:smoothedell1:IRLSmajorizer}) majorizes $\mathcal{J}_\varepsilon(\cdot)$, 
i.e., $Q_{\varepsilon}^{(q)}(\ZZ\mid\XX) \geq \mathcal{J}_{\varepsilon}(\ZZ)$ 
for each $\ZZ \in \Rdd$, if and only if $q \geq -1$.
\end{theorem}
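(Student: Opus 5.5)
The proof splits into the two directions of the equivalence. For the ``if'' direction ($q \geq -1$) I would combine earlier results directly. \Cref{thm:majorization} gives $Q_\varepsilon^{(-1)}(\ZZ\mid\XX) \geq \mathcal{J}_\varepsilon(\ZZ)$ for all $\ZZ$, because the core matrix $\f{H}^{(-1)}_{\ssigma,\varepsilon}$ of \cref{eq:power:mean:core:matrix} is exactly the harmonic-mean core matrix \cref{eq:harmonic:mean}. For $q\ge -1$, \Cref{lem:power_mean_loewner} gives $Q_\varepsilon^{(-1)}(\ZZ\mid\XX)\le Q_\varepsilon^{(q)}(\ZZ\mid\XX)$ for all $\ZZ$. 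Chaining the two inequalities yields majorization. Note that this direction does not need the hypothesis on $\sigma_i,\sigma_j$.

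For the ``only if'' direction, fix $q<-1$ (including $q=-\infty$). I would exhibit an explicit $\ZZ$ violating \cref{eq:majorization:inequality}, namely a small skew perturbation in the $(i,j)$ singular block:
\[
\ZZ_\tau := \XX + \tau\,\UU\bigl(\f{e}_i\f{e}_j^\top - \f{e}_j\f{e}_i^\top\bigr)\VV^\top, \qquad \tau>0 \text{ small}.
\]
Then $\UU^\top \ZZ_\tau \VV$ equals $\diag(\ssigma)$ except for the $2\times 2$ block on indices $\{i,j\}$, which becomes $\begin{pmatrix}\sigma_i & \tau\\ -\tau & \sigma_j\end{pmatrix}$. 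Hence the singular values of $\ZZ_\tau$ are those of $\XX$ with $\sigma_i,\sigma_j$ replaced by the two singular values $s_1\geq s_2$ of this block. Using $s_1+s_2=\sqrt{\|\cdot\|_F^2+2|\det|}$ for $2\times2$ matrices, I get
\[
s_1+s_2=\sqrt{(\sigma_i+\sigma_j)^2+4\tau^2}.
\]
For $\tau$ small, continuity keeps $s_1,s_2>\varepsilon$. On this region $j_\varepsilon$ is the identity, so
\[
\mathcal{J}_\varepsilon(\ZZ_\tau)-\mathcal{J}_\varepsilon(\XX)=\sqrt{(\sigma_i+\sigma_j)^2+4\tau^2}-(\sigma_i+\sigma_j)=\frac{2\tau^2}{\sigma_i+\sigma_j}+O(\tau^4).
\]

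On the model side, I would use \cref{eq:smoothedell1:IRLSmajorizer} with $\DDelta=\ZZ_\tau-\XX$. The gradient $\nabla\mathcal{J}_\varepsilon(\XX)$ is $\UU\,\mathrm{diag}(j_\varepsilon'(\sigma_k))\VV^\top$, whose rotated version is diagonal, so it is orthogonal to the purely off-diagonal $\UU^\top\DDelta\VV$ and the linear term vanishes. The quadratic term is
\[
\tfrac12\innerproduct{\DDelta,\W{\XX}{\varepsilon}^{(q)}(\DDelta)}=\tfrac12\tau^2\bigl(H^{(q)}_{ij}+H^{(q)}_{ji}\bigr)=\tau^2\,\mathcal{M}_q(\sigma_i^{-1},\sigma_j^{-1}),
\]
since $\widetilde\sigma_i=\sigma_i^{-1}$ and $\widetilde\sigma_j=\sigma_j^{-1}$. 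Because $\frac{2}{\sigma_i+\sigma_j}=\mathcal{M}_{-1}(\sigma_i^{-1},\sigma_j^{-1})$, this gives
\[
Q^{(q)}_\varepsilon(\ZZ_\tau\mid\XX)-\mathcal{J}_\varepsilon(\ZZ_\tau)=\tau^2\bigl(\mathcal{M}_q-\mathcal{M}_{-1}\bigr)(\sigma_i^{-1},\sigma_j^{-1})+O(\tau^4).
\]
Since $\sigma_i\ne\sigma_j$, power means are strictly increasing in $q$, so $\mathcal{M}_q<\mathcal{M}_{-1}$ for $q<-1$ (including $q=-\infty$, where $\mathcal{M}_{-\infty}=\min$). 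Thus the right-hand side is negative for small $\tau>0$, and majorization fails.

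The main obstacle is bookkeeping rather than depth. First, I need to justify that the perturbation only alters the $\{i,j\}$ block of the singular value list, so that the other singular values and their $j_\varepsilon$ contributions, including those $\le\varepsilon$, cancel exactly. Second, I need to ensure $\tau$ is small enough that both new singular values stay above $\varepsilon$. Third, I need the strict monotonicity of $q\mapsto\mathcal{M}_q(a,b)$ for $a\neq b$, which I would cite from \citet{Bullen03} in its strict form. The hypotheses $\sigma_i\ne\sigma_j$ and $\sigma_i,\sigma_j>\varepsilon$ are used exactly at these points.
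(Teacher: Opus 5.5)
Your proof is correct. The ``if'' direction is the paper's argument: harmonic-mean majorization from \Cref{thm:majorization}, combined with the Loewner ordering of \Cref{lem:power_mean_loewner}. For the ``only if'' direction you use the same skew perturbation in the $(i,j)$ singular block and the same $2\times 2$ block reduction as the paper's \Cref{lem:necessary}.

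The difference lies in how you obtain the curvature of $\mathcal{J}_\varepsilon$ along that direction.
\begin{itemize}
\item The paper applies the Hessian formula for spectral functions (\Cref{lemma:2nd_spectral_derivative}, which rests on external twice-differentiability results). It then uses a second-order necessary condition at the local maximum $t=0$ of $t\mapsto \mathcal{J}_\varepsilon - Q_\varepsilon$.
\item You instead evaluate the objective along the curve exactly, via $(s_1+s_2)^2=\|\cdot\|_F^2+2|\det|$. This shortcut works because $j_\varepsilon$ is the identity on $(\varepsilon,\infty)$, so only the sum of the two block singular values enters.
\end{itemize}

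What your route buys:
\begin{itemize}
\item It is more elementary and self-contained.
\item It produces an explicit violating point $\ZZ_\tau$ with an explicit admissible range. For example, Weyl's inequality gives $s_2\ge \min(\sigma_i,\sigma_j)-\tau>\varepsilon$ whenever $\tau<\min(\sigma_i,\sigma_j)-\varepsilon$.
\item The same computation yields the paper's necessary condition $(\f{H}_{\ssigma,\varepsilon})_{ij}\ge 2/(\sigma_i+\sigma_j)$ for any symmetric core matrix.
\end{itemize}

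What the paper's route buys is generality in the penalty function. Its Hessian-based argument does not require the scalar penalty to be linear on the relevant range. That is what underlies the paper's remark that a generalization of the same second-order argument shows the harmonic-mean model fails to majorize for Schatten-$p$ surrogates with $p<1$. In that setting your closed-form shortcut through $s_1+s_2$ alone is no longer available.
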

A full proof of \Cref{thm:power_means_majorize} is given in \Cref{sec:proofs:power_means:optimality}. We note that one direction of the necessary and sufficient condition $q \geq -1$ is a direct consequence of \Cref{thm:majorization} and the Loewner ordering of \Cref{lem:power_mean_loewner}. The other direction involves an explicit necessary condition that we obtain from an expression for the Hessian of spectral functions, applied to a perturbation of $\XX$ that is confined to a two-dimensional singular block.

\Cref{thm:power_means_majorize} states that the harmonic mean provides the tightest possible majorization within the class of power mean weight operators. This tightness provides one ingredient towards explaining the superior performance of \texttt{MatrixIRLS} with harmonic-mean weight operators, which we explore in \Cref{sec:simulations}, compared to the one-sided or arithmetic mean IRLS variants---\texttt{MatrixIRLS} optimizes the quadratic model with the pointwise smallest gap to the smoothed surrogate objective among all majorizing power mean variants, which is the mechanism we expect to drive its faster per-iteration progress. A provable consequence of this tightness is an improved, dimension-free local linear convergence rate of \texttt{MatrixIRLS}, which we show with \Cref{thm:locallinearp1} in \Cref{sec:local:linear:convergence}. Numerical experiments of \Cref{sec:linear:convergence:rate:factors,sec:rectangular:lowrank} indicate that the improved local linear rate cannot be observed for IRLS variants using larger power means such as the arithmetic mean variant, nor for the one-sided IRLS variants, for which we provide a counterexample in \Cref{thm:counterexample:leftsided:weight:operator}.
 
\citet{Kummerle-JMLR2018} were the first to propose a harmonic-mean weight operator and observed improved local convergence, but focusing on nonconvex Schatten-$p$ quasi-norm minimization for $0 < p < 1$.
However, we would like to point out that it follows from a generalization of the second-order necessary argument outlined in \Cref{sec:proofs:power_means:necessary}, which is part of the proof of \Cref{thm:power_means_majorize}, that the quadratic model associated to the harmonic-mean weight operator is in fact \emph{not} a valid global majorizer in the nonconvex case of $p < 1$, which lies outside the scope of the present paper.

%% file: main_results.tex
\section{Linear Convergence Rates of IRLS for Nuclear Norm Minimization} \label{sec:main:results}
Building on the majorization results of \Cref{sec:majorization:properties}, we now establish a detailed convergence analysis of \texttt{MatrixIRLS} for nuclear norm minimization (\Cref{alg:algo1}) and distinguish different linear convergence rates that can be shown for IRLS depending on the choice of the weight operator.
\Cref{mainresult:lowrank,mainresult:approximatelowrank} establish the first global linear convergence rates for IRLS methods in this setting (\Cref{sec:global:linear:convergence}). For harmonic-mean \texttt{MatrixIRLS}, \Cref{thm:locallinearp1} further gives a dimension-free local linear rate, which, as we show via \Cref{thm:counterexample:leftsided:weight:operator}, cannot be achieved for IRLS using one-sided weight operators (\Cref{sec:local:linear:convergence}).

\subsection{Preliminaries} \label{sec:preliminaries} 
Our results are based on a well-studied regularity assumption on the measurement operator $\mathcal{A}: \Rdd \to \R^{m}$. In particular, we make the assumption that the measurement operator $\mathcal{A}$ satisfies the NSP \citep{Recht11,Fornasier11,Yi2020}, which is a sufficient (and in a very related form, also necessary) condition for successful recovery of low-rank matrices via nuclear norm minimization.
\begin{definition}[Null Space Property] \label{def:NSP:statement}
	A linear operator $\mathcal{A}: \R^{d_1 \times d_2} \to \R^{m}$ is said to satisfy 
	the \emph{NSP} of order $r \in \N$, $1 \leq r < d$ with constant $0 < \eta_r < 1$ if 
	\begin{equation} \label{eq:NSP:definition}
		\sum_{i=1}^{r} \sigma_i(\f{N}) \leq \eta_r \sum_{i=r+1}^{d} \sigma_i(\f{N})
	\end{equation}
for all  $ \f{N} \in \ker(\mathcal{A})$. Here, $\sigma_i(\f{N})$ denotes the $i$-th largest singular value of the matrix $\f{N}$.
\end{definition}
The NSP holds for several classes of linear measurement operators $\mathcal{A}$.
For example, if the operator $\mathcal{A}$ satisfies the restricted isometry property \citep{rechtfazel_2011} of sufficient order, then the NSP holds \citep[see, e.g.,][Exercise 6.24]{FoucartRauhut13}.
In particular, the NSP of order $r$ for some constant $0 < \eta_r < 1$ holds, e.g., with high probability, if the measurement matrices are random rank-one matrices
with Gaussian factors \citep{ROP_CaiZhang2015} and if the number of measurements satisfies $m = \Omega (r (d_1 +d_2))$, see also \citet{NSP_kueng,NSP_Kabanava}.\footnote{In fact,  \citet{ROP_CaiZhang2015} shows under the said assumption that a variant of \Cref{def:NSP:statement} holds where $\eta_r$ in \cref{eq:NSP:definition} is $1$ and the inequality is strict. A standard modification yields \cref{eq:NSP:definition} for some $0 < \eta_r < 1$.}

Beyond guaranteeing that \(\Xzero\) is the unique nuclear-norm minimizer
compatible with \(\f{y}=\mathcal{A}(\Xzero)\), \Cref{def:NSP:statement}
yields a linear error bound on the feasible set. If \(\Xzero\) has rank \(r\),
then every \(\XX\) with \(\mathcal{A}(\XX)=\mathcal{A}(\Xzero)\) satisfies (see \Cref{lemma:NSPl1min} for a more general version)
\begin{equation} \label{eq:NSP:sharpness}
\,\nucnorm{\XX-\Xzero}
\le
\frac{1+\eta_r}{1-\eta_r} \left(\nucnorm{\XX}-\nucnorm{\Xzero}\right).
\end{equation}
In the language of optimization, this is a \emph{sharpness} condition
for the nuclear norm on the affine constraint
\citep{Burke-SIAM-J-CO1993,Roulet-2020Computational}: the objective gap
grows linearly with the nuclear-norm distance to \(\Xzero\).
Linear error bounds of this type are the standard landscape hypothesis
for linear convergence of first-order methods
\citep{Goffin-1977}.
The same phenomenon originates in sparse recovery, where the analogous
\(\ell_1\)-NSP characterizes exact recovery by basis pursuit
\citep{Cohen-JAMS2009,FoucartRauhut13} and is the assumption behind
the first global linear rates for IRLS for \(\ell_1\)-minimization
\citep{Kummerle-NeurIPS2021}.

For IRLS, however, sharpness alone does not produce a rate: In contrast to known sharpness-based arguments, \Cref{alg:algo1} takes no step size, and \cref{eq:NSP:sharpness} is oblivious to the choice of the weight operator $W^{(k)}$ in \cref{eq:IRLS:matrix:formulation}. Controlling the nuclear norm gap via the majorization property together with appropriate, weight-operator dependent upper bounds on the quadratic forms implied by $W^{(k)}$, which depend on their spectral structure, is key for obtaining linear convergence rates for IRLS algorithms like \Cref{alg:algo1}.

\subsection{Global Linear Convergence of IRLS for Nuclear Norm Minimization} \label{sec:global:linear:convergence}
We show that the iterates $(\Xk{(k)})_{k \geq 1}$ of \Cref{alg:algo1} converge globally to a low-rank ground truth matrix $\Xzero$ with a linear rate whenever the measurement operator $\mathcal{A}$ satisfies the NSP, largely independently of the particular weight operator choice relevant for updating $W^{(k+1)} := \W{\Xk{(k)}}{\varepsilon_{k}}$.
We present in \Cref{sec:mainresult:lowrank} a result for the recovery of exactly low-rank matrices $\Xzero$, before generalizing this in \Cref{sec:mainresult:approximatelowrank} to approximately low-rank ground truths.

\subsubsection{\texorpdfstring
{Rank-$r$ Ground Truth $\Xzero$}
{Rank-r Ground Truth X-star}}  \label{sec:mainresult:lowrank}
Specific to linear operators $\mathcal{A}$ satisfying the NSP of order $r$ of \Cref{def:NSP:statement} with constant $\eta_r > 0$, we define the constants
\begin{equation} \label{eq:A_eta_r:def}
A_{\eta_r} = \frac{\left(\frac{3}{2}+\eta_r\right)\left(1+\eta_r\right)}{1-\eta_r} \quad \text{and} \quad C_{\eta_r} = \frac{\left( \frac{3}{4} - \frac{2\eta_r}{1+\eta_r}  \right)^2 }{3+2\eta_r}.
\end{equation}
Unlike the local result of \Cref{sec:local:linear:convergence}, the global results of this section are not tied to one specific weighting scheme, but hold for an entire family of weight operators, which we fix first for later reference. Define the the weight operator-dependent constant
\begin{equation} \label{eq:c_q:def}
	c_q :=
	\begin{cases}
	1, & \text{ if harmonic, one-sided or power mean weight operator with }q \in [-\infty, 1], \\
	2^{2-1/q}-1, & \text{ if power mean weight operator with }q \in (1, \infty), \\
	3, & \text{ if power mean weight operator with } q = \infty.
	\end{cases}
\end{equation}
\begin{definition}[Admissible Weight Operators] \label{def:admissible:weightoperators}
	We call the weight operators $\W{\Xk{(k)}}{\varepsilon_{k}}$ of \Cref{def:optimalweightoperator} used within \Cref{alg:algo1} \emph{admissible} with constant $c_q$ if, at each iteration $k$, their weight operator core matrix $\f{H}_{\ssigma,\varepsilon}$ of \Cref{def:weightcore} is
		(i) the \emph{harmonic-mean} core matrix \cref{eq:harmonic:mean},
		(ii) a \emph{one-sided} core matrix \cref{eq:leftsided:mean} or \cref{eq:rightsided:mean}, or
		(iii) a \emph{$q$-power mean} core matrix \cref{eq:power:mean:core:matrix} with $q \in [-1,\infty]$,
	where $c_q$ is the associated constant of \cref{eq:c_q:def}.
\end{definition}
Case (i) is the special case $q = -1$ of case (iii), whereas the one-sided operators of case (ii) are not power means. In all cases, $c_q \in [1,3]$, and $c_q = 1$ for the most common choices, cf.\ the discussion after \Cref{mainresult:lowrank}. Since \Cref{alg:algo1} is stated with harmonic-mean weights, a choice other than (i) is to be understood as the corresponding variant of \Cref{alg:algo1} in which only the weight operator update is replaced accordingly, while all other steps remain unchanged.

\begin{theorem}[Global Linear Convergence Rate, Low-Rank Ground Truths]\label{mainresult:lowrank}
Let \\ \allowbreak $\Xzero \in  \mathbb{R}^{d_1 \times d_2}$  be a matrix of rank $r$.
Assume that the measurement operator $\mathcal{A}: \mathbb{R}^{d_1 \times d_2} \longrightarrow \R^m $ satisfies the NSP \cref{eq:NSP:definition} of order $r$ with constant $\eta_r <3/5$. Let $\left(  \XXk\right)_{k\geq 0}$ and $\left(\varepsilon_{k} \right)_{k \geq 0}$ be the \texttt{MatrixIRLS} iterates obtained from \Cref{alg:algo1} with measurements $\f{y} = \mathcal{A}(\Xzero)$, rank estimate $\widetilde{r} = r$ and initializing weight operator $W^{(0)}$, and assume that the weight operators $\W{\Xk{(k)}}{\varepsilon_{k}}$ are admissible in the sense of \Cref{def:admissible:weightoperators}, with associated constant $c_q$ of \cref{eq:c_q:def}.

Then it holds for all iterations $k \in \mathbb{N}_0$ carried out by \Cref{alg:algo1} that
\begin{equation}\label{equ:linearconvergence1}
\mathcal{J}_{\varepsilon_{k}} \left( \XXk \right)  -  \nucnorm{ \Xzero } 
\le 
\left(  1-  \frac{C_{\eta_r}}{c_q\eta_1 d}   \right)^k 
\left(    \mathcal{J}_{\varepsilon_{0}} \left( \XX^{(0)} \right) -  \nucnorm{\Xzero} \right)
\end{equation}
as well as
\begin{equation}\label{ineq:globalconvergenerate1}
\nucnorm{ \XXk - \Xzero} \le 
A_{\eta_r} \left(  1-  \frac{C_{\eta_r}}{c_q\eta_1 d}   \right)^k  \nucnorm{ \XX^{(0)}-\Xzero}
\end{equation}
where $A_{\eta_r}$ and $C_{\eta_r}$ are as in \cref{eq:A_eta_r:def} and $ 0 < \eta_1 \leq \eta_r$ denotes the NSP constant of $\mathcal{A}$ of order~$1$.
\end{theorem}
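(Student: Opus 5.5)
The plan is to follow the strategy of the global linear rate analysis for $\ell_1$-IRLS of \citet{Kummerle-NeurIPS2021}, adapted to spectral weights. Throughout, write $\delta_k := \mathcal{J}_{\varepsilon_k}(\XXk) - \nucnorm{\Xzero}$. The goal is a one-step contraction $\delta_{k} \le (1 - \frac{C_{\eta_r}}{c_q \eta_1 d})\,\delta_{k-1}$, which iterates to \cref{equ:linearconvergence1}.

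\textbf{Step 1: the majorization chain.} The first ingredient is the monotonicity chain \cref{eq:J:monotonicity:1}. For harmonic or power-mean weights with $q\ge -1$ this comes from \Cref{thm:majorization} together with \Cref{lem:power_mean_loewner}; for one-sided weights it comes from \Cref{prop:global:majorization:onesided}. Since $\Xzero$ is feasible and $\Xk{(k)}$ minimizes the weighted least squares problem, \cref{eq:QZX:equality} gives
\[
\mathcal{J}_{\varepsilon_k}(\XXk) \le Q_{\varepsilon_{k-1}}(\Xzero\mid \Xk{(k-1)}).
\]
In fact I will use the stronger bound $\le Q_{\varepsilon_{k-1}}(\ZZ\mid\Xk{(k-1)})$ for every feasible $\ZZ$.

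\textbf{Step 2: the test point.} I will take the convex combination $\ZZ = (1-t)\Xk{(k-1)} + t\Xzero$ with $t\in[0,1]$. Because $Q$ is quadratic and agrees with $\mathcal{J}$ to first order at $\Xk{(k-1)}$, this gives
\[
\delta_k \le \delta_{k-1} + t\,\innerproduct{\nabla\mathcal{J}_{\varepsilon_{k-1}}(\Xk{(k-1)}), \Xzero - \Xk{(k-1)}} + \tfrac{t^2}{2}\,\innerproduct{\f{N}, W^{(k)}(\f{N})},
\qquad \f{N}=\Xzero-\Xk{(k-1)}\in\ker\mathcal{A}.
\]
\begin{itemize}[leftmargin=1.5em]
\item \emph{Linear term.} Convexity of $\mathcal{J}_\varepsilon$ bounds it by $\mathcal{J}_{\varepsilon}(\Xzero)-\mathcal{J}_\varepsilon(\Xk{(k-1)})$. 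Since $\mathcal{J}_\varepsilon(\Xzero)\le\nucnorm{\Xzero}+\frac{(d-r)\varepsilon}{2}$, and the update \cref{eq:IRLS:step_2} gives $d\varepsilon_{k-1}\le\besterrNuc{\Xk{(k-1)}}{r}$, the extra term is a fraction of the tail. The NSP (\Cref{lemma:NSPl1min}) then controls the tail by the gap $\delta_{k-1}$; this is where the $\frac{3}{4}-\frac{2\eta_r}{1+\eta_r}$ factor should arise.
\item \emph{Quadratic term.} I need $\innerproduct{\f{N},W^{(k)}(\f{N})}\le c_q\,\varepsilon_{k-1}^{-1}\,\nucnorm{\f{N}}\,\specnorm{\f{N}}$-type bounds. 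The core matrix entries are at most $c_q/\varepsilon$ (for $q>1$ power means exceed the harmonic mean, giving the $c_q$ constants), and in singular coordinates the weighted quadratic form is dominated by $\sum_{ij} H_{ij}\tilde N_{ij}^2$.
\end{itemize}

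\textbf{Step 3: closing the estimate, the main obstacle.} The crucial and hardest step is obtaining an estimate of the form $\innerproduct{\f{N},W(\f{N})}\lesssim c_q\,\eta_1 d\,\delta_{k-1}^2/(d\varepsilon)$ relative to the available linear decrease. My plan is to use the order-one NSP, which gives $\specnorm{\f{N}}\le\eta_1\sum_{i\ge2}\sigma_i(\f{N})\le \eta_1\nucnorm{\f{N}}$. Combined with $\innerproduct{\f{N},W\f{N}}\le \frac{c_q}{\varepsilon}\fronorm{\f{N}}^2 \le \frac{c_q}{\varepsilon}\specnorm{\f{N}}\nucnorm{\f{N}}$ and the sharpness bound \cref{eq:NSP:sharpness} $\nucnorm{\f{N}}\lesssim\delta_{k-1}$, the quadratic term becomes $\lesssim \frac{c_q\eta_1}{\varepsilon}\delta_{k-1}^2$.

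The factor $1/\varepsilon$ can be large, so I must also use a lower bound $\varepsilon_{k-1}\gtrsim \delta_{k-1}/d$. This is obtained from the tail again: $\delta\le\mathcal{J}_\varepsilon(\XX)-\nucnorm{\Xzero}$ is bounded by a multiple of the tail plus $d\varepsilon$, using the NSP on $\XX-\Xzero$. The subtlety is that $\varepsilon_{k-1}$ is a running minimum, so this is not automatic. I would handle it through a case split. If $\varepsilon_{k-1}$ equals the current tail$/d$, the bound holds. Otherwise $\varepsilon_{k-1}$ equals an earlier tail$/d$, which is even larger relative to $\delta_{k-1}$, because $\delta$ is monotone and the tail bounds $\delta$ up to constants.

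Optimizing over $t$, namely $t\asymp \frac{1}{c_q\eta_1 d}$ (with $t\le1$ since $c_q\eta_1 d\ge1$ up to constants), gives the contraction with constant $C_{\eta_r}$. The denominator $3+2\eta_r$ comes from combining the constants.

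\textbf{Step 4: iterate error.} Finally, \cref{ineq:globalconvergenerate1} follows from \cref{equ:linearconvergence1} via sharpness, using $\nucnorm{\XXk}\le\mathcal{J}_{\varepsilon_k}(\XXk)$:
\[
\nucnorm{\XXk-\Xzero}\le\frac{1+\eta_r}{1-\eta_r}\,\delta_k .
\]
It remains to bound $\delta_0\le(\frac32+\eta_r)\nucnorm{\XX^{(0)}-\Xzero}$. This holds because $\mathcal{J}_{\varepsilon_0}(\XX^{(0)})\le\nucnorm{\XX^{(0)}}+\frac{d\varepsilon_0}{2}$ and $d\varepsilon_0\le\besterrNuc{\XX^{(0)}}{r}\le\nucnorm{\XX^{(0)}-\Xzero}$. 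Multiplying the two factors yields $A_{\eta_r}$.
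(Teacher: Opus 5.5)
Your plan has a genuine gap in the linear term of Step 2. This is exactly where the factor $\frac34-\frac{2\eta_r}{1+\eta_r}$, and hence the range $\eta_r<3/5$, is decided.

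Bounding $\innerproduct{\nabla\mathcal{J}_{\varepsilon}(\XX),\Xzero-\XX}$ by convexity through $\mathcal{J}_{\varepsilon}(\Xzero)-\mathcal{J}_{\varepsilon}(\XX)$ charges you the smoothing overhead of the $d-r$ zero singular values of $\Xzero$, at least $\frac{(d-r)\varepsilon}{2}$. With $d\varepsilon\le\besterrNuc{\XX}{r}\le\nucnorm{\f{N}}$ this overhead can be nearly $\frac12\nucnorm{\f{N}}$. Combined with $\nucnorm{\XX}-\nucnorm{\Xzero}\ge\frac{1-\eta_r}{1+\eta_r}\nucnorm{\f{N}}$, your chain yields only $-\bigl(\frac12-\frac{2\eta_r}{1+\eta_r}\bigr)\nucnorm{\f{N}}$.

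The loss is intrinsic to the convexity step. Suppose all singular values of $\XX$ and the nonzero ones of $\Xzero$ exceed $\varepsilon$, and $d\varepsilon=\besterrNuc{\XX}{r}$. Then $\mathcal{J}_{\varepsilon}(\Xzero)-\mathcal{J}_{\varepsilon}(\XX)=\nucnorm{\Xzero}-\nucnorm{\XX}+\frac{d-r}{2d}\besterrNuc{\XX}{r}$, whereas the pairing itself is at most $\nucnorm{\Xzero}-\nucnorm{\XX}$. Consequently:
\begin{itemize}
\item for $\eta_r\in[1/3,3/5)$ your linear-term bound is nonnegative and certifies no decrease at all;
\item for $\eta_r<1/3$ you obtain only the weaker constant $\widetilde{C}_{\eta_r}$ of \cref{eq:B_eta_r:def}, not $C_{\eta_r}$.
\end{itemize}

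The missing idea, which is the content of \Cref{lemma:linearterm}, is to avoid $\mathcal{J}_\varepsilon(\Xzero)$ entirely. Since $\nabla\mathcal{J}_\varepsilon(\XX)=\UU_{\XX}\diag\bigl(\sigma_i/\max(\sigma_i,\varepsilon)\bigr)\VV_{\XX}^\top$ has spectral norm at most $1$, H\"older gives $\innerproduct{\nabla\mathcal{J}_\varepsilon(\XX),\Xzero}\le\nucnorm{\Xzero}$. On the other hand, $\innerproduct{\nabla\mathcal{J}_\varepsilon(\XX),\XX}=\sum_{\sigma_i>\varepsilon}\sigma_i+\sum_{\sigma_i\le\varepsilon}\sigma_i^2/\varepsilon$ exactly. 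The overhead is therefore only $\sum_{\sigma_i\le\varepsilon}(\sigma_i-\sigma_i^2/\varepsilon)\le\frac{d\varepsilon}{4}\le\frac14\nucnorm{\f{N}}$, which gives $-\bigl(\frac34-\frac{2\eta_r}{1+\eta_r}\bigr)\nucnorm{\f{N}}$.

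Keep both the linear and the quadratic bound in terms of $\nucnorm{\f{N}}$. Minimizing over $t$ then gives a decrease of $\frac{(3/4-2\eta_r/(1+\eta_r))^2}{2c_q\eta_1}\,\varepsilon_{k-1}$, in which $\nucnorm{\f{N}}$ cancels. Here any $t\in\R$ is admissible, since $\XX+t\f{N}$ is feasible, so the restriction $t\in[0,1]$ is unnecessary. Your running-minimum argument $d\varepsilon_{k-1}\ge\delta_{k-1}/(\frac32+\eta_r)$ then produces exactly $C_{\eta_r}$.

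The rest of your plan is sound, and your quadratic-term bound is a more elementary route than the paper's. With $\f{M}=\UU_{\XX}^\top\f{N}\VV_{\XX}$ you get
\[
\innerproduct{\f{N},\W{\XX}{\varepsilon}(\f{N})}=\sum_{ij}(\f{H}_{\ssigma,\varepsilon})_{ij}M_{ij}^2\le\varepsilon^{-1}\fronorm{\f{N}}^2\le\varepsilon^{-1}\specnorm{\f{N}}\nucnorm{\f{N}}.
\]
This avoids the Schur-multiplier norm bound of \Cref{lemma:normbound:hadamard}, which relies on positive semidefiniteness of the power-mean core and the Ando--Horn--Johnson inequality.

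Your explanation of $c_q$ is wrong, though. Every admissible core entry satisfies $(\f{H}_{\ssigma,\varepsilon})_{ij}\le\max(\widetilde{\sigma}_i,\widetilde{\sigma}_j)\le 1/\varepsilon$, including for $q>1$. So your route actually gives constant $1$ for all admissible weights; the stated bound with $c_q\ge 1$ follows trivially.

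Your Step 4 estimate $\delta_0\le\frac32\nucnorm{\XX^{(0)}-\Xzero}$ is also correct, and slightly sharper than the $\frac32+\eta_r$ the paper uses.
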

The proof of \Cref{mainresult:lowrank} can be found in \Cref{sec:quadraticformupperbounds,sec:proofglobalconvergence:lowrank}. Specifically, it is shown that the smoothed objective gap $\mathcal{J}_{\varepsilon_{k}} \left( \XXk \right)  -  \nucnorm{ \Xzero }$ converges $Q$-linearly to zero, and that the iterate norm difference to $\Xzero$ converges $R$-linearly. Note that this theorem implies that it holds that 
$ \nucnorm{\XXk -\Xzero} \le \delta $
after 
$ O_{\eta_r} \left(1+ c_q \eta_1 d  \log_{+} \left(  \frac{\nucnorm{ \Xk{(0)} - \Xzero}}{\delta}  \right)    \right) $
iterations; this corresponds to a number of iterations to reach a fixed accuracy $\delta$ that depends linearly on the dimension $d$, across all considered weight operator choices. A slight dependence on the weight operator choice can be inferred from $c_q$, which satisfies $c_q = 1$ for the most common weight operator choices such as harmonic mean or one-sided whereas $c_q \to 3$ as the $q$-parameter of a power mean weight operator increases with $q \to \infty$. The NSP constant $\eta_r$ (as well as $\eta_1$) can be generally considered as an (undetermined) dimension-free constant, in which case $C_{\eta_r}$ is also dimension-free: If, for example, $\eta_r = 1/10$, the constant $C_{\eta_r}$ satisfies $C_{\eta_r} \approx 0.101$. The constant $A_{\eta_r}$ is furthermore dimension-free and ranges between $1.5 \leq A_{\eta_r} < 8.4$, depending on the value of $\eta_r$.

In \Cref{sec:simulations}, we present numerical experiments that corroborate that linear convergence rates of \Cref{alg:algo1} can be observed in practice, across all considered weight operator variants, and that a linear convergence rate with factor of order $1- \frac{c}{d}$ could indeed describe the correct worst-case behavior of the IRLS algorithm class for nuclear norm minimization; this is done by constructing an adversarial initialization (\Cref{sec:adversarial:initializations}) to define the initial weight operator $W^{(0)}$ of \Cref{alg:algo1}. On the other hand, we see in \Cref{sec:linear:convergence:rate:factors,sec:rectangular:lowrank} that for harmonic-mean weight operators, the generic linear rate of \Cref{alg:algo1} is \emph{dimension-independent} in the sense that it does \emph{not} depend on the dimension $d$. We refer to \Cref{sec:local:linear:convergence} for a local linear convergence result that better captures this generic behavior.

\subsubsection{Approximately Low-Rank Ground Truth} \label{sec:mainresult:approximatelowrank}
We now generalize the result of \Cref{sec:mainresult:lowrank} to the setting in which the ground truth $\Xzero$ is only \emph{approximately} low-rank. Define the $\mathcal{A}$-dependent constants
\begin{equation} \label{eq:B_eta_r:def}
	B_{\eta_r} := \frac{\left(\frac{7}{2}+\eta_r\right)\left(1+\eta_r\right)}{1-\eta_r} \quad \text{ and } \quad \widetilde{C}_{\eta_r} := \frac{ \left(  \frac{1}{2} - \frac{2\eta_r}{1+\eta_r}   \right)^2}{ 3+2\eta_r}
\end{equation}
given its NSP constant $\eta_r$ of order $r$.

\begin{theorem}[Linear Decay to Approximation Floor]
	\label{mainresult:approximatelowrank}
Let \allowbreak $\Xzero \in \Rdd$ be arbitrary. Assume that the measurement operator $\mathcal{A}: \Rdd \longrightarrow \R^m $ satisfies the NSP of order $r$ with constant $\eta_r < 1/3$.
If  $\left(  \XXk\right)_{k\geq 0}$ and $\left(\varepsilon_{k} \right)_{k \geq 0}$ 
are iterates and smoothing parameters of $\texttt{MatrixIRLS}$
with input $\f{y} = \mathcal{A}(\Xzero)$, 
arbitrary initial weight operator $W^{(0)}$, rank estimate $\widetilde{r} = r$ and weight operators $\W{\Xk{(k)}}{\varepsilon_{k}}$ that are admissible in the sense of \Cref{def:admissible:weightoperators}, set
\begin{equation} \label{eq:hatk:def}
\hat{k}:=  \min \left\{ k \in \mathbb{N}_0 :   \besterrNuc{\Xzero}{r}  > \frac{1}{9}\nucnorm{   \XXk  -\Xzero } \right\},
\end{equation}
with the convention $\min\varnothing=\infty$ (which holds whenever $\besterrNuc{\Xzero}{r}=0$).
Then the following two statements hold, in each case for the iterations carried out by \Cref{alg:algo1}.
\begin{enumerate}
\item  For $c_q$ as in \cref{eq:c_q:def} and $A_{\eta_r}$, $B_{\eta_r}$ and $\widetilde{C}_{\eta_r}$ as in \cref{eq:A_eta_r:def} and \cref{eq:B_eta_r:def}, it holds for all $k \in \mathbb{N}_0$ that
\begin{align} \label{equ:linearconvergence2}
\mathcal{J}_{\varepsilon_{k}}( \XXk) -  \nucnorm{\Xzero} 
&\le 
\left(  1-  \frac{\widetilde{C}_{\eta_r}}{c_q \eta_1 d}   \right)^{\min ( k, \hat{k} )} 
\left(     \mathcal{J}_{\varepsilon_{0}}( \XX^{(0)} ) -  \nucnorm{\Xzero} \right), \text{ and} \\
	\label{ineq:approxsparse1}
\nucnorm{\XXk -\Xzero }    
&\le
A_{\eta_r} \left(  1-  \frac{\widetilde{C}_{\eta_r}}{c_q \eta_1  d}   \right)^{\min (k, \hat{k}  )}   
\nucnorm{\XX^{(0)} -\Xzero} + B_{\eta_r}  \besterrNuc{\Xzero}{r},
\end{align}
where $0 < \eta_1 \leq \eta_r$ denotes the order-one NSP constant of $\mathcal{A}$.
\item For all $k \ge \hat{k}$ with $\hat{k}$ as in \cref{eq:hatk:def}, it holds that
\begin{equation}\label{ineq:approxsparse2}
	\nucnorm{ \XXk  -\Xzero}    \le 41 \besterrNuc{\Xzero}{r}.
\end{equation}
\end{enumerate}
\end{theorem}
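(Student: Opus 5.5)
The proof will mirror that of \Cref{mainresult:lowrank}. The difference is that every use of the NSP sharpness bound \cref{eq:NSP:sharpness} is replaced by its approximately low-rank version (\Cref{lemma:NSPl1min}). That version gives, for feasible $\XX$,
\[
\nucnorm{\XX-\Xzero}\le \frac{1+\eta_r}{1-\eta_r}\Bigl(\nucnorm{\XX}-\nucnorm{\Xzero}+2\besterrNuc{\Xzero}{r}\Bigr).
\]
The extra additive tail term is tolerable as long as $\besterrNuc{\Xzero}{r}$ is small relative to $\nucnorm{\XXk-\Xzero}$, which is exactly the regime $k<\hat k$ encoded by \cref{eq:hatk:def}.

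For part 1, I first record the monotonicity chain \cref{eq:J:monotonicity:1}. It uses the majorization property \Cref{thm:majorization}, together with \Cref{lem:power_mean_loewner} for power means with $q\ge -1$, or the one-sided majorization for one-sided weights. Next I establish a one-step contraction
\[
\mathcal{J}_{\varepsilon_{k+1}}(\Xk{(k+1)})-\nucnorm{\Xzero}\le\Bigl(1-\tfrac{\widetilde C_{\eta_r}}{c_q\eta_1 d}\Bigr)\bigl(\mathcal{J}_{\varepsilon_k}(\Xk{(k)})-\nucnorm{\Xzero}\bigr),
\]
valid for all $k<\hat k$.

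The contraction comes from comparing $Q_{\varepsilon_k}(\Xk{(k+1)}\mid\Xk{(k)})$ with $Q_{\varepsilon_k}(\Xk{(k)}+t(\Xzero-\Xk{(k)})\mid\Xk{(k)})$ for a step $t\in[0,1]$. The linear term equals $t\langle\nabla\mathcal{J}_{\varepsilon_k}(\Xk{(k)}),\Xzero-\Xk{(k)}\rangle$ and is bounded by convexity of $\mathcal{J}_{\varepsilon_k}$. The quadratic term uses the weight-operator-dependent upper bound
\[
\langle\NN,W(\NN)\rangle\lesssim c_q\,\nucnorm{\NN}^2/\varepsilon_k
\]
for $\NN=\Xzero-\Xk{(k)}\in\ker\mathcal{A}$, sharpened through the order-one NSP $\eta_1$ to produce the factor $\eta_1 d$ once combined with $\varepsilon_k=\besterrNuc{\Xk{(k)}}{r}/d$. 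These are the quadratic-form bounds from the proof of \Cref{mainresult:lowrank}, which I take as available.

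To optimize over $t$, I need a lower bound on the gap by $\nucnorm{\NN}$ and a relation between $\varepsilon_k d$ and $\nucnorm{\NN}$. I would show $\besterrNuc{\Xk{(k)}}{r}\le \nucnorm{\NN}+\besterrNuc{\Xzero}{r}$ and, conversely via the NSP, that $\nucnorm{\NN}$ is bounded by a multiple of $\besterrNuc{\Xk{(k)}}{r}+\besterrNuc{\Xzero}{r}$. Under $\besterrNuc{\Xzero}{r}\le\frac19\nucnorm{\NN}$ these tail terms can be absorbed, and the constant $\frac34$ from the exact case degrades to $\frac12$. This is where $\widetilde C_{\eta_r}$ and the requirement $\eta_r<1/3$ arise: $\frac12-\frac{2\eta_r}{1+\eta_r}>0$ is needed.

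Iterating the contraction gives \cref{equ:linearconvergence2}. For $k\ge\hat k$ the gap is simply nonincreasing by \cref{eq:J:monotonicity:1}, which explains the exponent $\min(k,\hat k)$. Then \cref{ineq:approxsparse1} follows from the tail-inclusive sharpness bound together with $\nucnorm{\XX}\le\mathcal{J}_\varepsilon(\XX)$ and an upper bound
\[
\mathcal{J}_{\varepsilon_0}(\XX^{(0)})-\nucnorm{\Xzero}\lesssim\nucnorm{\XX^{(0)}-\Xzero}+\besterrNuc{\Xzero}{r}.
\]
The latter holds because $\mathcal{J}_\varepsilon\le\nucnorm{\cdot}+d\varepsilon/2$ and $d\varepsilon_0\le\besterrNuc{\XX^{(0)}}{r}$. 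Collecting constants yields $A_{\eta_r}$ and $B_{\eta_r}$ (the $\frac72$ versus $\frac32$ absorbing the tail contributions).

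For part 2, at $k=\hat k$ we have $\nucnorm{\Xk{(\hat k)}-\Xzero}<9\besterrNuc{\Xzero}{r}$. For $k\ge\hat k$, monotonicity gives
\[
\nucnorm{\XXk}\le\mathcal{J}_{\varepsilon_{\hat k}}(\Xk{(\hat k)})\le\nucnorm{\Xk{(\hat k)}}+\tfrac d2\varepsilon_{\hat k}\le \nucnorm{\Xzero}+\nucnorm{\Xk{(\hat k)}-\Xzero}+\tfrac12\bigl(\nucnorm{\Xk{(\hat k)}-\Xzero}+\besterrNuc{\Xzero}{r}\bigr).
\]
Hence $\nucnorm{\XXk}-\nucnorm{\Xzero}\le(13.5+0.5)\besterrNuc{\Xzero}{r}$. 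Plugging this into the tail-inclusive sharpness bound with $\eta_r<1/3$, so that $\frac{1+\eta_r}{1-\eta_r}<2$, gives at most $2(14+2)\besterrNuc{\Xzero}{r}$. Careful bookkeeping should stay below $41$.

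The main obstacle is the one-step contraction in the approximate regime. The quadratic-form upper bound must be combined with the smoothing rule \cref{eq:IRLS:step_2}, while the tail $\besterrNuc{\Xzero}{r}$ contaminates both the sharpness lower bound and the relation between $\varepsilon_k$ and $\nucnorm{\NN}$. The constants must be tracked so that the threshold $1/9$ still leaves a positive contraction under $\eta_r<1/3$.
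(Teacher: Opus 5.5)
\textbf{There is a genuine gap in the one-step contraction, at the linear term.} Bounding it by convexity only gives
$\innerproduct{\nabla\mathcal{J}_{\varepsilon}(\XX),\Xzero-\XX}\le\mathcal{J}_{\varepsilon}(\Xzero)-\mathcal{J}_{\varepsilon}(\XX)\le\nucnorm{\Xzero}-\nucnorm{\XX}+\frac{d\varepsilon}{2}$.
A penalty of order $d\varepsilon/2$ is really what this inequality delivers. For example, if all singular values of $\XX$ exceed $\varepsilon$ while those of $\Xzero$ beyond the $r$-th are nearly zero, then $\mathcal{J}_\varepsilon(\XX)=\nucnorm{\XX}$, whereas $\mathcal{J}_\varepsilon(\Xzero)\approx\nucnorm{\Xzero}+(d-r)\varepsilon/2$.

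Feed this through $d\varepsilon_k\le\besterrNuc{\XXk}{r}\le\nucnorm{\NN}+\besterrNuc{\Xzero}{r}$ and \Cref{lemma:NSPl1min}. You get $-(\frac{1-\eta_r}{1+\eta_r}-\frac12)\nucnorm{\NN}+\frac52\besterrNuc{\Xzero}{r}$. Under the threshold $\besterrNuc{\Xzero}{r}\le\frac19\nucnorm{\NN}$ fixed in \cref{eq:hatk:def}, the coefficient of $-\nucnorm{\NN}$ becomes $\frac29-\frac{2\eta_r}{1+\eta_r}$. This is positive only for $\eta_r<1/8$. Even for $\besterrNuc{\Xzero}{r}=0$, convexity yields $\frac12-\frac{2\eta_r}{1+\eta_r}$, not the $\frac34-\frac{2\eta_r}{1+\eta_r}$ that you say degrades to $\frac12$. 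So this route does not give $\widetilde C_{\eta_r}$ or the range $\eta_r<1/3$.

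The missing idea is the direct estimate of \Cref{lemma:linearterm}. Write $\nabla\mathcal{J}_{\varepsilon}(\XX)=\UU_{\XX}\SSigma_{\varepsilon}\VV_{\XX}^\top$ with $\specnorm{\SSigma_{\varepsilon}}\le 1$. Then H\"older gives $\innerproduct{\nabla\mathcal{J}_{\varepsilon}(\XX),\Xzero}\le\nucnorm{\Xzero}$. Also $\innerproduct{\nabla\mathcal{J}_{\varepsilon}(\XX),\XX}=\nucnorm{\XX}-\sum_{i:\sigma_i(\XX)\le\varepsilon}(\sigma_i-\sigma_i^2/\varepsilon)$, and $s-s^2/\varepsilon\le\varepsilon/4$ on $[0,\varepsilon]$. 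This halves the penalty to $d\varepsilon/4$ and gives $-(\frac{1-\eta_r}{1+\eta_r}-\frac14)\nucnorm{\NN}+\frac94\besterrNuc{\Xzero}{r}$. The threshold $\frac19$ then leaves exactly $\frac12-\frac{2\eta_r}{1+\eta_r}$, which is positive iff $\eta_r<1/3$.

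\textbf{A second, smaller gap: the smoothing parameter.} You set $\varepsilon_k=\besterrNuc{\XXk}{r}/d$, but \cref{eq:IRLS:step_2} takes a minimum with $\varepsilon_{k-1}$. So only $d\varepsilon_k\le\besterrNuc{\XXk}{r}$ is available directly. Your comparison of $\nucnorm{\NN}$ with $\besterrNuc{\XXk}{r}+\besterrNuc{\Xzero}{r}$ gives no lower bound on $\varepsilon_k$ when the minimum is attained at $\varepsilon_{k-1}$.

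Such a lower bound is needed, because the optimized step only guarantees a decrease of $\frac{(\frac12-\frac{2\eta_r}{1+\eta_r})^2}{2c_q\eta_1}\varepsilon_k$. The paper writes $d\varepsilon_k=\besterrNuc{\XX^{(\ell)}}{r}$ for some $\ell\le k$. It then combines the upper bound of \Cref{lemma:epscontrol} at $\ell$ with the monotonicity \cref{eq:J:monotonicity:1}:
$d\varepsilon_k\ge(\mathcal{J}_{\varepsilon_\ell}(\XX^{(\ell)})-\nucnorm{\Xzero})/(\frac32+\eta_r)\ge(\mathcal{J}_{\varepsilon_k}(\XXk)-\nucnorm{\Xzero})/(\frac32+\eta_r)$.
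This produces exactly the factor $\widetilde C_{\eta_r}/(c_q\eta_1 d)$ without passing through $\nucnorm{\NN}$.

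\textbf{The rest of your plan is sound.} With these two repairs, your derivations of \cref{ineq:approxsparse1} and of part 2 go through, using $\mathcal{J}_\varepsilon\le\nucnorm{\cdot}+d\varepsilon/2$ and the triangle inequality. They even give smaller constants than the paper: $\frac{3(1+\eta_r)}{2(1-\eta_r)}$, $\frac{5(1+\eta_r)}{2(1-\eta_r)}$ and $32$. The paper instead routes through the bound $(\frac32+\eta_r)\besterrNuc{\XX}{r}$ of \Cref{lemma:epscontrol}.
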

The proof of \Cref{mainresult:approximatelowrank} is deferred to \Cref{sec:proofglobalconvergence:approximatelowrank}, where we also show that if $\besterrNuc{\Xzero}{r}>0$, then
\[
\hat{k} \le 1 + \frac{c_q \eta_1 d}{\widetilde{C}_{\eta_r}} \log_{+}\!\left(  \frac{A_{\eta_r}}{9-B_{\eta_r}} \cdot \frac{\nucnorm{ \XX^{(0)}  -\Xzero }}{\besterrNuc{\Xzero}{r}} \right),
\]
so inequality \cref{ineq:approxsparse2} holds after at most
$O_{\eta_r}\big( c_q \eta_1 d \log_{+}\left(  \nucnorm{ \XX^{(0)}  -\Xzero } / \besterrNuc{\Xzero}{r} \right) \big) $
iterations, where $\log_{+}(x)=\max(0,\log(x))$. If $\besterrNuc{\Xzero}{r}=0$, then $\hat{k}=\infty$ and \cref{equ:linearconvergence2,ineq:approxsparse1} hold with $\min(k,\hat{k})=k$ for all $k$.

\begin{remark}
Compared to \Cref{mainresult:lowrank}, where we assumed that the $r$-th order NSP constant $\eta_r$ is less than $3/5$, \Cref{mainresult:approximatelowrank} requires the stronger assumption of $0 < \eta_r < 1/3$. As a consequence, the constant $A_{\eta_r}$ has a tighter range of $1.5 \leq A_{\eta_r} < 3.7$ in \Cref{mainresult:approximatelowrank} than in \Cref{mainresult:lowrank}. The constant $B_{\eta_r}$ of \cref{eq:B_eta_r:def} is in the range of $3.5 \leq B_{\eta_r} < 23/3 \approx 7.67$ in \Cref{mainresult:approximatelowrank}. We refer to \cite[Theorem A.1]{Kummerle-NeurIPS2021} for a similar result for IRLS for the $\ell_1$-minimization problem that covers approximately sparse ground truth vectors.
\end{remark}

\subsection{Fast Local Linear Rate of \MatrixIRLSHeading{} with Harmonic-Mean Weights} \label{sec:local:linear:convergence}

The global results of \Cref{sec:global:linear:convergence} are largely indifferent to the underlying weighting scheme: \Cref{mainresult:lowrank,mainresult:approximatelowrank} hold uniformly across the admissible weight operators of \Cref{def:admissible:weightoperators}, whose choice enters these statements only through the constant $c_q \in [1,3]$ of \cref{eq:c_q:def}. In particular, no admissible choice is singled out by the global rates, and none of them escapes the factor $1/d$ in the linear convergence factor.

We now consider the scenario that an iterate of \texttt{MatrixIRLS} has already  entered a specific, local neighborhood of the ground truth $\Xzero$, in which the picture is different, as the choice of the weight operator becomes decisive.
In this case, we show that, specifically for the algorithm variant \Cref{alg:algo1} that uses harmonic-mean weight operators, the iterates converge locally with a fast linear rate, which is independent of the ambient dimension $d$. What drives this improvement is a sharp local upper bound on the weighted quadratic form of the quadratic model function, which is available for harmonic-mean weight operators but \emph{provably fails} for one-sided ones. We make this mechanism precise in \Cref{corollary:quadratictermlocal} and \Cref{thm:counterexample:leftsided:weight:operator} at the end of this section, and delineate in \Cref{rem:dimfree} the weight operators for which a dimension-free local rate remains available.
\begin{theorem}[Dimension-Free Fast Linear Rate of \MatrixIRLSHeading{}]\label{thm:locallinearp1}
Let $\Xzero\in \mathbb{R}^{d_1 \times d_2}$ be a matrix of rank $r$.
Assume that the measurement operator $\mathcal{A}$ satisfies the NSP of \Cref{def:NSP:statement} of order $r$ with constant $\eta_r < 3/5$ and of order $1$ with constant  $\eta_1 \leq \eta_r$. Let $\left(  \XXk\right)_{k\geq 0}$ and $ \left(\varepsilon_{k} \right)_{k \geq 0}$ be the iterates and smoothing parameters of $\texttt{MatrixIRLS}$ with input $\f{y} = \mathcal{A}(\Xzero)$, arbitrary initial weight operator $W^{(0)}$ and $\widetilde{r} = r$. Let $A_{\eta_r}$ be defined as in \cref{eq:A_eta_r:def}. Assume that \texttt{MatrixIRLS}'s weight operators $\W{\Xk{(k)}}{\varepsilon_{k}}$ are defined based on harmonic-mean core matrices \cref{eq:harmonic:mean} and that there is a natural number $\tilde{k}$ such that
\begin{equation}\label{assump:localconvergencerate}
	\nucnorm{\Xk{(\tilde{k})} - \Xzero} 
	\le \frac{\sigma_r (\Xzero )}{9 \max  \left( \eta_1 \sqrt{d}, 2\right) }.
\end{equation}
Then for all iterations $k \ge \tilde{k}$ carried out by \Cref{alg:algo1} it holds that 
\begin{equation}\label{equ:locallinearconvergence1}
      \mathcal{J}_{\varepsilon_{k}} \left( \XXk \right)  -  \nucnorm{ \Xzero } \le \left(  1-  c_{\eta_r}   \right)^{k-\tilde{k}} \left(    \mathcal{J}_{\varepsilon_{\tilde{k}}} \left( \Xk{(\tilde{k})} \right) -  \nucnorm{\Xzero} \right)
\end{equation}
as well as
\begin{equation}\label{ineq:localconvergenerate1}
\nucnorm{ \XXk - \Xzero}\le A_{\eta_r} \left(  1-  c_{\eta_r}   \right)^{k-\tilde{k}}  \nucnorm{ \Xk{(\tilde{k})}-\Xzero}.
\end{equation}
Here, the constant $c_{\eta_r}$ depends only on the NSP constant $\eta_r$ and is given by
\begin{equation} \label{eq:constant:c:eta_r}
c_{\eta_r}
=
\frac{(3-5\eta_r)^{2}(1-\eta_r)^{3}}{8(1+\eta_r)^{2}(3+2\eta_r)\Bigl[8(1-\eta_r)^{3}+(3+\eta_r)^{2}(1+\eta_r)(3+2\eta_r)\Bigr]}.
\end{equation}
\end{theorem}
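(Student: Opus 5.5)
Our plan follows the template of the global analysis (\Cref{mainresult:lowrank}), but replaces the dimension-dependent bound on the weighted quadratic form with a sharper local bound that holds only for harmonic-mean weights.

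The first step is the per-iteration decrease. By the majorization \Cref{thm:majorization} and the chain \cref{eq:J:monotonicity:1}, we aim to show that for each $k\ge\tilde k$
\[
\mathcal{J}_{\varepsilon_{k}}(\Xk{(k)}) - \mathcal{J}_{\varepsilon_{k+1}}(\Xk{(k+1)}) \geq \tfrac12 \innerproduct{\Xk{(k+1)}-\Xk{(k)}, W^{(k+1)}(\Xk{(k+1)}-\Xk{(k)})}.
\]
This uses \cref{eq:QZX:equality} together with the optimality of $\Xk{(k+1)}$ for the weighted least-squares problem on the affine space: the first-order condition $W^{(k+1)}(\Xk{(k+1)})\perp\ker\mathcal{A}$ gives the Pythagorean identity for the quadratic form.

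The second step lower-bounds the right-hand side by a constant multiple of the current gap $\mathcal{J}_{\varepsilon_k}(\Xk{(k)})-\nucnorm{\Xzero}$. We compare against the feasible point $\Xzero$. By convexity of $\mathcal{J}_{\varepsilon_k}$ and the gradient condition \cref{eq:gradientcondition},
\[
\mathcal{J}_{\varepsilon_k}(\Xk{(k)})-\mathcal{J}_{\varepsilon_k}(\Xzero)\le \innerproduct{W^{(k+1)}(\Xk{(k)}),\Xk{(k)}-\Xzero}.
\]
Optimality of $\Xk{(k+1)}$ lets us replace this by $\innerproduct{W^{(k+1)}(\Xk{(k)}-\Xk{(k+1)}),\Xk{(k)}-\Xzero}$, and Cauchy--Schwarz in the $W^{(k+1)}$-inner product then bounds it by the square root of the decrease term times $\innerproduct{\f{N},W^{(k+1)}(\f{N})}^{1/2}$, where $\f{N}:=\Xk{(k)}-\Xzero$. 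We also need the bound $\mathcal{J}_{\varepsilon_k}(\Xzero)\le\nucnorm{\Xzero}+r\varepsilon_k/2$ and the NSP sharpness \cref{eq:NSP:sharpness}, which relates $\varepsilon_k=\besterrNuc{\Xk{(k)}}{r}/d$ and $\nucnorm{\f{N}}$ to the gap. Combined with the first step, these give $\text{gap}_{k+1}\le(1-c)\,\text{gap}_k$ with $c\sim \text{gap}_k^2/\innerproduct{\f{N},W^{(k+1)}(\f{N})}$ up to $\eta_r$-constants.

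The key new ingredient, and the main obstacle, is the local quadratic-form bound (the paper's \Cref{corollary:quadratictermlocal}):
\[
\innerproduct{\f{N},W^{(k+1)}(\f{N})}\lesssim_{\eta_r}\nucnorm{\f{N}}
\]
in the basin \cref{assump:localconvergencerate}, with no factor $d$. Globally one only has $\innerproduct{\f{N},W(\f{N})}\le \nucnorm{\f{N}}^2/\varepsilon_k$, which costs a factor $d$. Locally, we split $\UU^\top\f{N}\VV$ into blocks using the top-$r$ singular spaces of $\Xk{(k)}$; these are well separated since $\sigma_r(\Xk{(k)})\ge \tfrac89\sigma_r(\Xzero)$ by Weyl and the assumption.

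The blocks are handled as follows.
\begin{itemize}[leftmargin=1.5em]
\item On the $r\times r$ block the weights are about $1/\sigma_i$, contributing at most $\nucnorm{\f{N}}^2/\sigma_r(\Xzero)\lesssim\nucnorm{\f{N}}$ by the basin assumption.
\item On the tail block the weights are $1/\varepsilon_k$, but this block is essentially $\Xk{(k)}$'s own tail, whose contribution is $\sum_{i>r}\sigma_i^2/\max(\sigma_i,\varepsilon_k)\le\besterrNuc{\Xk{(k)}}{r}\le\nucnorm{\f{N}}$.
\item The off-diagonal blocks are where the harmonic mean is essential. The weight there is $2/(\sigma_i+\varepsilon_k)\le 2/\sigma_r$, whereas a one-sided weight would be $1/\varepsilon_k$.
\end{itemize}
Thus the off-diagonal blocks contribute $\lesssim\fronorm{\f{N}}^2/\sigma_r(\Xzero)\le\nucnorm{\f{N}}^2/\sigma_r$. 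The delicate point is controlling the cross term between $\Xzero$'s component and the tail of $\f{N}$ in the tail block. We would bound it via $\sqrt{d}$ times a Frobenius-to-nuclear conversion, and the factor $\eta_1\sqrt d$ in \cref{assump:localconvergencerate} is designed to absorb this, using the order-$1$ NSP to bound $\specnorm{\f{N}}$ by $\eta_1$ times the tail.

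Substituting into the second step gives the dimension-free factor $1-c_{\eta_r}$ and hence \cref{equ:locallinearconvergence1}. We must verify inductively that the iterates stay in the basin; monotone gap decrease and \cref{eq:NSP:sharpness} give $\nucnorm{\Xk{(k)}-\Xzero}\le A_{\eta_r}\cdot$(initial), so the basin constants need some slack. Finally, \cref{ineq:localconvergenerate1} follows from \cref{eq:NSP:sharpness} combined with $\nucnorm{\cdot}\le\mathcal{J}_{\varepsilon}$, exactly as in \Cref{mainresult:lowrank}.
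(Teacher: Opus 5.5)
Your Steps~1--2 are sound and equivalent to the paper's argument. Minimizing $Q_{\varepsilon_k}(\XXk+t\NN\mid\XXk)$ over $t$, as the paper does, and your Pythagorean identity plus Cauchy--Schwarz both give a decrease of at least $\innerproduct{\nabla\mathcal{J}_{\varepsilon_k}(\XXk),\NN}^2/(2\innerproduct{\NN,W^{(k+1)}(\NN)})$ with $\NN=\XXk-\Xzero$. Your treatment of the blocks touching the top-$r$ indices, the basin induction, and the final iterate bound also match the paper. The problems are in the two estimates you feed into this decrease.

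\emph{Linear term.} The bound $\mathcal{J}_{\varepsilon_k}(\Xzero)\le\nucnorm{\Xzero}+r\varepsilon_k/2$ is false. Each of the $d-r$ zero singular values of $\Xzero$ contributes $j_{\varepsilon_k}(0)=\varepsilon_k/2$, so in the basin $\mathcal{J}_{\varepsilon_k}(\Xzero)=\nucnorm{\Xzero}+(d-r)\varepsilon_k/2$. This can be about $\besterrNuc{\XXk}{r}/2$, the same order as the gap. Your convexity route then only yields $\innerproduct{\nabla\mathcal{J}_{\varepsilon_k}(\XXk),\NN}\ge(\tfrac{1-\eta_r}{1+\eta_r}-\tfrac12)\nucnorm{\NN}$, which is positive only for $\eta_r<1/3$. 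So neither the range $\eta_r<3/5$ nor the constant $c_{\eta_r}$ is reachable this way. The paper avoids convexity (\Cref{lemma:linearterm}). It evaluates $\innerproduct{\nabla\mathcal{J}_{\varepsilon}(\XX),\XX}$ exactly, bounds $\innerproduct{\nabla\mathcal{J}_{\varepsilon}(\XX),\Xzero}\le\nucnorm{\Xzero}$, and uses $\sigma-\sigma^2/\varepsilon\le\varepsilon/4$. This loses only $d\varepsilon_k/4\le\besterrNuc{\XXk}{r}/4$ and gives $\tfrac{1-\eta_r}{1+\eta_r}-\tfrac14=\tfrac{3-5\eta_r}{4(1+\eta_r)}$, which is the source of $(3-5\eta_r)^2$ in $c_{\eta_r}$.

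\emph{Quadratic term.} This is the step you call the main obstacle, and it is not actually established.

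(a) You use $\varepsilon_k=\besterrNuc{\XXk}{r}/d$ as an identity, but the update rule only gives ``$\le$''. Since the tail-block weights are of size $1/\varepsilon_k$, you need a lower bound $\varepsilon_k\ge\vartheta\besterrNuc{\XXk}{r}/d$. To get it, choose $\ell\le k$ with $\varepsilon_k=\besterrNuc{\Xk{(\ell)}}{r}/d$ and chain \Cref{lemma:epscontrol}, monotonicity of the gap, and NSP sharpness (\Cref{lemma:epslowerbound}).

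(b) More importantly, the tail-block mechanism is missing. The parts $\sum_{i>r}\sigma_i^2/\max(\sigma_i,\varepsilon_k)$ and the cross terms with $\diag(\ssigma)$ are harmless, since $(\f{H}_{\ssigma,\varepsilon_k})_{ii}\sigma_i\le 1$. The hard part is the purely $\Xzero$-dependent block $\f{M}:=\Up^\top\Xzero\Vp$, where $\Up,\Vp$ are the trailing singular vectors of $\XXk$; it carries weights up to $1/\varepsilon_k\approx d/\besterrNuc{\XXk}{r}$. Any first-order estimate such as $\fronorm{\f{M}}\lesssim\fronorm{\NN}$, combined with $\specnorm{\NN}\le\eta_1\nucnorm{\NN}$, gives only $O(d\,\eta_1\nucnorm{\NN})$. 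Because no $\sigma_r(\Xzero)$ enters, the basin condition cannot be used, and a $\sqrt d$ Frobenius-to-nuclear conversion does not change that.

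What you need is that $\f{M}$ is second order. Write $\Xzero=\UUstar\SSigma_\star\VVstar^\top$; then $\f{M}=(\Up^\top\UUstar\SSigma_\star)(\VVstar^\top\Vp)$.
\begin{itemize}
\item Let $\XX_r$ be the best rank-$r$ approximation of $\XXk$, so $\Up^\top\XX_r=\f{0}$. Then $\nucnorm{\Up^\top\UUstar\SSigma_\star}=\nucnorm{\Up^\top(\Xzero-\XX_r)}\le\tfrac{3+\eta_r}{1-\eta_r}\besterrNuc{\XXk}{r}$, with no dependence on $\sigma_1(\Xzero)$.
\item By Wedin's $\sin\Theta$ theorem and the order-$1$ NSP, $\specnorm{\VVstar^\top\Vp}\lesssim\eta_1\nucnorm{\NN}/\sigma_r(\Xzero)$.
\end{itemize}
Combining these, $\varepsilon_k^{-1}\nucnorm{\f{M}}\specnorm{\f{M}}\lesssim_{\eta_r}\nucnorm{\NN}\cdot d\,\eta_1^2\nucnorm{\NN}^2/\sigma_r(\Xzero)^2\le\nucnorm{\NN}$. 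The basin condition thus enters squared, and that is exactly what absorbs the factor $d$ (\Cref{lemma:quadratictermlocal}).
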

The proof of \Cref{thm:locallinearp1} can be found in \Cref{sec:prooffastlocallinearp1}.

To parse the definition of $c_{\eta_r}$ in \cref{eq:constant:c:eta_r}, we consider again measurement operators with an order $r$-NSP constant of $\eta_r = 1/10$. In this case, we have $c_{\eta_r} \approx 0.0037$. Notably, this means that the decrease factor $1-c_{\eta_r}$ of \Cref{thm:locallinearp1} \emph{does not depend on $d$}, unlike the decrease factor $1-  \frac{C_{\eta_r}}{\eta_1 d}$ of \Cref{mainresult:lowrank}. This means that for large dimensions of $d \gg 1$, the analysis of \Cref{thm:locallinearp1} leads to a sharper bound than the global rates established above. Remaining in the $\eta_r = 1/10$ example, assuming additionally $\eta_1 = \eta_r$, which is rather pessimistic, \Cref{thm:locallinearp1} leads to a faster rate than \Cref{mainresult:lowrank} if $d > 273$. We note that we did not attempt to optimize the $\eta_r$-dependence of $c_{\eta_r}$ in our proof, which we leave for future work.

By combining \Cref{thm:locallinearp1} with \Cref{mainresult:lowrank},
we obtain that
$ \nucnorm{\XXk -\Xzero} \le \delta  $
after
\begin{equation*}
O_{\eta_r} \left( \eta_1 d \log_{+} \left( \frac{ \max ( \eta_1 \sqrt{d}, 2 ) \nucnorm{ \XX^{(0)} -\Xzero }   }{ \sigma_r (\Xzero) } \right)
+
\log_{+} \left(  \frac{\sigma_r (\Xzero) }{ \max ( \eta_1 \sqrt{d}, 2 ) \delta}  \right)
\right)
\end{equation*}
iterations, where $O_{\eta_r}(\cdot)$ represents the $\eta_r$-dependent $O$-notation and $\log_{+}(x) = \max(0,\log(x))$.

\begin{remark}\label{rem:dimfree}
	A fast, dimension-free linear convergence rate for IRLS can be shown also for $q$-power mean weight operators with $q \in (-1, 0)$ ($q=-1$ corresponds to the harmonic mean); see \Cref{lemma:quadratictermlocal} and \Cref{prop:p1:locallinearrate} in \Cref{sec:prooffastlocallinearp1} for the general argument that also covers this case. For $q \in (-1, 0)$, the $8(1-\eta_r)^3$ in the denominator of $c_{\eta_r}$ in \cref{eq:constant:c:eta_r} becomes \mbox{$(4+ 2^{1-1/q})(1-\eta_r)^3$}, which makes $c_{\eta_r} \to 0$ as $q \to 0^-$ and thus renders the bound \cref{equ:locallinearconvergence1} ineffective. For $q$-power mean weight operators with larger $q \in [0,\infty]$, which includes arithmetic mean weight operators, a statement such as \Cref{thm:locallinearp1} cannot be established anymore using our results, and numerical experiments of \Cref{sec:linear:convergence:rate:factors} suggest that the fast local linear rate cannot be expected in this case.
\end{remark}
We now make the mechanism behind \Cref{thm:locallinearp1} precise. Its proof rests on a more precise estimate for the quadratic term $\innerproduct{ \Xzero-\f{X}, \W{\XX}{\varepsilon} (\Xzero-\XX)}$ of the implied quadratic model function $Q_{\varepsilon}(\cdot \mid \f{X})$ of \cref{eq:smoothedell1:IRLSmajorizer} that becomes available in the case that $\XX$ is close enough to the ground truth $\Xzero$, and which is given by \Cref{lemma:quadratictermlocal} in \Cref{sec:prooffastlocallinearp1}. We state a simplified corollary of it below, which fixes the constant $\eta_r=1/10$ for legibility.

\begin{corollary}[Sharp Local Upper Bound on Weighted Quadratic Form] \label{corollary:quadratictermlocal}
	Assume \\ that the linear measurement operator $\mathcal{A}: \mathbb{R}^{d_1 \times d_2} \rightarrow \mathbb{R}^m$ satisfies the NSP of order $r$ with constant $\eta_r = 1/10 $, that $\Xzero \in \Rdd$ is of rank $r$ and that $ \XX \in \mathbb{R}^{d_1 \times d_2}$  satisfies $\XX - \Xzero \in \ker(\mathcal{A})$. If $\varepsilon =  \frac{\besterrNuc{\XX}{r} }{d} > 0$, $\nucnorm{\XX - \Xzero} 
	\le 
	\frac{ \sigma_r \left( \Xzero \right) }{ \max \left( \eta_1 \sqrt{ d}, 2 \right)  }
$, where $0 < \eta_1 \leq \eta_r$ denotes again the order-one NSP constant of $\mathcal{A}$, and $\W{\XX}{\varepsilon}(\cdot)$ is the harmonic-mean weight operator \cref{eq:W:operator:action}, then
\begin{equation} \label{ineq:sharp:local:bound:corollary}
	\innerproduct{ \Xzero - \XX  , \W{\XX}{\varepsilon} ( \Xzero - \XX ) }
	\le 
	16  \nucnorm{\XX-\Xzero}.
\end{equation}
\end{corollary}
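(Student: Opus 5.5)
Throughout, write $\DDelta := \Xzero - \XX \in \ker(\mathcal{A})$. Let $\XX = \UU\diag(\ssigma)\VV^\top$ be a full SVD and set $\f{M} := \UU^\top \DDelta \VV$. Put $\lambda_i := \max(\sigma_i,\varepsilon)$. The harmonic-mean form is then an entrywise weighted sum,
\[
\innerproduct{\DDelta, \W{\XX}{\varepsilon}(\DDelta)} = \sum_{i,j} \frac{2}{\lambda_i+\lambda_j} M_{ij}^2 .
\]
I split $\f{M}$ into four blocks according to the index sets $[r]$ and its complement: $\f{M}_{11}$, $\f{M}_{12}$, $\f{M}_{21}$, $\f{M}_{22}$. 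Each block is bounded separately by a constant times $\nucnorm{\DDelta}$. The constant $16$ comes from adding these bounds with $\eta_r = 1/10$ plugged in.

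\textbf{Preliminary facts.} I first collect three estimates.
\begin{enumerate}[label=(\alph*)]
\item \emph{Order-one NSP.} Applied to $\DDelta$, it gives $\sigma_1(\DDelta) \le \eta_1 \nucnorm{\DDelta}$. Hence
\[
\fronorm{\DDelta}^2 \le \sigma_1(\DDelta)\nucnorm{\DDelta} \le \eta_1 \nucnorm{\DDelta}^2 .
\]
\item \emph{Weyl.} With the radius assumption, for $i \le r$,
\[
\sigma_i(\XX) \ge \sigma_r(\Xzero) - \specnorm{\DDelta} \ge \tfrac12 \sigma_r(\Xzero),
\]
since $\nucnorm{\DDelta} \le \sigma_r(\Xzero)/2$. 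For $i > r$, $\sigma_i(\XX) \le \specnorm{\DDelta}$.
\item \emph{Lower bound on $\varepsilon$.} I use the NSP error bound of \Cref{lemma:NSPl1min} in its ``tail of the competitor'' form. Since $\nucnorm{\Xzero} \le \nucnorm{\XX}$ by optimality of $\Xzero$, it yields
\[
\nucnorm{\DDelta} \le \frac{2(1+\eta_r)}{1-\eta_r}\, \besterrNuc{\XX}{r}.
\]
Therefore $\varepsilon \ge \frac{1-\eta_r}{2(1+\eta_r)} \cdot \frac{\nucnorm{\DDelta}}{d}$.
\end{enumerate}

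\textbf{Blocks touching $[r]$.} This covers $\f{M}_{11}$, $\f{M}_{12}$ and $\f{M}_{21}$. Here the harmonic mean is essential: every weight satisfies
\[
\frac{2}{\lambda_i+\lambda_j} \le \frac{2}{\lambda_{\min(i,j)}} \le \frac{4}{\sigma_r(\Xzero)},
\]
using fact (b). So these three blocks contribute at most
\[
\frac{4}{\sigma_r(\Xzero)}\fronorm{\DDelta}^2 \le \frac{4\eta_1\nucnorm{\DDelta}}{\sigma_r(\Xzero)}\,\nucnorm{\DDelta} \le \tfrac{4}{9}\nucnorm{\DDelta},
\]
where the last step uses the radius assumption. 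A one-sided weight would instead put $1/\varepsilon \sim d/\nucnorm{\DDelta}$ on one off-diagonal block. This is exactly the failure mode exhibited in \Cref{thm:counterexample:leftsided:weight:operator}.

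\textbf{Bottom-right block $\f{M}_{22}$.} I write $\f{M}_{22} = \UU_\perp^\top\Xzero\VV_\perp - \diag(\sigma_{r+1},\dots)$ and use $(a+b)^2 \le 2a^2 + 2b^2$.
\begin{itemize}
\item \emph{Diagonal part.} It contributes at most
\[
2\sum_{i>r}\frac{\sigma_i^2}{\lambda_i} \le 2\besterrNuc{\XX}{r} \le 2\nucnorm{\DDelta}.
\]
\item \emph{$\Xzero$ part.} This is a second-order quantity: since $\Xzero$ has rank $r$ and $\UU,\VV$ are close to its singular subspaces, a Wedin-type argument gives
\[
\fronorm{\UU_\perp^\top\Xzero\VV_\perp} \lesssim \frac{\fronorm{\DDelta}^2}{\sigma_r(\Xzero)} .
\]
It would come, e.g., from $\UU_\perp^\top\Xzero\VV_\perp = \UU_\perp^\top\DDelta\VV(\ldots)$ through the factorization $\Xzero = \Xzero\Xzero^{\dagger}\Xzero$. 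Its weights are at most $1/\varepsilon$, so by facts (a) and (c) the contribution is
\[
\lesssim \frac{\fronorm{\DDelta}^4\, d}{\sigma_r(\Xzero)^2\nucnorm{\DDelta}} \le \frac{\eta_1^2 d\,\nucnorm{\DDelta}^2}{\sigma_r(\Xzero)^2}\nucnorm{\DDelta} \le \tfrac{1}{81}\nucnorm{\DDelta}
\]
up to the constants from (c). This is precisely where the $\eta_1\sqrt{d}$ in the radius is needed.
\end{itemize}

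\textbf{Main obstacle.} I expect the main obstacle to be the clean second-order bound on $\UU_\perp^\top\Xzero\VV_\perp$, with explicit constants. It must be in Frobenius norm, involve $\fronorm{\DDelta}^2/\sigma_r(\Xzero)$, and hold uniformly in the singular-value gaps of $\XX$. I would try first to express it as $\UU_\perp^\top\DDelta\VV_1\bigl(\UU_1^\top\Xzero\VV_1\bigr)^{-1}\UU_1^\top\DDelta\VV_\perp$, using $\rank\Xzero = r$ and invertibility of $\UU_1^\top\Xzero\VV_1$, whose smallest singular value is at least $\sigma_r(\Xzero)/2$. Summing all constants with $\eta_r = 1/10$ then gives a bound comfortably below $16\nucnorm{\DDelta}$.
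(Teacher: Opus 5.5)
Your architecture is sound and matches the paper's split: entries touching $[r]$ versus $\f{M}_{22}$, with the harmonic-mean weight bounded by $2/\sigma_r(\XX)$ on the former. However, the constant bookkeeping uses a hypothesis the corollary does not have, and as written the argument does not reach $16$.

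\textbf{The misread radius.} The factor $\tfrac19$ belongs to the basin \eqref{assump:localconvergencerate} of \Cref{thm:locallinearp1}. Here you only know $\nucnorm{\DDelta}\le\sigma_r(\Xzero)/\max(\eta_1\sqrt d,2)$. Hence you only get $d\eta_1^2\nucnorm{\DDelta}^2/\sigma_r(\Xzero)^2\le 1$, not $\le\tfrac1{81}$.

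For the blocks touching $[r]$ this is harmless: they still contribute at most $2\eta_1\nucnorm{\DDelta}\le\tfrac15\nucnorm{\DDelta}$. For the $\Xzero$-part of $\f{M}_{22}$, take the constants in your sketch: the factor $2$ from $(a+b)^2\le 2a^2+2b^2$, $\sigma_{\min}(\UU_1^\top\Xzero\VV_1)\ge\sigma_r(\Xzero)/2$, $\fronorm{\f{M}_{21}}\fronorm{\f{M}_{12}}\le\fronorm{\DDelta}^2$, and fact (c). These give
\[
\frac{2}{\varepsilon}\fronorm{\Up^\top\Xzero\Vp}^2
\le\frac{16(1+\eta_r)}{1-\eta_r}\cdot\frac{d\eta_1^2\nucnorm{\DDelta}^2}{\sigma_r(\Xzero)^2}\,\nucnorm{\DDelta}
\le\frac{176}{9}\,\nucnorm{\DDelta}.
\]
The total is then about $21.8\,\nucnorm{\DDelta}$, above $16$.

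\textbf{A separate missing justification.} The bound $\sigma_{\min}(\UU_1^\top\Xzero\VV_1)\ge\sigma_r(\XX)-\specnorm{\DDelta}\ge\sigma_r(\Xzero)-2\specnorm{\DDelta}$ yields only $\ge 0$ from $\nucnorm{\DDelta}\le\sigma_r(\Xzero)/2$. You need the spectral form of fact (a), $\specnorm{\DDelta}\le\eta_1\nucnorm{\DDelta}$, to get $\ge(1-\eta_1)\sigma_r(\Xzero)$ and hence invertibility.

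\textbf{The repair is cheap.} Any one of the following brings the total below $16$:
\begin{itemize}
\item AM--GM on the disjoint blocks, $\fronorm{\f{M}_{21}}\fronorm{\f{M}_{12}}\le\tfrac12\fronorm{\DDelta}^2$ (term $\approx 4.9$).
\item Keeping one factor in spectral norm, $\fronorm{\f{M}_{21}}\specnorm{\f{M}_{12}}\le\fronorm{\DDelta}\specnorm{\DDelta}$, which gains a factor $\eta_1$ (term $\approx 2$).
\item Using $\sigma_{\min}(\UU_1^\top\Xzero\VV_1)\ge(1-\eta_1)\sigma_r(\Xzero)$ (term $\approx 6$).
\item As the paper does, expanding the square exactly and discarding the nonpositive term $-\sum_{i>r}\sigma_i^2/\lambda_i$. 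The $\Xzero$-part then enters with coefficient $1$ (term $\approx 9.8$), and the cross term is bounded by $2\nucnorm{\DDelta}$.
\end{itemize}

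\textbf{Comparison with the paper.} Once fixed, your treatment of the second-order term is genuinely different from the paper's, and more elementary. Your Schur-complement identity $\Up^\top\Xzero\Vp=\f{M}_{21}(\UU_1^\top\Xzero\VV_1)^{-1}\f{M}_{12}$ is exact for rank-$r$ $\Xzero$ once the middle factor is invertible. It replaces any subspace-perturbation argument.

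The paper instead bounds $\frac1\varepsilon\nucnorm{\Up^\top\Xzero\Vp}\specnorm{\Up^\top\Xzero\Vp}$. It uses $\nucnorm{\Up^\top\UUstar\SSigma_\star}\le D_{\eta_r}\besterrNuc{\XX}{r}$ together with Wedin's $\sin\theta$ theorem for $\specnorm{\VVstar^\top\Vp}$. That is where its dominant constant $D_{1/10}^2=(31/9)^2$ comes from, giving a total of $4+961/81\approx 15.9$.

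Your route yields a markedly smaller constant. The paper's keeps a general $\vartheta$ in $\varepsilon\ge\vartheta\besterrNuc{\XX}{r}/d$ explicit for use in \Cref{thm:locallinearp1}. Your fact (c) accommodates that too after multiplying by $\vartheta$.
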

\Cref{corollary:quadratictermlocal} follows immediately from \Cref{lemma:quadratictermlocal} with $q = -1$ and $\vartheta = 1$, using $D_{1/10} = 31/9$.
\paragraph{Impossibility of $d$-Independent Fast Local Rate for One-Sided Weight Operators.}
The dimension-independent upper bound \cref{ineq:sharp:local:bound:corollary} on $\innerproduct{  \Xzero - \XX , \W{\XX}{\varepsilon} (\Xzero - \XX) }$ proportional to $\nucnorm{\XX-\Xzero}$ is \emph{specific to the harmonic-mean weight operator} (and to $q$-power mean weight operators with $q \in [-1, 0)$, cf.\ \Cref{rem:dimfree}): it \emph{cannot hold} for one-sided weight operators (such as those with left-sided or right-sided core matrices \cref{eq:leftsided:mean,eq:rightsided:mean}), even within smaller local neighborhoods of $\Xzero$ than the one defined by \cref{assump:localconvergencerate}. \Cref{thm:counterexample:leftsided:weight:operator} below establishes this through a counterexample for which a $d$-dependent \emph{lower bound} on $\innerproduct{ \Xzero - \XX , \W{\XX}{\varepsilon} ( \Xzero - \XX ) }$ holds if a left-sided weight operator $\W{\XX}{\varepsilon}(\cdot)$ with core matrix \cref{eq:leftsided:mean} is used; the details of this construction are provided in \Cref{sec:counterexample:leftsided:weight:operator}.

\begin{theorem}[No Dimension-Free Fast Local Rate for One-Sided IRLS]
\label{thm:counterexample:leftsided:weight:operator}
For any\\ $r,d \in \mathbb{N}$ with $d \ge 220 r$, there exists a measurement operator $\mathcal{A}: \mathbb{R}^{d \times d} \rightarrow \R^m$ satisfying the NSP of order $r$ with constant $\eta_r = 1/10$ and of order $1$ with $\eta_1 = 1/(11r -1)$, a rank-$r$ matrix $\Xzero \in \mathbb{R}^{d \times d}$ and a matrix $\XX \in \mathbb{R}^{d \times d}$ such that $\XX - \Xzero \in \ker(\mathcal{A})$ and $\nucnorm{\XX - \Xzero} \le \frac{ \sigma_r \left( \Xzero \right) }{ \max \left( \eta_1 \sqrt{ d}, 2 \right)  }$, but also
\begin{equation*}
\innerproduct{ \Xzero - \XX, \W{\XX}{\varepsilon} (\Xzero - \XX ) }
\ge
\frac{d}{220 r}\nucnorm{ \XX - \Xzero},
\end{equation*}
if $\W{\XX}{\varepsilon}(\cdot)$ is the weight operator \cref{eq:W:operator:action} with left-sided core matrix \cref{eq:leftsided:mean} and the smoothing parameter $\varepsilon$ satisfies $\varepsilon = \frac{\besterrNuc{\XX}{r} }{d}$.

Furthermore, in this case, the subsequent IRLS iterate 
\[
\XX^+ = \argmin_{ \f{Z} \in \mathbb{R}^{d \times d} } \innerproduct{ \f{Z}, \W{\XX}{\varepsilon} (\f{Z}) } \text{ subject to } \mathcal{A}(\f{Z}) = \mathcal{A}(\Xzero)
\] 
satisfies
\[
	\nucnorm{\XX^+-\Xzero}
	\geq \left(1-\frac{220 r}{d}\right)\nucnorm{\XX-\Xzero}.
\]
\end{theorem}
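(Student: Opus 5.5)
We construct the counterexample with a one-dimensional kernel. Let $s := \sigma_r(\Xzero)$ and $\Xzero := \begin{pmatrix} s\f{I}_r & \f{0}\\ \f{0} & \f{0}\end{pmatrix}$. For parameters $e,\delta>0$, set
\[
\f{N} := \begin{pmatrix} \f{0} & \f{0}\\ \f{E} & \delta \f{I}' \end{pmatrix}, \qquad \f{E} := e\,[\f{I}_r;\f{0}] \in \R^{(d-r)\times r},
\]
where $\f{I}'$ is the identity on the trailing $(d-r)\times(d-r)$ block, possibly with its first $r$ diagonal entries removed so that the $\f{E}$-part and the $\delta$-part decouple cleanly. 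We take $\ker(\mathcal{A}) = \operatorname{span}(\f{N})$, e.g.\ $\mathcal{A}$ is the orthogonal projection onto $\f{N}^\perp$ expressed in coordinates, with $m=d^2-1$. The NSP then only has to be checked on multiples of $\f{N}$, so it reduces to a statement about the singular values of $\f{N}$: $r$ values equal to $e$ and about $d-2r$ values equal to $\delta$. We tune $(d-2r)\delta = 10\,r e$. Then $\sum_{i\le r}\sigma_i(\f{N}) = re = \tfrac{1}{10}\sum_{i>r}\sigma_i(\f{N})$, and for order one $e = \eta_1\bigl((r-1)e + 10re\bigr)$, which yields exactly $\eta_1 = 1/(11r-1)$. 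Finally we set $\XX := \Xzero + \f{N}$, so $\XX-\Xzero\in\ker(\mathcal{A})$ and $\nucnorm{\XX-\Xzero} = 11re$. Choosing $e$ small relative to $s$, say $e \lesssim s/(r\sqrt d)$, gives the neighborhood condition $\nucnorm{\XX-\Xzero}\le \sigma_r(\Xzero)/\max(\eta_1\sqrt d,2)$.

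Next we read off the spectral data of $\XX$ that enter the left-sided weights. The first $r$ columns of $\XX$ are $[s\f{I}_r;\f{E}]$, so the top-$r$ left singular space $\mathcal{U}$ of $\XX$ is spanned by these columns, with singular values about $\sqrt{s^2+e^2}$. The remaining singular values are of order $\delta$, up to $O(e\delta/s)$ perturbations. Hence $\varepsilon = \besterrNuc{\XX}{r}/d \approx (d-2r)\delta/d \approx 10re/d$, and the tail singular values are at most comparable to $\varepsilon$. By \cref{eq:onesided:action}, $\innerproduct{\f{N},\W{\XX}{\varepsilon}(\f{N})} = \trace(\f{N}^\top \LL_{\UU}^{-1}\f{N}) \ge \fronorm{(\f{I}-\f{P}_{\mathcal{U}})\f{N}}^2/\max(\sigma_{r+1}(\XX),\varepsilon)$. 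The key observation is that the block $[\f{0};\f{E}]$ is almost entirely orthogonal to $\mathcal{U}$, since $\mathcal{U}$ is dominated by $s\f{I}_r$ on the top rows. Therefore $\fronorm{(\f{I}-\f{P}_{\mathcal{U}})\f{N}}^2 \gtrsim \fronorm{\f{E}}^2 = re^2$. This gives
\[
\innerproduct{\f{N},\W{\XX}{\varepsilon}(\f{N})} \gtrsim \frac{re^2}{re/d} = d e,
\]
which exceeds $\frac{d}{220r}\cdot 11re = \frac{de}{20}$ once all constants are tracked.

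For the second claim, the feasible set is the line $\Xzero + t\f{N}$, so $\XX^+ = \XX - \tau\f{N}$ with
\[
\tau = \frac{\innerproduct{\f{N},\W{\XX}{\varepsilon}(\XX)}}{\innerproduct{\f{N},\W{\XX}{\varepsilon}(\f{N})}} .
\]
By \cref{eq:gradientcondition}, $\W{\XX}{\varepsilon}(\XX)=\nabla\mathcal{J}_{\varepsilon}(\XX)$, whose spectral norm is at most $1$. Hence $|\innerproduct{\f{N},\W{\XX}{\varepsilon}(\XX)}|\le\nucnorm{\f{N}}$, and so $|\tau| \le \nucnorm{\f{N}}/\innerproduct{\f{N},\W{\XX}{\varepsilon}(\f{N})} \le 220r/d$ by the first claim. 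Since $\XX^+-\Xzero = (1-\tau)\f{N}$, we get $\nucnorm{\XX^+-\Xzero} \ge (1-220r/d)\nucnorm{\XX-\Xzero}$. The assumption $d\ge 220r$ keeps this bound meaningful.

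The main obstacle is the perturbative control in the second paragraph. We need a quantitative lower bound on $\fronorm{(\f{I}-\f{P}_{\mathcal{U}})\f{N}}^2$ and an upper bound on $\sigma_{r+1}(\XX)$ relative to $\varepsilon$, both with explicit constants compatible with $220$. I would compute $\mathcal{U}$ exactly: the column space of $[s\f{I}_r;\f{E}]$ is explicit, since $\f{E}^\top\f{E} = e^2\f{I}_r$ makes it a Gram-normalized block. Because the $\delta$-block is placed orthogonally to the $\f{E}$-rows, the tail singular values of $\XX$ are exactly $\delta$ together with $r$ explicit values. This turns the estimates into closed-form $2\times 2$-type computations, and any slack is absorbed by taking $e/s$ small.
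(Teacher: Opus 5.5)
Your proposal is correct. It follows the paper's overall plan: a one-dimensional kernel spanned by $\XX-\Xzero$, exact tuning of its singular values to hit $\eta_r=1/10$ and $\eta_1=1/(11r-1)$, a large left-sided weight on the part of $\XX-\Xzero$ orthogonal to the leading column space of $\XX$, and a line search for $\XX^+$. The construction and the second step differ, however.

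The paper \emph{rotates} the leading left singular vectors, $\f{u}_i=\cos\theta\,\f{e}_i+\sin\theta\,\f{e}_{r+i}$. This keeps the singular values $\alpha$ of $\Xzero$ in $\XX$ and gives $\XX-\Xzero$ exactly $11r$ equal singular values $a=2\alpha\sin(\theta/2)$. Its tail has only $10r$ singular values $a\gg\varepsilon=10ra/d$. The bound is read off from the entries $\f{p}_i^\top(\XX-\Xzero)\f{f}_i=a\cos(\theta/2)$ along directions $\f{p}_i$ orthogonal to the column space of $\XX$, where the weight is exactly $1/\varepsilon$.

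You instead tilt additively and spread the tail over all $d-2r$ free directions, with $\delta=\frac{d}{d-2r}\varepsilon$ just above $\varepsilon$. Because of this, the crude bound $\LL_{\UU}^{-1}\succeq(\f{I}-\f{P}_{\mathcal U})/\delta$ suffices. You should commit to the decoupled $\f{I}'$. With the coupled version, the top singular values of $\f{N}$ become $\sqrt{e^2+\delta^2}$, and your tuning $(d-2r)\delta=10re$ would push the order-$r$ constant above $1/10$.

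With the decoupled choice everything is exact: $\fronorm{(\f{I}-\f{P}_{\mathcal U})\f{N}}^2=\frac{re^2s^2}{s^2+e^2}+(d-2r)\delta^2$. Hence the quadratic form is at least $\frac{s^2}{s^2+e^2}\cdot\frac{(d-2r)e}{10}+10re\ge\frac{de}{20}$ as soon as $e\le s$, which the neighborhood condition forces.

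For the second claim, the paper computes $\innerproduct{\W{\XX}{\varepsilon}(\XX),\XX-\Xzero}=ra\sin(\theta/2)+10ra\le\nucnorm{\XX-\Xzero}$ by hand and then checks $\lambda_+\in[0,1]$. Your H\"older bound with $\specnorm{\nabla\mathcal{J}_{\varepsilon}(\XX)}\le 1$, together with $|1-\tau|\ge 1-|\tau|$, needs no computation. It also shows that the one-step bound follows from the quadratic-form bound in any instance whose kernel is spanned by $\XX-\Xzero$.
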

An analogous result to \Cref{thm:counterexample:leftsided:weight:operator} can be shown for right-sided weight operators. The different local linear convergence rates of IRLS using harmonic-mean weight operators on the one hand, and of IRLS using one-sided, or large-$q$ power-mean weight operators on the other hand, can also be observed in practice; for generic examples, the dimension-free error decay of \Cref{alg:algo1}'s iterates can often be observed beyond the limited-size local neighborhood defined by \cref{assump:localconvergencerate}. We elaborate on this fact below in \Cref{sec:linear:convergence:rate:factors,sec:rectangular:lowrank}.

%% file: numerical_experiments.tex
\section{Numerical Experiments}\label{sec:simulations}
In this section, we explore the empirical behavior of IRLS for nuclear norm minimization to solve low-rank matrix sensing problems. We focus on the empirical speed of convergence of iterates of \Cref{alg:algo1} given different choices of the weight operator core matrix (see \Cref{def:weightcore}), taking also into account the role of algorithmic initialization and the smoothing parameter update rule \cref{eq:IRLS:step_2}.

\subsection{Setup} \label{sec:setup}
For all experiments, we consider low-rank matrix recovery problems of \emph{matrix sensing} type, where the measurement operator $\mathcal{A}: \Rdd \longrightarrow \R^m$ of \eqref{eq:nucnorm:min} consists of noiseless, \emph{random Gaussian rank-one measurements}. In particular, the $\ell$-th coordinate of $\mathcal{A}(\f{X})$ given the input matrix $\f{X} \in \Rdd$ is
$  \mathcal{A}(\f{X})_\ell = \langle a_\ell b_{\ell}^{\top},\f{X}\rangle $
for each $\ell=1,\ldots, m$, where $a_{\ell},b_{\ell}$ are independent random vectors of length $d_1$ and $d_2$, respectively, with independent, standard normal entries. Such measurements correspond to a simplified, real-valued variant of the measurement setting available in blind deconvolution problems \citep{ahmedBlinddeconv_2014,Li-RapidBlindDeconvolution2019,Ma2020implicit}. We use such a setup as these random rank-one measurements (see \Cref{sec:preliminaries}) are very likely to make $\mathcal{A}$ satisfy an NSP of order $r$ and match the assumptions of the convergence theorems of \Cref{sec:main:results}, as long as $m$ is chosen to be at least proportional to $r (d_1 +d_2)$ with a certain proportionality factor $C > 1$. Another reason is the implied computational cost of rank-one measurements. With $O(m (d_1+d_2))$ entries, they have smaller memory requirements than measurement operators with, e.g., measurement matrices drawn from a dense Gaussian ensemble, and smaller evaluation costs on rank-$r$ matrices with $O(m r (d_1 +d_2))$ instead of $O(m d_1 d_2)$.

\subsection{Linear Convergence Rate Factors for IRLS with Weight Operator Variants} \label{sec:linear:convergence:rate:factors}
In the first experiment, we compare the performance of IRLS for nuclear norm minimization (as defined in \Cref{alg:algo1}), using \emph{different} weight operator variants on typical low-rank matrix recovery problem instances. In the following, we compare IRLS using the updates $W^{(k+1)} := \W{\Xk{(k)}}{\varepsilon_{k}}$, where $\W{\XX}{\varepsilon}$ is as in \Cref{def:optimalweightoperator}, but with different choices for the weight operator core matrix $\f{H}_{\ssigma,\varepsilon}$ in \Cref{def:weightcore}. We consider
\begin{itemize}
    \item a harmonic-mean weight operator core matrix $\f{H}_{\ssigma,\varepsilon}$ as in \cref{eq:harmonic:mean} of \Cref{def:weightcore},
    \item a left-sided weight operator core matrix $\f{H}_{\ssigma,\varepsilon}$ as in \cref{eq:leftsided:mean},
    \item a right-sided weight operator core matrix $\f{H}_{\ssigma,\varepsilon}$ as in \cref{eq:rightsided:mean}, as well as 
    \item an arithmetic mean weight operator core matrix $\f{H}_{\ssigma,\varepsilon} = \f{H}_{\ssigma,\varepsilon}^{(1)}$ as in \cref{eq:power:mean:core:matrix} with $q=1$.
\end{itemize}
For these choices, \Cref{mainresult:lowrank} provides an identical global linear convergence rate under suitable conditions as $c_q$ of \cref{eq:c_q:def} is equal to $1$ in every case. 
For different matrix dimensions $d \in \{30, 60, 90, 110, 140\}$ with $d_1 = d_2= d$, we sample a random low-rank matrix $\Xzero = \f{U}\f{V}^{\top} \in \Rdd$ of rank $r=2$ with random factor matrices $\f{U} \in \R^{d_1 \times r}$ and $\f{V} \in \R^{d_2 \times r}$ with i.i.d.~standard Gaussian entries and run the respective variant of \Cref{alg:algo1} until convergence. The IRLS variants take as input a realization of the rank-one measurement operator $\mathcal{A}$ outlined in \Cref{sec:setup}, that is $\f{y}= \mathcal{A}(\Xzero) \in \R^m$ with  the number of measurements $m = 4.5 r (d_1 + d_2)$. Typical decay curves of the absolute smoothed nuclear norm gap $\mathcal{J}_{\varepsilon_{k}} \left( \XXk \right)  -  \nucnorm{ \Xzero} $ resulting from the respective variant of \Cref{alg:algo1}, are shown in \Cref{fig:ConvergencePlotsWeightOperatorVariants}. In all experiments, we use a \emph{tangent space implementation} for solving the weighted least squares problem \cref{eq:IRLS:step_1} and partial singular value decompositions for the weight operator update of \Cref{alg:algo1}, which generalizes the implementations of \citet{KummerleMayrinkVerdun-ICML2021} and \citet{GTK24}. We refer to these papers and to research code associate to this paper for more details.

\begin{figure}[t]
    \centering
    \includegraphics[width=0.95\textwidth]{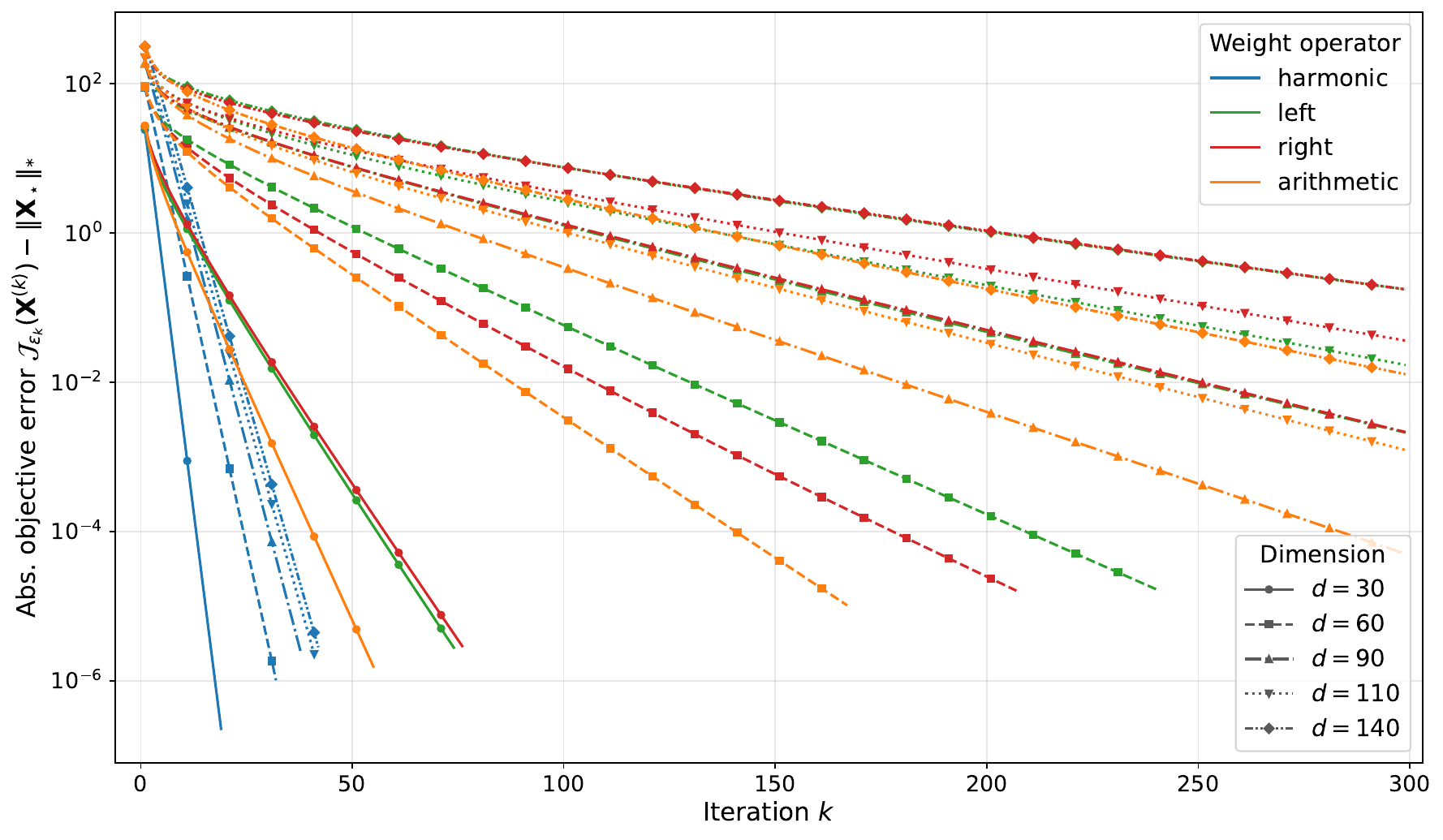}
    \caption{Smoothed nuclear norm gap $\mathcal{J}_{\varepsilon_{k}} \left( \XXk \right) - \nucnorm{\Xzero}$ of IRLS iterates $\XXk$ for different weight operator variants (harmonic, left-sided, right-sided, arithmetic) across matrix dimensions $d \in \{30, 60, 90, 110, 140\}$, rank-$2$ ground truth $\Xzero \in \Rdd$, random Gaussian rank-one measurements with linear scaling.}
    \label{fig:ConvergencePlotsWeightOperatorVariants}
\end{figure}

We observe that while all variants exhibit a linear convergence rate, their multiplicative decrease factors differ substantially, with the harmonic-mean weight operator variant of IRLS (in blue) converging fastest for all considered dimensions, followed by the arithmetic mean variant, while the left-sided and right-sided variants are the slowest. For example, for $d=30$, the harmonic-mean variant of \Cref{alg:algo1} reaches an error threshold of~$10^{-5}$ after 16 iterations, whereas the arithmetic, left-sided, and right-sided variants require 49, 68, and 70 iterations, respectively. For each of the larger dimensions $d \in \{60,90,110,140\}$, this gap widens: the harmonic-mean variant needs at most 40 iterations to reach the threshold, whereas none of the other variants reaches it within 300 iterations, which is the maximal iteration count used in the experiment.

\begin{figure}[t]
    \centering
    \includegraphics[width=\textwidth]{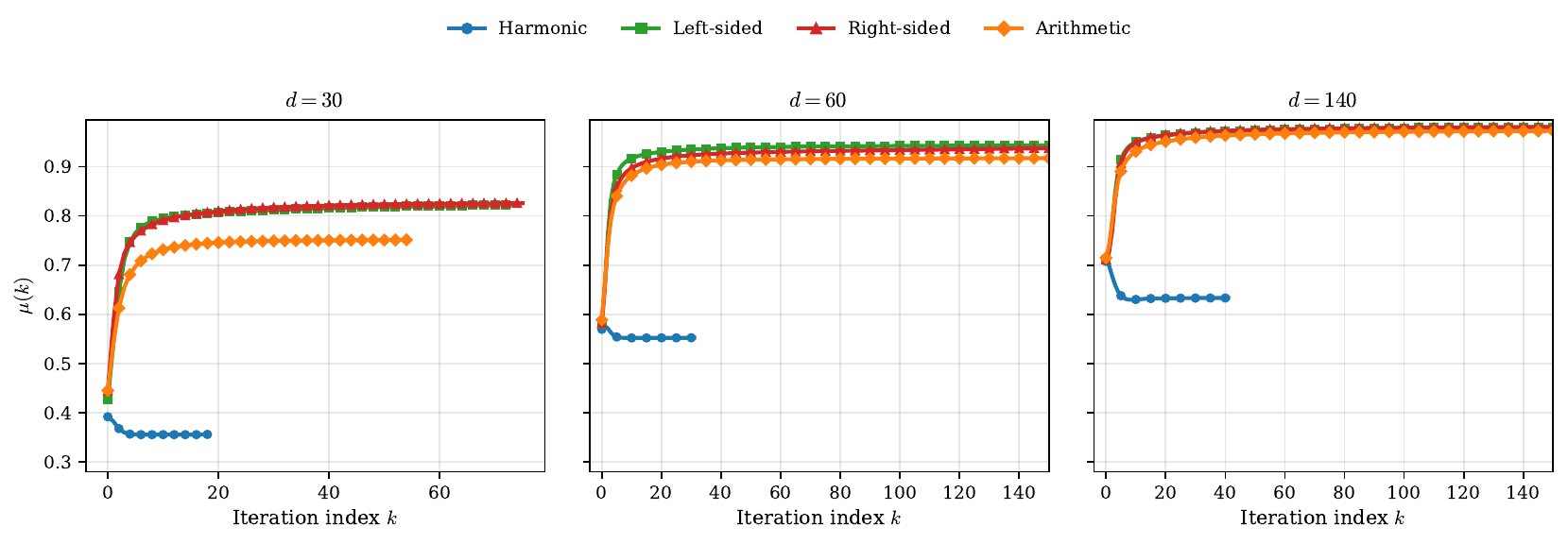}
    \caption{Successive decrease-factor trajectories for the standard initialization $\f{X}^{(0)}$ in the random Gaussian rank-one measurement setup, shown for $d \in \{30, 60, 140\}$. The color pattern and variant order match \Cref{fig:ConvergencePlotsWeightOperatorVariants}: harmonic (blue), left-sided (green), right-sided (red), arithmetic (orange). For $d=60$ and $d=140$, only iteration indices $0$ through $150$ are displayed.}
    \label{fig:DecreaseFactorVariantsStandard}
\end{figure}
To make the contraction explicit, we define for each iteration $k \geq 0$ the pairwise decrease factors
\begin{equation} \label{eq:decrease:factor}
    \mu(k) = \frac{\mathcal{J}_{\varepsilon_{k+1}}(\f{X}^{(k+1)}) - \nucnorm{\Xzero}}{\mathcal{J}_{\varepsilon_{k}}(\f{X}^{(k)}) - \nucnorm{\Xzero}}
\end{equation}
and visualize the corresponding decrease-factor trajectories across iterations $k$ in \Cref{fig:DecreaseFactorVariantsStandard} for the $d \in \{30,60,140\}$ experiments. In this figure, values farther below $1$ indicate a stronger per-iteration decrease of the smoothed nuclear norm gap and thus faster linear convergence. \Cref{fig:DecreaseFactorVariantsStandard} illustrates that at initialization, the decrease factor is similar for all IRLS variants, but deteriorates quickly for the left-sided, right-sided and arithmetic mean variants. On the other hand, for \Cref{alg:algo1} with the harmonic-mean weights, we observe stability or a slight improvement quickly after initialization. This is consistent with the global linear rate analysis of \Cref{mainresult:lowrank}, which characterizes the global behavior of any IRLS variant and which applies already for the first iteration, and the fact that an improved local linear rate can be shown (cf. \Cref{thm:locallinearp1}) for the harmonic-mean IRLS variant once the iterates get closer to the ground truth. The deterioration of the decrease factor at later iterations for the left-sided, right-sided and arithmetic mean variants as $d$ grows suggests that a dimension-independent, fast local convergence rate such as shown in \Cref{thm:locallinearp1} does not hold for these variants. \Cref{fig:ConvergencePlotsWeightOperatorVariants} and \Cref{fig:DecreaseFactorVariantsStandard} suggest that a slight increase in the number of iterations is needed for harmonic-mean IRLS to reach the error threshold of $10^{-5}$. Similarly, the limiting $\mu(k)$ for large $k$ increases with increasing dimension $d$, namely from around $0.36$ for $d=30$, through $0.55$ for $d=60$, to $0.63$ for $d=140$. However, this behavior is entirely compatible with
\Cref{thm:locallinearp1}. As discussed after \Cref{thm:locallinearp1}, its factor $1-c_{\eta_r}$ is an upper bound with a non-optimized constant, which equals $\approx 0.996$ for $\eta_r = 1/10$ and thus lies above all factors observed here. It is moreover dimension-free for a \emph{fixed} NSP constant, whereas our experiments keep the oversampling factor $m / (r(d_1+d_2))$ fixed, so that $\eta_r$ and $\eta_1$ may still vary with $d$.
\subsection{Recovery of Rectangular Low-Rank Matrices} \label{sec:rectangular:lowrank}
In the recent literature on IRLS for low-rank matrix recovery, one-sided variants of IRLS are still often considered \citep{Kraemer-2025OneSided,Radhakrishnan2025linear}, partially motivated by the rectangular structure of many problem instances, where, for example, $d_1$ can be considered as the number of data samples and $d_2$ as the number of features in which case \citet{Radhakrishnan2025linear} recommends right-sided reweighting. This could also be justified specifically for the case of $d_1 \gg d_2$, as in this case the right-sided variant of IRLS requires a matrix decomposition of a smaller matrix,\footnote{\citet{Radhakrishnan2025linear} discusses SVD-free variants of one-sided IRLS under the name \emph{SVD-free lin-RFM} (recursive feature machines). However, this implementation variant is not available for the case of the nuclear norm surrogate (as this corresponds to $\alpha = 1/4$ in the framework of \citet{Radhakrishnan2025linear}, whereas the SVD-free variant is only available for integer multiples of $\alpha = 1/2$).} whereas this perspective would motivate left-sided reweighting for $d_1 \ll d_2$. 

\begin{figure}[t]
    \centering
    \includegraphics[width=\textwidth]{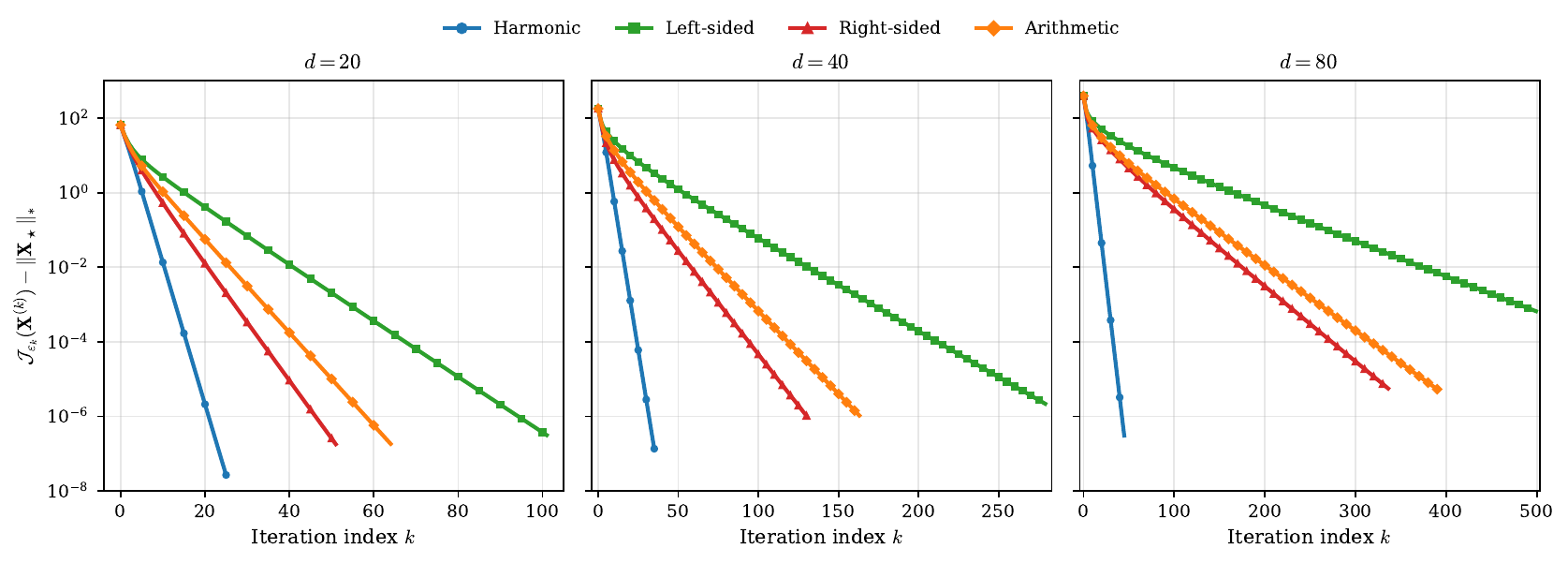}
    \caption{Smoothed nuclear norm gap trajectories $\mathcal{J}_{\varepsilon_{k}} \left( \XXk \right) - \nucnorm{\Xzero}$ for the tall rectangular recovery experiment with standard initialization and dimensions $d \in \{20,40,80\}$. In each panel, a rectangular ground truth $\Xzero \in \R^{4d \times d}$ of rank $r=2$ is recovered from $m = 4.5 r (d_1+d_2)$ random Gaussian rank-one measurements.}
    \label{fig:RectangularTallNucDeltaVariants}
\end{figure}

In any case, the question arises whether one-sided IRLS variants perform sufficiently well for highly rectangular problem instances with $d_1 \gg d_2$, or whether the optimal harmonic-mean variant is still advantageous in this case. To this end, we consider the setup of \Cref{sec:linear:convergence:rate:factors} to recover a rank $r=2$ ground truth $\Xzero \in \Rdd$ from $m = 4.5 r (d_1 + d_2)$ rank-one measurements, but with the tall rectangular dimensions $d_1 = 4 d$ and $d_2 = d$ for $d \in \{20,40,80\}$, so that the aspect ratio $d_1/d_2$ is the only parameter in which the two setups differ.

\begin{figure}[t]
    \centering
    \includegraphics[width=\textwidth]{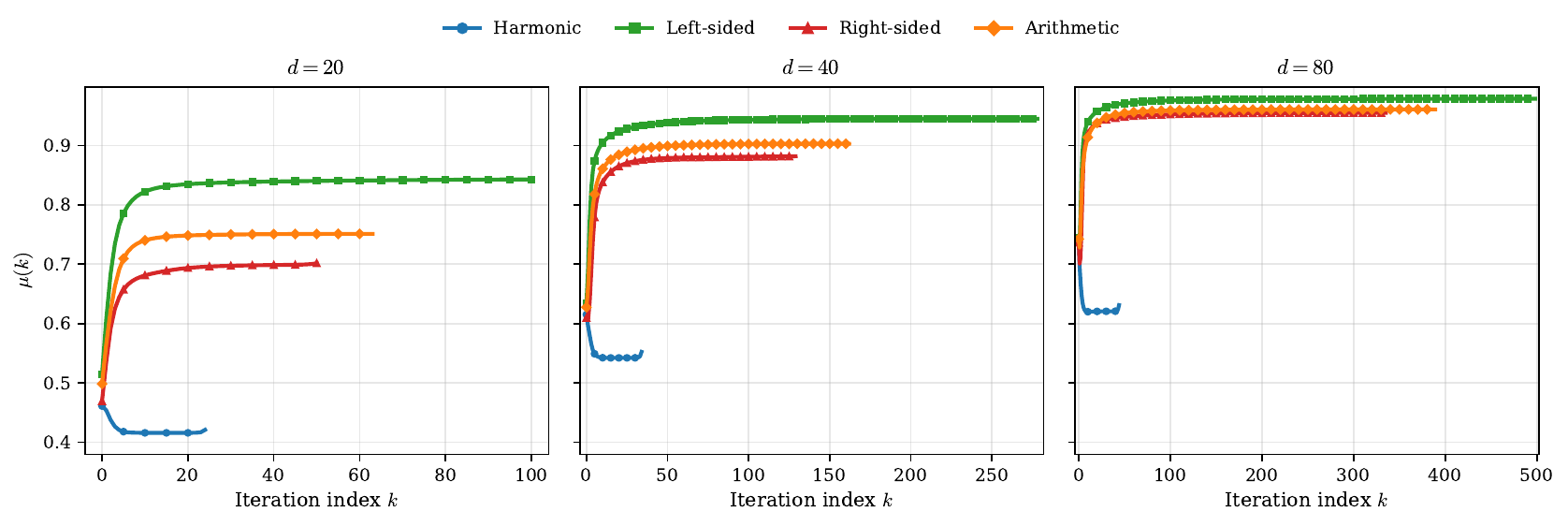}
    \caption{Successive decrease-factor trajectories $\mu(k)$ of \cref{eq:decrease:factor} for the tall rectangular recovery experiment with standard initialization, shown for $d \in \{20,40,80\}$ and rank $r=2$ ground truth $\Xzero \in \R^{4d \times d}$. Each trajectory ends once the respective variant terminates or the maximal iteration count of $500$ is reached.}
    \label{fig:RectangularTallDecreaseFactorVariants}
\end{figure}

We observe in \Cref{fig:RectangularTallNucDeltaVariants} that, for $d = 20$, the right-sided IRLS variant using \cref{eq:rightsided:mean} in \Cref{def:weightcore} clearly converges faster than the left-sided variant using \cref{eq:leftsided:mean}, passing the error threshold of $10^{-5}$ after 40 instead of 81 iterations, whereas the arithmetic mean variant lies in between with 51 iterations and harmonic-mean IRLS is fastest with 19 iterations. This ordering persists for the larger dimensions, but the gaps \emph{widen} substantially as $d$ grows: harmonic-mean IRLS passes the threshold after 28 and 38 iterations for $d=40$ and $d=80$, while the right-sided variant, the best of the alternatives, requires 113 and 324 iterations, and the left-sided variant does not reach the threshold for $d=80$ within the maximal iteration count of 500 used in this experiment.

The decrease factor trajectories $\mu(k)$ of \cref{eq:decrease:factor} depicted in \Cref{fig:RectangularTallDecreaseFactorVariants} visualize this deterioration in a different manner. In each panel, the four variants start from an almost identical $\mu(0)$, in line with the global rate of \Cref{mainresult:lowrank} that applies to all of them alike, but settle at markedly different levels: the limiting decrease factor of harmonic-mean IRLS increases only mildly from about $0.42$ for $d=20$ to about $0.62$ for $d=80$, cf.~\Cref{rem:dimfree}, whereas those of the other three variants reach values between $0.96$ and $0.98$ for $d=80$, which again suggests that a fast local linear rate such as the one of \Cref{thm:locallinearp1} does not hold for these variants.

Overall, this experiment shows that for rectangular low-rank matrix recovery, among the one-sided variants, using nontrivial weights on the \emph{smaller} matrix dimension pays off relatively speaking, but the optimal harmonic-mean variant still is significantly superior, and increasingly so for instances of higher dimensions.

\subsection{Adversarial Initialization} \label{sec:adversarial:initializations}
\Cref{mainresult:lowrank} guarantees a global linear decrease factor of the form
$1-\frac{C_{\eta_r}}{c_q \eta_1 d}$. A natural question is whether the ambient-dimension factor~$d$ in this bound is essentially sharp, or merely an artifact of the proof. While the guarantee applies for every choice of the initial positive definite weight operator
$W^{(0)}$, the default choice $W^{(0)}=\Id$ in \Cref{alg:algo1}
typically yields a first iterate that is already somewhat aligned with the ground truth~$\Xzero$.\footnote{It is easy to see that the solution
$\Xk{(0)}$ of \eqref{eq:IRLS:step_1} is simply
$\Xk{(0)}=\mathcal{A}^{\dagger}(\f{y})$ in this case, where
$\mathcal{A}^{\dagger}$ denotes the pseudo-inverse of the measurement
operator~$\mathcal{A}$.} To probe the worst-case behavior, we therefore track
the initial decrease factor~$\mu(0)$ of \cref{eq:decrease:factor} for an
initial weight operator constructed from an auxiliary reference matrix that is
deliberately chosen to be as poorly aligned with $\Xzero$ as possible.

In particular, to create such an \emph{adversarial initialization}, we first compute the standard nuclear norm minimizer $\f{X}_{\mathrm{nuc}}$ of \eqref{eq:nucnorm:min}. Let $\f{X}_{\mathrm{nuc}}^{(r)} = \f{U}_{\mathrm{nuc}}^{(r)} \diag(\sigma_{\mathrm{nuc}}^{(r)}) \f{V}_{\mathrm{nuc}}^{(r)^{\top}}$ denote its best rank-$r$ approximation, and let
$T = \left\{ \f{X}=  \f{U}_{\mathrm{nuc}}^{(r)} \f{M} + \f{N} \f{V}_{\mathrm{nuc}}^{(r)^{\top}}
, \f{M} \in \R^{r \times d_2}, \f{N} \in \R^{d_1 \times r} \right\} \subset \Rdd$ 
be the tangent space of the fixed-rank manifold at $\f{X}_{\mathrm{nuc}}^{(r)}$ \citep{Vandereycken13}. Writing $P_{T^{\perp}}(\cdot)$ for the orthogonal projection onto the complementary space $T^{\perp}$, we then define
\begin{equation*}
\Xk{(-1)}
\;=\;
\argmin_{\f{X}\in\Rdd}\;
\nucnorm{\f{X}}
\quad\text{subject to}\quad
\mathcal{A}\bigl(P_{T^{\perp}}(\f{X})\bigr)=\f{y}.
\end{equation*}
Relative to the geometry suggested by $\f{X}_{\mathrm{nuc}}$, this forces the measurements to be explained through the $T^{\perp}$ component and thereby yields an auxiliary reference matrix that is poorly aligned with $\Xzero$. Since the two-sided orthogonal projection $P_{T^\perp}(\cdot)$ is nuclear-norm nonexpansive, a minimizer can be chosen in $T^\perp$; in particular, $\mathcal A(\Xk{(-1)})=\f{y}$. We initialize the experimental recurrence by $\varepsilon_{-1}
    :=\besterrNuc{\Xk{(-1)}}{r}/ d$ and $W^{(0)}:=\W{\Xk{(-1)}}{\varepsilon_{-1}}$. The weighted least-squares step \cref{eq:IRLS:step_1} then produces $\Xk{(0)}$.

\begin{figure}[t]
    \centering
    \includegraphics[width=0.92\textwidth]{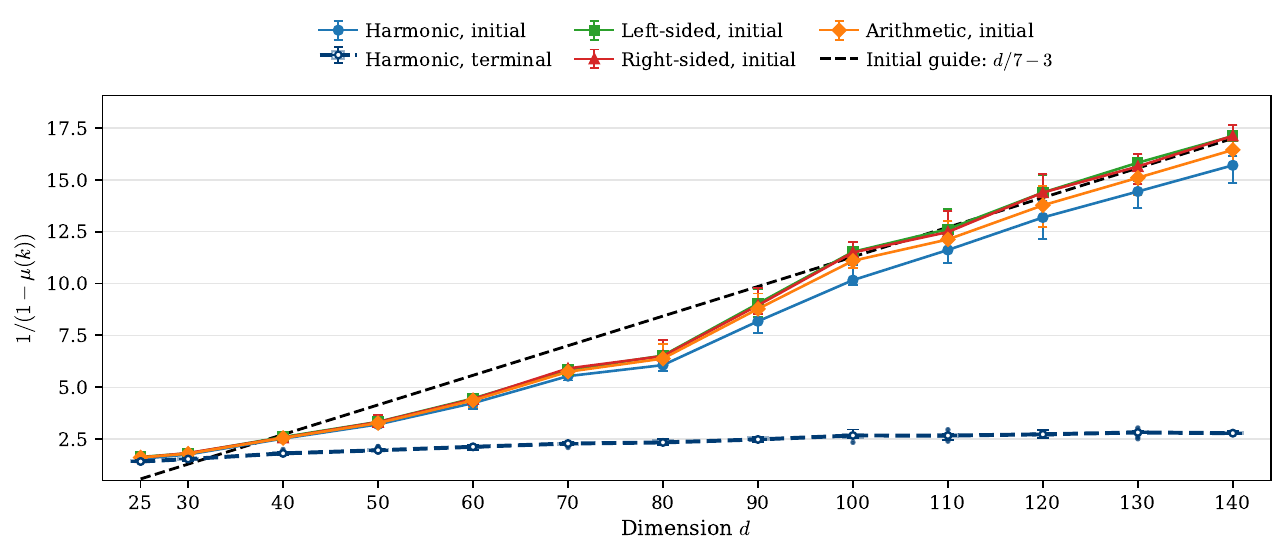}
    \caption{Reciprocal first-algorithmic contraction gaps $1/(1-\mu(0))$ (for initial), measuring the transition from $\Xk{(0)}$ to $\Xk{(1)}$, for the extended recurrence initialized by $(\Xk{(-1)},\varepsilon_{-1})$ and the resulting $W^{(0)}$, across dimensions $d \in \{25,30,40,50,60,70,80,90,100,110,120,130,140\}$ in the square random Gaussian rank-one measurement setup with rank-$2$ ground truth and linear measurement scaling; five terminal iteration average of $1/(1-\mu(k))$ for the harmonic-mean variant. Median values across $10$ seeds with $25\%$ and $75\%$ quantiles.}
\label{fig:ConvergenceRate}
\end{figure}

With this extended adversarial initialization, we revisit the experimental setup of
\Cref{sec:linear:convergence:rate:factors} and recover rank $r=2$ ground truth matrices $\Xzero \in \R^{\mind \times \mind}$ of different sizes $d \in \{25,30,40,50,60,70,80,$ $90,100,110,120,130,140\}$ from $m = 9 r d$ rank-one measurements. As in \Cref{sec:linear:convergence:rate:factors}, we consider different weight operator variants. Revisiting the contraction factor $\mu(0)$ of first non-initialization iteration as defined in \cref{eq:decrease:factor}, we plot the dimension-dependent behavior of $1/(1-\mu(0))$ in \Cref{fig:ConvergenceRate} in box plots across $10$ random seeds.

We observe that across all IRLS variants, $1/(1-\mu(0))$ grows approximately linearly with the dimension $d$, especially over the larger dimensions, with the reference line $d/7-3$ amounting to a suitable fit for the setup considered. We note that the first algorithmic decrease for this adversarial setup is markedly slower than for the standard least-squares initialization illustrated in \Cref{fig:DecreaseFactorVariantsStandard} of \Cref{sec:linear:convergence:rate:factors}, particularly as $d$ grows.

For harmonic mean IRLS specifically, however, that this deterioration is a \emph{transient} effect, as the value of $\mu(k)$ quickly falls to a lower, terminal value. We indicate in \Cref{fig:ConvergenceRate} as \emph{Harmonic, terminal} the distribution of the values of $\frac{1}{5} \sum_{k \text{ among last five iterations}} 1/(1-\mu(k))$ for the harmonic-mean variant. We observe that this value grows only marginally with the dimension $d$, staying well below $3.0$ even in the larger dimension range of $d \in \{100,110,120,130,140\}$. On the other hand, for the other variants, the in-trajectory and terminal values of $1/(1-\mu(k))$ grow linearly with $d$ significantly \emph{faster} than $1/(1-\mu(k))$, similar to \Cref{fig:RectangularTallDecreaseFactorVariants}, with the terminal five-iteration averages growing above $50$ for the one-sided variants and to almost $40$ for the arithmetic mean variant. For harmonic mean IRLS, the observed transition toward the same terminal behavior as for the standard initialization is consistent with the distinction between the global and local regimes in \Cref{sec:main:results}: the local factor $1-c_{\eta_r}$ of \Cref{thm:locallinearp1} depends neither on the dimension $d$ nor on the initialization once a neighborhood as in \cref{assump:localconvergencerate} has been reached.

\subsection{Role of the Smoothing Parameter Update Rules} \label{sec:smoothing:parameter:experiments}
As discussed in \Cref{sec:smoothing:parameter}, the smoothing-parameter update rule is a second important design choice for IRLS. We revisit the experimental setup of \Cref{sec:linear:convergence:rate:factors} and compare the $\ell_1$-tail update \eqref{eq:IRLS:step_2}, the $\ell_2$-tail update \eqref{eq:IRLS:step_2:frobenius}, and the $\ell_\infty$-tail update \eqref{eq:IRLS:step_2:singularvalue}.

In the left panel of \Cref{fig:SmoothingParameterSelected}, we show a typical trajectory for the oversampling factor of $C = 3$. We observe that the $\ell_\infty$-tail rule does not yield convergence.
However, the advantage via faster convergence of the harmonic-mean variant observed in \Cref{sec:linear:convergence:rate:factors,sec:rectangular:lowrank,sec:adversarial:initializations} relative to the one-sided variants persists across all smoothing-parameter update rules.
Increasing the oversampling factor to $C=4$ improves the per-iteration gap decrease for all variants, as illustrated in the right panel of \Cref{fig:SmoothingParameterSelected}, and enables ground truth convergence for the $\ell_\infty$-tail rule-based variants as well.

\begin{figure}[t]
    \centering
   \includegraphics[width=\textwidth]{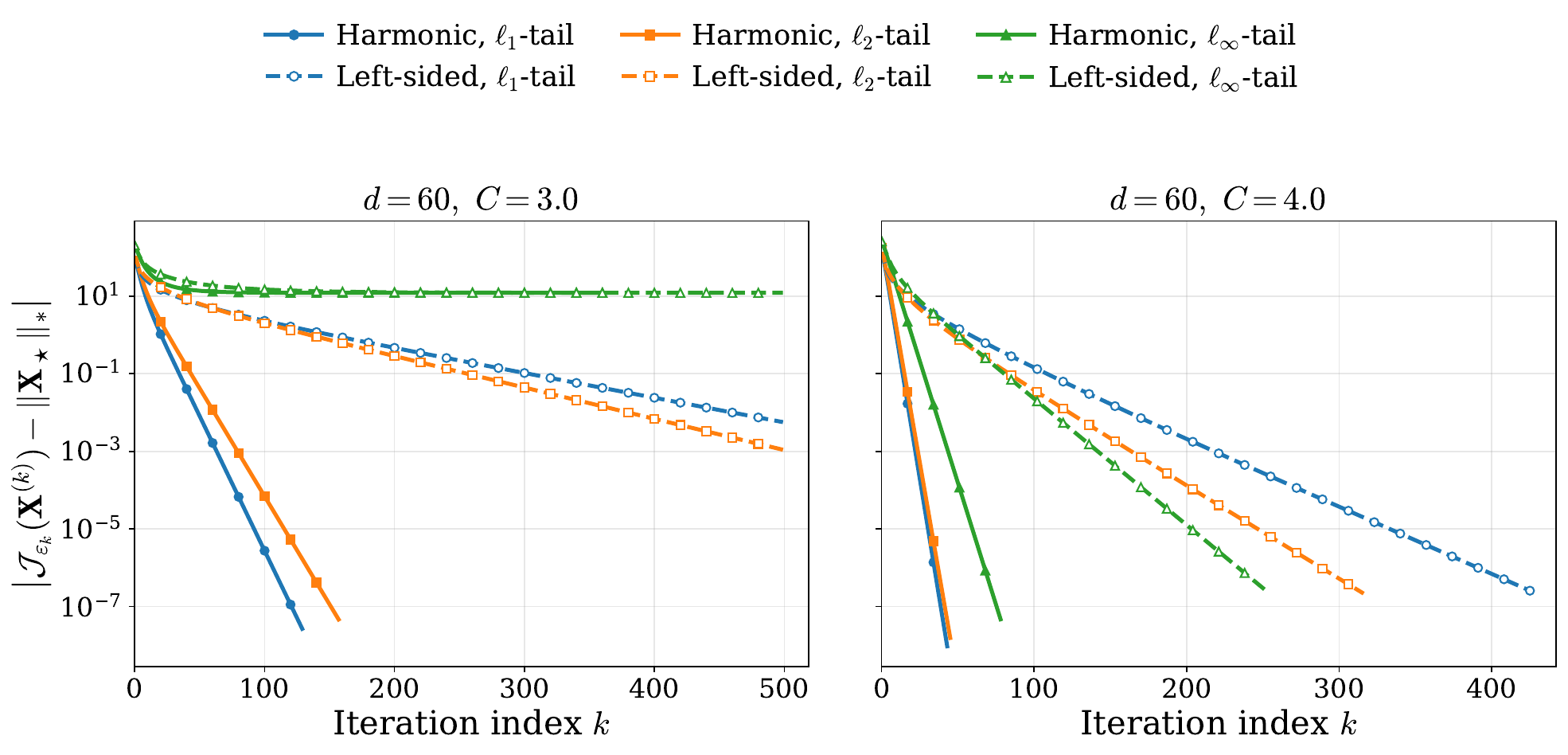}
    \caption{Selected trajectory plots of IRLS variants with different smoothing parameter update rules at dimension $d=60$ with rank-$2$ ground truth and Gaussian rank-one measurements. Left: Objective gap $\left|\mathcal{J}_{\varepsilon_{k}} \left( \XXk \right)- \nucnorm{\Xzero}\right|$ vs. iteration $k$ for $C=3.0$. Right: Analogous experiment with oversampling factor of $C=4.0$.}
    \label{fig:SmoothingParameterSelected}
\end{figure}

A better understanding of the sampling-data dependence of the IRLS variants with different smoothing parameter update rules is provided in \Cref{fig:SmoothingParameterPhaseTransition}. In this experiment, we run $12$ problem instances for each combination of dimension $d \in \{30, 60, 100\}$ and a selection of different oversampling factors  $C \in [2.0, 5.0]$ for the different smoothing parameter update rules. The distribution of the relative Frobenius errors $\|\Xk{(\tilde{k})} - \Xzero\|_F / \|\Xzero\|_F$ is visualized using box plots, where $\tilde{k}$ is the minimum of $500$ and of the first iteration index for which two subsequent algorithm iterates differ by a relative change in Frobenius norm of less than $10^{-10}$. Naturally, the setup is such that the problem becomes easier to solve as $C$ increases. In addition, with the black star, we visualize the smallest value of $C$ for which the nuclear norm minimizer $\f{X}_{\mathrm{nuc}}$ of \cref{eq:nucnorm:min} recovers the ground truth $\Xzero$ (up to a relative error of at most $10^{-4}$) using the splitting conic solver \texttt{SCS} \citep{OCPB-2016} via \texttt{CVXPY} \citep{Diamond-2016cvxpy}.

\begin{figure}[t]
    \centering
   \includegraphics[width=\textwidth]{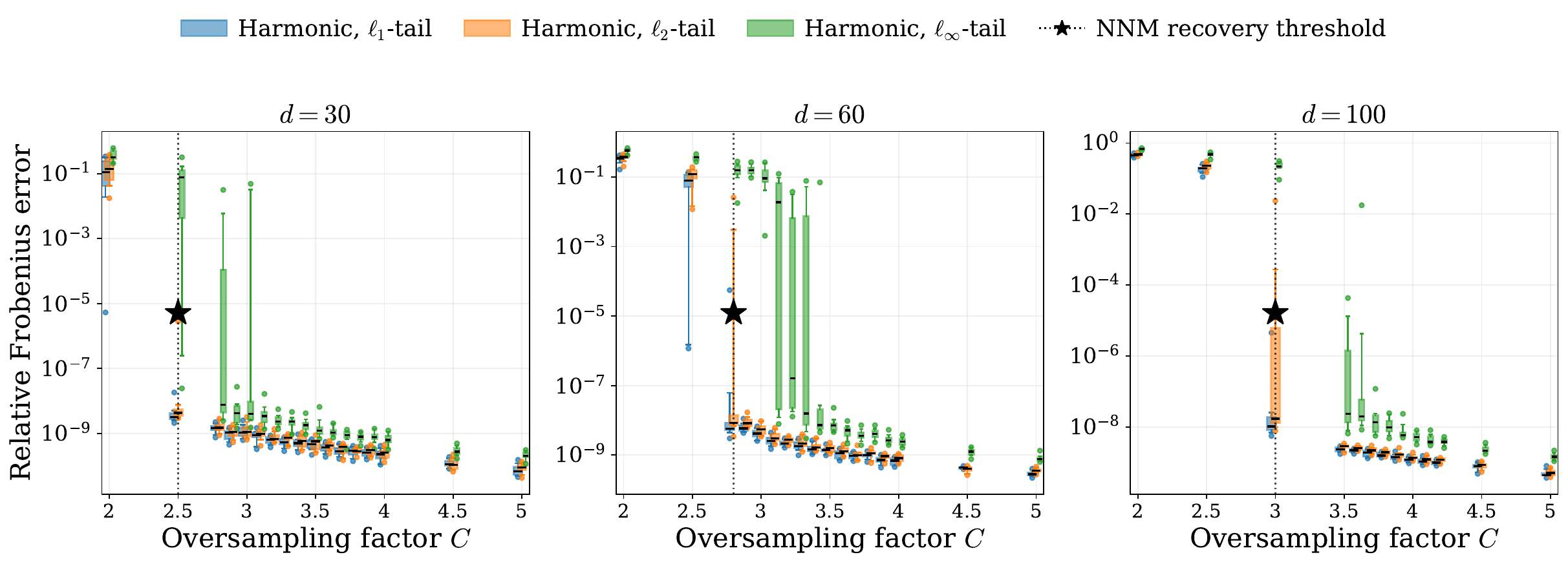}
    \caption{Phase transition experiment for nuclear norm MatrixIRLS with harmonic-mean weight operator, showing distribution of relative Frobenius errors across $12$ problem instances for different oversampling factors $C$ and different smoothing parameter update rules. }
    \label{fig:SmoothingParameterPhaseTransition}
\end{figure}
We observe that the box plots for the $\ell_1$-tail and $\ell_2$-tail variants of \Cref{alg:algo1} track closely the performance of the nuclear norm minimizer $\f{X}_{\mathrm{nuc}}$ in the setting of \Cref{fig:SmoothingParameterPhaseTransition}, recovering $\Xzero$ whenever the nuclear norm minimizer $\f{X}_{\mathrm{nuc}}$ does, with a slight edge of the $\ell_1$-tail variant over the $\ell_2$-tail as can be seen, e.g., for the $d=100$ and $C=3.0$ case. The $\ell_\infty$-tail variant, on the other hand, trails in a low-sampling regime, but transitions also to consistent recovery for around $C = 2.9$ for $d=30$, $C = 3.4$ for $d=60$ and $C = 3.7$ for $d=100$, respectively.

Although the $\ell_\infty$-tail rule is less robust at low sampling levels, it can substantially reduce the cost of an IRLS iteration. For iteration $k$, define the rank envelope and objective gap by
\[
    r_{\mathrm{env}}^{(k)} := \#\{ i : \sigma_i(\Xk{(k)}) > \varepsilon_k \},
    \qquad
    g_k := \left|\mathcal{J}_{\varepsilon_k}(\Xk{(k)})-\|\Xzero\|_*\right|,
\]
respectively. As discussed in \Cref{sec:smoothing:parameter}, both the memory and time requirements of the tangent-space implementation depend on $r_{\mathrm{env}}^{(k)}$. For a fixed iterate, elementary tail-norm inequalities make the candidate $\ell_\infty$-smoothing parameter often larger than its $\ell_1$ and $\ell_2$ counterparts; in the representative runs below, this produces substantially smaller rank envelopes. \Cref{tab:SmoothingParameterDim100Renv} quantifies the resulting tradeoff for $d=100$, rank-$2$ instances with $C\in\{3.0,4.0\}$ and the harmonic-mean weight operator.

For $\tau=10^{-5}$, let $k_\tau := \min\{k:g_k\leq\tau\}$. Because $\Xk{(0)}$ is the least-squares initialization, $k_\tau$ is also the number of completed outer IRLS solves needed to reach the threshold. The IRLS runtimes in \Cref{tab:SmoothingParameterDim100Renv} are the median cumulative solver times over five timing repetitions after one warmup. The \texttt{SCS} rows report the numerical solve time returned by \texttt{CVXPY}'s solver statistics for the same instance of \eqref{eq:nucnorm:min}, using a solver tolerance of $10^{-9}$ and at most $50{,}000$ iterations; CVXPY canonicalization and SCS setup time are excluded. All timings were obtained in CPU double precision on a MacBook Pro with an Apple M4 Pro processor (14 cores, 24\,GB RAM).

\input{results/table2/table2.tex}

The $\ell_\infty$-tail rule maintains $r_{\mathrm{env}}^{(k)}=\widetilde{r}=2$ throughout both representative runs, whereas the $\ell_1$- and $\ell_2$-tail rules use substantially larger envelopes. For $C=4.0$, the lower per-iteration cost of the $\ell_\infty$-tail rule more than offsets its larger iteration count: it reaches the prescribed gap after $125$ iterations in $2.29$ seconds, compared with $43$ iterations and $4.88$ seconds for the $\ell_1$-tail rule and $47$ iterations and $3.53$ seconds for the $\ell_2$-tail rule. The corresponding SCS numerical solve time is $29.32$ seconds; for $C=3.0$, it is $25.37$ seconds. At the lower sampling level $C=3.0$, however, only the $\ell_1$- and $\ell_2$-tail rules reach the IRLS objective-gap threshold; the $\ell_\infty$-tail run terminates at $\tilde{k}=191$ with $g_{\tilde{k}}=5.99\times 10^{1}$. Thus, the computational advantage of a smaller rank envelope must be balanced against robustness at lower sampling levels. In fact, the $\ell_1$-tail rule remains the conservative, theoretically supported default. Comparing the \texttt{MatrixIRLS} runtimes with the SCS solve times of $25.37$ and $29.32$ seconds, respectively, we observe that IRLS can achieve order-of-magnitude speedups over generic SDP solvers, especially in setups farther above the sample complexity phase transition point.

%% file: results/table2/table2.tex
\begin{table}[H]
    \centering
    \caption{Envelope ranks and computational effort for the representative $d=100$, rank-$2$, seed-$42$ smoothing-parameter experiment with the harmonic mean weight operator. For IRLS, runtime is the median cumulative solver time to reach the objective-gap threshold $\tau=10^{-5}$ over 5 timing repetitions after 1 warmup. For \texttt{SCS}, runtime is the solver-reported numerical solve time; CVXPY canonicalization and SCS setup are excluded. A dash denotes a quantity that is not applicable or an objective-gap threshold not reached before termination.}
    \label{tab:SmoothingParameterDim100Renv}
    \begin{tabular}{c l ccc ccc}
        \toprule
        $C$ & Update rule / solver
            & $r_{\mathrm{env}}^{(1)}$
            & $r_{\mathrm{env}}^{(10)}$
            & $r_{\mathrm{env}}^{(\tilde{k})}$
            & $\tilde{k}$
            & $k_\tau$
            & Runtime [s] \\
        \midrule
        $3.0$ & $\ell_1$-tail & $39$ & $21$ & $15$ & $359$ & $269$ & $18.39$ \\
        $3.0$ & $\ell_2$-tail & $29$ & $15$ & $12$ & $726$ & $585$ & $27.30$ \\
        $3.0$ & $\ell_\infty$-tail & $2$ & $2$ & $2$ & $191$ & -- & -- \\
        $3.0$ & \texttt{SCS} & -- & -- & -- & -- & -- & $25.37$ \\
        \midrule
        $4.0$ & $\ell_1$-tail & $38$ & $22$ & $19$ & $57$ & $43$ & $4.88$ \\
        $4.0$ & $\ell_2$-tail & $29$ & $17$ & $14$ & $62$ & $47$ & $3.53$ \\
        $4.0$ & $\ell_\infty$-tail & $2$ & $2$ & $2$ & $153$ & $125$ & $2.29$ \\
        $4.0$ & \texttt{SCS} & -- & -- & -- & -- & -- & $29.32$ \\
        \bottomrule
    \end{tabular}
\end{table}

%% file: conclusion.tex
\section{Conclusion}\label{sec:conclusion}  
In this work, we provide comprehensive answers to fundamental questions regarding both the design and analysis of iteratively reweighted least squares methods for the central spectral optimization problem of nuclear norm minimization, which had remained unanswered since the initial works of \citet{Fornasier11} and \citet{mohan_fazel} despite the recent interest in the methodology in the literature: we frame the algorithm design around the \emph{weight operator} notion and show that a particular choice of weight operator, the harmonic mean weight operator, constitutes an arguably \emph{optimal} choice, which we substantiate with the rigorous majorization and tightness results of \Cref{section:algorithm}.

Building on these results, we provide the first convergence rate analysis of IRLS methods for the problem that is furthermore fine-grained as it, (a) shows that IRLS methods using a variety of weight operators, including traditional ones as well as the harmonic mean one (see \Cref{def:admissible:weightoperators}), exhibit \emph{global linear convergence} from any initialization, and (b) shows that the convergence rate can be improved to a \emph{dimension-free linear rate} once the iterates reach a neighborhood of the low-rank solution, if the correct weight operator notion is used.

The numerical experiments of \Cref{sec:simulations} corroborate this separation: across rectangular and square instances, IRLS with harmonic mean reweighting exhibits a substantially faster empirically observed linear rate than the one-sided or arithmetic mean variants. We further presented adversarial initialization experiments indicating that the dimension dependence appearing in the global linear rate of \Cref{mainresult:lowrank} across all IRLS variants is not merely an artifact of the proof, but also, that it tends to be a transient phenomenon associated with the initial phase of the algorithmic trajectory. Finally, our results highlight the role of the smoothing parameter continuation strategy as a second important algorithmic design choice. The nuclear-norm-tail update used in our theory couples the smoothing parameter directly to the quantity needed in the global convergence argument and, empirically, tracks the nuclear-norm recovery threshold reliably. More computationally economical updates are not presently covered by the convergence theory, so that the generalization of linear rate results to IRLS with $\ell_2$- or $\ell_\infty$-tail updates remains open for future research.

Several other interesting questions remain open: the presented convergence analysis crucially depends on the fact that the linear measurement operator $\mathcal{A}$ satisfies a suitable null space property. However, it is well-known that for some important application scenarios such as low-rank matrix completion, this assumption does not hold. It would be interesting to develop a theory which also covers this scenario. 
Another observation is that although the local contraction factor of \Cref{thm:locallinearp1} is dimension-independent, the certified neighborhood in which the local behavior applies still depends on the dimension. The experiments, however, suggest that the favorable fast linear rate regime for \texttt{MatrixIRLS} can arise substantially earlier than this worst-case basin predicts. Understanding whether the methodology and/or proof technique can be improved to establish a similar result for a dimension-free basin size in a worst-case scenario remains an open problem. 
We also note that our quadratic model majorization and tightness analysis currently only applies to the smoothed nuclear norm objective that is conducive to \cref{eq:nucnorm:min}, but not to the optimization of other spectral objectives such as non-convex Schatten-$p$ quasi-norms with $0<p<1$ or smoothed log-determinants. It remains unknown which weight operator notion defines a tight or the tightest majorizing quadratic model for these spectral objectives, with the geometric mean constituting a popular, but so far theoretically unjustified choice in the latter context \citep{KummerleMayrinkVerdun-ICML2021,Kraemer-2025OneSided}. Finally, it would be interesting to explore how the insights of this work can improve either the design or analysis of related methodologies in machine learning, such as recursive feature machines (RFM) \citep{Radhakrishnan2024mechanism}, iteratively reweighted kernel machines \citep{Zhu2025iteratively}, or \texttt{Muon}-like spectral optimizers in deep learning. The fact that the specialization of RFM to low-rank optimization \citep{Radhakrishnan2025linear} leads to suboptimal IRLS variants indicates that there can be room for improvement via the use of tight majorizing quadratic models.

%% file: acknowledgements.tex
\section*{Acknowledgments}
The authors thank Felix Krahmer for insightful discussions around the paper subject. C.K. and T.M. acknowledge the support of this research by the Mathematisches Forschungsinstitut Oberwolfach through the \emph{Oberwolfach Research Fellows} program. C.K. was supported in part by the grant NSF-2549926.

%% file: contribution_proofs.tex
\section{Proofs of Main Results} \label{sec:contribution_proofs}

In this section, we provide all proofs establishing the main theoretical results of the paper, starting with the global majorization statement \Cref{thm:majorization} of \Cref{sec:majorization:properties} in \Cref{sec:appendix:harmonic:majorization_proof}. We continue with the proof of the optimality result \Cref{thm:power_means_majorize} in \Cref{sec:proofs:power_means:optimality}, before detailing the global linear convergence results \Cref{mainresult:lowrank,mainresult:approximatelowrank} in \Cref{sec:proofglobalconvergence:lowrank} and \Cref{sec:proofglobalconvergence:approximatelowrank}, respectively. Finally, establishing the fast local linear convergence result of \Cref{thm:locallinearp1} is done in \Cref{sec:prooffastlocallinearp1}. The latter section also contains the proof of the impossibility result \Cref{thm:counterexample:leftsided:weight:operator} for dimension-independent fast local rate for one-sided weight operators.

%% file: majorization_property.tex
\subsection{Proof of \Cref{thm:majorization} (Harmonic-Mean Quadratic Model Majorization)} \label{sec:appendix:harmonic:majorization_proof}
The major challenge in the proof of \Cref{thm:majorization} in the case of harmonic-mean weights is to derive a suitable lower bound for the weighted inner product $\innerproduct{ \W{\XX}{\varepsilon} (\ZZ), \ZZ }$.
The argument below isolates this lower bound and then proves the majorization property directly.
Afterwards, in \Cref{sec:appendix:proof_majorization_one_sided_weights}, we explain how the same proof scheme specializes to the simpler one-sided weights, where the required lower bound follows immediately from the Cauchy--Schwarz inequality.

However, in the case of harmonic-mean weights proving a lower bound is significantly more challenging due to the more complicated structure of the weight operator $\W{\XX}{\varepsilon} (\cdot)$. The following lemma provides such a bound.
\begin{lemma}[Lower Bound for Weighted Inner Product of Harmonic Mean] \label{lemma:lower_bound_weighted_inner_product}\mbox{}\\
Let $\varepsilon > 0$
and let $\XX \in \Rdd$ 
with SVD given by $ \XX = \UU_{\XX} \diag (\ssigma) \VV_{\XX}^\top$, where
$\diag (\ssigma) \in \Rdd$ is the rectangular diagonal matrix with the extended vector of nonincreasing singular values $\ssigma \in \R^{\maxD}$ of $\XX$ on the diagonal.
Consider the weight operator
$ W_{\XX, \varepsilon} : \Rdd \to \Rdd $
of \Cref{def:optimalweightoperator} with harmonic-mean core matrix \cref{eq:harmonic:mean}. Define the matrices
$\LL_{\UU} = \UU_{\XX} \diag (\lambda_1, \ldots, \lambda_{d_1}) \UU_{\XX}^\top $
and
$ \LL_{\VV} = \VV_{\XX} \diag (\lambda_1, \ldots, \lambda_{d_2}) \VV_{\XX}^\top $,
where $ \lambda_i := \max \left( \sigma_i, \varepsilon \right)$
for $1 \le i \le \max \left( d_1, d_2 \right)$ with the convention of $\sigma_i = 0$ for $i > d$.

Let now $\ZZ \in \Rdd$
be arbitrary with SVD given by $ \ZZ = \sum_{k=1}^d \sigma_k (\ZZ) \uu_k \vv_k^\top $, where $\sigma_k (\ZZ)$
denotes the $k$-th singular value of $\ZZ$. Then,
it holds that 
\begin{equation*}
    \innerproduct{ \W{\XX}{\varepsilon} (\ZZ), \ZZ }
    \ge
    \sum_{k=1}^d
    \frac{ \sigma_k^2 (\ZZ)}{ 
        \innerproduct{ \LL_{\UU}, \uu_k \uu_k^\top }/2 
        +
        \innerproduct{ \LL_{\VV}, \vv_k \vv_k^\top  }/2
    }.
\end{equation*}
\end{lemma}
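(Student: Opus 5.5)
The plan is to use the fact that the harmonic-mean weight operator is the inverse of a simple Sylvester-type operator, and then obtain the lower bound from the variational (Fenchel) characterization of a positive definite quadratic form through its inverse. The only test matrices needed are ones built from the singular vectors of $\ZZ$.

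\emph{Step 1 (Sylvester characterization).} Write $\ZZ' = \UU_{\XX}^\top \ZZ \VV_{\XX}$ and $\f{Y} := \W{\XX}{\varepsilon}(\ZZ)$, with $\f{Y}' = \UU_{\XX}^\top \f{Y} \VV_{\XX} = \f{H}_{\ssigma,\varepsilon}\circ \ZZ'$. Since $(\f{H}_{\ssigma,\varepsilon})_{ij} = 2/(\lambda_i+\lambda_j)$, we have $(\lambda_i+\lambda_j)\f{Y}'_{ij} = 2\ZZ'_{ij}$ entrywise. Hence $\diag(\lambda_1,\ldots,\lambda_{d_1})\f{Y}' + \f{Y}'\diag(\lambda_1,\ldots,\lambda_{d_2}) = 2\ZZ'$. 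Conjugating back gives
\[
\LL_{\UU}\f{Y} + \f{Y}\LL_{\VV} = 2\ZZ .
\]
Thus $\W{\XX}{\varepsilon}$ is invertible, with inverse $\mathcal{S}(\f{Y}) := \tfrac12(\LL_{\UU}\f{Y} + \f{Y}\LL_{\VV})$. This operator is self-adjoint and positive definite on $\Rdd$, since its eigenvalues in the $\UU_{\XX},\VV_{\XX}$ basis are $(\lambda_i+\lambda_j)/2 \ge \varepsilon > 0$.

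\emph{Step 2 (variational lower bound).} For a self-adjoint positive definite $W$ with inverse $\mathcal{S}$, completing the square gives
\[
\innerproduct{\ZZ, W(\ZZ)} = \sup_{\f{Y}} \bigl( 2\innerproduct{\f{Y},\ZZ} - \innerproduct{\f{Y},\mathcal{S}(\f{Y})} \bigr),
\]
and in particular the right-hand side evaluated at any fixed $\f{Y}$ is a lower bound. I will take $\f{Y} = \sum_{k=1}^d c_k \uu_k\vv_k^\top$ with real coefficients $c_k$ to be chosen. Then $\innerproduct{\f{Y},\ZZ} = \sum_k c_k\sigma_k(\ZZ)$.

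\emph{Step 3 (the key computation).} We have
\[
\innerproduct{\f{Y},\mathcal{S}(\f{Y})} = \tfrac12\trace(\f{Y}^\top\LL_{\UU}\f{Y}) + \tfrac12\trace(\f{Y}\LL_{\VV}\f{Y}^\top).
\]
Expanding the first trace gives $\sum_{k,l} c_kc_l (\uu_k^\top\LL_{\UU}\uu_l)\trace(\vv_k\vv_l^\top)$. By orthonormality of the $\vv_k$, the factor $\trace(\vv_k\vv_l^\top) = \vv_l^\top\vv_k$ kills all cross terms, so the first trace equals $\sum_k c_k^2\innerproduct{\LL_{\UU},\uu_k\uu_k^\top}$. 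Symmetrically, orthonormality of the $\uu_k$ reduces the second trace to $\sum_k c_k^2\innerproduct{\LL_{\VV},\vv_k\vv_k^\top}$. With $a_k := \innerproduct{\LL_{\UU},\uu_k\uu_k^\top}/2 + \innerproduct{\LL_{\VV},\vv_k\vv_k^\top}/2 \ge \varepsilon > 0$, this yields
\[
\innerproduct{\W{\XX}{\varepsilon}(\ZZ),\ZZ} \ge \sum_{k=1}^d \bigl(2c_k\sigma_k(\ZZ) - c_k^2 a_k\bigr).
\]

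\emph{Step 4 (optimize and check the obstacle).} The bound now separates over $k$, so I maximize each term with $c_k = \sigma_k(\ZZ)/a_k$, which gives exactly $\sum_k \sigma_k^2(\ZZ)/a_k$, as claimed. The conceptual obstacle is the noncommutativity of $\LL_{\UU}$ and $\LL_{\VV}$ with the singular vectors of $\ZZ$, which prevents any entrywise argument. This obstacle is absorbed in Step 3, where the diagonal test matrix decouples the cross terms exactly. The point I will check carefully is the rectangular case: the $\uu_k$ and $\vv_k$ for $k\le d$ are orthonormal in $\R^{d_1}$ and $\R^{d_2}$ respectively, and orthonormality within each family is all the cross-term cancellation uses, so the argument goes through unchanged.
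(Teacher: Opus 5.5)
Your proof is correct, and it takes a genuinely different and much shorter route than the paper. Both arguments start from the same Sylvester characterization $\LL_{\UU}\W{\XX}{\varepsilon}(\ZZ)+\W{\XX}{\varepsilon}(\ZZ)\LL_{\VV}=2\ZZ$. From there the paper proceeds in three stages:
\begin{itemize}
\item It proves an abstract Schur-complement lemma. The lemma shows that replacing $\LL_{\UU},\LL_{\VV}$ by their pinchings with respect to $\mathrm{span}\{\uu_1,\dots,\uu_i\}\oplus\mathrm{span}\{\uu_{i+1},\dots\}$ (and likewise for the $\vv_k$) can only decrease $\innerproduct{\ZZ,\mathcal{S}^{-1}(\ZZ)}$.
\item It iterates this step $D-1$ times, with a separate check for the indices $d\le i<D$ in the rectangular case, until both matrices are diagonal in the singular bases of $\ZZ$.
\item It solves the resulting diagonal Sylvester equation explicitly.
\end{itemize}

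You instead use the variational formula for $\innerproduct{\ZZ,\mathcal{S}^{-1}(\ZZ)}$, with test matrices in $\mathrm{span}\{\uu_k\vv_k^\top\}_{k\le d}$. On this span the compression of $\mathcal{S}$ is exactly diagonal with entries $a_k$. In effect you apply the compression inequality $\innerproduct{\ZZ,\mathcal{S}^{-1}(\ZZ)}\ge\innerproduct{\ZZ,(P\mathcal{S}P)^{-1}(\ZZ)}$ once, for $\ZZ\in\operatorname{ran}P$, directly on the finest relevant subspace, with the inverse taken on $\operatorname{ran}P$. Each of the paper's pinching steps proves the same inequality for a coarser block splitting.

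What the paper's route buys is an explicit monotone chain of intermediate Sylvester solutions, which the authors present as structural insight. For the lemma itself, your argument has several advantages:
\begin{itemize}
\item It is much shorter.
\item It handles the rectangular case with no extra work.
\item It makes the equality case transparent: equality holds precisely when $\W{\XX}{\varepsilon}(\ZZ)$ lies in $\mathrm{span}\{\uu_k\vv_k^\top\}_{k\le d}$.
\item It applies verbatim to any self-adjoint positive definite $\mathcal{S}$ whose compression to that span is diagonal. For example, $\mathcal{S}(\f{Y})=\LL_{\UU}\f{Y}$ recovers the one-sided Cauchy--Schwarz bound used in the paper's third proof of one-sided majorization.
\end{itemize}
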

As already mentioned,
the proof of Lemma \ref{lemma:lower_bound_weighted_inner_product}
is significantly more involved than the corresponding lower bound for one-sided weights.
We provide the proof of \Cref{lemma:lower_bound_weighted_inner_product} in \Cref{sec:appendix:lb_weighted_ip} below.
The major challenge in the proof of this lemma is to deal with the fact that the inner product $\innerproduct{ \W{\XX}{\varepsilon} (\ZZ), \ZZ }$
no longer has a simple structure as in the case of one-sided weights, where it could be expressed as 
$ \sum_{k=1}^d  \sigma_k^2 ( \ZZ ) \innerproduct{ \LL_{\UU}^{-1}, \uu_k \uu_k^\top } $.

With Lemma \ref{lemma:lower_bound_weighted_inner_product} at hand,
we can now prove \Cref{thm:majorization} in the case of harmonic-mean weight operators.
\begin{proof}[Proof of \Cref{thm:majorization}]
We first compute the terms in the simplified representation
\eqref{eq:QZX:equality} of $Q_\varepsilon(\cdot\mid \XX)$.
Since harmonic-mean weights satisfy
$(\f{H}_{\ssigma,\varepsilon})_{ii}=1/\max(\sigma_i(\XX),\varepsilon)$
on the diagonal, we have
\begin{align*}
    \innerproduct{
        \W{\XX}{\varepsilon}(\XX),\XX
    }
    =&
    \innerproduct{
        \f{H}_{\ssigma,\varepsilon}
        \circ
        \diag(\ssigma),
        \diag(\ssigma)
    }\\
    =&
    \sum_{k \in [d]:\ \sigma_k (\XX) \ge \varepsilon} \sigma_k (\XX)
    +
    \sum_{k \in [d]:\ \sigma_k (\XX) < \varepsilon} 
    \frac{\sigma_k^2 (\XX)}{\varepsilon}.
\end{align*}
Moreover, by definition of $\mathcal{J}_\varepsilon$,
\begin{equation*}
    \mathcal{J}_{\varepsilon} (\XX)
    =
    \sum_{k \in [d]:\ \sigma_k (\XX) \ge \varepsilon} \sigma_k (\XX)
    +
    \sum_{k \in [d]:\ \sigma_k (\XX) < \varepsilon}
    \left(
        \frac{\sigma_k^2 (\XX)}{2 \varepsilon}
        +
        \frac{\varepsilon}{2}
    \right).
\end{equation*}
Inserting these two identities into \eqref{eq:QZX:equality}, we obtain that
\begin{align}
    Q_{\varepsilon}(\ZZ \mid \XX)
    =&
    \frac{1}{2}
    \sum_{k \in [d]:\ \sigma_k (\XX) \ge \varepsilon} \sigma_k (\XX)
    +
    \frac{1}{2}
    \sum_{k \in [d]:\ \sigma_k (\XX) < \varepsilon} \varepsilon
    +
    \frac{1}{2}
    \innerproduct{ \W{\XX}{\varepsilon} (\ZZ), \ZZ }.
    \label{eq:Q_epsilon_intermediate}
\end{align}
Now, note that by definition of $\LL_{\UU}$, 
it holds that
\begin{align*}
    \sum_{k=1}^{d}
    \innerproduct{ \LL_{\UU}, \uu_k \uu_k^\top }
    \overleq{(a)}
    &\underset{\f{z}_1,\ldots, \f{z}_d \text{ orthonormal}}{\max}
    \sum_{k=1}^{d}
    \innerproduct{ \LL_{\UU},  \f{z}_k \f{z}_k^\top }
    \overeq{(b)}
    \sum_{k=1}^{d}
    \lambda_k\\
    \overeq{(c)}
    &\sum_{k \in [d]:\ \sigma_k (\XX) \ge \varepsilon} \sigma_k (\XX)
    +
    \sum_{k \in [d]:\ \sigma_k (\XX) < \varepsilon} \varepsilon,
\end{align*}
where in inequality (a), we have used that
$(\uu_k)_{k=1}^{d}$ is an orthonormal set in $\R^{d_1}$.
In equality (b), we have used the Wielandt minimax principle,
see \citep[Theorem III.3.5]{Bhatia1997MatrixAnalysis},
and that $\lambda_1 \ge \lambda_2 \ge \ldots \ge \lambda_{d_1} \ge 0$
are the eigenvalues of $\LL_{\UU}= \UU_{\XX} \diag ( \lambda_1, \ldots, \lambda_{d_1} ) \UU_{\XX}^\top $.
In equation (c), we used that
$\lambda_k = \max ( \sigma_k (\XX), \varepsilon )$
for $k=1,2,\ldots, d_1$.
Analogously, we obtain that
\begin{equation*}
    \sum_{k=1}^{d}
    \innerproduct{ \LL_{\VV}, \vv_k \vv_k^\top }
    \le
    \sum_{k \in [d]: \ \sigma_k (\XX) \ge \varepsilon} \sigma_k (\XX)
    +
    \sum_{k \in [d]: \ \sigma_k (\XX) < \varepsilon} \varepsilon.
\end{equation*}
Inserting these two inequalities into equation \eqref{eq:Q_epsilon_intermediate} for
$Q_{\varepsilon}(\ZZ \mid \XX)$,
we obtain that
\begin{equation*}
    Q_{\varepsilon} \left( \ZZ  \vert \XX \right)
    \ge  
    \frac{1}{4}
    \sum_{k=1}^{d}
    \innerproduct{ \LL_{\UU}, \uu_k \uu_k^\top }
    +
    \frac{1}{4}
    \sum_{k=1}^{d}
    \innerproduct{ \LL_{\VV}, \vv_k \vv_k^\top }
    +
    \frac{1}{2}
    \innerproduct{ \W{\XX}{\varepsilon} (\ZZ), \ZZ }.
\end{equation*}
By applying Lemma \ref{lemma:lower_bound_weighted_inner_product},
we obtain that
\begin{align}
    Q_{\varepsilon}(\ZZ \mid \XX)
    \ge &
    \frac{1}{2}
    \sum_{k=1}^d 
    \Big[
        \bracing{=:\alpha_k}{
        \frac{\innerproduct{\LL_{\UU}, \uu_k \uu_k^\top } }{2}
        +
        \frac{\innerproduct{\LL_{\VV}, \vv_k \vv_k^\top } }{2}
        }
        +
        \frac{ \sigma^2_k(\ZZ) }{
            \innerproduct{\LL_{\UU}, \uu_k \uu_k^\top }/2
            +
            \innerproduct{\LL_{\VV}, \vv_k \vv_k^\top }/2
        }
    \Big] \nonumber \\
    =& 
    \frac{1}{2}
    \sum_{k=1}^d
    \frac{1}{\alpha_k}
    \left(
        \alpha_k^2
        +
        \sigma^2_k(\ZZ)
    \right).
    \label{eq:Q_epsilon_intermediate4}
\end{align}
Now note that
$\LL_{\UU}$ and $\LL_{\VV}$ are positive semidefinite matrices
with eigenvalues lower bounded by $\varepsilon$.
Thus, we have that $\innerproduct{\LL_{\UU}, \uu_k \uu_k^\top } \ge \varepsilon$
and $\innerproduct{\LL_{\VV}, \vv_k \vv_k^\top } \ge \varepsilon$
for all $k=1,2,\ldots, d$.
In particular, we have that $\alpha_k \ge \varepsilon$ for all $k=1,2,\ldots, d$.
We now estimate the scalar summands in \eqref{eq:Q_epsilon_intermediate4}.
If $\sigma_k(\ZZ) \ge \varepsilon$, then
$\alpha_k^2+\sigma_k^2(\ZZ)\ge 2\alpha_k\sigma_k(\ZZ)$, and hence
\begin{equation*}
    \frac{1}{2\alpha_k}
    \left(
        \alpha_k^2+\sigma_k^2(\ZZ)
    \right)
    \ge
    \sigma_k(\ZZ).
\end{equation*}
If $\sigma_k(\ZZ)<\varepsilon$, then $\alpha_k\ge \varepsilon>\sigma_k(\ZZ)$.
For fixed $\sigma_k(\ZZ)$, the function
$x\mapsto (x^2+\sigma_k^2(\ZZ))/(2x)$ is monotonically increasing for
$x\ge \sigma_k(\ZZ)$.
Therefore,
\begin{equation*}
    \frac{1}{2\alpha_k}
    \left(
        \alpha_k^2+\sigma_k^2(\ZZ)
    \right)
    \ge
    \frac{1}{2\varepsilon}
    \left(
        \varepsilon^2+\sigma_k^2(\ZZ)
    \right)
    =
    \frac{\sigma_k^2(\ZZ)}{2\varepsilon}
    +
    \frac{\varepsilon}{2}.
\end{equation*}
Summing these two cases gives
\begin{equation*}
   \frac{1}{2}
   \sum_{k=1}^{d}
    \frac{1}{\alpha_k}
    \left(
        \alpha_k^2
        +
        \sigma^2_k(\ZZ)
    \right)
    \ge
    \sum_{k \in [d]: \ \sigma_k (\ZZ) \ge \varepsilon}
    \sigma_k (\ZZ)
    +
    \sum_{k \in [d]: \ \sigma_k (\ZZ) < \varepsilon}
    \left(
        \frac{\sigma^2_k (\ZZ)}{2\varepsilon}
        +
        \frac{\varepsilon}{2}
    \right). 
\end{equation*}
Inserting this case distinction 
into Equation \eqref{eq:Q_epsilon_intermediate4},
we obtain that
\begin{equation*}
    Q_{\varepsilon}(\ZZ \mid \XX)
    \ge
    \sum_{k \in [d]: \ \sigma_k (\ZZ) \ge \varepsilon}
    \sigma_k (\ZZ)
    +
    \sum_{k \in [d]: \ \sigma_k (\ZZ) < \varepsilon}
    \left(
        \frac{\sigma^2_k (\ZZ)}{2 \varepsilon}
        +
        \frac{\varepsilon}{2}
    \right)
    =
    \sum_{k=1}^d j_{\varepsilon} (\sigma_k (\ZZ))
    =   
    \mathcal{J}_{\varepsilon} (\ZZ).
\end{equation*}
This completes the proof.
\end{proof}

\subsubsection{Proof of Lemma \ref{lemma:lower_bound_weighted_inner_product} via Iterative Pinching}\label{sec:appendix:lb_weighted_ip}
In this section, we provide a complete, constructive proof of \Cref{lemma:lower_bound_weighted_inner_product}. We first establish a Sylvester equation characterization of the weight operator (\Cref{lemma:sylvester_equation_weight_operator}) and an abstract majorization lemma (\Cref{lemma:majorization_property}), that we subsequently specialize to the case of the relevant Sylvester equation (\Cref{lemma:majorization_property_sylvester}), before introducing the iterative pinching argument via \Cref{lemma:iterative_pinching} and \Cref{lemma:sylvester_solution_explicit}. The proof of \Cref{lemma:lower_bound_weighted_inner_product} is completed at the end of this subsection.

\paragraph{Sylvester Equation and Abstract Majorization Lemma.} \label{sec:appendix:lb_weighted_ip:sylvester}
The first observation we make is that the harmonic-mean weight operator $\W{\XX}{\varepsilon} (\cdot)$
satisfies the Sylvester equation \cref{eq:sylvester_equation_weight_operator} below.
\begin{lemma}[Sylvester Equation Characterization] \label{lemma:sylvester_equation_weight_operator}
    Let $\f{X} \in \Rdd$ and $\varepsilon > 0$.
    Let the SVD of $\XX$ be given by $\XX = \UU_{\XX} \diag(\ssigma) \VV_{\XX}^{\top}$,
    where $\ssigma = (\sigma_1, \ldots, \sigma_d) \in \R^d$ are the singular values of $\f{X}$ ordered in nonincreasing order.
    For $i \in [\max \left( d_1 , d_2 \right)]$,
    set
    $\lambda_i := \max \left( \sigma_i, \varepsilon \right)$ with the convention of $\sigma_i = 0$ for $i > d$.
    Define the vectors
    $\llambdaa_1 := (\lambda_1, \ldots, \lambda_{d_1}) \in \R^{d_1}$
    and
    $\llambdaa_2 := (\lambda_1, \ldots, \lambda_{d_2}) \in \R^{d_2}$.
    Then, 
    for any 
    $\f{Z} \in \Rdd$,
    its image via the harmonic-mean weight operator  $\W{\XX}{\varepsilon} (\ZZ)$ with core matrix \cref{eq:harmonic:mean} is the unique solution of the Sylvester equation
    \begin{equation} \label{eq:sylvester_equation_weight_operator}
        \LL_{\UU} \W{\XX}{\varepsilon} (\ZZ) 
        + \W{\XX}{\varepsilon} (\ZZ) \LL_{\VV} 
        = 2 \ZZ,
    \end{equation}
    where $\LL_{\UU} = \f{U}_{\XX} \diag(\llambdaa_1) \f{U}_{\XX}^{\top}$
    and
    $\LL_{\VV} = \f{V}_{\XX} \diag(\llambdaa_{2}) \f{V}_{\XX}^{\top}$.
\end{lemma}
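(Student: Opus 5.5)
We will verify the Sylvester equation by working in the singular-vector coordinates of $\XX$, where both $\LL_{\UU}$ and $\LL_{\VV}$ become diagonal, and then check the resulting entrywise identity.

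Set $\widetilde{\ZZ} := \UU_{\XX}^\top \ZZ \VV_{\XX} \in \Rdd$ and $\widetilde{\f{W}} := \UU_{\XX}^\top \W{\XX}{\varepsilon}(\ZZ) \VV_{\XX}$. By \cref{eq:W:operator:action}, $\widetilde{\f{W}} = \f{H}_{\ssigma,\varepsilon} \circ \widetilde{\ZZ}$. Since $\UU_{\XX}$ and $\VV_{\XX}$ are orthogonal, multiplying \cref{eq:sylvester_equation_weight_operator} from the left by $\UU_{\XX}^\top$ and from the right by $\VV_{\XX}$ shows that a matrix $\f{Y}$ solves \cref{eq:sylvester_equation_weight_operator} if and only if $\widetilde{\f{Y}} := \UU_{\XX}^\top \f{Y} \VV_{\XX}$ solves the transformed equation
\[
\diag(\llambdaa_1)\,\widetilde{\f{Y}} + \widetilde{\f{Y}}\,\diag(\llambdaa_2) = 2\widetilde{\ZZ}.
\]
Here we use $\UU_{\XX}^\top \LL_{\UU} \UU_{\XX} = \diag(\llambdaa_1)$ and $\VV_{\XX}^\top \LL_{\VV} \VV_{\XX} = \diag(\llambdaa_2)$.

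Because both coefficient matrices are diagonal, the transformed equation decouples entrywise:
\[
(\lambda_i + \lambda_j)\,\widetilde{Y}_{ij} = 2\widetilde{Z}_{ij}, \qquad i\in[d_1],\ j\in[d_2].
\]
Every $\lambda_i \ge \varepsilon > 0$, so $\lambda_i+\lambda_j>0$. The entrywise equation therefore has exactly one solution, $\widetilde{Y}_{ij} = 2\widetilde{Z}_{ij}/(\lambda_i+\lambda_j) = (\f{H}_{\ssigma,\varepsilon})_{ij}\widetilde{Z}_{ij}$, which is precisely the harmonic-mean core matrix \cref{eq:harmonic:mean}. Transforming back gives $\f{Y} = \UU_{\XX}(\f{H}_{\ssigma,\varepsilon}\circ\widetilde{\ZZ})\VV_{\XX}^\top = \W{\XX}{\varepsilon}(\ZZ)$, which establishes both existence and uniqueness.

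Uniqueness can also be seen from the classical Sylvester criterion: $\LL_{\UU}$ and $-\LL_{\VV}$ have disjoint spectra, since the former is contained in $[\varepsilon,\infty)$ and the latter in $(-\infty,-\varepsilon]$. The entrywise argument above already covers this, so the criterion is not needed.

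No step here is a real obstacle. The only points requiring care are bookkeeping ones. First, the padding convention $\lambda_i = \varepsilon$ for $i>d$ must be used consistently in the rectangular case, so that $\llambdaa_1$ and $\llambdaa_2$ index the full orthogonal bases $\UU_{\XX}\in\R^{d_1\times d_1}$ and $\VV_{\XX}\in\R^{d_2\times d_2}$. Second, the entries $\lambda_i$ appearing in $\LL_{\UU}$ must coincide with the $\max(\sigma_i,\varepsilon)$ used in \cref{eq:harmonic:mean}. Both hold by the definitions in the statement.
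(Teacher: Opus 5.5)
Your proposal is correct and follows essentially the same route as the paper. Both conjugate by $\mathbf{U}_{\mathbf{X}}$ and $\mathbf{V}_{\mathbf{X}}$ so that the coefficient matrices become $\mathrm{diag}(\boldsymbol{\lambda}_1)$ and $\mathrm{diag}(\boldsymbol{\lambda}_2)$, and both check the identity entrywise via $\tfrac{2\lambda_i}{\lambda_i+\lambda_j}+\tfrac{2\lambda_j}{\lambda_i+\lambda_j}=2$. The only difference is that you get uniqueness directly from the entrywise decoupling with $\lambda_i+\lambda_j\ge 2\varepsilon>0$, whereas the paper cites the general Sylvester uniqueness theorem for positive definite coefficients; this makes your version slightly more self-contained.
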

\begin{remark}
    Note that in the one-sided case of the core matrix \cref{eq:leftsided:mean}, the weight operator $\W{\XX}{\varepsilon} (\cdot)$ satisfies the simpler Sylvester equation
    $ \LL_{\UU} \W{\XX}{\varepsilon} (\ZZ) = \ZZ$.
     It is thus plausible that in the case of harmonic-mean weight operators $\W{\XX}{\varepsilon}(\cdot)$, the relevant Sylvester equation involves both $\LL_{\UU}$ and $\LL_{\VV}$.
\end{remark}
We expect that Lemma \ref{lemma:sylvester_equation_weight_operator} 
is known in the literature \citep[see, e.g.,][]{Bhatia1997MatrixAnalysis}.
However, for the sake of completeness,
we provide a proof here.
\begin{proof}[Proof of \Cref{lemma:sylvester_equation_weight_operator}]
    First, we note that both $\LL_{\UU}$ and $\LL_{\VV}$ are positive definite.
    Then it follows from \citet[Theorem VII.2.3]{Bhatia1997MatrixAnalysis}
    that the Sylvester equation $\LL_{\UU} \WW + \WW \LL_{\VV} = 2 \ZZ$ has a unique solution $\WW  \in \Rdd$.

    In order to show that $\WW = \W{\XX}{\varepsilon} (\ZZ)$
    solves the Sylvester equation,
    we compute that
    \begin{align*}
        \UU_{\XX}^\top \LL_{\UU} \W{\XX}{\varepsilon} (\ZZ)  \VV_{\XX}
        \overeq{(a)}&
       \UU^{\top}_{\XX} \LL_{\UU}
       \left(
        \UU_{\XX}
        \left[
            \f{H}_{\ssigma,\varepsilon}
            \circ
            (\UU_{\XX}^{\top} \ZZ \VV_{\XX})
        \right]
        \right)
        \VV_{\XX}^{\top} \VV_{\XX}\\
        \overeq{(b)}&
         \UU_{\XX}^\top
        \left(
        \UU_{\XX}
        \diag(\llambdaa_1)
        \UU_{\XX}^{\top}
        \right)
        \left(
        \UU_{\XX}
        \left[
            \f{H}_{\ssigma,\varepsilon}
            \circ
            (\UU_{\XX}^{\top} \ZZ \VV_{\XX})
        \right]
        \VV_{\XX}^{\top} \right)
        \VV_{\XX}\\
        =&
        \diag(\llambdaa_1)
        \left[
            \f{H}_{\ssigma,\varepsilon}
            \circ
            (\UU_{\XX}^{\top} \ZZ \VV_{\XX})
        \right].
    \end{align*}
    In step (a), we have used the definition of the weight operator
    $\W{\XX}{\varepsilon} (\ZZ)$,
    see equation \eqref{eq:W:operator:action},
    and in step (b), we have used the definition of $\LL_{\UU}$.
    Note that since $\lambda_i = \max(\sigma_i, \varepsilon)$, the $(i,j)$-th entry of the last expression is 
    then given by
    \begin{align*}
        (\UU_{\XX}^\top \LL_{\UU} \W{\XX}{\varepsilon} (\ZZ)  \VV_{\XX} )_{i,j}
        &=
        \frac{ 2 \max (\sigma_i, \varepsilon)  }
        {\max(\sigma_i, \varepsilon) + \max(\sigma_j, \varepsilon)}
        \cdot
        (\UU_{\XX}^{\top} \ZZ \VV_{\XX})_{i,j}.
    \end{align*}
    Analogously, we compute that
    \begin{align*}
        (\UU_{\XX}^\top \W{\XX}{\varepsilon} (\ZZ) \LL_{\VV}  \VV_{\XX})_{i,j}
        =&
        \frac{ 2\max (\sigma_j, \varepsilon)  }
        {\max(\sigma_i, \varepsilon) + \max(\sigma_j, \varepsilon)}
        \cdot
        (\UU_{\XX}^{\top} \ZZ \VV_{\XX})_{i,j}.
    \end{align*}
    Summing up both terms, we obtain that
    \begin{equation*}
        \UU_{\XX}^\top
        \left(
            \LL_{\UU} \W{\XX}{\varepsilon} (\ZZ)
            +
            \W{\XX}{\varepsilon} (\ZZ) \LL_{\VV}
        \right)
        \VV_{\XX}
        =
        2 \UU_{\XX}^\top \ZZ \VV_{\XX}.
    \end{equation*}
    Multiplying from the left with $\UU_{\XX}$ and from the right with $\VV_{\XX}^{\top}$,
    we obtain equation \eqref{eq:sylvester_equation_weight_operator}.
    This completes the proof.
\end{proof}
We now present a constructive proof of \Cref{lemma:lower_bound_weighted_inner_product}. With the Sylvester characterization of the weight operator of \Cref{lemma:sylvester_equation_weight_operator} at hand, it proceeds by an iterative pinching argument. It provides structural insights into explicit intermediate solutions and the Loewner-monotonicity structure underlying the power-mean family. We outline the overall proof strategy for \Cref{lemma:lower_bound_weighted_inner_product}.
\begin{remark}[Proof strategy]
The Sylvester equation characterization
\begin{equation*}
    \LL_{\UU} \W{\XX}{\varepsilon} (\ZZ) 
    + \W{\XX}{\varepsilon} (\ZZ) \LL_{\VV} 
    = 2 \ZZ
\end{equation*}
of the weight operator
serves as a starting point
for the proof of Lemma \ref{lemma:lower_bound_weighted_inner_product}.
The main challenge is to deal with the fact that
the matrices $\LL_{\UU}, \LL_{\VV}, \ZZ$
do not necessarily commute with each other.
To this end,
we employ a majorization technique.
Namely,
we replace $\LL_{\UU}$ and $\LL_{\VV}$
in the Sylvester equation
by simpler matrices
$$\mathcal{C}_{\UU} (\LL_{\UU})
=  \PP_{\UU} \LL_{\UU} \PP_{\UU}
+ \PP_{\UU,\bot} \LL_{\UU} \PP_{\UU,\bot}$$
and
$$\mathcal{C}_{\VV} (\LL_{\VV})
=  \PP_{\VV} \LL_{\VV} \PP_{\VV}  
+ \PP_{\VV,\bot} \LL_{\VV} \PP_{\VV,\bot},$$
where $\PP_{\UU}$ and $\PP_{\VV}$
are orthogonal projections onto the subspaces
spanned by the left singular and right singular vectors of $\ZZ$,
see below for details.
We will show that the resulting Sylvester equation
\begin{equation*}
    \mathcal{C}_{\UU} (\LL_{\UU}) \widetilde{\WW}
    +
    \widetilde{\WW} \mathcal{C}_{\VV} (\LL_{\VV})
    =
    2 \ZZ
\end{equation*}
admits a solution $\widetilde{\WW} \in \Rdd$
that satisfies
the majorization inequality
\begin{equation*}
    \innerproduct{ \ZZ, \WW }
    \ge 
    \innerproduct{ \ZZ, \widetilde{\WW} },
\end{equation*}
where $\WW = \W{\XX}{\varepsilon} (\ZZ)$.
By repeating this majorization step,
we eventually arrive at a scenario
where both $\LL_{\UU}$ and $\LL_{\VV}$
are replaced by matrices that are diagonal
with respect to the singular vectors of $\ZZ$.
This allows us then to
explicitly compute the solution of the Sylvester equation
and derive the desired lower bound.
\end{remark}

To this end,
we start with the following general majorization lemma.
\begin{lemma}[Abstract Majorization Lemma] \label{lemma:majorization_property}
    Let $\mathcal{B}, \tilde{\mathcal{B}}: \Rdd \to \Rdd$ be linear operators.
    Let $\PP_{\UU} \in \R^{d_1 \times d_1}, \PP_{\VV} \in \R^{d_2 \times d_2}$
    be orthogonal projection matrices.
    Denote by $\PP_{\UU,\bot} := \f{I} - \PP_{\UU}$ 
    and 
    $\PP_{\VV,\bot} := \f{I} - \PP_{\VV}$
    the projections onto the orthogonal complement.
    Define the operation
    $\mathcal{C} (\AAA) := \PP_{\UU} \AAA \PP_{\VV} + \PP_{\UU,\bot} \AAA \PP_{\VV,\bot}$
    for any matrix $\AAA \in \Rdd$.
    Let now $\ZZ \in \Rdd$ be fixed.
    Assume that the following properties hold:
    \begin{enumerate}
        \item 
        Assume that 
        $ \ZZ = \mathcal{C} (\ZZ) $.
        \item $\mathcal{B}$ is self-adjoint, i.e.,
        $\innerproduct{ \mathcal{B}(\WW_1), \WW_2 }
        = \innerproduct{ \WW_1, \mathcal{B}(\WW_2) }$
        for all $\WW_1, \WW_2 \in \Rdd$.
        In addition, $\mathcal{B}$ is positive definite,
        i.e., $\innerproduct{ \mathcal{B}(\WW_1), \WW_1 } > 0$
        for all $\WW_1 \in \Rdd \backslash \{ \f{0} \}$.
        \item $\tilde{\mathcal{B}}$ is self-adjoint and positive definite.
        \item It holds that 
        \begin{equation*}
          \mathcal{C}
          (\tilde{\mathcal{B}} 
          ( 
            \mathcal{C} ( \hat{\WW} ) 
          )
          )
          =
          \mathcal{C} ( \mathcal{B} ( \mathcal{C} ( \hat{\WW} ) ) )
        \end{equation*}
        for any matrix $\hat{\WW} \in \Rdd$. 
        \item It holds that 
        \begin{equation}\label{assumption:offdiagonalzero}
           \PP_{\UU} \tilde{\mathcal{B}} 
           ( \mathcal{C} ( \hat{\WW} )  ) \PP_{\VV,\bot} 
           +
           \PP_{\UU,\bot} \tilde{\mathcal{B}} 
           ( \mathcal{C} ( \hat{\WW} )  ) \PP_{\VV}
           = \f{0}
        \end{equation}
        for any matrix $\hat{\WW} \in \Rdd$.
    \end{enumerate}
    Define $\WW := \mathcal{B}^{-1} (\ZZ)$
    and $\widetilde{\WW} := \tilde{\mathcal{B}}^{-1} (\ZZ)$.
    Then it holds that
    \begin{equation*}
        \innerproduct{ \ZZ, \WW }
        \ge 
        \innerproduct{\ZZ, \widetilde{\WW}}.
    \end{equation*}
\end{lemma}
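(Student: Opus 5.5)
The plan is to use the variational characterization of the quadratic form of an inverse. For a self-adjoint positive definite operator $\mathcal{B}$ on the finite-dimensional space $\Rdd$, maximizing the concave quadratic $\WW_1\mapsto 2\innerproduct{\ZZ,\WW_1}-\innerproduct{\WW_1,\mathcal{B}(\WW_1)}$ gives
\begin{equation*}
\innerproduct{\ZZ,\mathcal{B}^{-1}(\ZZ)}=\max_{\WW_1\in\Rdd}\Bigl(2\innerproduct{\ZZ,\WW_1}-\innerproduct{\WW_1,\mathcal{B}(\WW_1)}\Bigr),
\end{equation*}
since the maximum is attained at $\WW_1=\mathcal{B}^{-1}(\ZZ)$. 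Hence any test matrix yields a lower bound on $\innerproduct{\ZZ,\WW}$. I will insert the test matrix $\WW_1=\widetilde{\WW}$ and show that the right-hand side then equals exactly $\innerproduct{\ZZ,\widetilde{\WW}}$.

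\textbf{Step 1: $\mathcal{C}$ is an orthogonal projection.} The map $\mathcal{C}(\AAA)=\PP_{\UU}\AAA\PP_{\VV}+\PP_{\UU,\bot}\AAA\PP_{\VV,\bot}$ is idempotent, because the cross terms vanish by $\PP_{\UU}\PP_{\UU,\bot}=\f{0}$. It is self-adjoint in the Frobenius inner product, because the projections are symmetric. So $\mathcal{C}$ is the orthogonal projection onto the subspace $S:=\operatorname{range}(\mathcal{C})$ of ``block-diagonal'' matrices. Assumption 1 says exactly that $\ZZ\in S$.

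\textbf{Step 2: $\widetilde{\WW}\in S$.} For $\hat\WW\in\Rdd$, write $\f{Y}:=\tilde{\mathcal{B}}(\mathcal{C}(\hat\WW))$. We have the decomposition
\begin{equation*}
\f{Y}=\mathcal{C}(\f{Y})+\PP_{\UU}\f{Y}\PP_{\VV,\bot}+\PP_{\UU,\bot}\f{Y}\PP_{\VV}.
\end{equation*}
By assumption 5 the last two terms vanish, so $\tilde{\mathcal{B}}(S)\subseteq S$. Since $\tilde{\mathcal{B}}$ is positive definite, it is injective on the finite-dimensional space $S$, hence $\tilde{\mathcal{B}}(S)=S$. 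Therefore $\widetilde{\WW}=\tilde{\mathcal{B}}^{-1}(\ZZ)\in S$, that is, $\mathcal{C}(\widetilde{\WW})=\widetilde{\WW}$. This is the step I regard as the crux. Assumption 5 is used only here, and without it the test matrix would not live in $S$, so assumption 4 could not be applied.

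\textbf{Step 3: evaluate the bound.} Using Step 2, the self-adjointness of $\mathcal{C}$, and assumption 4, we get
\begin{equation*}
\innerproduct{\widetilde{\WW},\mathcal{B}(\widetilde{\WW})}
=\innerproduct{\mathcal{C}(\widetilde{\WW}),\mathcal{B}(\mathcal{C}(\widetilde{\WW}))}
=\innerproduct{\widetilde{\WW},\mathcal{C}(\mathcal{B}(\mathcal{C}(\widetilde{\WW})))}
=\innerproduct{\widetilde{\WW},\mathcal{C}(\tilde{\mathcal{B}}(\mathcal{C}(\widetilde{\WW})))}.
\end{equation*}
Applying Step 2 again, $\mathcal{C}(\widetilde{\WW})=\widetilde{\WW}$ and $\tilde{\mathcal{B}}(\widetilde{\WW})=\ZZ\in S$, so the last expression equals $\innerproduct{\widetilde{\WW},\ZZ}$. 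Plugging $\widetilde{\WW}$ into the variational formula gives
\begin{equation*}
\innerproduct{\ZZ,\WW}\ \ge\ 2\innerproduct{\ZZ,\widetilde{\WW}}-\innerproduct{\ZZ,\widetilde{\WW}}=\innerproduct{\ZZ,\widetilde{\WW}},
\end{equation*}
which is the claim. Assumption 3 enters only through the invertibility of $\tilde{\mathcal{B}}$ and its injectivity on $S$ in Step 2.
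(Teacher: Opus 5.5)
Your proof is correct, but it follows a different route from the paper's.

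\textbf{The paper's route.} It vectorizes with an isometry $\mathcal{T}$ that splits a matrix into its $\mathcal{C}$-part and the complementary part, and represents $\mathcal{B}$ and $\tilde{\mathcal{B}}$ as symmetric $2\times 2$ block matrices. It reads off $\tilde{\AAA}_{1,1}=\AAA_{1,1}$ from assumption 4 and $\tilde{\AAA}_{1,2}=\f{0}$ from assumption 5. It then solves explicitly for the block-diagonal components of $\WW$ and $\widetilde{\WW}$: the former through the Schur complement $\AAA_{1,1}-\AAA_{1,2}\AAA_{2,2}^{-1}\AAA_{1,2}^\top$, the latter through $\AAA_{1,1}^{-1}$. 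It concludes by operator monotonicity of the matrix inverse.

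\textbf{Your route.} You use the variational characterization of $\innerproduct{\ZZ,\mathcal{B}^{-1}(\ZZ)}$ and plug in the test point $\widetilde{\WW}$. This only requires two facts: that $\widetilde{\WW}$ lies in the range of the orthogonal projection $\mathcal{C}$ (assumption 5 plus injectivity of $\tilde{\mathcal{B}}$), and that the quadratic forms of $\mathcal{B}$ and $\tilde{\mathcal{B}}$ coincide on that range (assumption 4).

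\textbf{Comparison.} Both arguments express the same fact. Minimizing the quadratic form of $\mathcal{B}$ over the off-diagonal component, for fixed block-diagonal component, produces exactly the Schur complement; freezing that component at zero produces $\AAA_{1,1}$.

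Your version is coordinate-free and avoids Schur complements and operator monotonicity. By completing the square it even gives the exact identity
\[
\innerproduct{\ZZ,\WW}-\innerproduct{\ZZ,\widetilde{\WW}}=\innerproduct{\widetilde{\WW}-\WW,\mathcal{B}(\widetilde{\WW}-\WW)},
\]
which also characterizes the equality case $\widetilde{\WW}=\WW$.

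The paper's block computation instead expresses the gap explicitly as $\langle \zzdiag, [(\AAA_{1,1}-\AAA_{1,2}\AAA_{2,2}^{-1}\AAA_{1,2}^\top)^{-1}-\AAA_{1,1}^{-1}]\zzdiag\rangle$. This makes visible that the gap is driven entirely by the coupling block $\AAA_{1,2}$ of $\mathcal{B}$ between the block-diagonal and off-diagonal parts.
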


\begin{proof}
    Let $ \mathcal{T}: (\Rdd, \innerproduct{\cdot, \cdot}) 
    \rightarrow (\mathbb{R}^{d_1 d_2}, \langle \cdot, \cdot \rangle_{\ell^2}) $
    be an isometric vectorization operator
    such that 
    \begin{equation*}
        \mathcal{T} (\AAA) = \begin{pmatrix}
            \aadiag \\
            \aaoff
        \end{pmatrix}
    \end{equation*}
    and 
    such that $\aadiag= \f{0}$ if and only if 
    $\mathcal{C} (\AAA) = \f{0}$
    and
    $\aaoff = \f{0}$ if and only if 
    $ \left( \mathcal{I} - \mathcal{C} \right) (\AAA) = \f{0}$
    for any matrix $\AAA \in \Rdd$.
    (In other words,
    the operator $\mathcal{T}$ splits the vectorization of a matrix $\AAA$
    into the ``diagonal'' part
    $\PP_{\UU} \AAA \PP_{\VV} + \PP_{\UU,\bot} \AAA \PP_{\VV,\bot}$
    and its ``off-diagonal'' part
    $ \PP_{\UU} \AAA \PP_{\VV,\bot} + \PP_{\UU,\bot} \AAA \PP_{\VV} $.)
    Then we can denote the vectorization of the matrices $\ZZ, \WW, \widetilde{\WW} \in \Rdd$ as 
    \begin{equation*}
        \mathcal{T} (\ZZ) = \begin{pmatrix}
            \zzdiag \\
            \zzoff
        \end{pmatrix},
        \quad
        \mathcal{T} (\WW) = \begin{pmatrix}
            \wwdiag \\
            \wwoff
        \end{pmatrix},
        \quad
        \mathcal{T} (\widetilde{\WW}) = \begin{pmatrix}
            \wwtildediag \\
            \wwtildeoff
        \end{pmatrix}.
    \end{equation*}
    Using the operator $\mathcal{T}$, 
    the equation $\ZZ = \mathcal{B} (\WW)$
    can then be rewritten as
    \begin{equation}\label{eq:vectorizedrepresentation}
        \begin{pmatrix}
            \zzdiag \\
            \zzoff
        \end{pmatrix}
        =
        ( \mathcal{T} \circ \mathcal{B} \circ \mathcal{T}^{-1} )
        \begin{pmatrix}
            \wwdiag \\
            \wwoff
        \end{pmatrix}
        =
        \begin{pmatrix}
            \AAA_{\text{1,1}} & \AAA_{\text{1,2}} \\
            \AAA_{\text{1,2}}^\top & \AAA_{\text{2,2}}
        \end{pmatrix}
        \begin{pmatrix}
            \wwdiag \\
            \wwoff
        \end{pmatrix}.
    \end{equation}
    In the last step, we have used that
    the operator $\mathcal{T} \circ \mathcal{B} \circ \mathcal{T}^{-1}$
    can be represented as a symmetric block matrix
    since $\mathcal{B}$ is linear and self-adjoint. 
    In a similar way, the equation
    $\widetilde{\WW} = \tilde{\mathcal{B}}^{-1} (\ZZ)$
    can be rewritten as
    \begin{equation}\label{eq:vectorizedrepresentation:tilde}
        \begin{pmatrix}
            \zzdiag \\
            \zzoff
        \end{pmatrix}
        =
        ( \mathcal{T} \circ \tilde{\mathcal{B}} \circ \mathcal{T}^{-1} )
        \begin{pmatrix}
            \wwtildediag \\
            \wwtildeoff
        \end{pmatrix}
        =
        \begin{pmatrix}
            \tilde{\AAA}_{\text{1,1}} & \tilde{\AAA}_{\text{1,2}} \\
            \tilde{\AAA}_{\text{1,2}}^\top & \tilde{\AAA}_{\text{2,2}}
        \end{pmatrix}
        \begin{pmatrix}
            \wwtildediag \\
            \wwtildeoff
        \end{pmatrix}.
    \end{equation}
    Again,
    in the last equation,
    we have used that
    the operator $\mathcal{T} \circ \tilde{\mathcal{B}} \circ \mathcal{T}^{-1}$
    can be represented as a symmetric block matrix
    since $\tilde{\mathcal{B}}$ is linear and self-adjoint.
    Now we note that since
    $\ZZ = \mathcal{C} (\ZZ)$
    we have that $ \PP_{\UU,\bot} \ZZ \PP_{\VV} = \f{0} $
    and $ \PP_{\UU} \ZZ \PP_{\VV,\bot} = \f{0} $.
    Thus, by the definition of the operator $\mathcal{T}$, 
    it holds that $\zzoff = \f{0}$.

    Now recall the assumption
    that, 
    for any matrix $\hat{\WW} \in \Rdd$,
    we have that 
    $ \mathcal{C} 
    ( \tilde{\mathcal{B}} 
    ( \mathcal{C}
    (\hat{\WW}) ))
    = \mathcal{C} ( \mathcal{B} ( \mathcal{C} (\hat{\WW}) ) ) $.
    Now note that,
    since $\mathcal{T}$ is an isometric isomorphism,
    this implies that 
    $$\mathcal{T} \circ
    ( \mathcal{C} \circ \tilde{\mathcal{B}} \circ \mathcal{C} )
    \circ
    \mathcal{T}^{-1}
    =
    \mathcal{T} \circ
    ( \mathcal{C} \circ \mathcal{B} \circ \mathcal{C} )
    \circ
    \mathcal{T}^{-1}.$$
    We compute that
    \begin{align*}
        \mathcal{T}\circ
        (
        \mathcal{C}
        \circ
        \tilde{\mathcal{B}}
        \circ
        \mathcal{C})
        \circ
        \mathcal{T}^{-1}
    =&
    \left( 
        \mathcal{T} \circ \mathcal{C} \circ \mathcal{T}^{-1}
    \right)
    \circ
    \left(
        \mathcal{T} \circ \tilde{\mathcal{B}} \circ \mathcal{T}^{-1}
    \right)
    \circ
    \left(
        \mathcal{T} \circ \mathcal{C} \circ \mathcal{T}^{-1}
    \right)\\
    =&
    \begin{pmatrix}
        \f{I} & \f{0} \\
        \f{0} & \f{0}
    \end{pmatrix}
    \begin{pmatrix}
        \tilde{\AAA}_{\text{1,1}} & \tilde{\AAA}_{\text{1,2}} \\
        \tilde{\AAA}_{\text{1,2}}^\top & \tilde{\AAA}_{\text{2,2}}
    \end{pmatrix}
    \begin{pmatrix}
        \f{I} & \f{0} \\
        \f{0} & \f{0}
    \end{pmatrix}
    =
    \begin{pmatrix}
        \tilde{\AAA}_{\text{1,1}} & \f{0} \\
        \f{0} & \f{0}
    \end{pmatrix},
    \end{align*}
    where we have identified the operators with their matrix representations.
    In a similar way, we obtain that
    \begin{equation*}
        \mathcal{T}\circ
        (
        \mathcal{C}
        \circ     \mathcal{B}
        \circ
        \mathcal{C})
        \circ
        \mathcal{T}^{-1}
        =
        \begin{pmatrix}
            \AAA_{\text{1,1}} & \f{0} \\
            \f{0} & \f{0}
        \end{pmatrix}.
    \end{equation*}
    Thus, the assumption
    $ \mathcal{C}
    ( \tilde{\mathcal{B}} 
    ( \mathcal{C} ( \hat{\WW} )  ) )
    =
    \mathcal{C} ( \mathcal{B} ( \mathcal{C} ( \hat{\WW} ) ) ) $
    is equivalent to
    $ \tilde{\AAA}_{1,1} = \AAA_{1,1} $
    in the vectorized representation.
    Next, assumption \eqref{assumption:offdiagonalzero}
    is equivalent to $ \tilde{\AAA}_{2,1} = \tilde{\AAA}_{1,2}^\top = \f{0} $
    in the vectorized representation.
    Note that
    $\tilde{\AAA}_{1,2} = \f{0}$
    also implies that
    $\wwtildeoff = \f{0}$
    due to equation \eqref{eq:vectorizedrepresentation:tilde}
    and due to the fact that $\zzoff = \f{0}$, together with the invertibility of $\tilde{\AAA}_{2,2}$ (which is a principal block of the positive definite matrix representing $\tilde{\mathcal{B}}$).

    Inserting these identities into \eqref{eq:vectorizedrepresentation}
    and \eqref{eq:vectorizedrepresentation:tilde},
    we obtain that
    \begin{equation*}
       \begin{pmatrix}
            \zzdiag \\
            \f{0}
        \end{pmatrix}
        =
        \begin{pmatrix}
            \AAA_{\text{1,1}} & \AAA_{\text{1,2}} \\
            \AAA_{\text{1,2}}^\top & \AAA_{\text{2,2}}
        \end{pmatrix}
        \begin{pmatrix}
            \wwdiag \\
            \wwoff
        \end{pmatrix},
        \quad
         \begin{pmatrix}
            \zzdiag \\
            \f{0}
        \end{pmatrix}
        =
        \begin{pmatrix}
            \AAA_{\text{1,1}} & \f{0} \\
            \f{0} & \tilde{\AAA}_{\text{2,2}}
        \end{pmatrix}
        \begin{pmatrix}
            \wwtildediag \\
            \wwtildeoff
        \end{pmatrix}. 
    \end{equation*}
    From the first equation, we obtain that
    \begin{align*}
        \zzdiag = \AAA_{\text{1,1}} \wwdiag + \AAA_{\text{1,2}} \wwoff,\\
        \f{0} = \AAA_{\text{1,2}}^\top \wwdiag + \AAA_{\text{2,2}} \wwoff.
    \end{align*}
    Since $\mathcal{B}$ is positive definite, the block matrix
    in equation \eqref{eq:vectorizedrepresentation} 
    is positive definite as well.
    Then the Schur complement 
    $ \AAA_{1,1} - \AAA_{1,2} \AAA_{2,2}^{-1} \AAA_{1,2}^\top $
    is positive definite as well
    and thus invertible.
    Therefore, we obtain by a direct calculation that
    \begin{equation*}
        \wwdiag 
        =  \left( \AAA_{1,1} - \AAA_{1,2} \AAA_{2,2}^{-1} \AAA_{1,2}^\top \right)^{-1} \zzdiag.
    \end{equation*}
    Moreover, from equation \eqref{eq:vectorizedrepresentation:tilde} we obtain 
    due to $\zzoff = \f{0}$ and $\tilde{\AAA}_{1,2} = \f{0}$
    that
    \begin{align*}
      \wwtildediag =\AAA_{1,1}^{-1} \zzdiag,
    \end{align*}
    since $\AAA_{1,1}$ is invertible as a principal submatrix of a positive definite matrix.
    Now we note that
    \begin{align}
        \innerproduct{ 
            \ZZ, \WW - \widetilde{\WW} 
        }
        \overeq{(a)}&
        \langle{
            \mathcal{T} (\ZZ),
            \mathcal{T} (\WW) - \mathcal{T} (\widetilde{\WW})
        }\rangle_{\ell_2} \nonumber \\
        \overeq{(b)}&
        \langle
            \begin{pmatrix}
                \zzdiag \\
                \f{0}
            \end{pmatrix},
            \begin{pmatrix}
                \wwdiag - \wwtildediag \\
                \wwoff - \wwtildeoff
            \end{pmatrix}
        \rangle_{\ell_2} \nonumber \\
        =&
        \langle
            \zzdiag,
            \wwdiag - \wwtildediag
        \rangle_{\ell_2} \nonumber \\
        \overeq{(c)}&
        \langle
            \zzdiag,
            \left[ \left( \AAA_{1,1} - \AAA_{1,2} \AAA_{2,2}^{-1} \AAA_{1,2}^\top \right)^{-1}
            -
            \AAA_{1,1}^{-1} \right] \zzdiag
        \rangle_{\ell_2}.
        \label{eq:innerproductdifference}
    \end{align}
    Here, in step (a) we have used that $\mathcal{T}$ is an isometric operator,
    and in step (b) we have used the vectorized representations of $\ZZ, \WW$, 
    and $\widetilde{\WW}$.
    Step (c) follows from the formulas for $\wwdiag$ and $\wwtildediag$ derived above.

    Now,
    since $\mathcal{B}$ is positive definite,
    it follows that the block matrix $\AAA_{2,2}$ is positive definite as well.
    Thus, $\AAA_{1,2} \AAA_{2,2}^{-1} \AAA_{1,2}^\top$
    is positive semidefinite.
    It follows that 
    \begin{equation*}
        \AAA_{1,1} - \AAA_{1,2} \AAA_{2,2}^{-1} \AAA_{1,2}^\top
        \preceq
        \AAA_{1,1},
    \end{equation*}
    where $\preceq$ denotes the Loewner partial order on symmetric matrices,
    i.e., $\f{A} \preceq \f{B}$ if and only if $\f{B} - \f{A}$ is positive semidefinite.
    By the operator monotonicity of the inverse function on the cone of positive definite matrices
    we obtain that
    \begin{equation*}
        \left( \AAA_{1,1} - \AAA_{1,2} \AAA_{2,2}^{-1} \AAA_{1,2}^\top \right)^{-1}
        \succeq
        \AAA_{1,1}^{-1}.
    \end{equation*}
    It follows that 
    \begin{equation*}
       \left( \AAA_{1,1} - \AAA_{1,2} \AAA_{2,2}^{-1} \AAA_{1,2}^\top \right)^{-1}
        - \AAA_{1,1}^{-1} \succeq \f{0}, 
    \end{equation*}
    which finally implies that
    \begin{equation*}
       \langle
            \zzdiag,
            \left[ \left( \AAA_{1,1} - \AAA_{1,2} \AAA_{2,2}^{-1} \AAA_{1,2}^\top \right)^{-1}
            -
            \AAA_{1,1}^{-1} \right] \zzdiag
        \rangle_{\ell_2} \ge 0.
    \end{equation*}
    Combined with \eqref{eq:innerproductdifference} this implies the claim.
\end{proof}

As a next step,
we specialize Lemma \ref{lemma:majorization_property}
to the case of Sylvester equations.
\begin{lemma}[Abstract Majorization Lemma for Sylvester Equations] \label{lemma:majorization_property_sylvester} \mbox{}\\
	Let $\ZZ \in \Rdd$ be arbitrary but fixed.
    Let $\LL_{\UU} \in \R^{d_1 \times d_1}$ and $\LL_{\VV} \in \R^{d_2 \times d_2}$ be positive definite matrices.
    Let $\PP_{\UU} \in \R^{d_1 \times d_1}, \PP_{\VV} \in \R^{d_2 \times d_2}$ 
    be orthogonal projection matrices.
    Denote by $\PP_{\UU,\bot} := \f{I} - \PP_{\UU}$ 
    and 
    $\PP_{\VV,\bot} := \f{I} - \PP_{\VV}$
    the projections onto the orthogonal complement.
    Assume that
    $ \PP_{\UU} \ZZ \PVV{\bot}  = \f{0}$
    and $ \PP_{\UU,\bot} \ZZ \PP_{\VV}  = \f{0} $.
    Define the operator
    $\mathcal{C}: \Rdd \to \Rdd$
    by 
    $\mathcal{C} (\AAA) 
    := \PP_{\UU} \AAA \PP_{\VV} + \PP_{\UU,\bot} \AAA \PP_{\VV,\bot}$
    for any matrix $\AAA \in \Rdd$.
    Then the following statements hold true:
    \begin{enumerate}
        \item The equation
        \begin{equation}\label{eq:sylvester:projection}
             \LL_{\UU}  \WW 
            + \WW  \LL_{\VV} 
            =
            2 \ZZ
        \end{equation}
        has a unique solution $\WW \in \Rdd$.
        \item
        Set 
        $\mathcal{C}_{\UU}: \R^{d_1 \times d_1} \to \R^{d_1 \times d_1}$ with $\mathcal{C}_{\UU}(\AAA_1) 
        := \PP_{\UU} \AAA_1 \PP_{\UU} + \PP_{\UU,\bot} \AAA_1 \PP_{\UU,\bot}$
        and $\mathcal{C}_{\VV}: \R^{d_2 \times d_2} \to \R^{d_2 \times d_2}$ with
        $\mathcal{C}_{\VV} (\AAA_2)
        := \PP_{\VV} \AAA_2 \PP_{\VV} + \PP_{\VV,\bot} \AAA_2 \PP_{\VV,\bot}$
        for any matrices $\AAA_1 \in \R^{d_1 \times d_1}$ and $\AAA_2 \in \R^{d_2 \times d_2}$, respectively.
        Then, the equation
        \begin{equation}\label{eq:sylvester:projection:orthogonal}
             \mathcal{C}_{\UU} (\LL_{\UU})  \widetilde{\WW} 
            + \widetilde{\WW}  \mathcal{C}_{\VV} (\LL_{\VV}) 
            =
            2 \ZZ
        \end{equation}
        has a unique solution $ \widetilde{\WW} \in \Rdd$.
        \item It holds that
        \begin{equation*}
            \innerproduct{\ZZ ,\WW - \widetilde{\WW} }
            \ge
            0.
        \end{equation*}
    \end{enumerate}
\end{lemma}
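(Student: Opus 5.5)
We will obtain all three statements by instantiating \Cref{lemma:majorization_property} with the Sylvester operators
\[
\mathcal{B}(\WW) := \tfrac12\left(\LL_{\UU}\WW + \WW\LL_{\VV}\right), \qquad \tilde{\mathcal{B}}(\WW) := \tfrac12\left(\mathcal{C}_{\UU}(\LL_{\UU})\WW + \WW\,\mathcal{C}_{\VV}(\LL_{\VV})\right).
\]
With these choices, \cref{eq:sylvester:projection} reads $\mathcal{B}(\WW)=\ZZ$ and \cref{eq:sylvester:projection:orthogonal} reads $\tilde{\mathcal{B}}(\widetilde{\WW})=\ZZ$.

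For statements 1 and 2, we first note that $\mathcal{C}_{\UU}(\LL_{\UU})$ and $\mathcal{C}_{\VV}(\LL_{\VV})$ are again positive definite. Indeed, they are pinchings of positive definite matrices: $\langle \f{x}, \mathcal{C}_{\UU}(\LL_{\UU})\f{x}\rangle = \langle \PP_{\UU}\f{x},\LL_{\UU}\PP_{\UU}\f{x}\rangle + \langle \PP_{\UU,\bot}\f{x},\LL_{\UU}\PP_{\UU,\bot}\f{x}\rangle>0$ for $\f{x}\neq \f{0}$. Unique solvability then follows from \citet[Theorem VII.2.3]{Bhatia1997MatrixAnalysis}, as in the proof of \Cref{lemma:sylvester_equation_weight_operator}. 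Alternatively, one can use self-adjointness and positive definiteness directly: for symmetric positive definite $\f{A}$ and $\f{B}$,
\[
\innerproduct{\f{A}\WW+\WW\f{B},\WW} = \trace(\WW^\top \f{A}\WW)+\trace(\WW \f{B}\WW^\top) > 0 \quad \text{for } \WW\neq\f{0},
\]
and the same computation shows that $\mathcal{B}$ and $\tilde{\mathcal{B}}$ are self-adjoint. This settles assumptions 2 and 3 of \Cref{lemma:majorization_property}. Assumption 1, namely $\ZZ=\mathcal{C}(\ZZ)$, is exactly the hypothesis $\PP_{\UU}\ZZ\PP_{\VV,\bot}=\PP_{\UU,\bot}\ZZ\PP_{\VV}=\f{0}$.

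The core of the argument is verifying assumptions 4 and 5. Take $\hat{\WW}$ and write $\f{M}:=\mathcal{C}(\hat{\WW})=\PP_{\UU}\hat{\WW}\PP_{\VV}+\PP_{\UU,\bot}\hat{\WW}\PP_{\VV,\bot}$. For assumption 5 we compute $\PP_{\UU}\,\mathcal{C}_{\UU}(\LL_{\UU})\,\f{M}\,\PP_{\VV,\bot}$. Since $\PP_{\UU}\mathcal{C}_{\UU}(\LL_{\UU})=\PP_{\UU}\LL_{\UU}\PP_{\UU}$ and $\PP_{\UU}\f{M}\PP_{\VV,\bot}=\f{0}$, this term vanishes. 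Likewise, $\PP_{\UU}\f{M}\,\mathcal{C}_{\VV}(\LL_{\VV})\PP_{\VV,\bot}=\PP_{\UU}\f{M}\PP_{\VV,\bot}\LL_{\VV}\PP_{\VV,\bot}=\f{0}$, and the other off-diagonal block vanishes symmetrically. For assumption 4 we compute the diagonal blocks. For example,
\[
\PP_{\UU}\LL_{\UU}\f{M}\PP_{\VV} = \PP_{\UU}\LL_{\UU}\PP_{\UU}\f{M}\PP_{\VV} + \PP_{\UU}\LL_{\UU}\PP_{\UU,\bot}\f{M}\PP_{\VV},
\]
and the second term vanishes because $\PP_{\UU,\bot}\f{M}\PP_{\VV}=\f{0}$. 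The first term equals $\PP_{\UU}\,\mathcal{C}_{\UU}(\LL_{\UU})\,\f{M}\PP_{\VV}$. Doing the same for the $\PP_{\UU,\bot}\cdot\PP_{\VV,\bot}$ block and for the right-multiplication by $\LL_{\VV}$ yields $\mathcal{C}(\mathcal{B}(\f{M}))=\mathcal{C}(\tilde{\mathcal{B}}(\f{M}))$. This block bookkeeping uses only $\PP^2=\PP$ and $\PP\PP_{\bot}=\f{0}$, so it is routine but must be written carefully.

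With assumptions 1 through 5 established, \Cref{lemma:majorization_property} applies with $\WW=\mathcal{B}^{-1}(\ZZ)$ and $\widetilde{\WW}=\tilde{\mathcal{B}}^{-1}(\ZZ)$, and gives $\innerproduct{\ZZ,\WW}\ge\innerproduct{\ZZ,\widetilde{\WW}}$. This is statement 3. I expect no step to be a real obstacle; the only place requiring care is the verification of assumption 4, where the cross terms $\PP_{\UU}\LL_{\UU}\PP_{\UU,\bot}$ must be shown to be annihilated precisely because $\f{M}$ is block-diagonal.
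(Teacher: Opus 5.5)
Your proposal is correct and matches the paper's proof essentially step for step. The paper likewise defines $\mathcal{B}$ and $\tilde{\mathcal{B}}$ as the halved Sylvester operators, obtains unique solvability from positive definiteness of $\LL_{\UU},\LL_{\VV}$ and their pinchings, checks the five hypotheses of \Cref{lemma:majorization_property} by the same block bookkeeping (using $\PP_{\UU}\mathcal{C}_{\UU}(\LL_{\UU})=\PP_{\UU}\LL_{\UU}\PP_{\UU}$ and the vanishing off-diagonal blocks of $\mathcal{C}(\hat{\WW})$), and then reads off statement 3; the one loose phrase is that self-adjointness of $\mathcal{B},\tilde{\mathcal{B}}$ is not literally ``the same computation'' as the positivity one, though it follows at once from the symmetry of $\LL_{\UU},\LL_{\VV}$ and of their pinchings.
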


\begin{proof}
    With the same argument as in the proof of Lemma \ref{lemma:sylvester_equation_weight_operator},
    it follows from the theory of Sylvester equations
    that
    Equation \eqref{eq:sylvester:projection} 
    has a unique solution $\WW \in \Rdd$
    since $\LL_{\UU}$ and $\LL_{\VV}$ are positive definite,
    \citep[see, e.g.,][Theorem VII.2.3]{Bhatia1997MatrixAnalysis}.
    This proves the first statement. 

    Now, note that $ \mathcal{C}_{\UU} (\LL_{\UU}) $
    and $ \mathcal{C}_{\VV} (\LL_{\VV}) $ are positive definite as well
    since the pinching operations $\mathcal{C}_{\UU}$ and $\mathcal{C}_{\VV}$ 
    preserve positive definiteness
    as one can easily verify.
    This fact implies that
    equation \eqref{eq:sylvester:projection:orthogonal}
    has a unique solution $\widetilde{\WW} \in \Rdd$ as well.
    This proves the second statement.

    It remains to prove the third statement.
    Define the operator $\mathcal{B}: \Rdd \to \Rdd$
    by
    \begin{equation*}
        \mathcal{B} (\AAA) 
        := \frac{1}{2} \left( \LL_{\UU} \AAA + \AAA \LL_{\VV} \right)
    \end{equation*}
    and define the operator $\tilde{\mathcal{B}}: \Rdd \to \Rdd$
    by
    \begin{equation*}
        \tilde{\mathcal{B}} (\AAA)
        := \frac{1}{2} \left( 
            \mathcal{C}_{\UU} (\LL_{\UU}) \AAA 
            + \AAA \mathcal{C}_{\VV} (\LL_{\VV})
        \right).
    \end{equation*}
    Our goal is to apply the previous Lemma \ref{lemma:majorization_property}.
    For this, we need to verify that the assumptions of Lemma \ref{lemma:majorization_property}
    are satisfied.
    The first assumption is satisfied
    since we have assumed that
    $ \PP_{\UU} \ZZ \PVV{\bot}  = \f{0}$
    and $ \PP_{\UU,\bot} \ZZ \PP_{\VV}  = \f{0} $,
    which is equivalent to $\ZZ = \mathcal{C} (\ZZ)$.
    Next, note that for all matrices $\ZZ_1, \ZZ_2 \in \Rdd$
    it holds that
    \begin{align*}
        \innerproduct{ \mathcal{B} (\ZZ_1), \ZZ_2 }
        =&
        \frac{1}{2} \innerproduct{ \LL_{\UU} \ZZ_1, \ZZ_2 }
        +
        \frac{1}{2} \innerproduct{ \ZZ_1 \LL_{\VV}, \ZZ_2  }\\
        =&
        \frac{1}{2} \innerproduct{ \ZZ_1, \LL_{\UU} \ZZ_2 }
        +
        \frac{1}{2} \innerproduct{ \ZZ_1, \ZZ_2 \LL_{\VV} }\\
        =&
        \innerproduct{ \ZZ_1, \mathcal{B} (\ZZ_2) }.
    \end{align*}
    Moreover, we note that
    \begin{align*}
        \innerproduct{ \mathcal{B} (\ZZ_1), \ZZ_1 }
        =&
        \frac{1}{2} \innerproduct{ \LL_{\UU} \ZZ_1, \ZZ_1 }
        +
        \frac{1}{2} \innerproduct{ \ZZ_1 \LL_{\VV}, \ZZ_1  }\\
        =&
        \frac{1}{2} \innerproduct{ \ZZ_1, \LL_{\UU} \ZZ_1 }
        +
        \frac{1}{2} \innerproduct{ \ZZ_1, \ZZ_1 \LL_{\VV} }\\
        =&
        \frac{1}{2} \trace ( \ZZ_1^{\top} \LL_{\UU} \ZZ_1 )
        +
        \frac{1}{2} \trace (  \ZZ_1 \LL_{\VV} \ZZ_1^{\top})
    \end{align*}
    for any $\ZZ_1 \in \Rdd \backslash \{ \f{0} \}$.
    Since $\LL_{\UU}$ and $\LL_{\VV}$ are positive definite matrices,
    we obtain that
    both $\ZZ_1^{\top} \LL_{\UU} \ZZ_1$
    and
    $\ZZ_1 \LL_{\VV} \ZZ_1^{\top}$
    are positive semidefinite matrices
    which are nonzero.
    Thus,
    both terms in the last equation are positive
    which implies that
    \begin{equation*}
        \innerproduct{ \mathcal{B} (\ZZ_1), \ZZ_1 }
        >
        0
    \end{equation*}
    for all $\ZZ_1 \in \Rdd \backslash \{ \f{0} \}$.
    Thus, we have shown that 
    the operator $\mathcal{B}$ is self-adjoint and positive definite.
    In an analogous way,
    one can show that the operator $\tilde{\mathcal{B}}$ is self-adjoint and positive definite as well
    since we have that $ \mathcal{C}_{\UU} (\LL_{\UU}) $
    and $ \mathcal{C}_{\VV} (\LL_{\VV}) $ are positive definite.
    This shows that the second and third assumptions of Lemma \ref{lemma:majorization_property} 
    are satisfied.

    Now note that
    \begin{equation*}
        \PP_{\UU} \mathcal{C}_{\UU} (\LL_{\UU}) 
        \mathcal{C} (\hat{\WW}) \PP_{\VV}
        \overeq{(a)} 
        \PP_{\UU} \LL_{\UU} \PP_{\UU}  \mathcal{C} (\hat{\WW}) \PP_{\VV}
        \overeq{(b)}
        \PP_{\UU} \LL_{\UU} \mathcal{C} (\hat{\WW}) \PP_{\VV},
    \end{equation*}
    where in step (a) we have used that
    $\PP_{\UU} \mathcal{C}_{\UU} (\LL_{\UU}) = \PP_{\UU} \LL_{\UU} \PP_{\UU}$
    and in step (b) we have used that
    $ \mathcal{C} (\hat{\WW}) \PP_{\VV} = \PP_{\UU} \mathcal{C} (\hat{\WW}) \PP_{\VV} $.
    In a similar way, one can verify the following three identities:
    \begin{align*}
        \PP_{\UU}  
        \mathcal{C} (\hat{\WW}) \mathcal{C}_{\VV} (\LL_{\VV})\PP_{\VV}
        =&
        \PP_{\UU}  \mathcal{C} (\hat{\WW}) \LL_{\VV} \PP_{\VV},\\
        \PP_{\UU,\bot} \mathcal{C}_{\UU} (\LL_{\UU}) \mathcal{C} (\hat{\WW})
        \PP_{\VV,\bot}
        =&
        \PP_{\UU,\bot} \LL_{\UU} \mathcal{C} (\hat{\WW}) \PP_{\VV,\bot},\\
        \PP_{\UU,\bot}  \mathcal{C} (\hat{\WW})\mathcal{C}_{\VV} (\LL_{\VV})
        \PP_{\VV,\bot}
        =&
        \PP_{\UU,\bot}  \mathcal{C} (\hat{\WW}) \LL_{\VV} \PP_{\VV,\bot}.
    \end{align*}
    Using these equations we can show that for all matrices $\hat{\WW} \in \Rdd$, 
    it holds that
    \begin{align*}
        &\mathcal{C} ( \tilde{\mathcal{B}} ( \mathcal{C} (\hat{\WW}) ) )\\
        =
        &\frac{1}{2}
        \mathcal{C}
        \left(
            \mathcal{C}_{\UU} (\LL_{\UU} ) \mathcal{C} (\hat{\WW})
            +
            \mathcal{C} (\hat{\WW}) \mathcal{C}_{\VV} (\LL_{\VV})
        \right)\\
        =&
        \frac{1}{2}
        \PP_{\UU}
        \left(
            \mathcal{C}_{\UU} (\LL_{\UU}) \mathcal{C} (\hat{\WW}) 
            +
            \mathcal{C} (\hat{\WW}) 
            \mathcal{C}_{\VV} (\LL_{\VV})  
        \right) \PP_{\VV}\\
        &+
        \frac{1}{2}
        \PP_{\UU, \bot}
        \left(
            \mathcal{C}_{\UU} (\LL_{\UU}) \mathcal{C} (\hat{\WW}) 
            +
            \mathcal{C} (\hat{\WW}) 
            \mathcal{C}_{\VV} (\LL_{\VV})  
        \right) \PP_{\VV,\bot}\\
        =&
        \frac{1}{2}
        \PP_{\UU}
        \left(
            \LL_{\UU}
            \mathcal{C} (\hat{\WW})
            +
            \mathcal{C} (\hat{\WW})
            \LL_{\VV}
        \right)
        \PP_{\VV}
        +\frac{1}{2}
        \PP_{\UU, \bot}
        \left(
            \LL_{\UU}
            \mathcal{C} (\hat{\WW})
            +
            \mathcal{C} (\hat{\WW})
            \LL_{\VV}   
        \right) \PP_{\VV,\bot}\\
        =&
        \frac{1}{2}
        \mathcal{C} \left(
            \LL_{\UU}
            \mathcal{C} (\hat{\WW})
            +
            \mathcal{C} (\hat{\WW})
            \LL_{\VV}
        \right)
        =
        \mathcal{C} ( \mathcal{B} ( \mathcal{C} (\hat{\WW}) ) ).
    \end{align*}

    This shows that the fourth assumption of Lemma \ref{lemma:majorization_property} is satisfied.

    Finally, we need to verify the fifth assumption of Lemma \ref{lemma:majorization_property}.
    We note that,
    for any matrix $\hat{\WW} \in \Rdd$,
    it holds that
    \begin{align*}
        \PP_{\UU} \tilde{\mathcal{B}} 
        ( \mathcal{C} ( \hat{\WW} )  ) \PP_{\VV,\bot} 
        =
        \frac{1}{2}
        \PP_{\UU} \left(
            \mathcal{C}_{\UU} (\LL_{\UU}) \mathcal{C} (\hat{\WW})
            +
            \mathcal{C} (\hat{\WW}) \mathcal{C}_{\VV} (\LL_{\VV})
        \right) \PP_{\VV,\bot}.
    \end{align*}
    Now note that $\PP_{\UU} \mathcal{C}_{\UU} (\LL_{\UU}) = \PP_{\UU} \LL_{\UU} \PP_{\UU}$
    and $\mathcal{C} ( \hat{\WW}) \PP_{\VV,\bot} = \PP_{\UU,\bot} \hat{\WW} \PP_{\VV,\bot}$.
    Consequently, we obtain that
    $ \PP_{\UU} \mathcal{C}_{\UU} (\LL_{\UU}) \mathcal{C} (\hat{\WW}) \PP_{\VV,\bot} = \f{0} $.
    Similarly, we have
    $ \PP_{\UU} \mathcal{C} (\hat{\WW}) \mathcal{C}_{\VV} (\LL_{\VV}) \PP_{\VV,\bot} = \f{0} $.
    Thus, we obtain that $\PP_{\UU} \tilde{\mathcal{B}} ( \mathcal{C} ( \hat{\WW} )  ) \PP_{\VV,\bot}= \f{0}$.
    In a similar way, one can also show that $\PP_{\UU,\bot} \tilde{\mathcal{B}} 
        ( \mathcal{C} ( \hat{\WW} )  ) \PP_{\VV}
        = \f{0}.$
    It follows that 
    \begin{align*}
        \PP_{\UU} \tilde{\mathcal{B}} 
        ( \mathcal{C} ( \hat{\WW} )  ) \PP_{\VV,\bot} 
        +
        \PP_{\UU,\bot} \tilde{\mathcal{B}} 
        ( \mathcal{C} ( \hat{\WW} )  ) \PP_{\VV}
        = \f{0}.
    \end{align*}
    Thus, the fifth assumption of Lemma \ref{lemma:majorization_property} is satisfied as well
    and we are in a position to apply Lemma \ref{lemma:majorization_property}.

    Since, by definition,
    $\WW$ and $\widetilde{\WW}$
    satisfy
    $\mathcal{B} (\WW) = \ZZ$
    and
    $\tilde{\mathcal{B}} (\widetilde{\WW}) = \ZZ$,
    it
    follows from Lemma \ref{lemma:majorization_property} that
    \begin{align*}
        \innerproduct{\ZZ, \WW -  \widetilde{\WW} }
        =
        \innerproduct{
            \ZZ, 
            \mathcal{B}^{-1} (\ZZ) - \tilde{\mathcal{B}}^{-1}  (\ZZ) 
            }
        \ge 
        0.
    \end{align*}
    This completes the proof.
\end{proof}

\paragraph{Iterative Pinching Argument.} \label{sec:appendix:lb_weighted_ip:iterative}

Let $\LL_{\UU} \in \R^{d_1 \times d_1}, \LL_{\VV} \in \R^{d_2 \times d_2}$ be positive definite matrices.
Let $\uu_1, \ldots, \uu_{d_1} \in \R^{d_1}$ be an orthonormal basis of $\R^{d_1}$
and let $\vv_1, \ldots, \vv_{d_2} \in \R^{d_2}$ be an orthonormal basis of $\R^{d_2}$.
The goal is now to apply the previous lemma iteratively
to obtain a majorization result
for the case that both $\LL_{\UU}$ and $\LL_{\VV}$ are diagonal
in the bases
$ \{ \uu_1, \ldots, \uu_{d_1} \} $
and
$ \{ \vv_1, \ldots, \vv_{d_2} \} $,
respectively.
For this, we define the orthogonal projection matrices
    \begin{align*}
        \PUU{i:j} := \sum_{k=i}^{j} \uu_k \uu_k^{\top}
        \quad
        \text{for }
        1 \le i \le j \leq d_1,\\
        \PVV{i:j} := \sum_{k=i}^{j} \vv_k \vv_k^{\top}
        \quad
        \text{for }
        1 \le i \le j \leq d_2.
    \end{align*}
    Next, we define the operators
    \begin{align*}
        \mathcal{C}_{\UU,i} (\AAA_1) &:= \PUU{1:i} \AAA_1 \PUU{1:i} + \PUU{i+1:d_1} \AAA_1 \PUU{i+1:d_1}
        \quad
        \text{for }
        1 \le i < d_1
    \end{align*}
    and 
    \begin{align*}
        \mathcal{C}_{\VV,i} (\AAA_2)& := \PVV{1:i} \AAA_2 \PVV{1:i} + \PVV{i+1:d_2} \AAA_2 \PVV{i+1:d_2}
        \quad
        \text{for }
        1 \le i < d_2,
    \end{align*}
    for any $\AAA_1 \in \R^{d_1 \times d_1}$ and 
    $\AAA_2 \in \R^{d_2 \times d_2}$, respectively.
    Then, we define recursively
    the matrices
    $ \LL_{\UU,i} := \mathcal{C}_{\UU,i} ( \LL_{\UU, i-1} ) $
    for $i=1,2,\ldots, d_1-1$
    and
    $ \LL_{\VV,i} := \mathcal{C}_{\VV,i} ( \LL_{\VV, i-1} ) $
    for $i=1,2,\ldots, d_2-1$,
    where we set $\LL_{\UU,0} := \LL_{\UU}$
    and $\LL_{\VV,0} := \LL_{\VV}$.

   The following lemma shows that
   after $d_1-1$ and $d_2-1$ iterations,
   the matrices $\LL_{\UU,d_1-1}$ and $\LL_{\VV,d_2-1}$
   are diagonal in the bases
   $ \{ \uu_1, \ldots, \uu_{d_1} \} $
   and  $ \{ \vv_1, \ldots, \vv_{d_2} \} $,
    respectively.
\begin{lemma}\label{lemma:iterative_pinching}
    It holds that
    \begin{equation*}
        \LL_{\UU,d_1-1} = \sum_{k=1}^{d_1} \innerproduct{\LL_{\UU},\uu_k \uu_k^\top} \uu_k \uu_k^\top,
        \quad
        \LL_{\VV,d_2-1} = \sum_{k=1}^{d_2} \innerproduct{\LL_{\VV},\vv_k \vv_k^\top} \vv_k \vv_k^\top.
    \end{equation*}
    In other words,
    the matrices $\LL_{\UU,d_1-1}$ and $\LL_{\VV,d_2-1}$ are diagonal in the bases
    $ \{ \uu_1, \ldots, \uu_{d_1} \} $
    and
    $ \{ \vv_1, \ldots, \vv_{d_2} \} $,
    respectively.
\end{lemma}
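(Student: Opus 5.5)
The plan is to prove, by induction on $i$, an explicit block description of the partially pinched matrices and then read off the case $i=d_1-1$. The $\VV$-statement follows verbatim with $d_2$ in place of $d_1$, so I treat only $\LL_{\UU}$.

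\textbf{Inductive claim.} For $0\le i\le d_1-1$,
\begin{equation*}
\LL_{\UU,i} = \sum_{k=1}^{i} \innerproduct{\LL_{\UU},\uu_k\uu_k^\top}\,\uu_k\uu_k^\top + \PUU{i+1:d_1}\,\LL_{\UU}\,\PUU{i+1:d_1},
\end{equation*}
with the first sum empty for $i=0$. The base case is trivial, since $\PUU{1:d_1}=\f{I}$ gives $\LL_{\UU,0}=\LL_{\UU}$.

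\textbf{Inductive step.} Assume the claim for $i-1$ and apply $\mathcal{C}_{\UU,i}$. Write $\f{D}_{i-1}$ for the first (diagonal) sum; each of its terms involves $\uu_k$ with $k\le i-1$.

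\begin{itemize}
\item For $\f{D}_{i-1}$: $\PUU{1:i}\uu_k=\uu_k$ and $\PUU{i+1:d_1}\uu_k=\f{0}$ for $k\le i-1$, so $\mathcal{C}_{\UU,i}(\f{D}_{i-1})=\f{D}_{i-1}$.
\item For $\f{M}:=\PUU{i:d_1}\LL_{\UU}\PUU{i:d_1}$: we have $\PUU{1:i}\PUU{i:d_1}=\uu_i\uu_i^\top$ and $\PUU{i+1:d_1}\PUU{i:d_1}=\PUU{i+1:d_1}$. Hence $\PUU{1:i}\f{M}\PUU{1:i}=\uu_i\uu_i^\top\LL_{\UU}\uu_i\uu_i^\top=\innerproduct{\LL_{\UU},\uu_i\uu_i^\top}\uu_i\uu_i^\top$, and $\PUU{i+1:d_1}\f{M}\PUU{i+1:d_1}=\PUU{i+1:d_1}\LL_{\UU}\PUU{i+1:d_1}$.
\end{itemize}

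By linearity of $\mathcal{C}_{\UU,i}$, summing the two pieces gives exactly the claim for $i$.

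\textbf{Conclusion.} At $i=d_1-1$ the remaining block is $\PUU{d_1:d_1}\LL_{\UU}\PUU{d_1:d_1}=\innerproduct{\LL_{\UU},\uu_{d_1}\uu_{d_1}^\top}\uu_{d_1}\uu_{d_1}^\top$. Adding it to the sum yields the stated formula. If $d_1=1$ no pinching occurs, and the formula holds trivially.

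The only point needing care is the projector bookkeeping, namely the products $\PUU{1:i}\PUU{i:d_1}=\uu_i\uu_i^\top$ and the invariance of the already-diagonal part. Both follow from orthonormality of $\{\uu_k\}$, so I expect no real obstacle.
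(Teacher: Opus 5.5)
Your proof is correct and essentially matches the paper's argument. Both use the same inductive block description $\LL_{\UU,i}=\sum_{k\le i}\innerproduct{\LL_{\UU},\uu_k\uu_k^\top}\uu_k\uu_k^\top+\PUU{i+1:d_1}\LL_{\UU}\PUU{i+1:d_1}$ and the same projector identities $\PUU{1:i}\PUU{i:d_1}=\uu_i\uu_i^\top$ and $\PUU{i+1:d_1}\PUU{i:d_1}=\PUU{i+1:d_1}$; you also state the trivial case $d_1=1$ explicitly, which the paper leaves implicit.
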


\begin{proof}
    We show by induction that for $i=0, 1,2,\ldots, d_1-1$, it holds that
    \begin{equation*}
        \LL_{\UU,i}
        =
        \sum_{k=1}^{i} \innerproduct{\LL_{\UU},\uu_k \uu_k^\top} \uu_k \uu_k^\top
        +
        \PP_{\UU,i+1:d_1} \LL_{\UU} \PP_{\UU,i+1:d_1}.
    \end{equation*}
    The base case $i=0$ follows directly from the definition of $\LL_{\UU,0}$.
    Now assume that the statement holds for some $i \in \{0, 1,2,\ldots, d_1-2\}$.
    We compute that
    \begin{align*}
        \LL_{\UU,i+1}
        =&
        \mathcal{C}_{\UU,i+1} ( \LL_{\UU,i} )\\
        =&
        \PP_{\UU,1:i+1} \LL_{\UU,i} \PP_{\UU,1:i+1}
        +
        \PP_{\UU,i+2:d_1} \LL_{\UU,i} \PP_{\UU,i+2:d_1}\\
        =&
        \PP_{\UU,1:i+1}
        \left(
            \sum_{k=1}^{i} \innerproduct{\LL_{\UU},\uu_k \uu_k^\top} \uu_k \uu_k^\top
            +
            \PP_{\UU,i+1:d_1} \LL_{\UU} \PP_{\UU,i+1:d_1}
        \right)
        \PP_{\UU,1:i+1}\\
        &+
        \PP_{\UU,i+2:d_1}
        \left(
            \sum_{k=1}^{i} \innerproduct{\LL_{\UU},\uu_k \uu_k^\top} \uu_k \uu_k^\top
            +
            \PP_{\UU,i+1:d_1} \LL_{\UU} \PP_{\UU,i+1:d_1}
        \right)
        \PP_{\UU,i+2:d_1}\\
        \overeq{(a)}&
        \PP_{\UU,1:i+1}
        \left(
            \sum_{k=1}^{i} \innerproduct{\LL_{\UU},\uu_k \uu_k^\top} \uu_k \uu_k^\top
        \right)
        \PP_{\UU,1:i+1}\\
        &+
        \PP_{\UU,1:i+1}
        \PP_{\UU,i+1:d_1} \LL_{\UU} \PP_{\UU,i+1:d_1}
        \PP_{\UU,1:i+1}\\
        &+
        \PP_{\UU,i+2:d_1}
        \PP_{\UU,i+1:d_1}
        \LL_{\UU}
        \PP_{\UU,i+1:d_1}
        \PP_{\UU,i+2:d_1}\\
        \overeq{(b)}&
        \sum_{k=1}^{i} \innerproduct{\LL_{\UU},\uu_k \uu_k^\top} \uu_k \uu_k^\top
        +
        \innerproduct{\LL_{\UU},\uu_{i+1} \uu_{i+1}^\top} \uu_{i+1} \uu_{i+1}^\top
        +
        \PP_{\UU,i+2:d_1} \LL_{\UU} \PP_{\UU,i+2:d_1}\\
        =&
        \sum_{k=1}^{i+1} \innerproduct{\LL_{\UU},\uu_k \uu_k^\top} \uu_k \uu_k^\top
        +
        \PP_{\UU,i+2:d_1} \LL_{\UU} \PP_{\UU,i+2:d_1}.
    \end{align*}
    In equation (a), we used that
    $ \PP_{\UU,i+2:d_1} \innerproduct{\LL_{\UU},\uu_k \uu_k^\top} \uu_k \uu_k^\top \PP_{\UU,i+2:d_1}
    = \f{0} $ for all $k=1,\ldots, i$.
    In equation (b), we then used that
    $ \PP_{\UU,1:i+1}\uu_k \uu_k^\top  \PP_{\UU,1:i+1}
    = \uu_k \uu_k^\top$
    for $k=1,2,\ldots, i$,
    that
    $ \PP_{\UU,1:i+1} \PP_{\UU,i+1:d_1} = \uu_{i+1} \uu_{i+1}^\top $,
    and
    that
    $ \PP_{\UU,i+2:d_1} \PP_{\UU,i+1:d_1} = \PP_{\UU,i+2:d_1} $.
    This shows the induction step
    and we have shown that the displayed equation holds for all $i=0,1,\ldots, d_1-1$.

    Thus, we obtain for $i=d_1-1$ that
    \begin{equation*}
        \LL_{\UU,d_1-1}
        =
        \sum_{k=1}^{d_1-1} \innerproduct{\LL_{\UU},\uu_k \uu_k^\top} \uu_k \uu_k^\top
        +
        \PP_{\UU,d_1:d_1} \LL_{\UU} \PP_{\UU,d_1:d_1}
        =
        \sum_{k=1}^{d_1} \innerproduct{\LL_{\UU},\uu_k \uu_k^\top} \uu_k \uu_k^\top.
    \end{equation*}
    This proves the claim for $\LL_{\UU,d_1-1}$.
    The proof for $\LL_{\VV,d_2-1}$ is analogous.
\end{proof}
The next lemma shows that after
arriving at a Sylvester equation
where both $\LL_{\UU}$ and $\LL_{\VV}$ are diagonal
in the bases
$ \{ \uu_1, \ldots, \uu_{d_1} \} $
and
$ \{ \vv_1, \ldots, \vv_{d_2} \} $,
respectively,
we can give an explicit expression for the solution of the Sylvester equation.
\begin{lemma}[Explicit Harmonic-Mean Sylvester Equation Solution] \label{lemma:sylvester_solution_explicit}\mbox{}\\
    Let $\ZZ \in \Rdd$ be an arbitrary matrix
    with SVD given by 
    $ \ZZ = \sum_{i=1}^d \sigma_i (\ZZ) \uu_i \vv_i^{\top} $,
    where $\sigma_1(\ZZ) \ge \sigma_2(\ZZ) \ge \ldots \ge \sigma_d(\ZZ) \ge 0$
    are the singular values of $\ZZ$.
    Here,
    $\{ \uu_1, \ldots, \uu_{d_1} \} \subset \R^{d_1}$
    and
    $\{ \vv_1, \ldots, \vv_{d_2} \} \subset \R^{d_2}$
    are orthonormal bases of 
    $\R^{d_1}$ and $\R^{d_2}$,
    consisting
    of left and right singular vectors of $\ZZ$, respectively.
    Let $ \widetilde{\LL}_{\UU} \in \R^{d_1 \times d_1}, \widetilde{\LL}_{\VV} \in \R^{d_2 \times d_2}$
    be positive definite matrices
    which are diagonal in the bases $\{ \uu_1, \ldots, \uu_{d_1} \}$
    and $\{ \vv_1, \ldots, \vv_{d_2} \}$, respectively.
    Then, the unique solution $\WW \in \Rdd$
    of the Sylvester equation
    \begin{equation*}
        \widetilde{\LL}_{\UU} \WW + \WW \widetilde{\LL}_{\VV} = 2 \ZZ
    \end{equation*}
    is given by
    \begin{equation*}
        \WW
        =
        \sum_{k=1}^d
        \frac{2 \sigma_k (\ZZ)}{
            \innerproduct{ \widetilde{ \LL}_{\UU}, \uu_k \uu_k^\top}
            +
            \innerproduct{\widetilde{\LL}_{\VV}, \vv_k \vv_k^\top}
        }
        \uu_k \vv_k^{\top}.
    \end{equation*}
\end{lemma}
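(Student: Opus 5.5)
Uniqueness of the solution is already settled: both $\widetilde{\LL}_{\UU}$ and $\widetilde{\LL}_{\VV}$ are positive definite, so the Sylvester equation has exactly one solution by the same argument used in the proof of \Cref{lemma:sylvester_equation_weight_operator}, namely \citet[Theorem VII.2.3]{Bhatia1997MatrixAnalysis}. It therefore suffices to verify that the proposed matrix $\WW$ satisfies the equation.

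First, record the diagonal structure. Write
\begin{equation*}
\widetilde{\LL}_{\UU} = \sum_{k=1}^{d_1} a_k \uu_k \uu_k^\top
\quad \text{and} \quad
\widetilde{\LL}_{\VV} = \sum_{k=1}^{d_2} b_k \vv_k \vv_k^\top,
\end{equation*}
where $a_k = \innerproduct{\widetilde{\LL}_{\UU}, \uu_k\uu_k^\top} > 0$ and $b_k = \innerproduct{\widetilde{\LL}_{\VV}, \vv_k\vv_k^\top} > 0$. These coefficients are well defined because the matrices are diagonal in the respective bases, and they are positive by positive definiteness. Consequently $\widetilde{\LL}_{\UU}\uu_k = a_k\uu_k$ and $\vv_k^\top \widetilde{\LL}_{\VV} = b_k \vv_k^\top$, where the second identity uses symmetry of $\widetilde{\LL}_{\VV}$.

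Next, plug in the candidate $\WW = \sum_{k=1}^d \frac{2\sigma_k(\ZZ)}{a_k+b_k}\uu_k\vv_k^\top$, which has a positive denominator in every term. We obtain
\begin{equation*}
\widetilde{\LL}_{\UU}\WW + \WW\widetilde{\LL}_{\VV}
= \sum_{k=1}^d \frac{2\sigma_k(\ZZ)}{a_k+b_k}(a_k+b_k)\,\uu_k\vv_k^\top
= 2\sum_{k=1}^d \sigma_k(\ZZ)\uu_k\vv_k^\top = 2\ZZ.
\end{equation*}
By uniqueness, this candidate is the solution.

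There is no real obstacle here. The only points needing care are that the sum runs over $k \le d$, with basis vectors beyond index $d$ simply not appearing in $\ZZ$, and that the diagonal coefficients coincide with the stated inner products $\innerproduct{\widetilde{\LL}_{\UU},\uu_k\uu_k^\top}$ and $\innerproduct{\widetilde{\LL}_{\VV},\vv_k\vv_k^\top}$.
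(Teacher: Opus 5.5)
Your proposal is correct and matches the paper's proof: uniqueness via \citet[Theorem VII.2.3]{Bhatia1997MatrixAnalysis} from positive definiteness, then direct verification by expanding $\widetilde{\LL}_{\UU}$ and $\widetilde{\LL}_{\VV}$ in the singular bases of $\ZZ$ and checking that the candidate returns $2\ZZ$. One small remark: $\vv_k^\top \widetilde{\LL}_{\VV} = b_k \vv_k^\top$ already follows from the diagonal expansion $\widetilde{\LL}_{\VV} = \sum_j b_j \vv_j \vv_j^\top$ in an orthonormal basis, so no separate appeal to symmetry is needed.
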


\begin{proof}
Uniqueness follows again from \citet[Theorem VII.2.3]{Bhatia1997MatrixAnalysis}
since $ \widetilde{ \LL}_{\UU}$ and $\widetilde{\LL}_{\VV}$ are positive definite.
To show that the given expression for $\WW$ is indeed the solution of the Sylvester equation,
we compute
\begin{align*}
    & \widetilde{\LL}_{\UU} \WW + \WW \widetilde{\LL}_{\VV}\\
    =&
    \left(
        \sum_{k=1}^{d_1}
        \innerproduct{ \widetilde{ \LL}_{\UU}, \uu_k \uu_k^\top} \uu_k \uu_k^\top
    \right)
    \left(
        \sum_{j=1}^d
        \frac{2 \sigma_j (\ZZ)}{
            \innerproduct{ \widetilde{\LL}_{\UU}, \uu_j \uu_j^\top}
            +
            \innerproduct{ \widetilde{\LL}_{\VV}, \vv_j \vv_j^\top}
        }
        \uu_j \vv_j^{\top}
    \right)\\
    &+
    \left(
        \sum_{j=1}^d
        \frac{2 \sigma_j (\ZZ)}{
            \innerproduct{ \widetilde{\LL}_{\UU}, \uu_j \uu_j^\top}
            +
            \innerproduct{\widetilde{\LL}_{\VV}, \vv_j \vv_j^\top}
        }
        \uu_j \vv_j^{\top}  
     \right)
    \left(
        \sum_{k=1}^{d_2}
        \innerproduct{ \widetilde{\LL}_{\VV}, \vv_k \vv_k^\top} \vv_k \vv_k^\top
    \right)\\
    =&
    \sum_{k=1}^d
    \frac{2 \sigma_k (\ZZ)\innerproduct{\widetilde{\LL}_{\UU}, \uu_k \uu_k^\top}}{
        \innerproduct{\widetilde{\LL}_{\UU}, \uu_k \uu_k^\top}
        +
        \innerproduct{\widetilde{\LL}_{\VV}, \vv_k \vv_k^\top}
    }
    \uu_k \vv_k^{\top}+
    \sum_{k=1}^d
    \frac{2 \sigma_k (\ZZ)\innerproduct{\widetilde{\LL}_{\VV}, \vv_k \vv_k^\top}}{
        \innerproduct{\widetilde{\LL}_{\UU}, \uu_k \uu_k^\top}
        +
        \innerproduct{\widetilde{\LL}_{\VV}, \vv_k \vv_k^\top}
    }
    \uu_k \vv_k^{\top}\\
    =&
    \sum_{k=1}^d
    2 \sigma_k (\ZZ)
    \uu_k \vv_k^{\top}
    =
    2 \ZZ.
\end{align*}
In the first equation we have used that the matrices $\widetilde{\LL}_{\UU}$ and $\widetilde{\LL}_{\VV}$
are diagonal in the bases $\{ \uu_1, \ldots, \uu_{d_1} \}$
and $\{ \vv_1, \ldots, \vv_{d_2} \}$, respectively.
We have shown the desired equation,
which completes the proof.
\end{proof}
Now we have all the ingredients in place to prove Lemma \ref{lemma:lower_bound_weighted_inner_product},
the lower bound on the weighted inner product
$\innerproduct{\ZZ, W_{\XX, \varepsilon} (\ZZ)}$.
\begin{proof}[Proof of Lemma \ref{lemma:lower_bound_weighted_inner_product}]
In the following,
we assume that $\ZZ \in \Rdd$ is arbitrary but fixed.
Then we have seen in Lemma \ref{lemma:sylvester_equation_weight_operator}
that the weight matrix 
$\WW_0 = W_{\XX, \varepsilon} (\ZZ)$
as defined in Lemma \ref{lemma:sylvester_equation_weight_operator}
is the unique solution of the Sylvester equation
\begin{equation*}
    \LL_{\UU} \WW_0 + \WW_0 \LL_{\VV} = 2 \ZZ.
\end{equation*}

Now we denote the SVD of $\ZZ$ by 
$\ZZ = \f{U}_{\ZZ} \diag(\mmuu) \f{V}^{\top}_{\ZZ}$,
where $\mmuu = (\sigma_1(\ZZ), \ldots, \sigma_d(\ZZ)) \in \R^d$ contains the singular values of $\ZZ$.
We denote by $\uu_1, \ldots, \uu_{d_1}$ the columns of $\f{U}_{\ZZ}$
and by $\vv_1, \ldots, \vv_{d_2}$ the columns of $\f{V}_{\ZZ}$.
We define the orthogonal projection matrices by
\begin{align*}
    \PUU{i:j} := \sum_{k=i}^{j} \uu_k \uu_k^{\top}
    \quad
    \text{for }
    1 \le i \le j \le d_1,\\
    \PVV{i:j} := \sum_{k=i}^{j} \vv_k \vv_k^{\top}
    \quad
    \text{for }
    1 \le i \le j \le d_2.
\end{align*}
Now recall that $d=\min  (d_1, d_2)$
and
$ D= \max (d_1, d_2) $.
Next,
we define the operators
\begin{align*}
    \mathcal{C}_{\UU,i} (\AAA_1) 
    &:= 
    \begin{cases}
    \PUU{1:i} \AAA_1 \PUU{1:i} + \PUU{i+1:d_1} \AAA_1 \PUU{i+1:d_1}
    \quad
    &\text{for }
    1 \le i < d_1,\\
    \AAA_1
    \quad
    &\text{for }
    d_1 \le i < D
    \end{cases}\\
    \mathcal{C}_{\VV,i} (\AAA_2)
    &:= 
    \begin{cases}
    \PVV{1:i} \AAA_2 \PVV{1:i} + \PVV{i+1:d_2} \AAA_2 \PVV{i+1:d_2}
    \quad
    &\text{for }
    1 \le i < d_2,\\
    \AAA_2
    \quad
    &\text{for }
    d_2 \le i < D
    \end{cases}
\end{align*}
for matrices $\AAA_1 \in \R^{d_1 \times d_1}$ and $\AAA_2 \in \R^{d_2 \times d_2}$.
Set $\LL_{\UU,0} := \LL_{\UU}$
and $\LL_{\VV,0} := \LL_{\VV}$.
We define recursively 
\begin{align*}
 \LL_{\UU,i} := \mathcal{C}_{\UU,i} ( \LL_{\UU, i-1} )
 \quad
 &\text{ for } i=1,2,\ldots, D-1,\\
 \LL_{\VV,i} := \mathcal{C}_{\VV,i} ( \LL_{\VV, i-1} ) 
\quad
&\text{ for } i=1,2,\ldots, D-1.
\end{align*}
Note that since $\LL_{\UU,0} = \LL_{\UU}$ and $\LL_{\VV,0} = \LL_{\VV}$
are positive definite,
it follows by construction
that $\LL_{\UU,i}$ and $\LL_{\VV,i}$ are positive definite for all $i = 0 ,\ldots, D-1$ as well since the pinching operators
$\mathcal{C}_{\UU,i} (\cdot)$
and
$\mathcal{C}_{\VV,i} (\cdot)$
preserve positive definiteness.

By construction of the projection matrices
$\PUU{1:i}, \PUU{i+1:d_1}, \PVV{1:i}, \PVV{i+1:d_2}$
it holds that
\begin{equation*}
    \PUU{1:i} \ZZ \PVV{i+1:d_2} = \f{0},
    \quad
    \PUU{i+1:d_1} \ZZ \PVV{1:i} = \f{0}
\end{equation*}
for all $i=1,2,\ldots, d-1$.
Thus, we can apply the previous Lemma \ref{lemma:majorization_property_sylvester}
for $1 \le i \leq d-1$.
Now consider the scenario $d \leq i <D$.
If $D=d_1$, we aim to apply Lemma \ref{lemma:majorization_property_sylvester}
with $\PP_{\VV}= \Id$ and $\PP_{\UU}=\PUU{1:i}$.
It follows that $ \PP_{\UU} \ZZ \PP_{\VV,\bot} =\f{0}$ since $ \PP_{\VV,\bot} = \f{0}$.
Moreover, we have $ \PP_{\UU,\bot} \ZZ \PP_{\VV} = \PUU{i+1:D} \ZZ  = \f{0}$,
where we have used $\PP_{\VV} =\Id$ in the first step and the SVD of $\ZZ$ in the second step.
This shows that we can apply \Cref{lemma:majorization_property_sylvester} if $D=d_1$.
If $D=d_2$, we can argue analogously.
Thus, we can also apply \Cref{lemma:majorization_property_sylvester}
in the case $d \le i <D$ both in the case $D=d_1$ and $D=d_2$.

This implies that for $i=1,2,\ldots, D-1$ the Sylvester equation
\begin{equation}\label{eq:sylvester:iteration}
    \LL_{\UU,i} \WW_i + \WW_i \LL_{\VV,i} = 2 \ZZ
\end{equation}
has a unique solution $\WW_i \in \Rdd$
and that
\begin{equation*}
    \innerproduct{ \WW_{i-1} , \ZZ }
    \ge
    \innerproduct{ \WW_i , \ZZ }.
\end{equation*}
It follows that
\begin{equation}\label{eq:inequality_chain}
    \langle \WW_0 , \ZZ \rangle
    \ge
    \langle \WW_1 , \ZZ \rangle
    \ge
    \langle \WW_2 , \ZZ \rangle
    \ge
    \ldots
    \ge
    \langle \WW_{D-1} , \ZZ \rangle.
\end{equation}
Note that by construction we have that
$\LL_{\UU,D-1} = \LL_{\UU,d_1-1} $ 
and $ \LL_{\VV,D-1} = \LL_{\VV,d_2-1}$.
By Lemma \ref{lemma:iterative_pinching},
we have that
\begin{equation*}
    \LL_{\UU,d_1-1} = \sum_{k=1}^{d_1} \innerproduct{ \LL_{\UU}, \uu_k \uu_k^\top } \uu_k \uu_k^{\top},
    \quad
    \LL_{\VV,d_2-1} = \sum_{k=1}^{d_2} \innerproduct{ \LL_{\VV}, \vv_k \vv_k^\top } \vv_k \vv_k^{\top}.
\end{equation*}
Thus,
for $i=D-1$
the solution $\WW_{D-1}$ of the Sylvester equation \eqref{eq:sylvester:iteration}
can be computed
explicitly
and we obtain that 
\begin{equation*}
    \WW_{D-1} 
    = \sum_{k=1}^d
    \frac{2 \sigma_k (\ZZ)}{ 
        \innerproduct{ \LL_{\UU}, \uu_k \uu_k^\top } 
        +
        \innerproduct{ \LL_{\VV}, \vv_k \vv_k^\top }
    }
    \uu_k \vv_k^{\top}.
\end{equation*}
It follows that 
\begin{equation*}
    \innerproduct{ \WW_{D-1} , \ZZ }
    =
    \sum_{k=1}^d
    \frac{2 \sigma_k^2 (\ZZ)}{ 
        \innerproduct{ \LL_{\UU}, \uu_k \uu_k^\top } 
        +
        \innerproduct{ \LL_{\VV}, \vv_k \vv_k^\top }
    }.
\end{equation*}
Combining this equation with 
the inequality chain \eqref{eq:inequality_chain}
and with $\WW_0 = \W{\XX}{\varepsilon} (\ZZ)$,
we have shown that
\begin{equation*}
    \innerproduct{ \W{\XX}{\varepsilon} (\ZZ), \ZZ }
    \ge
    \sum_{k=1}^d
    \frac{2 \sigma_k^2 (\ZZ)}{ 
        \innerproduct{ \LL_{\UU}, \uu_k \uu_k^\top } 
        +
        \innerproduct{ \LL_{\VV}, \vv_k \vv_k^\top }
    }.
\end{equation*}
This completes the proof.
\end{proof}

%% file: proofs_power_means.tex
\subsection{Proof of \Cref{thm:power_means_majorize} (Optimality of Harmonic Mean Quadratic Model)} \label{sec:proofs:power_means:optimality}
We now prove the optimality result \Cref{thm:power_means_majorize} of the quadratic model induced by the harmonic-mean weight operator, which has been presented in \Cref{sec:majorization:optimality}.

\subsubsection{Proof of \Cref{lem:power_mean_loewner} (Monotonicity of Power Mean Weight Operators)}
We begin with proving the Loewner ordering and domination properties of \Cref{lem:power_mean_loewner}, which is a tool to understand optimality of the quadratic models defined by power mean-induced weight operators.

\begin{proof}[Proof of \Cref{lem:power_mean_loewner}]
	We want to show that
	\begin{align*}
	\quad
	\langle \f{Z},W^{(q)}_{\f{X},\varepsilon}(\f{Z})\rangle_F
	\le
	\langle \f{Z},W^{(q')}_{\f{X},\varepsilon}(\f{Z})\rangle_F
	\ \ \text{for all }\f{Z}\in\R^{d_1\times d_2},
	\end{align*}
	where $W^{(q)}_{\f{X},\varepsilon}$ and $W^{(q')}_{\f{X},\varepsilon}$ are the power mean-induced weight operators \cref{eq:power:mean:weight:operator} associated with power means of order $q$ and $q'$ with $q\leq q'$. Fix $\f{Z}\in\R^{d_1\times d_2}$ and set $\widetilde{\f{Z}}:=\UU_{\XX}^\top \f{Z}\VV_{\XX}\in\R^{d_1 \times d_2}$, with $\UU_{\XX} \in \R^{d_1 \times d_1}$ and $\VV_{\XX} \in \R^{d_2 \times d_2}$ being the left and right singular vector matrices of the singular value decomposition of $\XX$. By orthogonality of $\UU_{\XX}$ and $\VV_{\XX}$ and the definition of $W^{(q)}_{\f{X},\varepsilon}$,
	it holds that 
	\begin{align*}
	\langle \f{Z},W^{(q)}_{\f{X},\varepsilon}(\f{Z})\rangle_F
	=&
	\langle 
		\ZZ , 
		\UU_{\XX}\left[ \f{H}^{(q)}_{\ssigma,\varepsilon}
		\circ \bigl(\UU_{\XX}^\top \ZZ \VV_{\XX}  \bigr) \right] \VV_{\XX}^\top
	\rangle_F
	=
	\langle 
		\UU_{\XX}^\top \ZZ \VV_{\XX} , 
		\f{H}^{(q)}_{\ssigma,\varepsilon} \circ \bigl(\UU_{\XX}^\top \ZZ \VV_{\XX}  \bigr) 
			\rangle_F\\
	=&
	\bigl\langle \widetilde{\f{Z}},\ \f{H}^{(q)}_{\ssigma,\varepsilon}\circ \widetilde{\f{Z}}\bigr\rangle_F
	=
	\sum_{i=1}^{d_1} \sum_{j=1}^{d_2} 
	\bigl(\f{H}^{(q)}_{\ssigma,\varepsilon}\bigr)_{ij}\,\widetilde{\f{Z}}_{ij}^{\,2}.
    \end{align*}
	Analogously, we have
	\begin{equation*}
		\innerproduct{
		\f{Z},W^{(q')}_{\f{X},\varepsilon}(\f{Z})}
		=
		\sum_{i=1}^{d_1} \sum_{j=1}^{d_2} 
		\bigl(\f{H}^{(q')}_{\ssigma,\varepsilon}\bigr)_{ij}\,\widetilde{\f{Z}}_{ij}^{\,2}.
	\end{equation*}
	As $q \le q'$, it follows from the monotonicity of power means \citep[Section III.3, Theorem 1]{Bullen03} that for all $i \in [d_1]$ and $j \in [d_2]$,
	$\mathcal{M}_q(\widetilde{\sigma}_i,\widetilde{\sigma}_j)\le \mathcal{M}_{q'}(\widetilde{\sigma}_i,\widetilde{\sigma}_j)$ and, hence, $\bigl(\f{H}^{(q)}_{\ssigma,\varepsilon}\bigr)_{ij}\le \bigl(\f{H}^{(q')}_{\ssigma,\varepsilon}\bigr)_{ij}$. Thus, we obtain the desired Loewner ordering since
	\begin{align*}
	\langle \f{Z},W^{(q)}_{\f{X},\varepsilon}(\f{Z})\rangle_F
	=
	\sum_{i=1}^{d_1} \sum_{j=1}^{d_2}
	\bigl(\f{H}^{(q)}_{\ssigma,\varepsilon}\bigr)_{ij}\,\widetilde{\f{Z}}_{ij}^{\,2}
	\le
	\sum_{i=1}^{d_1} \sum_{j=1}^{d_2}
	\bigl(\f{H}^{(q')}_{\ssigma,\varepsilon}\bigr)_{ij}\,\widetilde{\f{Z}}_{ij}^{\,2}
	=
	\langle \f{Z},W^{(q')}_{\f{X},\varepsilon}(\f{Z})\rangle_F.
	\end{align*}
    Finally, the domination \cref{eq:pm:quad:model:ordering} of the quadratic model $Q_\varepsilon^{(q)}(\cdot \mid\XX)$ by $Q_\varepsilon^{(q')}(\cdot \mid\XX)$ follows from the Loewner ordering and the formula \cref{eq:QZX:equality} as $\langle \XX,W^{(q)}_{\XX,\varepsilon}(\XX)\rangle_F$ and $\langle \XX,W^{(q')}_{\XX,\varepsilon}(\XX)\rangle_F$ both coincide since $\diag(\f{H}^{(q)}_{\ssigma,\varepsilon}) = \diag( \f{H}^{(q')}_{\ssigma,\varepsilon})$.
\end{proof}

\subsubsection{Proofs of \Cref{lemma:2nd_spectral_derivative,lem:necessary} (Second Order Necessary Condition for Majorization)} \label{sec:proofs:power_means:necessary}
Given the established results, the main burden for proving \Cref{thm:power_means_majorize} is to show that majorization is violated when $q < -1$. As we see below, this can be shown via a second order analysis of the smoothed objective $\mathcal{J}_{\varepsilon}$. To this end, we first define  the \emph{symmetrization operator} $S:\R^{d \times d} \to \R^{d \times d}$ and the \emph{antisymmetrization operator} $T:\R^{d \times d} \to \R^{d \times d}$ by
\begin{equation} \label{eq:sym_and_antisym}
	S(\f{Z}) := \frac{1}{2}(\f{Z} + \f{Z}^\top) \qquad \text{and} \qquad
	T(\f{Z}) := \frac{1}{2}(\f{Z} - \f{Z}^\top)
\end{equation}
for any $\f{Z} \in \R^{d \times d}$, which enables us to state an explicit formula for the Hessian of the spectral function $F$ in \Cref{lemma:2nd_spectral_derivative}.

\begin{lemma}[Hessian of Spectral Functions] \label{lemma:2nd_spectral_derivative}
	Let $f:\R_{\geq 0} \to \R$ be a differentiable function 
	with $L$-Lipschitz first derivative $f'$ such that $f'$ is right differentiable at $0$ and $f'(0)=0$.
	Then the spectral function $F: \Rdd \to \R$, $F(\X)= \sum_{i=1}^d f(\sigma_i(\X))$ is differentiable with $L$-Lipschitz gradients $\nabla F(\XX) \in \Rdd$ and furthermore almost everywhere twice differentiable.
    Moreover, 
	$F$ is twice differentiable at $\X \in \Rdd$ 
	if and only if $f$ is twice differentiable at all $\sigma_{1}(\XX),\ldots, \sigma_{\mind}(\XX)$. 
	In that case, if additionally $d_1 \leq d_2$, 
	the Hessian $\nabla^2 F(\X)$ of $F$ at $\X$ is given by
	\begin{equation} \label{eq:spectralfunction:Hessianformula}
		\nabla^2 F(\X)(\f{Z}) = \UU_{\XX} \begin{bmatrix}\f{H}_1 \circ S( \UU_{\XX}^\top \f{Z} \VV_{\XX,1})
			+ 	
			\f{H}_2 \circ T(\UU_{\XX}^\top \f{Z} \VV_{\XX,1})
			& \f{H}_3 \circ (\UU_{\XX}^\top \f{Z} \VV_{\XX,2})\end{bmatrix}
		\VV_{\XX}^\top
	\end{equation}
	for any $\f{Z} \in \Rdd$, 
	where $\X$ has the SVD $\X = \UU_{\XX} \diag(\ssigma) \VV_{\XX}^\top$ 
	with $\UU_{\XX} \in \R^{d_1 \times d_1}$, 
	$\VV_{\XX}= \begin{bmatrix} \VV_{\XX,1} & \VV_{\XX,2} \end{bmatrix} 
	\in \R^{d_2 \times d_2}$, $\VV_{\XX,1} \in \R^{d_2 \times d}$, $\VV_{\XX,2} \in \R^{d_2 \times (d_2-d)}$, 
	$T$ and $S$ are as in \eqref{eq:sym_and_antisym}, and $\f{H}_1 \in \R^{d \times d}$ is such that for $i,j \in [d]$,
	\[
	(\f{H}_1)_{ij} =
	\begin{cases}
		\frac{f'(\sigma_i) - f'(\sigma_j)}{\sigma_i - \sigma_j} & \text{ if } \sigma_i \neq \sigma_j, \\
		f''(\sigma_i) & \text{ if } \sigma_i = \sigma_j,
	\end{cases}
	\]
	the matrix $\f{H}_2 \in \R^{d \times d}$ is such that for $i,j \in [d]$,
	\[
	(\f{H}_2)_{ij} =
	\begin{cases}
		\frac{f'(\sigma_i) + f'(\sigma_j)}{\sigma_i + \sigma_j} & \text{ if } \sigma_i + \sigma_j \neq 0 , \\
		f''(\sigma_i) & \text{ if } \sigma_i = \sigma_j = 0,
	\end{cases}
	\]
	and $\f{H}_3 \in \R^{d_1 \times (d_2-d)}$ is such that for $i \in [d_1], j \in [d_2-d]$,
	\[
	(\f{H}_3)_{ij} =
	\begin{cases}
		f'(\sigma_i)/ \sigma_i & \text{ if } \sigma_i \neq 0 , \\
		f''(\sigma_i) & \text{ if } \sigma_i = 0.	
	\end{cases}
	\]
	\end{lemma}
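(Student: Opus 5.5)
The plan is to reduce the statement to the classical theory of spectral functions of symmetric matrices, namely the Lewis--Sendov description of the Hessian of isotropic eigenvalue functions. The reduction goes through the symmetric dilation. For $\XX \in \Rdd$ with $d_1 \le d_2$, set
\[
\mathcal{D}(\XX) := \begin{bmatrix} \f{0} & \XX \\ \XX^\top & \f{0} \end{bmatrix} \in \R^{(d_1+d_2)\times(d_1+d_2)}.
\]
Its eigenvalues are $\pm\sigma_i(\XX)$ for $i \in [d]$, together with $d_2-d$ zeros. Let $g:\R\to\R$ be the even extension $g(t) := f(|t|)$. Because $f'(0)=0$ and $f'$ is Lipschitz, $g$ is $C^1$ with an $L$-Lipschitz derivative, and it is twice differentiable at $t \neq 0$ exactly when $f$ is twice differentiable at $|t|$. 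At $0$, right differentiability of $f'$ gives $g''(0) = f''(0)$, the right derivative.

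We then have
\[
F(\XX) = \tfrac12 \sum_{\ell} g(\lambda_\ell(\mathcal{D}(\XX))) - \tfrac{d_2-d}{2}\, g(0).
\]
Since $\mathcal{D}$ is linear and, after the factor $\sqrt2$, isometric, every regularity statement transfers from the symmetric spectral function $G(\f{Y}) = \sum_\ell g(\lambda_\ell(\f{Y}))$ to $F$.

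\textbf{Step 1: first-order claims.} For symmetric spectral functions, the gradient is $\nabla G(\f{Y}) = \f{Q}\diag(g'(\lambda))\f{Q}^\top$. Since $g'$ is $L$-Lipschitz, this is the operator function $g'(\f{Y})$, and the map $\f{Y}\mapsto g'(\f{Y})$ is $L$-Lipschitz in Frobenius norm. I will take this as known (it is the commutator estimate for Lipschitz functions in Frobenius norm) and pull it back through $\mathcal{D}$. This yields $\nabla F(\XX) = \UU_{\XX}\diag(f'(\sigma_i))\VV_{\XX}^\top$, which is $L$-Lipschitz. Almost-everywhere twice differentiability then follows from Rademacher's theorem applied to the Lipschitz map $\nabla F$.

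\textbf{Step 2: second-order characterization.} I will invoke the Lewis--Sendov theorem: $G$ is twice differentiable at $\f{Y}$ if and only if $g$ is twice differentiable at every eigenvalue of $\f{Y}$. The Hessian is then
\[
\nabla^2 G(\f{Y})(\f{E}) = \f{Q}\bigl(\f{\Gamma}\circ(\f{Q}^\top\f{E}\f{Q})\bigr)\f{Q}^\top,
\]
where $\f{\Gamma}$ is the divided-difference matrix of $g'$ at the eigenvalues, with $g''$ on coinciding eigenvalues. Because the eigenvalues of $\mathcal{D}(\XX)$ are $\{\pm\sigma_i\}\cup\{0\}$ and $g$ is even, the differentiability condition on $g$ is equivalent to that on $f$ at the $\sigma_i$. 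This gives the if-and-only-if statement.

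\textbf{Step 3: computing the formula.} The eigenvectors of $\mathcal{D}(\XX)$ are $\tfrac{1}{\sqrt2}(\uu_i;\pm\vv_i)$ for $i\in[d]$ and $(\f{0};\vv_j)$ for $j>d$, where $\uu_i$ and $\vv_i$ are the columns of $\UU_{\XX}$ and $\VV_{\XX}$. I will express the Hessian in this eigenbasis, apply it to $\mathcal{D}(\f{Z})$, and read off the top-right block. The four index pairs produce four kinds of divided differences:
\begin{itemize}
\item $(\sigma_i,\sigma_j)$ and $(-\sigma_i,-\sigma_j)$ give $\frac{f'(\sigma_i)-f'(\sigma_j)}{\sigma_i-\sigma_j}$, using that $g'$ is odd;
\item $(\sigma_i,-\sigma_j)$ gives $\frac{f'(\sigma_i)+f'(\sigma_j)}{\sigma_i+\sigma_j}$;
\item the pairs involving the extra $d_2-d$ zero eigenvalues give $\frac{f'(\sigma_i)-0}{\sigma_i-0}$.
\end{itemize}
Combining the $\pm$ eigenvectors splits $\UU_{\XX}^\top\f{Z}\VV_{\XX,1}$ into the pieces $(\f{M}_{ij}\pm\f{M}_{ji})/2$, that is, into $S$ and $T$. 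The divided differences then become $\f{H}_1$ and $\f{H}_2$ respectively, while the block $\UU_{\XX}^\top\f{Z}\VV_{\XX,2}$ receives $\f{H}_3$. The degenerate cases ($\sigma_i=\sigma_j$, or $\sigma_i=\sigma_j=0$, or $\sigma_i=0$) receive $f''$ by the confluent divided-difference convention, with $g''(0)=f''(0)$. The index range $i\in[d_1]$ for $\f{H}_3$ is harmless because $d_1=d$.

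\textbf{Main obstacle.} Step 3 is routine but error-prone, particularly tracking the factor $\tfrac12$ and the signs in the $\pm$ eigenvector combinations. The step I expect to be hardest is justifying the use of Lewis--Sendov when $g$ is only $C^{1,1}$ and is twice differentiable merely at the relevant points, including at $0$ through a one-sided derivative. I would handle this by citing the pointwise version of the theorem, which requires only twice differentiability of $g$ at the eigenvalues. The evenness of $g$ ensures that the one-sided second derivative of $f$ at $0$ gives a genuine two-sided $g''(0)$. If this citation is problematic, the fallback is to verify the second-order expansion of $\nabla F$ directly, using first-order perturbation of the SVD.
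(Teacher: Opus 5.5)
Your route is correct but differs from the paper's. The paper proves the lemma by citation directly in the singular-value setting:
\begin{itemize}
\item the gradient formula comes from Lewis--Sendov's analysis of singular-value functions;
\item the Frobenius $L$-Lipschitz bound on $\nabla F$ comes from the operator-Lipschitz theorem of Andersson et al.\ for the singular value functional calculus;
\item a.e.\ twice differentiability follows from Rademacher's theorem;
\item both the twice-differentiability criterion and the formula \eqref{eq:spectralfunction:Hessianformula} come from Yang's Theorem 2.2.6 together with Noferini's Corollary 3.10.
\end{itemize}
You instead pass through the Jordan--Wielandt dilation $\mathcal{D}(\XX)$ and the even extension $g(t)=f(|t|)$. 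This needs only classical symmetric-matrix results: Lewis's gradient formula, the Frobenius-Lipschitz estimate for operator functions, and the pointwise Lewis--Sendov Hessian theorem. You then derive the formula by hand.

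The citation route is shorter. Yours is more self-contained and explains the structure of the statement. The eigenvalue pairs $(\pm\sigma_i,\pm\sigma_j)$ produce $\f{H}_1$ on $S(\cdot)$ and $\f{H}_2$ on $T(\cdot)$. The $d_2-d$ extra zero eigenvalues produce the $\f{H}_3$ block. The hypotheses $f'(0)=0$ and right differentiability of $f'$ at $0$ are precisely what make $g$ a $C^{1,1}$ function that is twice differentiable at $0$, so the zero eigenvalues impose no additional condition.

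Your bookkeeping checks out. We have $\nabla F(\XX)=[\nabla G(\mathcal{D}(\XX))]_{12}$, and the eigenvectors are $\tfrac{1}{\sqrt2}(\uu_i;\pm\vv_i)$ and $(\f{0};\vv_j)$. With these, the factors $\tfrac14$ combine to exactly $\f{H}_1\circ S+\f{H}_2\circ T$ and $\f{H}_3\circ(\UU_{\XX}^\top\f{Z}\VV_{\XX,2})$, with the correct confluent conventions. For symmetric $\f{A}$, the bound $\|[\f{A}]_{12}\|_F\le\|\f{A}\|_F/\sqrt2$ yields the Lipschitz constant $L$.

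One point needs a patch: pulling back through $\mathcal{D}$ only gives the ``if'' direction of the criterion. Twice differentiability of $F=\tfrac12\, G\circ\mathcal{D}-c$ at $\XX$ controls $G$ only along the subspace $\mathcal{D}(\Rdd)$. So the ``only if'' half of Lewis--Sendov cannot be invoked, and your sentence that every regularity statement transfers is too strong.

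The fix is direct. The map $x\mapsto F(\UU_{\XX}\diag(x)\VV_{\XX}^\top)=\sum_{j} g(x_j)$, $x\in\R^{d}$, is the restriction of $F$ to a linear subspace. Hence it is twice differentiable at $x=\ssigma$. Its gradient $(g'(x_j))_j$ must then be differentiable at $\ssigma$. This forces $f$ to be twice differentiable at each $\sigma_j>0$; at $\sigma_j=0$ this holds by hypothesis.
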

\begin{proof}[Proof of \Cref{lemma:2nd_spectral_derivative}]
	Since $f$ is differentiable, the spectral function $F$	is differentiable with gradient 
	\[
	\nabla F(\X) = \f{U} \diag\big(\nabla \hat{f}(\sigma(\X))\big) \f{V}^\top 
	= \f{U}\diag\left(\big(f'(\sigma_i)\big)_{i=1}^d\right) \f{V}^\top
	\]
	at any $\XX \in \Rdd$ \citep[Section 7]{LewisSendov}, as $F$ can be written as the composition $\hat{f} \circ \sigma$, where $\hat{f}: \R^{\mind} \to \R$ with $\hat{f}(\sigma) = \sum_{i=1}^{\mind} f(\sigma_i)$. Maps of the form $\XX \mapsto \nabla F(\XX)$ are also called \emph{non-Hermitian Loewner operators} \citep{Loewner34,SunSun08,Ding18} or \emph{generalized matrix functions} \citep{HawkinsBenIsrael73,Noferini17}, and \citet[Theorem 1.1]{andersson2016operator} implies that since $f'$ is $L$-Lipschitz, $\X \mapsto \nabla F(\X)$ is $L$-Lipschitz with respect to the Frobenius norm. Rademacher's theorem then implies that $\nabla F$ is almost everywhere differentiable (with respect to the Lebesgue measure).
	
	Furthermore, it follows from \citet[Theorem 2.2.6]{Yang09} that $f$ is twice differentiable at $\sigma=\sigma(\X)$ if and only if $F$ is twice differentiable at $\X$, and the formula for the Hessian \cref{eq:spectralfunction:Hessianformula} at the points of twice differentiability $\X$ is due to \citet[Theorem 2.2.6]{Yang09} and \citet[Corollary 3.10]{Noferini17}.
\end{proof}
The machinery of \Cref{lemma:2nd_spectral_derivative} can now be used to state a second order condition that is necessary for majorization to hold. The key observation is that such a condition can already be extracted from perturbations of $\XX$ that are confined to a two-dimensional singular block. Along such perturbations, the smoothed nuclear norm objective $\mathcal{J}_{\varepsilon}$ reduces, up to an additive constant, to a spectral function on $\R^{2 \times 2}$, so that \Cref{lemma:2nd_spectral_derivative} only needs to be applied in dimension two and only the two singular values defining the block enter the argument.

\begin{lemma}[Second Order Necessary Condition for Majorization] \label{lem:necessary}
	Let $\varepsilon > 0$ and let $\XX\in\R^{d_1\times d_2}$ have the full singular value decomposition $\XX=\UU_{\XX} \diag(\ssigma) \VV_{\XX}^\top$ of \Cref{def:optimalweightoperator},
	with singular values
	$\sigma_1 \geq \ldots \geq \sigma_d \geq 0$.
	Let $Q_\varepsilon(\cdot\mid \XX)$ be the quadratic model
	\begin{equation*}
	Q_\varepsilon(\XX+\DDelta\mid \XX)
	=
	\mathcal{J}_{\varepsilon}(\XX)
	+\langle \nabla \mathcal{J}_{\varepsilon}(\XX),\DDelta\rangle_F
	+\tfrac12\langle \DDelta, \W{\XX}{\varepsilon} (\DDelta) \rangle_F,
	\end{equation*}
	where $\W{\XX}{\varepsilon}$ is the weight operator \cref{eq:W:operator:action} associated to a
	weight operator core matrix $\f{H}_{\ssigma, \varepsilon} \in \Rdd$ that is symmetric in the sense that
	$(\f{H}_{\ssigma, \varepsilon})_{ij} = (\f{H}_{\ssigma, \varepsilon})_{ji}$ for all $i,j \in [d]$.
	Assume that $Q_\varepsilon(\cdot\mid \XX)$ majorizes $\mathcal{J}_{\varepsilon}$ locally around $\XX$,
	i.e.,
	there exists $\delta>0$ such that
	\begin{equation} \label{eq:local:majorization}
	\mathcal{J}_{\varepsilon}(\XX+\DDelta)\le Q_\varepsilon(\XX+\DDelta\mid \XX)
	\end{equation}
	for all $\DDelta\in\R^{d_1\times d_2}$ with
	$\fronorm{\DDelta} \le \delta$.
	Then for all $i,j \in [d]$
	such that $i \neq j$, $\sigma_i>\varepsilon$ and $\sigma_j>\varepsilon$,
	it holds that
	\begin{equation} \label{eq:necessary:condition}
		(\f{H}_{\ssigma, \varepsilon})_{ij}\ge  \frac{ 2}{\sigma_i+\sigma_j}.
	\end{equation}
	No assumption is made on the remaining singular values $\sigma_k$ with $k \in [d] \setminus \{i,j\}$; in particular, $\mathcal{J}_{\varepsilon}$ is not assumed to be twice differentiable at $\XX$.
\end{lemma}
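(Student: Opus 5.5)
The plan is to test the local majorization \cref{eq:local:majorization} along a one-parameter family of perturbations that only rotates the singular pair $(i,j)$ of $\XX$ and leaves all other singular values unchanged. Then $\mathcal{J}_{\varepsilon}$ reduces, up to a constant, to the nuclear norm of a $2\times 2$ block, whose expansion we can compute by hand. Concretely, let $\uu_i,\uu_j$ and $\vv_i,\vv_j$ denote the $i$th and $j$th columns of $\UU_{\XX}$ and $\VV_{\XX}$. For $t\in\R$ we set
\[
\DDelta_t := t\,\bigl(\uu_i\vv_j^\top - \uu_j\vv_i^\top\bigr),
\]
so that $\fronorm{\DDelta_t}=\sqrt2|t|\le\delta$ for $|t|$ small. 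This is the antisymmetric direction in the $(i,j)$ block, which is exactly where the $\f{H}_2$-type curvature $\frac{f'(\sigma_i)+f'(\sigma_j)}{\sigma_i+\sigma_j}$ of \Cref{lemma:2nd_spectral_derivative} appears. We do not need twice differentiability at $\XX$, because we compute directly along this line.

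\textbf{Step 1 (reduction to a $2\times2$ block).} In the coordinates $\UU_{\XX}^\top(\cdot)\VV_{\XX}$, the matrix $\XX+\DDelta_t$ is block diagonal. Its block on the index pair $(i,j)$ is $\begin{bmatrix}\sigma_i & t\\ -t & \sigma_j\end{bmatrix}$, and its remaining diagonal entries are the untouched $\sigma_k$, $k\notin\{i,j\}$. Hence the singular values of $\XX+\DDelta_t$ are the $\sigma_k$, $k\neq i,j$, together with the two singular values $s_1(t)\ge s_2(t)$ of this block. For a $2\times2$ matrix with positive determinant, $s_1+s_2=\sqrt{(a+e)^2+(b-c)^2}$; here the determinant is $\sigma_i\sigma_j+t^2>0$, so $s_1(t)+s_2(t)=\sqrt{(\sigma_i+\sigma_j)^2+4t^2}$. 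By continuity, $s_1(t),s_2(t)>\varepsilon$ for $|t|$ small, since $\sigma_i,\sigma_j>\varepsilon$, and therefore $j_\varepsilon$ acts as the identity on them. This gives
\[
\mathcal{J}_{\varepsilon}(\XX+\DDelta_t)=\mathcal{J}_{\varepsilon}(\XX)+\sqrt{(\sigma_i+\sigma_j)^2+4t^2}-(\sigma_i+\sigma_j)=\mathcal{J}_{\varepsilon}(\XX)+\frac{2t^2}{\sigma_i+\sigma_j}+O(t^4).
\]

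\textbf{Step 2 (evaluate the model).} By \Cref{lemma:2nd_spectral_derivative}, or directly from \citet{LewisSendov}, $\nabla\mathcal{J}_{\varepsilon}(\XX)=\UU_{\XX}\diag(j_\varepsilon'(\sigma_k))\VV_{\XX}^\top$ is diagonal in the singular bases. Since $\DDelta_t$ is purely off-diagonal there, $\langle\nabla\mathcal{J}_{\varepsilon}(\XX),\DDelta_t\rangle_F=0$. By \cref{eq:W:operator:action},
\[
\langle\DDelta_t,\W{\XX}{\varepsilon}(\DDelta_t)\rangle_F=t^2\bigl((\f{H}_{\ssigma,\varepsilon})_{ij}+(\f{H}_{\ssigma,\varepsilon})_{ji}\bigr)=2t^2(\f{H}_{\ssigma,\varepsilon})_{ij},
\]
using the assumed symmetry of the core matrix. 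Thus $Q_\varepsilon(\XX+\DDelta_t\mid\XX)=\mathcal{J}_{\varepsilon}(\XX)+t^2(\f{H}_{\ssigma,\varepsilon})_{ij}$.

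\textbf{Step 3 (conclude).} Inserting Steps 1 and 2 into \cref{eq:local:majorization}, dividing by $t^2$, and letting $t\to0$ gives $(\f{H}_{\ssigma,\varepsilon})_{ij}\ge 2/(\sigma_i+\sigma_j)$, which is \cref{eq:necessary:condition}.

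The only delicate point is Step 1. There we must verify that the perturbation does not interact with the other singular values, including indices beyond $d$ in the rectangular case; block diagonality in the full SVD bases handles this. We must also justify the closed form for the sum of the singular values of the $2\times2$ block. If needed, this follows from $(s_1+s_2)^2=\fronorm{\cdot}^2+2|\det|$. No assumption on $\sigma_k$ for $k\neq i,j$ enters, because those singular values contribute a constant to $\mathcal{J}_{\varepsilon}$ along the whole curve.
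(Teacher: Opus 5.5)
Your proof is correct and follows the paper's skeleton. You use the same antisymmetric direction $\uu_i\vv_j^\top-\uu_j\vv_i^\top$ and the same reduction of $\mathcal{J}_{\varepsilon}(\XX+\DDelta_t)-\mathcal{J}_{\varepsilon}(\XX)$ to a two-dimensional spectral function of $\diag(\sigma_i,\sigma_j)$ plus a skew-symmetric perturbation. As in the paper, the linear term vanishes, and $\langle\DDelta_t,\W{\XX}{\varepsilon}(\DDelta_t)\rangle_F=2t^2(\f{H}_{\ssigma,\varepsilon})_{ij}$ by symmetry of the core.

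The difference lies in how you obtain the curvature $2/(\sigma_i+\sigma_j)$. The paper invokes \Cref{lemma:2nd_spectral_derivative}, the divided-difference Hessian formula for spectral functions taken from the cited literature. It uses this to certify that the $2\times 2$ spectral function is twice differentiable at $\diag(\sigma_i,\sigma_j)$ and to read off the $\f{H}_2$-entry, and then concludes from $\mathcal{G}''(0)\le 0$ at the local maximum $t=0$. You instead compute $\mathcal{J}_{\varepsilon}$ along the whole curve in closed form via $(s_1+s_2)^2=\fronorm{\cdot}^2+2\lvert\det(\cdot)\rvert$, then divide the exact inequality by $t^2$ and pass to the limit.

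Your route is fully elementary and needs no second-order differentiability theory. It also gives the exact gap between $Q_\varepsilon(\cdot\mid\XX)$ and $\mathcal{J}_{\varepsilon}$ along the curve for finite $t$, not only its leading order. The paper's route identifies the threshold as the Hessian entry $(j_\varepsilon'(\sigma_i)+j_\varepsilon'(\sigma_j))/(\sigma_i+\sigma_j)$. That identification carries over unchanged to other spectral penalties, such as the Schatten-$p$ surrogates alluded to after \Cref{thm:power_means_majorize}. Your closed form relies on $j_\varepsilon$ being the identity near $\sigma_i$ and $\sigma_j$, and would have to be redone for a nonlinear penalty.
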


\begin{proof}[Proof of \Cref{lem:necessary}]
	Let $i,j \in [d]$ satisfy the assumptions of the lemma.
	Interchanging $i$ and $j$ if necessary (which changes neither the assumptions nor the conclusion, as $\f{H}_{\ssigma,\varepsilon}$ is symmetric on $[d] \times [d]$) we may assume that $i < j$, so that $\sigma_i \geq \sigma_j$.

	\emph{Step 1: Reduction to a $2 \times 2$ singular block.}
	Let $\uu_1, \ldots, \uu_{d_1}$ and $\vv_1, \ldots, \vv_{d_2}$ denote the columns of $\UU_{\XX}$ and of $\VV_{\XX}$, respectively, and define the perturbation direction
	\begin{equation} \label{eq:antisym:direction}
		\ZZ := \tfrac{1}{\sqrt{2}} \left( \uu_i \vv_j^\top - \uu_j \vv_i^\top \right) \in \Rdd,
		\qquad \text{which satisfies } \fronorm{\ZZ} = 1 .
	\end{equation}
	Then $\ZZ = \UU_{\XX} \f{E}_{ij} \VV_{\XX}^\top$ with
	$\f{E}_{ij} := \tfrac{1}{\sqrt{2}} ( \f{e}_i \f{e}_j^\top - \f{e}_j \f{e}_i^\top ) \in \Rdd$,
	where $\f{e}_i \in \R^{d_1}$ and $\f{e}_j \in \R^{d_2}$ in the first summand and $\f{e}_j \in \R^{d_1}$ and $\f{e}_i \in \R^{d_2}$ in the second summand, which is well defined since $i,j \in [d] = [\mind]$.
	By orthogonality of $\UU_{\XX}$ and $\VV_{\XX}$, the matrix $\XX + t \ZZ$ has, for each $t \in \R$, the same singular values as $\diag(\ssigma) + t \f{E}_{ij} \in \Rdd$.
	The latter matrix coincides with $\diag(\ssigma)$ outside the rows and columns with indices $i$ and $j$, and is therefore, after applying the permutations that move the indices $i$ and $j$ to the positions $1$ and $2$, block diagonal with blocks
	\begin{equation*}
		\f{B} + t \f{N},
		\quad \text{ where }
		\f{B} := \diag( \sigma_i, \sigma_j ) \in \R^{2 \times 2}
		\text{ and }
		\f{N} := \tfrac{1}{\sqrt{2}} \begin{pmatrix} 0 & 1 \\ -1 & 0 \end{pmatrix} \in \R^{2 \times 2},
	\end{equation*}
	and the rectangular diagonal matrix carrying the remaining singular values $(\sigma_k)_{k \in [d] \setminus \{i,j\}}$.
	As permutation matrices are orthogonal, the singular values of $\XX + t \ZZ$ are, as a multiset, the union of $\{ \sigma_k : k \in [d] \setminus \{i,j\} \}$ and of the two singular values of $\f{B} + t \f{N}$.
	Writing $F_2 : \R^{2 \times 2} \to \R$, $F_2(\f{A}) := j_\varepsilon( \sigma_1 (\f{A}) ) + j_\varepsilon( \sigma_2 (\f{A}) )$, for the spectral function of $j_\varepsilon$ in dimension two, we conclude that
	\begin{equation} \label{eq:block:reduction}
		\mathcal{J}_{\varepsilon}(\XX + t \ZZ) - \mathcal{J}_{\varepsilon}(\XX)
		=
		F_2 ( \f{B} + t \f{N} ) - F_2 ( \f{B} )
		\qquad \text{for all } t \in \R,
	\end{equation}
	since the summands $j_\varepsilon(\sigma_k)$ with $k \in [d] \setminus \{i,j\}$ do not depend on $t$ and cancel.
	In particular, only the two singular values $\sigma_i$ and $\sigma_j$ enter the argument below.

	\emph{Step 2: Application of \Cref{lemma:2nd_spectral_derivative} in dimension two.}
	It is straightforward to check that $j_\varepsilon$ satisfies the assumptions of \Cref{lemma:2nd_spectral_derivative}: it is differentiable with $\tfrac{1}{\varepsilon}$-Lipschitz derivative $j_\varepsilon'$, which is right differentiable at $0$ and satisfies $j_\varepsilon'(0) = 0$.
	We apply \Cref{lemma:2nd_spectral_derivative} with $f = j_\varepsilon$ to the spectral function $F = F_2$ with $d_1 = d_2 = d = 2$ at the point $\f{B}$, whose singular value decomposition is $\f{B} = \f{I}_2 \diag( \sigma_i, \sigma_j ) \f{I}_2^\top$ since $\sigma_i \geq \sigma_j \geq 0$.
	As $\sigma_i > \varepsilon$ and $\sigma_j > \varepsilon$,
	the function $j_{\varepsilon}$ is twice differentiable at $\sigma_i$ and at $\sigma_j$ with
	$j_\varepsilon'(\sigma_i)=j_\varepsilon'(\sigma_j)=1$
	and
	$j_\varepsilon''(\sigma_i)=j_\varepsilon''(\sigma_j)=0$,
	so that \Cref{lemma:2nd_spectral_derivative} guarantees that $F_2$ is twice differentiable at $\f{B}$.
	Since $d_1 = d_2$ in this application, the block $\VV_{\f{B},2}$ is empty and the matrix $\f{H}_3$ does not occur in \cref{eq:spectralfunction:Hessianformula}, while the off-diagonal entries of $\f{H}_1, \f{H}_2 \in \R^{2 \times 2}$ are given by
	\begin{align*}
		\bigl(\f{H}_1\bigr)_{12} = \bigl(\f{H}_1\bigr)_{21}
		&=
		\begin{cases}
			\frac{j_\varepsilon'(\sigma_i) - j_\varepsilon'(\sigma_j)}{\sigma_i - \sigma_j} = 0,
			& \text{ if } \sigma_i \neq \sigma_j, \\
			j_\varepsilon''(\sigma_i) = 0,
			& \text{ if } \sigma_i = \sigma_j,
		\end{cases}\\
		\bigl(\f{H}_2\bigr)_{12} = \bigl(\f{H}_2\bigr)_{21}
		&= \frac{j_\varepsilon'(\sigma_i) + j_\varepsilon'(\sigma_j)}{\sigma_i + \sigma_j} = \frac{2}{\sigma_i + \sigma_j}.
	\end{align*}
	As $\f{N}$ is skew-symmetric, we have $S(\f{N}) = \f{0}$ and $T(\f{N}) = \f{N}$ for the operators \cref{eq:sym_and_antisym}, and therefore the Hessian formula \cref{eq:spectralfunction:Hessianformula} yields that
	$\nabla^2 F_2 (\f{B}) (\f{N}) = \f{H}_1 \circ S(\f{N}) + \f{H}_2 \circ T(\f{N}) = \f{H}_2 \circ \f{N}$
	and, consequently, that
	\begin{equation} \label{eq:hessian:block:value}
		\innerproduct{ \f{N}, \nabla^2 F_2 (\f{B}) (\f{N}) }
		=
		\innerproduct{ \f{N}, \f{H}_2 \circ \f{N} }
		=
		\frac{2}{\sigma_i+\sigma_j} \fronorm{\f{N}}^2
		=
		\frac{2}{\sigma_i+\sigma_j}.
	\end{equation}

	\emph{Step 3: First and second derivative of $\mathcal{G}$ at $0$.}
	Define the one-dimensional function
	\[
	\mathcal{G}(t):=\mathcal{J}_{\varepsilon}(\XX+t\ZZ)-Q_\varepsilon(\XX+t\ZZ\mid \XX)
	\]
	for $t\in\R$.
	By definition of $Q_\varepsilon$ and by the reduction \cref{eq:block:reduction},
	we have
	\begin{align} \label{eq:G:blockform}
	\mathcal{G}(t) &= F_2 ( \f{B} + t \f{N} ) - F_2 ( \f{B} )
	- t \langle \nabla \mathcal{J}_{\varepsilon}(\XX),\ZZ\rangle_F
	- \frac{t^2}{2}\langle \ZZ, \W{\XX}{\varepsilon}(\ZZ)\rangle_F,
	\end{align}
	and, by construction, it holds that $\mathcal{G}(0)=0$.
	By \Cref{lemma:2nd_spectral_derivative}, the spectral function $F_2$ is differentiable with gradient
	$\nabla F_2 (\f{B}) = \f{I}_2 \diag \bigl( j_\varepsilon'(\sigma_i), j_\varepsilon'(\sigma_j) \bigr) \f{I}_2^\top = \f{I}_2$
	\citep[cf.][Section 7]{LewisSendov},
	so that $t \mapsto F_2 ( \f{B} + t \f{N} )$ is differentiable with derivative
	$\innerproduct{ \nabla F_2 ( \f{B} + t \f{N} ), \f{N} }$.
	Since $\f{N}$ has a vanishing diagonal, this derivative equals $\innerproduct{ \f{I}_2, \f{N} } = 0$ at $t = 0$.
	Together with \cref{eq:block:reduction} and the differentiability of $\mathcal{J}_{\varepsilon}$ (see \Cref{sec:appendix:lipschitz:gradients}), this implies that
	\begin{equation*}
		\innerproduct{ \nabla \mathcal{J}_{\varepsilon}(\XX), \ZZ }
		=
		\frac{d}{dt}\Big\vert_{t=0} \mathcal{J}_{\varepsilon}(\XX + t \ZZ)
		= 0,
	\end{equation*}
	and hence, by \cref{eq:G:blockform}, that $\mathcal{G}'(0) = 0$.
	Moreover, twice differentiability of $F_2$ at $\f{B}$ means that $\nabla F_2$ is differentiable at $\f{B}$ with derivative $\nabla^2 F_2 (\f{B})$, so that $t \mapsto \innerproduct{ \nabla F_2 ( \f{B} + t \f{N} ), \f{N} }$ is differentiable at $t = 0$ with derivative $\innerproduct{ \nabla^2 F_2 (\f{B}) (\f{N}), \f{N} }$.
	Therefore, $\mathcal{G}$ is twice differentiable at $t=0$, and \cref{eq:G:blockform} and \cref{eq:hessian:block:value} give
	\begin{equation*}
		\mathcal{G}''(0)
		=
		\innerproduct{ \f{N}, \nabla^2 F_2 (\f{B}) (\f{N}) }
		-
		\innerproduct{ \ZZ, \W{\XX}{\varepsilon}(\ZZ) }
		=
		\frac{2}{\sigma_i+\sigma_j}
		-
		\innerproduct{ \ZZ, \W{\XX}{\varepsilon}(\ZZ) }.
	\end{equation*}

	\emph{Step 4: Conclusion.}
	Since $\fronorm{t \ZZ} = |t|$ by \cref{eq:antisym:direction}, the local majorization assumption \cref{eq:local:majorization} implies that $\mathcal{G}(t) \leq 0 = \mathcal{G}(0)$ for all $|t| \leq \delta$.
	Thus, the function $\mathcal{G}$ has a local maximum at $0$, and therefore $\mathcal{G}''(0)\le 0$.
	By Step 3, this means that
	\begin{equation}\label{eq:necessary_condition_majorization}
		\innerproduct{
			\ZZ, \W{\XX}{\varepsilon}(\ZZ)
		}
		\geq
		\frac{2}{\sigma_i + \sigma_j}.
	\end{equation}
	On the other hand, we have $\UU_{\XX}^\top \ZZ \VV_{\XX} = \f{E}_{ij}$, so that the definition \cref{eq:W:operator:action} of the weight operator and the symmetry of $\f{H}_{\ssigma,\varepsilon}$ yield
	\begin{align*}
		\innerproduct{
			\ZZ, \W{\XX}{\varepsilon}(\ZZ)
		}
		=
		\innerproduct{ \f{E}_{ij}, \f{H}_{\ssigma, \varepsilon} \circ \f{E}_{ij} }
		=
		\frac{(\f{H}_{\ssigma, \varepsilon})_{ij} + (\f{H}_{\ssigma, \varepsilon})_{ji}}{2}
		=
		(\f{H}_{\ssigma, \varepsilon})_{ij}.
	\end{align*}
	Inserting this identity into \cref{eq:necessary_condition_majorization} gives \cref{eq:necessary:condition}.
    This completes the proof.
\end{proof}
\subsubsection{Proof of \Cref{thm:power_means_majorize}}
It now remains to prove \Cref{thm:power_means_majorize}, the statement about optimality of the harmonic-mean weight operator as defined by \cref{eq:harmonic:mean} and \Cref{def:optimalweightoperator}.
\begin{proof}[{Proof of \Cref{thm:power_means_majorize}}]
We note that the statement that $Q_\varepsilon^{(q)}(\cdot\mid\XX)$ majorizes $\mathcal{J}_{\varepsilon}$ for any $q \geq -1$ is a direct consequence of the harmonic mean majorization result \Cref{thm:majorization} and the monotonicity of power mean weight operators (\Cref{lem:power_mean_loewner}). For the reverse direction, we need to show that global majorization of $\mathcal{J}_{\varepsilon}$ is violated when $q < -1$. Let $q<-1$ be arbitrary, and let $i,j \in [d]$ with $i \neq j$ be indices such that $\sigma_i>\varepsilon$, $\sigma_j>\varepsilon$, and $\sigma_i\neq\sigma_j$, which exist by the assumption of \Cref{thm:power_means_majorize}. Since $\sigma_i > \varepsilon$ and $\sigma_j > \varepsilon$, we have $\widetilde{\sigma}_i = \sigma_i^{-1}$ and $\widetilde{\sigma}_j = \sigma_j^{-1}$ in \cref{eq:power:mean:core:matrix}, and $\sigma_i \neq \sigma_j$ implies that $\widetilde{\sigma}_i \neq \widetilde{\sigma}_j$.
As $q < -1$, we obtain by the definition \cref{eq:power:mean:core:matrix} of $\f{H}^{(q)}_{\ssigma,\varepsilon}$ and by the monotonicity of power means, which is strict at distinct arguments \citep[Section III.3, Theorem 1]{Bullen03}, that
\begin{align*}
(\f{H}^{(q)}_{\ssigma,\varepsilon})_{ij}
&= \mathcal{M}_q\!\Big(\sigma_i^{-1},\sigma_j^{-1}\Big)
<
\mathcal{M}_{-1}\!\Big(\sigma_i^{-1},\sigma_j^{-1}\Big)
=
\frac{2}{\sigma_i+\sigma_j}.
\end{align*}
Since power mean core matrices \cref{eq:power:mean:core:matrix} are symmetric, this means that the necessary condition \cref{eq:necessary:condition} of \Cref{lem:necessary} is violated for the index pair $(i,j)$.
By contraposition, \Cref{lem:necessary} therefore implies that $Q_\varepsilon^{(q)}(\cdot\mid \XX)$ does not majorize $\mathcal{J}_{\varepsilon}$ locally around $\XX$, i.e., for every $\delta > 0$, there exists a $\DDelta \in \Rdd$ with $\fronorm{\DDelta} \leq \delta$ such that
\begin{align*}
	\mathcal{J}_{\varepsilon}(\XX+\DDelta) > Q_\varepsilon^{(q)}(\XX+\DDelta\mid \XX).
\end{align*}
In particular,
$Q_{\varepsilon}^{(q)}(\cdot\mid \XX)$ does not majorize $\mathcal{J}_{\varepsilon}$ globally.
This completes the proof for $q<-1$.
\end{proof}

%% file: proofs_globalconvergencep1.tex
\subsection{Proofs of \Cref{mainresult:lowrank,mainresult:approximatelowrank} (Global Linear Convergence)}
\subsubsection{Preliminaries and General Proof Strategy} 
We start by recalling the following lemma 
which states that the NSP induces a reverse triangle inequality.
\begin{lemma}[\citeauthor{Fornasier11}, \citeyear{Fornasier11}, Lemma 6.6]\label{lemma:NSPl1min}
	Assume that the measurement operator $\mathcal{A}: \mathbb{R}^{d_1 \times d_2} \longrightarrow \R^m $ satisfies the NSP of order $r$ of \Cref{def:NSP:statement} for some $\eta_r<1$. Then  for all $\f{Z},  \XX  \in \mathbb{R}^{d_1 \times d_2}$ such that $\mathcal{A} \left( \f{Z} \right) =  \mathcal{A} \left(\XX \right)  $ it holds that
	
	\begin{equation*}
	\nucnorm{\f{Z} - \XX }  \leq \frac{1+\eta_r}{1-\eta_r}\left(    \nucnorm{\XX} -  \nucnorm{\f{Z}}  + 2\besterrNuc{\f{Z}}{r}\right).
	\end{equation*}
	
\end{lemma}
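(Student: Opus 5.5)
The plan is to reduce the statement to the NSP of order $r$ applied to the kernel element $\NN := \ZZ - \XX \in \ker(\mathcal{A})$, combined with singular value inequalities for sums of matrices.

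\emph{Step 1: a nuclear norm lower bound for $\nucnorm{\ZZ}$.} Let $\XX_r$ denote a best rank-$r$ approximation of $\XX$, so $\nucnorm{\XX - \XX_r} = \besterrNuc{\XX}{r}$. Let $\PP_U$ and $\PP_V$ be the orthogonal projections onto the column and row spaces of $\XX_r$. Decompose $\NN = \NN_1 + \NN_2$ with $\NN_2 := (\f{I}-\PP_U)\NN(\f{I}-\PP_V)$. The matrix $\NN_1$ has rank at most $2r$, and the standard pinching argument (the decomposability of the nuclear norm, $\nucnorm{\XX_r + \NN_2} = \nucnorm{\XX_r} + \nucnorm{\NN_2}$) gives
\begin{equation*}
\nucnorm{\ZZ} \ge \nucnorm{\XX_r} + \nucnorm{\NN_2} - \nucnorm{\NN_1} - \besterrNuc{\XX}{r}.
\end{equation*}

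\emph{Step 2: the ordered-singular-value version.} Rather than work with $\NN_1,\NN_2$, I will use the sharper inequality
\begin{equation*}
\nucnorm{\XX + \NN} \ge \sum_{i=1}^r \sigma_i(\XX) - \sum_{i=1}^r \sigma_i(\NN) + \sum_{i=r+1}^d \sigma_i(\NN) - \sum_{i=r+1}^d \sigma_i(\XX).
\end{equation*}
This follows from Mirsky's inequality $\sum_i |\sigma_i(\XX+\NN) - \sigma_i(\XX)| \le \nucnorm{\NN}$, or more precisely from the majorization $\sum_i |\sigma_i(\XX) - \sigma_i(\NN)| \le \nucnorm{\XX + \NN}$ (Lidskii/Mirsky for $\XX$ and $-\NN$). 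This is the step I expect to need the most care, since it involves choosing the right singular value inequality. My first try is $\nucnorm{\XX+\NN} \ge \sum_i |\sigma_i(\XX) - \sigma_i(\NN)|$, split as $\ge \sigma_i(\XX) - \sigma_i(\NN)$ for $i \le r$ and $\ge \sigma_i(\NN) - \sigma_i(\XX)$ for $i>r$. Writing $S := \sum_{i\le r}\sigma_i(\NN)$ and $T := \sum_{i>r}\sigma_i(\NN)$, this gives
\begin{equation*}
\nucnorm{\ZZ} \ge \nucnorm{\XX} - 2\besterrNuc{\XX}{r} - S + T.
\end{equation*}

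\emph{Step 3: applying the NSP.} By the NSP \cref{eq:NSP:definition}, $S \le \eta_r T$. Hence $T - S \ge (1-\eta_r) T$, and
\begin{equation*}
\nucnorm{\NN} = S + T \le (1+\eta_r) T \le \frac{1+\eta_r}{1-\eta_r}(T - S).
\end{equation*}
From Step 2, $T - S \le \nucnorm{\ZZ} - \nucnorm{\XX} + 2\besterrNuc{\XX}{r}$. This is the stated inequality with the roles of $\ZZ$ and $\XX$ exchanged; since the lemma is symmetric under swapping names, it suffices to apply the argument to $\XX = \ZZ + (\XX - \ZZ)$ with base point $\ZZ$. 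Doing so yields
\begin{equation*}
\nucnorm{\ZZ-\XX} \le \frac{1+\eta_r}{1-\eta_r}\left(\nucnorm{\XX} - \nucnorm{\ZZ} + 2\besterrNuc{\ZZ}{r}\right),
\end{equation*}
which is the claim.
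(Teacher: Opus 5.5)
Your proposal is correct and follows the standard argument behind \citet[Lemma 6.6]{Fornasier11}, which the paper cites without reproducing a proof: apply the Mirsky-type bound $\sum_i |\sigma_i(\ZZ)-\sigma_i(\NN)| \le \nucnorm{\ZZ+\NN}$ with base point $\ZZ$ and $\NN = \XX-\ZZ \in \ker(\mathcal{A})$, split it at index $r$, and then use the NSP in the form $\nucnorm{\NN} \le \frac{1+\eta_r}{1-\eta_r}\bigl(\sum_{i>r}\sigma_i(\NN)-\sum_{i\le r}\sigma_i(\NN)\bigr)$. Your Step 1 (the pinching decomposition into $\NN_1,\NN_2$) is never used and can be dropped.
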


In our proof we will need to relate several times the quantities 
$ \mathcal{J}_{\varepsilon} \left( \XX\right) - \nucnorm{\Xzero }  $, 
$\nucnorm{\XXk - \Xzero} $, and $\besterrNuc{\XXk}{r}$.
This will be achieved via the following inequality.
\begin{lemma}\label{lemma:epscontrol} Let $\Xzero,  \XX \in \mathbb{R}^{d_1 \times d_2}$. Assume that the measurement operator $\mathcal{A}: \mathbb{R}^{d_1 \times d_2} \longrightarrow \R^m $ satisfies the NSP of order $r$ of \Cref{def:NSP:statement} with constant $\eta_r <1$.
	Furthermore, assume $ \mathcal{A} \left(\Xzero\right) = \mathcal{A} \left( \XX \right)$ and $0 \le \varepsilon \le \frac{\besterrNuc{\XX}{r}}{d} $, where the boundary case $\varepsilon = 0$ is understood with the convention $\mathcal{J}_{0} = \nucnorm{\cdot}$.
	Then it holds that
	\begin{equation}\label{ineq:aux1}
	\frac{1-\eta_r}{1+\eta_r}  \nucnorm{\XX -\Xzero } -2  \besterrNuc{\Xzero}{r}  \le     \mathcal{J}_{\varepsilon} \left( \XX\right) - \nucnorm{\Xzero }    \le   \left( \frac{3}{2} + \eta_r \right)  \besterrNuc{\XX}{r}  .
	\end{equation}
\end{lemma}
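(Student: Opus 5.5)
The two inequalities in \cref{ineq:aux1} are proved separately; both rest on elementary scalar bounds for $j_\varepsilon$, combined with \Cref{lemma:NSPl1min} and the NSP.

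\emph{Lower bound.} The pointwise inequality $j_\varepsilon(\sigma)\ge|\sigma|$ holds because $\frac{\sigma^2}{2\varepsilon}+\frac{\varepsilon}{2}\ge|\sigma|$ by AM--GM, and it is trivial for $\varepsilon=0$. It gives $\mathcal{J}_\varepsilon(\XX)\ge\nucnorm{\XX}$. Apply \Cref{lemma:NSPl1min} with $\f{Z}=\Xzero$; this is allowed since $\mathcal{A}(\Xzero)=\mathcal{A}(\XX)$. It yields
\[
\nucnorm{\Xzero-\XX}\le\frac{1+\eta_r}{1-\eta_r}\bigl(\nucnorm{\XX}-\nucnorm{\Xzero}+2\besterrNuc{\Xzero}{r}\bigr).
\]
Multiplying by $\frac{1-\eta_r}{1+\eta_r}$ and using $\mathcal{J}_\varepsilon(\XX)\ge\nucnorm{\XX}$ gives the left inequality.

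\emph{Upper bound.} First, the scalar bound $j_\varepsilon(\sigma)\le|\sigma|+\varepsilon/2$ holds, since on $|\sigma|\le\varepsilon$ we have $\frac{\sigma^2}{2\varepsilon}\le\frac{|\sigma|}{2}\le|\sigma|$. Summing over $i\in[d]$ and using $d\varepsilon\le\besterrNuc{\XX}{r}$ gives
\[
\mathcal{J}_\varepsilon(\XX)\le\nucnorm{\XX}+\tfrac12\besterrNuc{\XX}{r}.
\]
It therefore suffices to show
\[
\nucnorm{\XX}-\nucnorm{\Xzero}\le(1+\eta_r)\besterrNuc{\XX}{r}.
\]
Set $\NN:=\XX-\Xzero\in\ker\mathcal{A}$. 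The plan is to compare $\nucnorm{\XX}$ with $\nucnorm{\Xzero}$ through a suitable rank-$r$ split. Let $\XX_r$ be the best rank-$r$ approximation of $\XX$, so $\nucnorm{\XX}=\nucnorm{\XX_r}+\besterrNuc{\XX}{r}$. The triangle inequality gives
\[
\nucnorm{\XX_r}\le\nucnorm{\Xzero}+\nucnorm{\XX_r-\Xzero},
\]
which is too lossy as it stands. Instead, I would use the subadditivity of the top-$r$ Ky Fan norm $\|\cdot\|_{(r)}:=\sum_{i\le r}\sigma_i(\cdot)$:
\[
\nucnorm{\XX_r}=\|\XX\|_{(r)}\le\|\Xzero\|_{(r)}+\|\NN\|_{(r)}\le\nucnorm{\Xzero}+\eta_r\sum_{i>r}\sigma_i(\NN),
\]
where the last step is the NSP \cref{eq:NSP:definition}. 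The remaining task is to bound the tail $\sum_{i>r}\sigma_i(\NN)$ by $\besterrNuc{\XX}{r}$ plus terms that cancel. Weyl/Mirsky-type tail subadditivity, $\besterrNuc{\NN}{2r}\le\besterrNuc{\XX}{r}+\besterrNuc{\Xzero}{r}$, does not directly give the order-$r$ tail.

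If $\Xzero$ has rank $r$, the tail bound could instead be approached by applying \Cref{lemma:NSPl1min} with the roles $\f{Z}=\XX$, $\XX=\Xzero$:
\[
\nucnorm{\NN}\le\frac{1+\eta_r}{1-\eta_r}\bigl(\nucnorm{\Xzero}-\nucnorm{\XX}+2\besterrNuc{\XX}{r}\bigr).
\]
Combined with $\sum_{i>r}\sigma_i(\NN)\le\nucnorm{\NN}$, this would give a bound of the form $\nucnorm{\XX}-\nucnorm{\Xzero}\le C_{\eta_r}\besterrNuc{\XX}{r}$. Getting the exact constant $1+\eta_r$, however, likely requires a sharper argument, presumably a direct reading of Fornasier et al.'s Lemma~6.6 proof. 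The cleanest case is $\nucnorm{\XX}\le\nucnorm{\Xzero}$, where the claim is immediate because the right-hand side is nonnegative. In the other case, the inequality $\nucnorm{\XX}-\nucnorm{\Xzero}\le\nucnorm{\NN}$ combined with the NSP in the form $\nucnorm{\NN}\le(1+\eta_r)\sum_{i>r}\sigma_i(\NN)$ reduces the problem to showing $\sum_{i>r}\sigma_i(\NN)\le\besterrNuc{\XX}{r}$ whenever $\nucnorm{\XX}>\nucnorm{\Xzero}$. Honestly, this tail comparison needs to be worked out.

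\emph{Main obstacle.} The hard step is this control of the tail of $\NN$ by the tail of $\XX$ with exactly the constant $1+\eta_r$. If the direct approach fails, I would accept the Ky-Fan route above and verify that it reproduces $\frac32+\eta_r$. The lower bound and the scalar $\varepsilon$-estimates are routine.
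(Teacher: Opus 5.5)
There is a genuine gap in your upper bound. Your lower bound and the scalar estimate $\mathcal{J}_\varepsilon(\XX)\le\nucnorm{\XX}+\tfrac12\besterrNuc{\XX}{r}$ are correct and match the paper. The remaining step $\nucnorm{\XX}-\nucnorm{\Xzero}\le(1+\eta_r)\besterrNuc{\XX}{r}$ is left open, and neither route you sketch closes it.

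Your reduction to ``$\sum_{i>r}\sigma_i(\NN)\le\besterrNuc{\XX}{r}$ whenever $\nucnorm{\XX}>\nucnorm{\Xzero}$'' targets a false statement. Take $r=1$ and let $\NN=\XX-\Xzero=\diag(-\tfrac32,1,1)$ span $\ker\mathcal{A}$; this satisfies the NSP with $\eta_1=\tfrac34$. With $\Xzero=\diag(\tfrac32,0,0)$ and $\XX=\diag(0,1,1)$ you get $\nucnorm{\XX}=2>\tfrac32=\nucnorm{\Xzero}$, but $\sum_{i>1}\sigma_i(\NN)=2>1=\besterrNuc{\XX}{1}$.

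Your Ky Fan fallback also falls short as stated. Closing it as you indicate, via $\sum_{i>r}\sigma_i(\NN)\le\nucnorm{\NN}$ and \Cref{lemma:NSPl1min}, only gives $\nucnorm{\XX}-\nucnorm{\Xzero}\le\frac{1+\eta_r+2\eta_r^2}{1+\eta_r^2}\besterrNuc{\XX}{r}$. That constant exceeds $1+\eta_r$ by $\frac{\eta_r^2(1-\eta_r)}{1+\eta_r^2}>0$, so it does not reproduce $\frac32+\eta_r$. It can be repaired by solving the NSP for the head, $\sum_{i\le r}\sigma_i(\NN)\le\frac{\eta_r}{1+\eta_r}\nucnorm{\NN}$, instead of discarding it.

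The missing observation is that no tail comparison is needed, and you already wrote down both ingredients. Set $\Delta:=\nucnorm{\XX}-\nucnorm{\Xzero}$.
\begin{itemize}
\item The reverse triangle inequality gives $\Delta\le\nucnorm{\NN}$.
\item \Cref{lemma:NSPl1min} with $\f{Z}=\XX$ and reference matrix $\Xzero$ gives $\frac{1-\eta_r}{1+\eta_r}\nucnorm{\NN}\le-\Delta+2\besterrNuc{\XX}{r}$.
\end{itemize}
The second bound holds for arbitrary $\Xzero$, because the tail term there belongs to $\f{Z}=\XX$. So your qualifier ``if $\Xzero$ has rank $r$'' is unnecessary. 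This matters: the lemma is stated, and later used in \Cref{mainresult:approximatelowrank}, without any rank assumption on $\Xzero$. Chaining the two bounds,
\[
\frac{1-\eta_r}{1+\eta_r}\,\Delta\le-\Delta+2\besterrNuc{\XX}{r},
\qquad\text{i.e.}\qquad
\frac{2}{1+\eta_r}\,\Delta\le 2\besterrNuc{\XX}{r},
\]
which is exactly $\Delta\le(1+\eta_r)\besterrNuc{\XX}{r}$. This is the paper's argument, and it covers the case $\varepsilon=0$ directly.
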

The proof of \Cref{lemma:epscontrol} is an improved version of the proof of \citet[Lemma B.1]{Kummerle-NeurIPS2021}, which is a corresponding lemma in the $\ell_1$-minimization scenario.
For the sake of completeness, we have included a proof below.
\begin{proof}[Proof of Lemma \ref{lemma:epscontrol}]
	It follows directly from the definition of $ \mathcal{J}_{\varepsilon} \left( \XX \right) $, see \cref{eq:smoothedell1:objective}, that $ \mathcal{J}_{\varepsilon} \left( \XX \right) \ge \nucnorm{\XX}$ for any matrix $\XX$. 
    This yields that
	\begin{align*}
	\mathcal{J}_{\varepsilon} \left(\XX \right) - \nucnorm{\Xzero} &\ge \nucnorm{\XX} - \nucnorm{\Xzero} \\
	&\ge 	\frac{1-\eta_r}{1+\eta_r}  \nucnorm{\XX -\Xzero }  -  2  \besterrNuc{\Xzero}{r},
	\end{align*}
	where in the second inequality we applied Lemma \ref{lemma:NSPl1min}. This proves the first inequality in~\eqref{ineq:aux1}.
	
	It remains to show the second inequality in \cref{ineq:aux1}. Assume now $\varepsilon > 0$.
	For that, we define $I := \{i \in [d]: \sigma_i \left(\XX \right) > \varepsilon \}$.
	It follows that
	\begin{align} 
	\mathcal{J}_{ \varepsilon} \left( \XX \right)  -\nucnorm{\Xzero } &= \sum_{i \in I} \sigma_{i} \left( \XX \right)     + \frac{1}{2} \sum_{i \in I^c} \left( \frac{\sigma_{i} \left( \XX \right) ^2}{\varepsilon}    + \varepsilon  \right)  -\nucnorm{\Xzero} \nonumber\\
	&=
	\nucnorm{\XX}
	+
	\sum_{i \in I^c}
	\left(
	\frac{\left(\varepsilon-\sigma_i(\XX)\right)^2}{2\varepsilon}
	\right)
	-\nucnorm{\Xzero}\nonumber \\
	&\le
	\nucnorm{\XX}
	+
	\frac{|I^c|\varepsilon}{2}
	-
	\nucnorm{\Xzero}\nonumber \\
	&\overleq{(a)}
	\nucnorm{\XX}
	+
	\frac{\besterrNuc{\XX}{r}}{2}
	-
	\nucnorm{\Xzero}.\label{eq:Jupperbd}
	\end{align}
	For inequality $(a)$ we used the assumption $\varepsilon \le \frac{\besterrNuc{\XX}{r}}{d}   $.
	Now note that
	\begin{align}
	\left(    \frac{1-\eta_r}{1+\eta_r}+1\right)\left( \nucnorm{\XX} -\nucnorm{\Xzero} \right)   &\overleq{(a)} \frac{1-\eta_r }{1+\eta_r } \nucnorm{  \XX - \Xzero} - \left(\nucnorm{\Xzero}  - \nucnorm{\XX} \right) \nonumber\\
	&\overleq{(b)}  \left(\nucnorm{\Xzero}  - \nucnorm{\XX}  + 2 \besterrNuc{\XX}{r}  \right) - \left(\nucnorm{\Xzero}  - \nucnorm{\XX} \right) \nonumber\\
	&\leq 2 \besterrNuc{\XX}{r} , \label{eq:caseeps0}
	\end{align}
	where inequality $(a)$ is the reverse triangle inequality and $(b)$ is again due to \Cref{lemma:NSPl1min}.
	By rearranging terms we obtain that 
	\begin{equation*}
	\nucnorm{\XX} -\nucnorm{ \Xzero }  \le \left(1+\eta_r\right) \besterrNuc{\XX}{r}.
	\end{equation*}
	We insert this into \cref{eq:Jupperbd}, which implies that
	\begin{align*}
	\mathcal{J}_{\varepsilon} \left(  \XX \right)  - \nucnorm{ \Xzero }
	& \le  \left(\frac{3}{2}+\eta_r\right)   \besterrNuc{\XX}{r}.
	\end{align*}
	This shows the second inequality in \cref{ineq:aux1}, which finishes the proof, noting that the case $\varepsilon=0$ follows directly from \cref{eq:caseeps0}.
\end{proof}
With \Cref{lemma:epscontrol} in place, we can outline the proof strategy for global linear convergence (\Cref{mainresult:lowrank} and \Cref{mainresult:approximatelowrank}) and local linear convergence with a faster rate (\Cref{thm:locallinearp1}) below.
While our proofs follow the proof strategy presented by \citet{Kummerle-NeurIPS2021}, there are several important differences.
This stems from the fact that for low-rank matrices there is no clear notion of support in contrast to sparse vectors.

Now recall that our goal is to show that $ \mathcal{J}_{\varepsilon_k}(\XXk) - \nucnorm{\Xzero}$ converges linearly to zero.
First, we set $\NNk := \Xzero - \XXk $. Then, we note that for all $t\in \R$ we have that
\begin{align}
\mathcal{J}_{\varepsilon_{k+1}}(\XXkplus)
&\le Q_{\varepsilon_k}(\XXk+t \NNk \mid \XXk) \nonumber \\
&=\mathcal{J}_{\varepsilon_k}(\XXk)
 +  t \, \bracing{=(a)}{ \innerproduct{ \nabla \mathcal{J}_{\varepsilon_k}(\XXk), \NNk  }} 
 + \frac{t^2}{2} 
 \bracing{=(b)}{ \innerproduct{ \NNk, \W{\XXk}{\varepsilon_k} (\NNk) }} 
 \label{proof_strategy}
\end{align}	
in the case that the quadratic model function $Q_{\varepsilon_k}(\cdot \mid \XXk)$ satisfies 
the majorization property of \cref{eq:majorization:inequality}, which we established for weight operators with weight operator core matrix $\f{H}_{\ssigma,\varepsilon_k}$ that corresponds to harmonic \cref{eq:harmonic:mean} in \Cref{thm:majorization}, and with power mean weights \cref{eq:power:mean:core:matrix} with $q \in [-1, \infty]$ in \Cref{thm:power_means_majorize}, and which is well-known in the literature for one-sided weights \cref{eq:leftsided:mean,eq:rightsided:mean}.

Thus, in these cases, in order to get an estimate for the decrease of $\mathcal{J}_{\varepsilon_{k+1}}(\XXkplus) $ made in the $(k+1)$th iteration, we can establish a (negative) upper bound for the (negative) term $(a)$ and a positive upper bound for term $(b)$.
After having established these bounds, we can optimize over $t$ such that the right-hand side in \eqref{proof_strategy} becomes minimal.
As it turns out, we will use the same estimate for term $(a)$ both for the global linear convergence proof and for the local linear convergence proof.
However, for term $(b)$ we will use different estimates, as $\XXk$ being in a neighborhood of $\Xzero$ allows us to derive sharper estimates of (b) than if we have no further information on $\XXk$---at least, if the weight operator $\W{\XXk}{\varepsilon_k}(\cdot)$ is defined with a harmonic-mean weight operator core matrix \cref{eq:harmonic:mean}.

The next lemma below deals with term $(a)$ in \eqref{proof_strategy}.
\begin{lemma}\label{lemma:linearterm}
	Recall that $ \besterrNuc{\XX}{r} = \sum_{i=r+1}^{d} \sigma_i (\XX) $. Assume that
\begin{equation*}
 \mathcal{A} \left(\Xzero \right) = \mathcal{A} \left(\XX \right),
\end{equation*}
that the measurement operator $\mathcal{A}: \mathbb{R}^{d_1 \times d_2} \longrightarrow \R^m $ has the NSP of order $r$ for some $\eta_r<1$ and that $\varepsilon \le \frac{\besterrNuc{\XX}{r}}{d}$.
Then it holds that
\begin{equation*}
\innerproduct{ \nabla \mathcal{J}_{\varepsilon}(\XX) ,\Xzero- \XX}  \le - \left( \frac{1-\eta_r}{1+\eta_r}    - 1/4  \right) \nucnorm{\XX -\Xzero} + \frac{9}{4} \besterrNuc{\Xzero}{r}.
\end{equation*}
\end{lemma}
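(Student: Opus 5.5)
The plan is to compute the gradient $\nabla \mathcal{J}_{\varepsilon}(\XX)$ explicitly and split the inner product $\innerproduct{\nabla \mathcal{J}_{\varepsilon}(\XX),\Xzero-\XX}$ into an $\Xzero$-part and an $\XX$-part. I then control the $\Xzero$-part by duality, the $\XX$-part by the smoothing condition $\varepsilon\le \besterrNuc{\XX}{r}/d$, and convert the resulting nuclear-norm gap into a distance via \Cref{lemma:NSPl1min}.

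\emph{Step 1: gradient and the $\Xzero$-part.} By the spectral-function gradient formula used in the proof of \Cref{lemma:2nd_spectral_derivative},
\[
\f{G}:=\nabla\mathcal{J}_\varepsilon(\XX)=\UU_{\XX}\diag\bigl(j_\varepsilon'(\sigma_i)\bigr)\VV_{\XX}^\top ,
\]
where $j_\varepsilon'(\sigma)=1$ for $\sigma>\varepsilon$ and $j_\varepsilon'(\sigma)=\sigma/\varepsilon$ otherwise. All these entries lie in $[0,1]$, so $\specnorm{\f{G}}\le 1$. Nuclear/spectral duality then gives $\innerproduct{\f{G},\Xzero}\le\nucnorm{\Xzero}$.

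\emph{Step 2: the $\XX$-part.} Let $I^c:=\{i\in[d]:\sigma_i(\XX)\le\varepsilon\}$. Directly,
\[
\innerproduct{\f{G},\XX}=\sum_{i\notin I^c}\sigma_i+\sum_{i\in I^c}\frac{\sigma_i^2}{\varepsilon}
=\nucnorm{\XX}-\sum_{i\in I^c}\Bigl(\sigma_i-\frac{\sigma_i^2}{\varepsilon}\Bigr).
\]
Since $\sigma-\sigma^2/\varepsilon\le\varepsilon/4$ and $|I^c|\varepsilon\le d\varepsilon\le\besterrNuc{\XX}{r}$, this yields $\innerproduct{\f{G},\XX}\ge\nucnorm{\XX}-\tfrac14\besterrNuc{\XX}{r}$. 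Combining with Step 1,
\[
\innerproduct{\f{G},\Xzero-\XX}\le\nucnorm{\Xzero}-\nucnorm{\XX}+\tfrac14\besterrNuc{\XX}{r}.
\]
The boundary case $\varepsilon=0$ (so $\besterrNuc{\XX}{r}=0$ and $\XX$ has rank at most $r$) is handled with the same convention as in \Cref{lemma:epscontrol}. There I use the subgradient $\UU_r\VV_r^\top$, for which the same two estimates hold with zero correction term.

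\emph{Step 3: NSP and the tail term.} I apply \Cref{lemma:NSPl1min} with the roles $\f{Z}=\Xzero$ and ``$\XX$''$=\XX$, which gives
\[
\nucnorm{\Xzero}-\nucnorm{\XX}\le-\tfrac{1-\eta_r}{1+\eta_r}\nucnorm{\XX-\Xzero}+2\besterrNuc{\Xzero}{r}.
\]
It remains to bound $\besterrNuc{\XX}{r}$ in terms of quantities at $\Xzero$. Since $\besterrNuc{\cdot}{r}=\min_{\rank(\ZZ)\le r}\nucnorm{\cdot-\ZZ}$ is a minimum of $1$-Lipschitz functions in the nuclear norm, it is itself $1$-Lipschitz. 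Hence $\besterrNuc{\XX}{r}\le\besterrNuc{\Xzero}{r}+\nucnorm{\XX-\Xzero}$. Inserting both bounds gives
\[
-\Bigl(\tfrac{1-\eta_r}{1+\eta_r}-\tfrac14\Bigr)\nucnorm{\XX-\Xzero}+\tfrac94\besterrNuc{\Xzero}{r},
\]
which is exactly the claimed bound.

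The only delicate point is the choice of roles in \Cref{lemma:NSPl1min}: applying it with $\f{Z}=\XX$ would give an inequality in the wrong direction. The scalar bound $\sigma-\sigma^2/\varepsilon\le\varepsilon/4$ is what produces the constant $1/4$.
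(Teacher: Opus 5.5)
Your proposal is correct and follows the paper's proof essentially step for step. The steps match as follows: the explicit gradient with spectral norm at most $1$ and H\"older for the $\Xzero$-part; the scalar bound $\sigma-\sigma^2/\varepsilon\le\varepsilon/4$ together with $d\varepsilon\le\besterrNuc{\XX}{r}$ for the $\XX$-part; then $\besterrNuc{\XX}{r}\le\nucnorm{\XX-\Xzero}+\besterrNuc{\Xzero}{r}$ (your Lipschitz argument is the paper's Eckart--Young plus triangle inequality step) and \Cref{lemma:NSPl1min} with $\f{Z}=\Xzero$.

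Your $\varepsilon=0$ aside is unnecessary, since the lemma implicitly takes $\varepsilon>0$ so that the gradient makes sense. Its parenthetical is also not right as stated: $\varepsilon=0$ satisfies $\varepsilon\le\besterrNuc{\XX}{r}/d$ for every $\XX$, so it does not force $\besterrNuc{\XX}{r}=0$.
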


\begin{proof}
Denote the singular value decomposition of the matrix $\XX$ by $\XX = \UU_{\XX} \diag (\ssigma) \VV_{\XX}^\top$.
We recall from \Cref{proposition:IRLS:basicproperties} that the gradient $\nabla \mathcal{J}_{\varepsilon}(\XX)$ at $\XX$ satisfies 
\begin{align*}
\nabla \mathcal{J}_{\varepsilon}(\XX) 
=
\W{\XX}{\varepsilon} (\XX)
=
\UU_{\XX}
\left[
    \f{H}_{\ssigma,\varepsilon}
	\circ
	\left( \UU_{\XX}^\top \XX \VV_{\XX} \right)
\right]
\VV_{\XX}^\top
= \UU_{\XX} \SSigma_{\varepsilon} \VV_{\XX}^\top,
\end{align*}
where
$\SSigma_{\varepsilon}  
:= \diag\bigg( \Big(\frac{\sigma_i(\XX)}{\max(\sigma_i(\XX),\varepsilon)}\Big)_{i=1}^d 
\bigg)$. 
Thus, we observe that
\begin{align*}
  \innerproduct{\nabla \mathcal{J}_{\varepsilon}(\XX),\Xzero- \XX }  =  \innerproduct{\UU_{\XX} \SSigma_{\varepsilon} \VV_{\XX}^\top  ,\Xzero- \XX }=\innerproduct{\UU_{\XX} \SSigma_{\varepsilon} \VV_{\XX}^\top  ,\Xzero} - \innerproduct{\UU_{\XX} \SSigma_{\varepsilon} \VV_{\XX}^\top  ,\XX }.
\end{align*}
To control the first summand, we apply H\"{o}lder's inequality and obtain that
\begin{align}\label{ineq:intern9}
\innerproduct{\UU_{\XX} \SSigma_{\varepsilon} \VV_{\XX}^\top  ,\Xzero } \le \specnorm{ \SSigma_{\varepsilon}  } \nucnorm{\Xzero}  \le  \nucnorm{\Xzero}.
\end{align}
For the second summand, we note first that
\begin{align*}
\innerproduct{\UU_{\XX} \SSigma_{\varepsilon} \VV_{\XX}^\top  ,\XX} 
&= \innerproduct{\UU_{\XX} \SSigma_{\varepsilon} \VV_{\XX}^\top  ,\UU_{\XX}  \diag (\ssigma) \VV_{\XX}^\top}\\
&= \innerproduct{ \SSigma_{\varepsilon}, \diag (\ssigma) } \\
&= \sum_{i=1}^{d} \frac{\sigma^2_i(\XX)}{\max(\sigma_i(\XX),\varepsilon)},
\end{align*}
using the definition of $\SSigma_\varepsilon$.
With the notation $I:= \left\{ i \in \left[d\right]:    \sigma_i(\XX)  > \varepsilon   \right\} $, we obtain that
\begin{equation}\label{ineq:intern8}
\innerproduct{\UU_{\XX} \SSigma_{\varepsilon} \VV_{\XX}^\top  ,\XX}= \sum_{i \in I} \sigma_i \left( \XX \right) +  \sum_{i \in I^c} \frac{\sigma_i^2 \left( \XX \right) }{\varepsilon}.
\end{equation}
By combining inequalities \eqref{ineq:intern9} and \eqref{ineq:intern8} we obtain that
\begin{align*}
\innerproduct{ \nabla \mathcal{J}_{\varepsilon}(\XX) ,\Xzero - \XX} &\le  \nucnorm{\Xzero} - \sum_{i \in I} \sigma_i \left( \XX \right) -  \sum_{i \in I^c} \frac{\sigma_i \left( \XX \right)^2 }{\varepsilon}\\
&= \nucnorm{\Xzero} - \nucnorm{\XX}  + \sum_{i \in I^c} \sigma_i \left( \XX \right) -  \sum_{i \in I^c} \frac{\sigma_i \left( \XX \right)^2 }{\varepsilon}\\
&= \nucnorm{\Xzero} - \nucnorm{\XX}  + \onenorm{\sigma (\XX)_{I^c}} -  \frac{ \twonorm{\sigma ( \XX )_{I^c}}^2 }{\varepsilon},
\end{align*}
where $\sigma(\XX)_{I^c}$ denotes the vector
which contains the singular values $ ( \sigma_i (\XX) )_{i \in I^c} $
and $\Vert \cdot \Vert_p$ denotes the $\ell_p$-norm of a vector.
Since $\sigma_i(\XX)\le \varepsilon$ for every $i\in I^c$, we obtain that 
\begin{equation*}
\sum_{i \in I^c}
\left(
\sigma_i(\XX)-\frac{\sigma_i(\XX)^2}{\varepsilon}
\right)
\le
\frac{|I^c|\varepsilon}{4},
\end{equation*}
since the scalar function $s \mapsto s-s^2/\varepsilon$ is maximized on $[0,\varepsilon]$ at $s=\varepsilon/2$ with value $\varepsilon/4$.
Hence, we have shown that 
\begin{align*}
\innerproduct{ \nabla \mathcal{J}_{\varepsilon}(\XX) ,\Xzero- \XX} 
 &\le \nucnorm{\Xzero} - \nucnorm{\XX}  + \frac{\varepsilon d}{4}\\
& \overleq{(a)}  \nucnorm{\Xzero} - \nucnorm{\XX}  +  \frac{ \besterrNuc{\XX}{r}}{4}\\
& = \nucnorm{\Xzero} - \nucnorm{\XX}  +  \frac{ \nucnorm{\XX - \XX_r} }{4},
\end{align*}
where $\XX_r$ denotes the best rank-$r$ approximation of $\XX$ and where in inequality $(a)$ we have used the assumption $\varepsilon \le  \frac{\besterrNuc{\XX}{r}}{d}$.
Denoting by $ \f{X}_{\star,r} $ 
the best rank-$r$ approximation of $\Xzero$ and using the Eckart--Young theorem we obtain that
\begin{align*}
\innerproduct{ \nabla \mathcal{J}_{\varepsilon}(\XX) ,\Xzero- \XX} 
& \le  \nucnorm{\Xzero} - \nucnorm{\XX}  + \frac{ \nucnorm{\XX - \f{X}_{\star,r}   }}{4} \\
&\le  \nucnorm{\Xzero} - \nucnorm{\XX}  + \frac{ \nucnorm{\XX -  \Xzero  }}{4}
+\frac{\besterrNuc{\Xzero}{r}}{4} \\
&\overleq{(a)}  - \left( \frac{1-\eta_r}{1+\eta_r}    - 1/4  \right) \nucnorm{\XX -\Xzero} + \frac{9}{4} \besterrNuc{\Xzero}{r},
\end{align*}
where for inequality $(a)$ we applied the reverse triangle inequality, see Lemma \ref{lemma:NSPl1min}.
\end{proof}
\subsubsection{Upper Bounds on Quadratic Forms Implied by Weight Operators} \label{sec:quadraticformupperbounds}

A key ingredient in the global rate proofs are norm bounds of Hadamard products $\f{H}_{\ssigma, \varepsilon} \circ \f{B}$ of the weight operator core matrix $\f{H}_{\ssigma, \varepsilon}$ with arbitrary matrices $\f{B}$, as the weight operator core matrix corresponds to the non-isometric part of the action of the weight operator of \Cref{def:optimalweightoperator}. We state and prove auxiliary results involving such bounds in \Cref{lemma:normbound:hadamard,lemma:hadamardestimate,lemma:posdefcriterion}. The norm bounds are then used in \Cref{lemma:quadraticterm} to upper bound the quadratic terms $\innerproduct{\Xzero - \XX , \W{\XX}{\varepsilon} (\Xzero - \XX) }$, which will be useful to handle the terms $(b)$ in inequality \eqref{proof_strategy}. 

The following lemma provides the key result to be used for power mean core matrices \cref{eq:power:mean:core:matrix}, which include harmonic and arithmetic mean core matrices as special cases.
\begin{lemma} \label{lemma:normbound:hadamard}
	Let $\lambda_1, \ldots, \lambda_d$ be a sequence of positive numbers.
	Set $\llambdaa := (\lambda_1, \ldots, \lambda_d)$ and $\lambda_{\max} := \max_{i \in [d]} \lambda_i$.
	Let $q \in [-\infty, \infty]$ and $\widetilde{\f{H}}_{\llambdaa}^{(q)} \in \R^{d \times d}$ 
	be the matrix defined by
	\begin{equation*}
		(\widetilde{\f{H}}_{\llambdaa}^{(q)})_{ij} 
		:= 
		\mathcal{M}_q (\lambda_i, \lambda_j)
		\quad
		\text{ for }
		i,j \in [d],
	\end{equation*}
	where $\mathcal{M}_q$ is the power mean as in Definition \ref{def:powermean}. Recall $c_q$ from \cref{eq:c_q:def}, i.e., for $q \in [-\infty, \infty]$,
	\begin{equation*}
	c_q =
	\begin{cases}
	1, & \text{ if } q \in [-\infty,1], \\
	2^{2-1/q}-1, & \text{ if } q \in (1,\infty), \\
	3, & \text{ if } q = \infty.
	\end{cases}
	\end{equation*}
	Then for any unitarily invariant norm $\vertiii{\cdot}$ on $\R^{d \times d}$ and any matrix $\f{B} \in \R^{d \times d}$, it holds that
	\begin{equation*}
	\vertiii{ \widetilde{\f{H}}_{\llambdaa}^{(q)} \circ \f{B} }
	\le c_q \lambda_{\max}\, \vertiii{ \f{B} }
	\leq
	\begin{cases}
	\lambda_{\max}\, \vertiii{ \f{B} }, & \text{ if } q \in [-\infty,1], \\
	3 \lambda_{\max}\, \vertiii{ \f{B} }, & \text{ if } q \in (1, \infty].
	\end{cases}
	\end{equation*}
\end{lemma}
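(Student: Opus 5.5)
The plan is to reduce everything to Schur (Hadamard) multiplier estimates. I would use two classical facts. First, if $\f{G}\in\R^{d\times d}$ is positive semidefinite, then $\vertiii{\f{G}\circ\f{B}}\le \max_i \f{G}_{ii}\,\vertiii{\f{B}}$ for every unitarily invariant norm \citep[cf.][Ch.~IX]{Bhatia1997MatrixAnalysis}. Second, more generally, if $\f{G}_{ij}=\langle \f{x}_i,\f{y}_j\rangle$ with $\max_i\|\f{x}_i\|\max_j\|\f{y}_j\|\le \kappa$, then $\f{G}\circ\cdot$ has norm at most $\kappa$ on every unitarily invariant norm, since such multipliers are contractive in both the spectral and the trace norm and hence, by Ky Fan dominance, in all unitarily invariant norms. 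Throughout, the diagonal of $\widetilde{\f{H}}^{(q)}_{\llambdaa}$ equals $\llambdaa$, so $\max_i(\widetilde{\f{H}}^{(q)}_{\llambdaa})_{ii}=\lambda_{\max}$. The second inequality in the statement is elementary, because $c_q\le 1$ for $q\le 1$ and $c_q\le 3$ always. The task is therefore to establish the first inequality with constant $c_q$, and I would split into cases in $q$.

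\emph{Case $q\in[-\infty,0]$.} Here I would show that $\widetilde{\f{H}}^{(q)}_{\llambdaa}$ is positive semidefinite.
\begin{itemize}
\item For $q=-\infty$, the matrix $[\min(\lambda_i,\lambda_j)]$ is the Gram matrix of indicators $\mathds{1}_{[0,\lambda_i]}$ in $L^2(0,\infty)$.
\item For $q=0$, the matrix $[\sqrt{\lambda_i\lambda_j}]$ has rank one.
\item For $q\in(-\infty,0)$, put $x_i:=\lambda_i^{q}>0$ and $s:=-1/q>0$. Then the entries are $2^{s}(x_i+x_j)^{-s}$. The kernel $(x+y)^{-s}=\Gamma(s)^{-1}\int_0^\infty t^{s-1}e^{-tx}e^{-ty}\,dt$ is a positive mixture of rank-one kernels, hence positive semidefinite. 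This is the infinite divisibility of Cauchy-type matrices; the harmonic case $q=-1$ is the instance $s=1$.
\end{itemize}
The first classical fact then yields the bound with $c_q=1$.

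\emph{Case $q\in(0,1]$.} For $q=1$ the identity $\widetilde{\f{H}}^{(1)}_{\llambdaa}\circ\f{B}=\tfrac12(\f{\Lambda}\f{B}+\f{B}\f{\Lambda})$ with $\f{\Lambda}=\diag(\llambdaa)$ gives the bound directly by the triangle inequality and $\vertiii{\f{\Lambda}\f{B}}\le\lambda_{\max}\vertiii{\f{B}}$. For $q\in(0,1)$ I would try to write $\widetilde{\f{H}}^{(q)}_{\llambdaa}$ as a convex or positive combination of multipliers of the two kinds already handled: terms $\tfrac12(\mu_i+\mu_j)$, which act by left and right diagonal multiplication, and positive semidefinite kernels. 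Each piece would be controlled so that its contributions add up to at most $\lambda_{\max}$. Concretely, I would expand $p:=1/q>1$ in $\big(\tfrac{a^q+b^q}{2}\big)^{p}$ via an integral or binomial representation around the arithmetic mean. Failing that, I would fall back on the second classical fact and build explicit vectors $\f{x}_i,\f{y}_j$ with norms at most $\sqrt{\lambda_i}$ and $\sqrt{\lambda_j}$.

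\emph{Case $q\in(1,\infty]$.} Here the constant $2^{2-1/q}-1=2\cdot 2^{1-1/q}-1$ suggests the following decomposition. Since $\mathcal{M}_q(a,b)\le 2^{1-1/q}\,\mathcal{M}_1(a,b)$, I would write
\[
\widetilde{\f{H}}^{(q)}_{\llambdaa}=2^{1-1/q}\,\widetilde{\f{H}}^{(1)}_{\llambdaa}-\f{G}
\]
and aim to show that the correction $\f{G}$ has Schur multiplier norm at most $(2^{1-1/q}-1)\lambda_{\max}$. The triangle inequality then gives $c_q\lambda_{\max}$, and the limit $q\to\infty$ (that is, $\max$) produces the constant $3$. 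If $\f{G}$ has nonnegative diagonal but is not positive semidefinite, an alternative is to bound $\max(\lambda_i,\lambda_j)=\tfrac12(\lambda_i+\lambda_j)+\tfrac12|\lambda_i-\lambda_j|$ separately. The $|\lambda_i-\lambda_j|$ part can be handled after sorting $\llambdaa$, using a triangular-truncation-free representation $|\lambda_i-\lambda_j|=\int_0^\infty|\mathds{1}_{\lambda_i>t}-\mathds{1}_{\lambda_j>t}|\,dt$. This would then be interpolated in $q$.

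I expect the main obstacle to be the range $q\in(0,1)\cup(1,\infty)$, where the kernel is neither positive semidefinite nor a simple sum of diagonal multiplications. Finding an explicit factorization or decomposition that attains exactly $c_q$ is the delicate step, and I would first test the proposed decompositions on $2\times 2$ examples with $\lambda_j\to 0$, where the extremal ratio $2^{-1/q}$ occurs.
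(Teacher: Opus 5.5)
\textbf{There is a genuine gap: the proposal does not prove the lemma for $q\in(0,1)\cup(1,\infty]$.}

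Your cases $q\in[-\infty,0]$ and $q=1$ are correct and match the paper. For $q\le 0$ you get positive semidefiniteness from the indicator Gram matrix, the Gamma-function/Laplace representation, or rank one, and then apply the maximal-diagonal Schur bound. The trouble is the range $q\in(0,1)\cup(1,\infty)$, which is exactly where the constant $c_q$ has to be earned. There you only describe what you would try, and your treatment of $q=\infty$ as the limit $q\to\infty$ inherits the same gap.

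Your concrete fallback for $q\in(0,1)$ cannot work as stated. Suppose $\langle\f{x}_i,\f{y}_j\rangle=\mathcal{M}_q(\lambda_i,\lambda_j)$ with $\|\f{x}_i\|\le\sqrt{\lambda_i}$ and $\|\f{y}_j\|\le\sqrt{\lambda_j}$. Cauchy--Schwarz then forces $\mathcal{M}_q(\lambda_i,\lambda_j)\le\sqrt{\lambda_i\lambda_j}=\mathcal{M}_0(\lambda_i,\lambda_j)$. This contradicts strict monotonicity of power means when $q>0$ and $\lambda_i\neq\lambda_j$; only a normalization like $\max_i\|\f{x}_i\|\max_j\|\f{y}_j\|\le\lambda_{\max}$ could work.

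Positive semidefiniteness alone also cannot carry the argument. The natural remainder $\mathcal{M}_q(\lambda_i,\lambda_j)-2^{-1/q}(\lambda_i+\lambda_j)$ is not PSD for $q<1/2$. For $\llambdaa=(1,\epsilon)$ its off-diagonal entry is of order $\epsilon^{q}$, while the product of its diagonal entries is only of order $\epsilon$.

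\textbf{The missing idea} is the two-dimensional fundamental theorem of calculus for $f(x,y)=2^{-1/q}(x^q+y^q)^{1/q}$, using $f(x,0)=2^{-1/q}x$ and $f(0,0)=0$:
\[
\mathcal{M}_q(\lambda_i,\lambda_j)=2^{-1/q}(\lambda_i+\lambda_j)+\frac{1-q}{2^{1/q}}\int_0^{\infty}\!\!\int_0^{\infty}k(s,t)\,1_{s\le\lambda_i}1_{t\le\lambda_j}\,dt\,ds,
\qquad k(s,t)=s^{q-1}t^{q-1}(s^q+t^q)^{1/q-2}\ge 0 .
\]
With $\f{D}=\diag(\llambdaa)$ and the diagonal $0/1$ matrices $\f{P}_s=\diag(1_{s\le\lambda_i})_{i}$, which vanish for $s>\lambda_{\max}$, this gives
\[
\widetilde{\f{H}}^{(q)}_{\llambdaa}\circ\f{B}=2^{-1/q}(\f{D}\f{B}+\f{B}\f{D})+\frac{1-q}{2^{1/q}}\int\!\!\int k(s,t)\,\f{P}_s\f{B}\f{P}_t\,dt\,ds .
\]
Each term satisfies $\vertiii{\f{P}_s\f{B}\f{P}_t}\le\vertiii{\f{B}}$, and no positivity is needed for this. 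Since $k\ge 0$, the total weight $\frac{|1-q|}{2^{1/q}}\int_0^{\lambda_{\max}}\!\int_0^{\lambda_{\max}}k$ can be read off from $\mathcal{M}_q(\lambda_{\max},\lambda_{\max})=\lambda_{\max}$; it equals $|1-2^{1-1/q}|\,\lambda_{\max}$. The triangle inequality then gives the constant $2^{1-1/q}+|1-2^{1-1/q}|$. This is $1$ for $q\in(0,1)$ and $2^{2-1/q}-1$ for $q>1$.

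\textbf{Your own route for $q>1$ closes with this.} The integral term is precisely your correction $-\f{G}$, so the representation above bounds it directly. Alternatively, $\f{G}$ is PSD: the kernel $k(s,t)=s^{q-1}t^{q-1}(s^q+t^q)^{-(2-1/q)}$ is positive definite by the same Laplace argument you used for $q<0$. Since $\f{G}_{ii}=(2^{1-1/q}-1)\lambda_i$, the maximal-diagonal bound finishes.

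For $q=\infty$ the paper writes $\max(\lambda_i,\lambda_j)=\lambda_i+\lambda_j-\min(\lambda_i,\lambda_j)$ directly, which is what your $|\lambda_i-\lambda_j|$ identity amounts to. Your limit argument would also be valid once the case of finite $q>1$ is settled.
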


An ingredient for showing \Cref{lemma:normbound:hadamard} for a range of values of $q$ is the following lemma, which is a consequence of an inequality by \citet{AndoHornJohnson87}.
\begin{lemma}\label{lemma:hadamardestimate}
Let $\f{A}   \in \R^{d \times d} $ be a positive semidefinite matrix and let $\f{B} \in \R^{d \times d} $ be arbitrary. 
Let $\vertiii{\cdot}$ be any unitarily invariant norm on $\R^{d \times d}$.
Then it holds that
\begin{equation*}
\vertiii{ \f{A}  \circ \f{B} }
\le \underset{i \in \left[d\right]}{\max} \  
\f{A}_{ii}  \  \vertiii{ \f{B} }.
\end{equation*}
\end{lemma}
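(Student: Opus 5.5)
The plan is to derive \Cref{lemma:hadamardestimate} from the inequality of \citet{AndoHornJohnson87}. That result states that for a positive semidefinite $\f{A}$ admitting a factorization $\f{A} = \f{X}^\top \f{Y}$, any matrix $\f{B}$, and any unitarily invariant norm,
\[
\vertiii{\f{A}\circ\f{B}} \le c_1(\f{X})\, c_1(\f{Y})\, \vertiii{\f{B}},
\]
where $c_1(\cdot)$ denotes the largest Euclidean column norm. Since $\f{A}\succeq\f{0}$, I take the symmetric factorization $\f{X}=\f{Y}=\f{A}^{1/2}$, or equivalently any Gram factorization $\f{A}=\f{R}^\top\f{R}$.

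The $i$th column $\f{r}_i$ of $\f{R}$ satisfies $\twonorm{\f{r}_i}^2 = \f{A}_{ii}$, so $c_1(\f{R})^2=\max_i \f{A}_{ii}$. Substituting into the Ando--Horn--Johnson bound yields the claim directly.

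If one prefers a self-contained route, I would argue as follows. First, handle the spectral norm: for unit vectors $\f{x},\f{y}$, the identity $\f{x}^\top(\f{A}\circ\f{B})\f{y}=\sum_{i,j} x_i y_j \f{r}_i^\top\f{r}_j B_{ij}$ can be rewritten as $\sum_k \f{p}_k^\top\f{B}\f{q}_k$, where $(\f{p}_k)_i = x_i (\f{r}_i)_k$ and $(\f{q}_k)_j = y_j (\f{r}_j)_k$. Cauchy--Schwarz then gives a bound of $\specnorm{\f{B}}\max_i\twonorm{\f{r}_i}^2$. Second, obtain the nuclear norm bound by duality, noting that Hadamard multiplication by a symmetric $\f{A}$ is self-adjoint with respect to $\innerproduct{\cdot,\cdot}$. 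Third, extend to Ky Fan $k$-norms, and from there to all unitarily invariant norms by the Fan dominance theorem.

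The main obstacle is the Ky Fan step. The cleanest fix is to avoid proving it by hand and instead cite \citet{AndoHornJohnson87} (see also Horn--Johnson, \emph{Topics}, Thm.~5.5.18), which covers all unitarily invariant norms directly. The one point to verify carefully is that the constant is $\max_i \f{A}_{ii}$ rather than $\specnorm{\f{A}}$, and this follows from the column-norm computation above.
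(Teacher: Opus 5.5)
Your proposal is correct and follows essentially the same route as the paper. The paper likewise invokes the Ando--Horn--Johnson bound $\vertiii{\f{A}\circ\f{B}}\le c_1(\f{X})c_1(\f{Y})\vertiii{\f{B}}$ with a symmetric Gram factorization $\f{X}=\f{Y}=\SSigma^{1/2}\f{U}^*$ taken from an eigendecomposition of $\f{A}$, which is equivalent to your $\f{A}^{1/2}$, and reads off that the squared column norms equal $\f{A}_{ii}$. Your self-contained spectral- and nuclear-norm sketch is a valid supplement but is not needed once the citation is used.
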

\begin{proof}[Proof of Lemma \ref{lemma:hadamardestimate}]
From \citet[Equation (3.7.15)]{hadamardproduct_horn} \citep[see also][p.~363, eq.~(35)]{AndoHornJohnson87}, it follows that
\begin{equation*}
\vertiii{ \f{A}  \circ \f{B} } \leq c_1(\f{X}) c_1(\f{Y}) \vertiii{ \f{B} }
\end{equation*}
for square matrices $\f{A},\f{B} \in \R^{d \times d}$, where $\f{X},\f{Y} \in \R^{r \times d}$ are such that $\f{A} = \f{X}^* \f{Y}$ and $c_1(\f{X}), c_1(\f{Y})$ are the maximum Euclidean norms among the columns of $\f{X}$ and $\f{Y}$, respectively.
Since $\f{A}$ is positive semidefinite, 
we can choose $\f{Y} = \f{X} = \SSigma^{1/2} \f{U}^*$ 
where $\f{A} = \f{U} \SSigma \f{U}^*$ is an eigendecomposition of $\f{A}$.
Writing $\SSigma = \diag(\sigma)$, we observe that
\[
c_1(\f{Y}) = c_1(\f{X}) = \max_{i \in [d]} \|\f{X}_{:,i}\|_2 =  \max_{i \in [d]} \sqrt{\sum_{j=1}^{r} \sigma_j \f{U}_{i j}^2} =  \max_{i \in [d]} \sqrt{ (\f{U} \SSigma \f{U}^*)_{ii}} = \max_{i \in [d]} \sqrt{ \f{A}_{ii}}.
\]
This finishes the proof  of Lemma \ref{lemma:hadamardestimate}.
\end{proof}
In the proofs below, $\f{A}$ will be chosen 
as (a square extension of the) weight operator core matrix $\f{H}_{\ssigma, \varepsilon}$ of Definition \ref{def:weightcore}. 

For the case of the harmonic-mean core matrix \cref{eq:harmonic:mean}, which corresponds to a $-1$-power mean $\mathcal{M}_{-1}$ in \Cref{lemma:normbound:hadamard}, \Cref{lemma:hadamardestimate} can be quite directly applied to obtain the desired norm bound of \Cref{lemma:normbound:hadamard} due to the following positive semidefiniteness result.
\begin{lemma}\label{lemma:posdefcriterion}
Let the matrix $\f{A}  \in \R^{d \times d} $ be of the form $\f{A} _{ij} = \frac{1}{\lambda_i + \lambda_j}  $, where $\lambda_i >0$ for all $ i \in \left[d\right] $. Then $\f{A} $ is positive semidefinite.
\end{lemma}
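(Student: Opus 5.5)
We establish positive semidefiniteness by exhibiting $\f{A}$ as a Gram matrix in the Hilbert space $L^2(0,\infty)$. The key identity is that for $\lambda_i, \lambda_j > 0$,
\begin{equation*}
	\frac{1}{\lambda_i+\lambda_j} = \int_0^\infty e^{-(\lambda_i+\lambda_j)t}\,dt = \int_0^\infty e^{-\lambda_i t}\, e^{-\lambda_j t}\,dt .
\end{equation*}
Here positivity of $\lambda_i+\lambda_j$ guarantees that the integral converges. So we define the functions $g_i(t) := e^{-\lambda_i t}$ for $i \in [d]$. Each of them lies in $L^2(0,\infty)$, since $\int_0^\infty e^{-2\lambda_i t}\,dt = 1/(2\lambda_i) < \infty$. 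With this choice we have $\f{A}_{ij} = \langle g_i, g_j\rangle_{L^2}$.

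Next we verify nonnegativity of the quadratic form directly. Fix an arbitrary $\f{x} \in \R^d$. By linearity of the integral,
\begin{equation*}
	\f{x}^\top \f{A}\, \f{x} = \sum_{i,j=1}^d x_i x_j \int_0^\infty e^{-\lambda_i t} e^{-\lambda_j t}\,dt = \int_0^\infty \Bigl( \sum_{i=1}^d x_i e^{-\lambda_i t} \Bigr)^2 dt \ \ge\ 0 .
\end{equation*}
Exchanging the finite sum with the integral is justified because every summand is absolutely integrable. Since $\f{A}$ is symmetric by construction, this already proves that $\f{A}$ is positive semidefinite.

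There is no real obstacle in this argument; the only step that requires an idea is choosing the integral representation. If one prefers to avoid integrals, there is an alternative route. Write $\frac{1}{\lambda_i+\lambda_j} = \frac{1}{2\sqrt{\lambda_i\lambda_j}}\cdot \frac{2\sqrt{\lambda_i\lambda_j}}{\lambda_i+\lambda_j}$ and use that congruence by the diagonal matrix $\diag(\lambda_i^{-1/2})$ preserves semidefiniteness. One then needs that the matrix with entries $\operatorname{sech}(s_i-s_j)$, where $s_i = \tfrac12\log\lambda_i$, is positive semidefinite. That fact follows from Bochner's theorem, because the Fourier transform of $\operatorname{sech}$ is positive. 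The integral argument above is shorter and fully self-contained, so it is the one to present.
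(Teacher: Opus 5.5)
Your proof is correct and follows the route the paper indicates. The paper proves this lemma only by reference to \citet[Exercise 1.6.4]{bhatia_book}, noting that $\f{A}$ is a Gram matrix of suitable Hilbert-space vectors; your realization $g_i(t)=e^{-\lambda_i t}\in L^2(0,\infty)$ is exactly that standard argument, and it is the same integral-representation idea the paper uses for the case $q\in(-\infty,0)$ in the proof of \Cref{lemma:normbound:hadamard}.
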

For a proof of \Cref{lemma:posdefcriterion} we refer to \citet[Exercise 1.6.4]{bhatia_book}. 
The key idea is to show that $\f{A}$ is the Gram matrix of suitably chosen vectors in a Hilbert space and, thus, $\f{A}$ is positive semidefinite.

In order to establish \Cref{lemma:normbound:hadamard} across all cases of $q$, a bit more work is needed. We provide the general proof below.

\begin{proof}[Proof of \Cref{lemma:normbound:hadamard}]
 We distinguish several cases.

 \textbf{Case $q=-\infty$:}
 In this case we have
 \begin{equation*}
 (\widetilde{\f{H}}_{\llambdaa}^{(q)})_{ij} =  (\widetilde{\f{H}}_{\llambdaa}^{(-\infty)})_{ij}
 = \min(\lambda_i,\lambda_j)
 = \int_0^{\lambda_{\max}} 1_{t \le \lambda_i} 1_{t \le \lambda_j}\, dt.
 \end{equation*}
 Hence, for any $\f{x} \in \R^d$,
 \begin{align*}
 \f{x}^\top \widetilde{\f{H}}_{\llambdaa}^{(-\infty)} \f{x}
 &= \int_0^{\lambda_{\max}}
 \left(
 \sum_{i=1}^d x_i 1_{t \le \lambda_i}
 \right)^2 dt
 \ge 0.
 \end{align*}
 Thus $\widetilde{\f{H}}_{\llambdaa}^{(-\infty)}$ is positive semidefinite. Since $(\widetilde{\f{H}}_{\llambdaa}^{(-\infty)})_{ii}
 = \min(\lambda_i,\lambda_i)
 = \lambda_i$, \Cref{lemma:hadamardestimate} yields
 \begin{equation*}
 \vertiii{ \widetilde{\f{H}}_{\llambdaa}^{(-\infty)} \circ \f{B} }
 \le \lambda_{\max}\, \vertiii{ \f{B} }.
 \end{equation*}

 \textbf{Case $q \in (-\infty,0)$:}
 We first note that for all $i, j \in [d]$, $\lambda_i, \lambda_j > 0$, and
 \begin{equation*}
 (\widetilde{\f{H}}_{\llambdaa}^{(q)})_{ij}
 = \mathcal{M}_q (\lambda_i, \lambda_j)
 =
 \left( \frac{\lambda_i^q + \lambda_j^q}{2} \right)^{1/q}
 =
 2^{-\frac{1}{q}} \left( \lambda_i^q + \lambda_j^q \right)^{1/q}.
 \end{equation*}
 In order to proceed, we recall that the Gamma function $\Gamma$ is defined by
 $ \Gamma(z) = \int_0^\infty t^{z-1} e^{-t} dt$ for $z \in \mathbb{C}$ with $\Re(z) >0$.
 Now use the substitution $u:=t/x$ for $x>0$.
 Then we have that $dt = x du$.
 Hence, we obtain that
 \begin{equation*}
 \Gamma(z)
 = \int_0^\infty t^{z-1} e^{-t} dt
 =
 \int_0^\infty \left( xu \right)^{z-1} e^{-xu} x du
 =
 x^z \int_0^\infty u^{z-1} e^{-xu} du.
 \end{equation*}
 Then we obtain the well-known integral representation
 \begin{equation*}
 x^{-z} = \frac{1}{\Gamma(z)} \int_0^\infty u^{z-1} e^{-xu} du
 \end{equation*}
 for $x>0$ and $z >0$.
 Using this integral representation, we can write
 \begin{align*}
 \left( \lambda_i^q + \lambda_j^q \right)^{1/q}
 &= \left( \lambda_i^q + \lambda_j^q \right)^{-1/|q|} = \frac{1}{\Gamma(1/|q|)} \int_0^\infty u^{\frac{1}{|q|}-1} e^{-u \left( \lambda_i^q + \lambda_j^q \right)} du.
 \end{align*}
 Now let $\f{x} \in \R^d$ be arbitrary.
 Then we have that
 \begin{align*}
 \f{x}^\top \widetilde{\f{H}}_{\llambdaa}^{(q)} \f{x}
 =&
 \frac{ 2^{ 1/\vert q \vert } }{ \Gamma (1/|q|) }
 \sum_{i,j=1}^d x_i x_j \int_0^\infty u^{\frac{1}{|q|}-1} e^{-u
 \left( \lambda_i^q + \lambda_j^q \right)} du\\
 =&
 \frac{ 2^{ 1/\vert q \vert } }{ \Gamma (1/|q|) }
 \int_0^\infty u^{\frac{1}{|q|}-1}
 \left( \sum_{i=1}^d x_i e^{-u \lambda_i^q} \right)^2
 du
 \ge 0.
 \end{align*}
 It follows that $\widetilde{\f{H}}_{\llambdaa}^{(q)}$ is positive semidefinite.
 Since
 \begin{equation*}
 (\widetilde{\f{H}}_{\llambdaa}^{(q)})_{ii}
 = \mathcal{M}_q (\lambda_i, \lambda_i)
 = \lambda_i,
 \end{equation*}
 Lemma \ref{lemma:hadamardestimate} yields $\vertiii{ \widetilde{\f{H}}_{\llambdaa}^{(q)} \circ \f{B} }
 \le \lambda_{\max}\, \vertiii{ \f{B} }$.

 \textbf{Case $q=0$:}
 In this case
 \begin{equation*}
 (\widetilde{\f{H}}_{\llambdaa}^{(q)})_{ij} = (\widetilde{\f{H}}_{\llambdaa}^{(0)})_{ij}
 = \sqrt{\lambda_i \lambda_j}.
 \end{equation*}
 Hence $\widetilde{\f{H}}_{\llambdaa}^{(0)} = \f{v} \f{v}^\top$ with $\f{v} := (\sqrt{\lambda_1}, \ldots, \sqrt{\lambda_d})^\top$,
 so $\widetilde{\f{H}}_{\llambdaa}^{(0)}$ is positive semidefinite and $(\widetilde{\f{H}}_{\llambdaa}^{(0)})_{ii} = \lambda_i$.
 Therefore Lemma \ref{lemma:hadamardestimate} gives $\vertiii{ \widetilde{\f{H}}_{\llambdaa} \circ \f{B} }
 \le \lambda_{\max}\, \vertiii{ \f{B} }$.

 \textbf{Case $q \in (0,\infty)$:}
 In this case we have that
 \begin{equation*}
 (\widetilde{\f{H}}_{\llambdaa}^{(q)})_{ij}
 = \mathcal{M}_q (\lambda_i, \lambda_j)
 = \left( \frac{\lambda_i^q + \lambda_j^q}{2} \right)^{1/q}.
 \end{equation*}
 Now define the function $f(x,y) = 2^{-1/q} (x^q + y^q)^{1/q}$ for $x,y \ge 0$.
 Then the two-dimensional fundamental theorem of calculus implies that
 \begin{align*}
 f(x,y)&=
 f(x,0) + f(0,y) - f(0,0)
 + \int_0^x \int_0^y \frac{\partial^2 f}{\partial x \partial y} (s,t) dt ds\\
 &=
 \frac{x}{2^{1/q}} + \frac{y}{2^{1/q}}
 + \frac{1-q}{2^{1/q}} \int_0^x \int_0^y s^{q-1} t^{q-1} (s^q + t^q)^{1/q-2} dt ds.
 \end{align*}
 It follows that
 \begin{align*}
 (\widetilde{\f{H}}_{\llambdaa}^{(q)})_{ij}
 &=
 \frac{\lambda_i}{2^{1/q}} + \frac{\lambda_j}{2^{1/q}}
 + \frac{1-q}{2^{1/q}} \int_0^{\lambda_i} \int_0^{\lambda_j}
 s^{q-1} t^{q-1} (s^q + t^q)^{1/q-2} dt ds\\
 &=
 \frac{\lambda_i}{2^{1/q}} + \frac{\lambda_j}{2^{1/q}}
 + \frac{1-q}{2^{1/q}} \int_0^{\infty} \int_0^{\infty}
 s^{q-1} t^{q-1} (s^q + t^q)^{1/q-2} 1_{s \le \lambda_i} 1_{t \le \lambda_j} dt ds.
 \end{align*}
 Denoting by $\f{D} := \diag(\llambdaa)$ and by
 \begin{equation*}
 \f{P}_s := \diag \big(1_{s \le \lambda_i}\big)_{i=1}^d,
 \qquad
 \f{P}_t := \diag \big(1_{t \le \lambda_i}\big)_{i=1}^d,
 \end{equation*}
 we obtain
 \begin{align*}
 \widetilde{\f{H}}_{\llambdaa}^{(q)} \circ \f{B}
 =&
 \frac{1}{2^{1/q}} \f{D} \f{B}
 + \frac{1}{2^{1/q}} \f{B} \f{D}\\
 &+
 \frac{1-q}{2^{1/q}} \int_0^{\infty} \int_0^{\infty}
 s^{q-1} t^{q-1} (s^q + t^q)^{1/q-2}
 \f{P}_s \f{B} \f{P}_t
 dt ds.
 \end{align*}
 Since $\|\f{D}\| = \lambda_{\max}$ and $\|\f{P}_s\|, \|\f{P}_t\| \le 1$,
 the ideal property of unitarily invariant norms yields
 \begin{align*}
 \vertiii{ \widetilde{\f{H}}_{\llambdaa}^{(q)} \circ \f{B} }
 \le &
 \frac{1}{2^{1/q}} \vertiii{ \f{D} \f{B} }
 + \frac{1}{2^{1/q}} \vertiii{ \f{B} \f{D} }\\
 &+
 \frac{|1-q|}{2^{1/q}} \int_0^{\lambda_{\max}} \int_0^{\lambda_{\max}}
 s^{q-1} t^{q-1} (s^q + t^q)^{1/q-2}
 \vertiii{ \f{P}_s \f{B} \f{P}_t }
 dt ds\\
 \le &
 2^{1-\frac{1}{q}} \lambda_{\max}\, \vertiii{ \f{B} }\\
 &+
 \frac{|1-q|}{2^{1/q}} \vertiii{ \f{B} }
 \int_0^{\lambda_{\max}} \int_0^{\lambda_{\max}}
 s^{q-1} t^{q-1} (s^q + t^q)^{1/q-2}
 dt ds.
 \end{align*}
 If $q=1$, the integral term vanishes and therefore
 \begin{equation*}
 \vertiii{ \widetilde{\f{H}}_{\llambdaa}^{(q)} \circ \f{B} }
 \le \lambda_{\max}\, \vertiii{ \f{B} }.
 \end{equation*}
 Assume now that $q \neq 1$.
 Using the substitution $u = s^q$ and $v = t^q$, we obtain
 \begin{align*}
 \int_0^{\lambda_{\max}} \int_0^{\lambda_{\max}}
 s^{q-1} t^{q-1} (s^q + t^q)^{1/q-2}
 dt ds
 &=
 \frac{1}{q^2} \int_0^{\lambda_{\max}^q} \int_0^{\lambda_{\max}^q} (u+v)^{1/q-2} du dv\\
 =
 \frac{1}{q^2(1/q-1)} \int_0^{\lambda_{\max}^q}
 \left[ (u+v)^{1/q-1} \right]_{u=0}^{u=\lambda_{\max}^q}
 dv
 &=
 \frac{1}{q(1-q)} \int_0^{\lambda_{\max}^q}
 \left( (\lambda_{\max}^q + v )^{1/q-1} - v^{1/q-1} \right) dv\\
 =
 \frac{1}{1-q}
 \left[ (\lambda_{\max}^q + v)^{1/q} - v^{1/q} \right]_{v=0}^{v=\lambda_{\max}^q}
 &=
 \frac{\lambda_{\max}}{1-q} (2^{1/q} - 2).
 \end{align*}
 Therefore
 \begin{align*}
 \vertiii{ \widetilde{\f{H}}_{\llambdaa}^{(q)} \circ \f{B} }
 &\le
 \left(
 2^{1-\frac{1}{q}} + \vert 1 - 2^{1-1/q} \vert
 \right)
 \lambda_{\max}\, \vertiii{ \f{B} }.
 \end{align*}
 For $0<q\le 1$, the constant in parentheses equals $1$.
 For $1<q<\infty$, it equals $2^{2-1/q}-1$.

 \textbf{Case $q=\infty$:}
 In this case
 \begin{equation*}
 (\widetilde{\f{H}}_{\llambdaa}^{(q)})_{ij} =  (\widetilde{\f{H}}_{\llambdaa}^{(\infty)})_{ij}
 = \max(\lambda_i,\lambda_j)
 = \lambda_i + \lambda_j - \min(\lambda_i,\lambda_j).
 \end{equation*}
 Let $\f{M} \in \R^{d \times d}$ be defined by $\f{M}_{ij} := \min(\lambda_i,\lambda_j)$ for each $i,j \in [d]$. Then
 \begin{equation*}
 \widetilde{\f{H}}_{\llambdaa}^{(\infty)} \circ \f{B}
 = \diag(\llambdaa)\f{B} + \f{B}\diag(\llambdaa) - \f{M} \circ \f{B}.
 \end{equation*}
 By the ideal property of unitarily invariant norms,
 \begin{equation*}
 \vertiii{ \diag(\llambdaa)\f{B} },
 \ \vertiii{ \f{B}\diag(\llambdaa) }
 \le \lambda_{\max}\, \vertiii{ \f{B} }.
 \end{equation*}
 Moreover, the matrix $\f{M}$ is positive semidefinite by the argument from the case $q=-\infty$,
 and $\f{M}_{ii} = \lambda_i$.
 Hence Lemma \ref{lemma:hadamardestimate} yields
$ \vertiii{ \f{M} \circ \f{B} }
 \le \lambda_{\max}\, \vertiii{ \f{B} }$.
 Therefore
 \begin{equation*}
 \vertiii{ \widetilde{\f{H}}_{\llambdaa}^{(\infty)} \circ \f{B} }
 \le \left(\lambda_{\max}\, \vertiii{ \f{B} } + \lambda_{\max}\, \vertiii{ \f{B} } + \lambda_{\max}\, \vertiii{ \f{B} }\right) = 3 \lambda_{\max}\, \vertiii{ \f{B} }.
 \end{equation*}
\end{proof}

The bounds of \Cref{lemma:normbound:hadamard} can now be used to upper bound the quadratic term $(b)$ in inequality \eqref{proof_strategy} as follows.

\begin{lemma}\label{lemma:quadraticterm}
	Assume that
	\begin{equation*}
		 \mathcal{A} \left(\Xzero \right) = \mathcal{A} \left(\XX \right)
	\end{equation*}
	and that the measurement operator $\mathcal{A}: \mathbb{R}^{d_1 \times d_2} \longrightarrow \R^m $ 
	satisfies the NSP of order $r$ of \Cref{def:NSP:statement} for some constant $0<\eta_r<1$, and the NSP of order $1$ with $0 < \eta_1 < 1$. 
	Let $ \W{\XX}{\varepsilon}(\cdot)$ be defined as in \Cref{def:optimalweightoperator}. Then
	\begin{equation}\label{ineq:claim12}
	\innerproduct{\Xzero - \XX , \W{\XX}{\varepsilon} (\Xzero - \XX) }  \leq  \frac{ \eta_1 \nucnorm{ \Xzero - \XX }^2  }{\varepsilon},
	\end{equation}	
	if the weight operator core matrix $\f{H}_{\ssigma, \varepsilon}$ of $\W{\XX}{\varepsilon}(\cdot)$ corresponds to harmonic mean \cref{eq:harmonic:mean}, one-sided weights \cref{eq:leftsided:mean,eq:rightsided:mean}, or power mean weights \cref{eq:power:mean:core:matrix} with $q \in [-\infty, 1]$, and
	\begin{equation}\label{ineq:quadraticterm:largeqmean}
		\innerproduct{\Xzero - \XX , \W{\XX}{\varepsilon} (\Xzero - \XX) }  \leq \frac{ c_q \eta_1 \nucnorm{ \Xzero - \XX }^2  }{\varepsilon} \leq  \frac{ 3 \eta_1 \nucnorm{ \Xzero - \XX }^2  }{\varepsilon},
		\end{equation}	
	if $\f{H}_{\ssigma, \varepsilon}$ corresponds to power mean weights \cref{eq:power:mean:core:matrix} with $q \in (1,\infty]$, where $c_q$ is the constant \cref{eq:c_q:def}.
\end{lemma}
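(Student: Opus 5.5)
Set $\NN := \Xzero - \XX \in \ker(\mathcal{A})$ and $\widetilde{\NN} := \UU_{\XX}^\top \NN \VV_{\XX}$. By orthogonal invariance of the Frobenius inner product, as in the proof of \Cref{lem:power_mean_loewner}, we have
\[
\innerproduct{\NN, \W{\XX}{\varepsilon}(\NN)} = \innerproduct{\widetilde{\NN}, \f{H}_{\ssigma,\varepsilon} \circ \widetilde{\NN}}.
\]
The plan is to bound this by a Hölder-type pairing of nuclear and spectral norms,
\[
\innerproduct{\widetilde{\NN}, \f{H}_{\ssigma,\varepsilon}\circ\widetilde{\NN}} \le \nucnorm{\widetilde{\NN}}\,\specnorm{\f{H}_{\ssigma,\varepsilon}\circ\widetilde{\NN}}.
\]
We then use \Cref{lemma:normbound:hadamard} with the spectral norm as the unitarily invariant norm to get $\specnorm{\f{H}\circ\widetilde{\NN}} \le c_q \varepsilon^{-1}\specnorm{\NN}$. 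The weights entering the core matrix are $\widetilde{\sigma}_i = \max(\sigma_i,\varepsilon)^{-1} \le \varepsilon^{-1}$, so the role of $\lambda_{\max}$ is played by $\varepsilon^{-1}$. Finally, the order-one NSP gives $\sigma_1(\NN) \le \eta_1\sum_{i\ge2}\sigma_i(\NN) \le \eta_1 \nucnorm{\NN}$, hence $\specnorm{\NN}\le \eta_1\nucnorm{\NN}$. Combining the three inequalities yields $c_q\eta_1\nucnorm{\NN}^2/\varepsilon$, with $c_q=1$ for $q\le 1$ and $c_q\le 3$ otherwise.

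The technical point is that $\f{H}_{\ssigma,\varepsilon}$ is rectangular ($d_1\times d_2$), while \Cref{lemma:normbound:hadamard} is stated for square matrices. We embed into $\R^{D\times D}$. Define the square matrix $\widehat{\f{H}}_{ij} := \mathcal{M}_q(\widetilde{\sigma}_i,\widetilde{\sigma}_j)$ for $i,j\in[D]$, using the extended vector $\widetilde{\sigma}_i=\varepsilon^{-1}$ for $i>d$; its upper-left $d_1\times d_2$ block is exactly $\f{H}^{(q)}_{\ssigma,\varepsilon}$. Zero-pad $\widetilde{\NN}$ to $\widehat{\NN}\in\R^{D\times D}$. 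Then $\widehat{\f{H}}\circ\widehat{\NN}$ is the zero-padding of $\f{H}\circ\widetilde{\NN}$, and zero-padding preserves singular values, so the spectral-norm bound transfers. The harmonic mean is the case $q=-1$. The bound stated in \Cref{lemma:normbound:hadamard} for $q=-1$ could alternatively be seen directly from \Cref{lemma:posdefcriterion} and \Cref{lemma:hadamardestimate}, since the harmonic entries are $2\,(\widetilde\sigma_i^{-1}+\widetilde\sigma_j^{-1})^{-1}$, a positive semidefinite Cauchy-type matrix with diagonal $\widetilde\sigma_i$.

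The one-sided core matrices are not power means and need a separate argument. For the left-sided choice, \cref{eq:onesided:action} gives $\W{\XX}{\varepsilon}(\NN)=\LL_{\UU}^{-1}\NN$, so
\[
\innerproduct{\NN,\LL_{\UU}^{-1}\NN}\le \nucnorm{\NN}\,\specnorm{\LL_{\UU}^{-1}\NN}\le \nucnorm{\NN}\,\varepsilon^{-1}\specnorm{\NN},
\]
because $\specnorm{\LL_{\UU}^{-1}}\le\varepsilon^{-1}$. The right-sided case is symmetric, and the NSP step then finishes as before.

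The main obstacle is the padding bookkeeping. One must check that in the padded core the extra indices $i>d$ carry exactly the value $\varepsilon^{-1}$ used in \cref{eq:power:mean:core:matrix}, so that $\lambda_{\max}$ of the padded vector is still at most $\varepsilon^{-1}$. One must also check that the Hadamard product of the padded matrices reproduces the rectangular one; this holds because the padded entries of $\widehat{\NN}$ are zero.
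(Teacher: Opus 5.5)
Your proof is correct and follows essentially the same route as the paper. The shared steps are the H\"older bound $\nucnorm{\NN}\,\specnorm{\f{H}_{\ssigma,\varepsilon}\circ\widetilde{\NN}}$, then \Cref{lemma:normbound:hadamard} in the spectral norm after zero-padding to a square matrix, a separate $\specnorm{\LL_{\UU}^{-1}}\le\varepsilon^{-1}$ argument for one-sided weights, and the order-one NSP to get $\specnorm{\NN}\le\eta_1\nucnorm{\NN}$. The only cosmetic differences are that the paper assumes $d_1\le d_2$ and pads rows to $d_2\times d_2$, and it writes the one-sided case as $\diag(h_1,\ldots,h_{d_1})\f{M}$.
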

	
\begin{proof} Let $\NN := \Xzero - \XX $.
We may assume without loss of generality that $ d_1 \le d_2 $.
For a singular value decomposition $\XX = \UU_{\XX} \diag (\ssigma) \VV_{\XX}^\top$ of $\XX$, we set $\f{M} := \UU_{\XX}^{\top} \NN  \VV_{\XX}$.
First, we notice that
\begin{align}
\innerproduct{ \NN , \W{\XX}{\varepsilon} (\NN)  }
&\le \nucnorm{\NN} \specnorm{\W{\XX}{\varepsilon} \left( \NN \right)} \nonumber \\
&= \nucnorm{\NN} \specnorm{ \UU_{\XX} \left[  \f{H}_{\ssigma, \varepsilon} \circ  \f{M} \right]
\VV_{\XX}^\top    } \nonumber \\
&\le \nucnorm{\NN} \specnorm{   \f{H}_{\ssigma, \varepsilon} \circ \f{M}    }.\label{ineq:part2innerproduct}
\end{align}
We now show that $\specnorm{\f{H}_{\ssigma, \varepsilon} \circ \f{M}} \le \frac{c_q \specnorm{\NN}}{\varepsilon}$.

\textbf{One-sided weights:}
For left-sided weights, $(\f{H}_{\ssigma,\varepsilon})_{ij} = \frac{1}{\max(\sigma_i(\XX),\varepsilon)}$ depends only on the row index~$i$.
Hence $\f{H}_{\ssigma,\varepsilon} \circ \f{M} = \diag(h_1,\ldots,h_{d_1}) \f{M}$ where $h_i = \frac{1}{\max(\sigma_i(\XX),\varepsilon)}$. Since $\max_i h_i \leq 1/\varepsilon$, we obtain
\[
\specnorm{\f{H}_{\ssigma,\varepsilon} \circ \f{M}} \le \frac{\specnorm{\f{M}}}{\varepsilon} = \frac{\specnorm{\NN}}{\varepsilon}.
\]
For right-sided weights the argument is analogous with multiplication from the right.

\textbf{Power mean weights (including harmonic mean):}
Define the $d_2 \times d_2$ matrix $\widehat{\f{H}}$ by
\[
\widehat{\f{H}}_{ij}
:= \mathcal{M}_q\!\left(\frac{1}{\max(\hat\sigma_i,\varepsilon)},\, \frac{1}{\max(\hat\sigma_j,\varepsilon)}\right),
\]
where $\hat\sigma_i := \sigma_i(\XX)$ for $i \in [d_1]$ and $\hat\sigma_i := 0$ for $i \in \{d_1+1,\ldots,d_2\}$.
Note that $\max(\hat\sigma_i,\varepsilon) = \varepsilon$ for $i > d_1$, and therefore the first $d_1$ rows of $\widehat{\f{H}}$ coincide with $\f{H}_{\ssigma,\varepsilon}$.
Define $\widehat{\f{M}} \in \R^{d_2 \times d_2}$ by setting $\widehat{\f{M}}_{ij} = \f{M}_{ij}$ for $i \in [d_1]$ and $\widehat{\f{M}}_{ij} = 0$ for $i > d_1$.
Since the rows of $\widehat{\f{H}} \circ \widehat{\f{M}}$ with index $i > d_1$ are all zero, it follows that $
\specnorm{\f{H}_{\ssigma,\varepsilon} \circ \f{M}} = \specnorm{\widehat{\f{H}} \circ \widehat{\f{M}}}$. We can now invoke \Cref{lemma:normbound:hadamard} for $\widehat{\f{H}}$ and $\widehat{\f{M}}$ with the
positive numbers $\lambda_i := \frac{1}{\max(\hat\sigma_i,\varepsilon)}$, $i \in [d_2]$, and for the
spectral norm $\vertiii{\cdot} = \|\cdot\|$, which is unitarily invariant. Since
$\lambda_{\max} = \max_{i \in [d_2]} \lambda_i \le \frac{1}{\varepsilon}$, this yields for every
$q \in [-\infty,\infty]$ that
\[
\specnorm{\widehat{\f{H}} \circ \widehat{\f{M}}}
\le c_q\, \lambda_{\max} \specnorm{\widehat{\f{M}}}
\le \frac{c_q \specnorm{\widehat{\f{M}}}}{\varepsilon}
= \frac{c_q \specnorm{\NN}}{\varepsilon},
\]
where $c_q$ is the constant \cref{eq:c_q:def} and where we used in the last equality that
\[
\specnorm{\widehat{\f{M}}} = \specnorm{\f{M}} = \specnorm{\UU_{\XX}^{\top} \NN \VV_{\XX}} = \specnorm{\NN},
\]
which follows from the unitary invariance of the spectral norm.

In all cases, we have thus shown that $\specnorm{\f{H}_{\ssigma,\varepsilon} \circ \f{M}}
\le \frac{c_q\specnorm{\NN}}{\varepsilon}$, which combined with \eqref{ineq:part2innerproduct} gives
\[
\innerproduct{ \NN , \W{\XX}{\varepsilon} (\NN) } \le \frac{c_q\specnorm{\NN} \nucnorm{\NN}}{\varepsilon}.
\]
This implies that
\[
\innerproduct{ \NN , \W{\XX}{\varepsilon} (\NN) } \leq \frac{c_q\,\eta_1 \nucnorm{\NN}^2}{\varepsilon}.
\]
by the order-one NSP.
Since $c_q = 1$ for $q \in [-\infty,1]$, this proves \cref{ineq:claim12}, including for the harmonic-mean case of $q = -1$, while $1 < c_q \le 3$ for $q \in (1,\infty]$ proves \cref{ineq:quadraticterm:largeqmean}.
\end{proof}

\subsubsection{Proof of \Cref{mainresult:lowrank}} \label{sec:proofglobalconvergence:lowrank}

Having derived estimates for terms $(a)$ and $(b)$ in inequality \eqref{proof_strategy}, see Lemma \ref{lemma:linearterm} and Lemma \ref{lemma:quadraticterm}, the following proposition quantifies the decrease of $ \mathcal{J}_{\varepsilon_{k}} \left( \XXk  \right)$ in each iteration.
\begin{proposition}\label{prop:p1:linearrate}
	Let $\Xzero \in \Rdd$. Assume that the measurement operator $\mathcal{A}: \mathbb{R}^{d_1 \times d_2} \longrightarrow \R^m $ 
	satisfies the NSP \cref{eq:NSP:definition} of order $r$ for some $0 < \eta_r<1$ 
	and that $ \f{y} = \mathcal{A} \left(  \Xzero \right) $.
	Let the IRLS iterates  $\left\{   \XXk \right\}_k$ and $ \left\{ \varepsilon_{k} \right\}_k$ be defined by \cref{eq:IRLS:step_1} and \cref{eq:IRLS:step_2} of \Cref{alg:algo1} with arbitrary positive definite initial weight operator $W^{(0)}$ and rank estimate $\widetilde{r} =r$, and assume that \Cref{alg:algo1} does not return in iteration $k$, i.e., that $\varepsilon_k > 0$, so that the iterate $\XXkplus$ is defined.
	Assume that the weight operator $\W{\XXk}{\varepsilon_k}(\cdot)$ used in each iteration of \Cref{alg:algo1} is defined as in \Cref{def:weightcore} 
	is admissible in the sense of \Cref{def:admissible:weightoperators}, let $c_q$ be the weight operator-dependent constant \cref{eq:c_q:def} of \Cref{lemma:normbound:hadamard}.
	Set
	\begin{equation*}
	\gamma :=
	\begin{cases}
	3/4 \quad & \text{if } \besterrNuc{\Xzero}{r}=0, \\
	1/2 \quad & \text{otherwise}.
	\end{cases}
	\end{equation*}
	If 
	\begin{equation*}
		\besterrNuc{\Xzero}{r}  \le  \frac{1}{9} \nucnorm{\Xzero -\XXk}
	\quad \text{and} \quad
	\gamma > \frac{2\eta_r}{1+\eta_r},
	\end{equation*}
	then it holds that
	\begin{equation*} 
		\mathcal{J}_{\varepsilon_{k+1}}( \XXkplus ) -  \nucnorm{\Xzero} \le \left(  1-  \frac{ C_{\eta_r,\gamma}  }{ c_q \eta_1  d }\right) \left(     \mathcal{J}_{\varepsilon_{k}} \left( \XXk  \right) -  \nucnorm{ \Xzero } \right)
	\end{equation*} 
	where the constant $C_{\eta_r}$ is defined by 
	\begin{equation*}
	C_{\eta_r,\gamma}
	:=
	\frac{\left( \gamma - \frac{2\eta_r}{1+\eta_r}  \right)^2 }{\left( 3 + 2\eta_r \right)}.
	\end{equation*}
\end{proposition}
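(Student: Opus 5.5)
The plan is to run the majorization–minimization argument sketched in \eqref{proof_strategy}. Write $\NNk := \Xzero - \XXk$, which lies in $\ker(\mathcal{A})$. Then $\XXk + t\NNk$ is feasible for \cref{eq:IRLS:step_1} for every $t\in\R$.

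\textbf{Reduction to a one-dimensional bound.} By \cref{eq:QZX:equality}, $\XXkplus$ minimizes $Q_{\varepsilon_k}(\cdot\mid\XXk)$ over the feasible affine set. The majorization property holds for every admissible weight operator: \Cref{thm:majorization} covers the harmonic mean, \Cref{thm:power_means_majorize} covers power means with $q\ge -1$, and the one-sided case is classical. Since $\varepsilon\mapsto\mathcal{J}_\varepsilon$ is nondecreasing and $\varepsilon_{k+1}\le\varepsilon_k$, the chain \eqref{eq:J:monotonicity:1} gives, for all $t\in\R$,
\[
\mathcal{J}_{\varepsilon_{k+1}}(\XXkplus)\le \mathcal{J}_{\varepsilon_k}(\XXk) + t\,(a) + \tfrac{t^2}{2}(b),
\]
where $(a)$ and $(b)$ are the linear and quadratic terms in \eqref{proof_strategy}.

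For $(a)$ I invoke \Cref{lemma:linearterm}; its hypothesis $\varepsilon_k\le\besterrNuc{\XXk}{r}/d$ holds by \cref{eq:IRLS:step_2}. Using $\frac{1-\eta_r}{1+\eta_r}=1-\frac{2\eta_r}{1+\eta_r}$, the lemma gives $(a)\le-(\tfrac34-\tfrac{2\eta_r}{1+\eta_r})\nucnorm{\NNk}+\tfrac94\besterrNuc{\Xzero}{r}$. If $\besterrNuc{\Xzero}{r}=0$ this is $-\delta\nucnorm{\NNk}$ with $\delta:=\gamma-\frac{2\eta_r}{1+\eta_r}>0$ and $\gamma=3/4$. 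Otherwise the assumption $\besterrNuc{\Xzero}{r}\le\frac19\nucnorm{\NNk}$ turns $\frac94\besterrNuc{\Xzero}{r}$ into at most $\frac14\nucnorm{\NNk}$, which gives the same bound with $\gamma=1/2$.

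For $(b)$ I invoke \Cref{lemma:quadraticterm}, which gives $(b)\le c_q\eta_1\nucnorm{\NNk}^2/\varepsilon_k$. Minimizing $-t\delta\nucnorm{\NNk}+\frac{t^2}{2}c_q\eta_1\nucnorm{\NNk}^2/\varepsilon_k$ over $t>0$ then yields
\[
\mathcal{J}_{\varepsilon_{k+1}}(\XXkplus)-\nucnorm{\Xzero}\le \bigl(\mathcal{J}_{\varepsilon_k}(\XXk)-\nucnorm{\Xzero}\bigr)-\frac{\delta^2\varepsilon_k}{2c_q\eta_1}.
\]

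\textbf{Main obstacle: lower-bounding $\varepsilon_k$ by the current gap.} It remains to show $\mathcal{J}_{\varepsilon_k}(\XXk)-\nucnorm{\Xzero}\le(\frac32+\eta_r)\,d\,\varepsilon_k$. The difficulty is that $\varepsilon_k$ may have been frozen at an earlier value, $\varepsilon_k=\varepsilon_{k-1}<\besterrNuc{\XXk}{r}/d$, so \Cref{lemma:epscontrol} cannot be applied directly at $\XXk$.

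I would resolve this as follows. Since $\varepsilon_{-1}=\infty$, we have $\varepsilon_0=\besterrNuc{\Xk{(0)}}{r}/d$, and by the min-rule $\varepsilon_k=\besterrNuc{\Xk{(j)}}{r}/d=\varepsilon_j$ for some $j\le k$. Applying \Cref{lemma:epscontrol} at $\Xk{(j)}$ with $\varepsilon_j$ (admissible, with equality in its hypothesis) gives $\mathcal{J}_{\varepsilon_j}(\Xk{(j)})-\nucnorm{\Xzero}\le(\frac32+\eta_r)\,d\,\varepsilon_k$. Here the tail term $\besterrNuc{\Xzero}{r}$ does not enter, because only the upper inequality of \cref{ineq:aux1} is used. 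The monotonicity \eqref{eq:J:monotonicity:1} of $\mathcal{J}_{\varepsilon_\ell}(\Xk{(\ell)})$ along iterations $j,\dots,k$ then transfers this bound to the gap at $\XXk$.

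Substituting $\varepsilon_k\ge(\mathcal{J}_{\varepsilon_k}(\XXk)-\nucnorm{\Xzero})/((\frac32+\eta_r)d)$ into the decrease estimate gives a contraction factor
\[
1-\frac{\delta^2}{2c_q\eta_1(\frac32+\eta_r)d}=1-\frac{C_{\eta_r,\gamma}}{c_q\eta_1 d},
\]
as claimed. Along the way I need to check two things: that the gap is nonnegative, which follows from $\mathcal{J}_\varepsilon\ge\nucnorm{\cdot}$ together with the first inequality of \cref{ineq:aux1}, so that the substitution is legitimate; and that the boundary case where $\mathcal{J}_{\varepsilon_{k+1}}$ has $\varepsilon_{k+1}=0$ is covered by the convention $\mathcal{J}_0=\nucnorm{\cdot}$.
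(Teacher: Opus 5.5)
Your proposal is correct and matches the paper's proof step for step: majorization together with optimality of $\XXkplus$ along the feasible line $\XXk + t\NNk$, then \Cref{lemma:linearterm} and \Cref{lemma:quadraticterm} for the linear and quadratic terms and optimization over $t$, and finally the lower bound on $\varepsilon_k$ obtained from an index $j \le k$ with $\varepsilon_k = \besterrNuc{\Xk{(j)}}{r}/d$, combined with the upper inequality of \Cref{lemma:epscontrol} and the monotonicity \eqref{eq:J:monotonicity:1}. Your extra check that the gap is nonnegative is harmless but unnecessary: multiplying the lower bound on $\varepsilon_k$ by the negative factor $-\delta^2/(2c_q\eta_1)$ preserves the inequality whatever the sign of the gap.
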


\begin{proof}
Set $ \NNk := \Xzero - \XXk $. It follows from the majorization of $\mathcal{J}_{\varepsilon_k}(\cdot)$ by the quadratic model function $Q_{\varepsilon_k}(\cdot \mid \XXk)$ (see \Cref{thm:majorization}, \Cref{thm:power_means_majorize}, and \Cref{prop:global:majorization:onesided}) for weight operators in question that for any $t \in \mathbb{R}$, it holds that
\begin{equation} \label{eq:mainidea}
\mathcal{J}_{\varepsilon_{k+1}}(\XXkplus) \leq Q_{\varepsilon_k} (\XXkplus \mid \XXk) \leq Q_{\varepsilon_k}(\XXk+t \NNk \mid \XXk),
\end{equation}
where we used the optimality of $\XXkplus$ in \cref{eq:IRLS:step_1} in the second inequality. Moreover, by the definition of the quadratic objective $Q_{\varepsilon_k}(\cdot \mid \XXk)$, see \cref{eq:smoothedell1:IRLSmajorizer}, it holds that
\begin{equation*}
\begin{split}
&Q_{\varepsilon_k}(\XXk+t \NNk \mid \XXk) - \mathcal{J}_{\varepsilon_k}(\XXk)\\ 
=&  t \, \innerproduct{ \nabla \mathcal{J}_{\varepsilon_k}(\XXk), \NNk  } 
+ \frac{t^2}{2} \innerproduct{ \NNk, \W{\XXk}{\varepsilon_k}(\NNk)}.
\end{split}
\end{equation*}	
Our goal is to minimize the difference 
$Q_{\varepsilon_k}(\XXk+t \NNk \mid \XXk) - \mathcal{J}_{\varepsilon_k}(\XXk)<0$ 
by choosing $t \ge 0$ accordingly. 
By using Lemma \ref{lemma:linearterm} as well as Lemma \ref{lemma:quadraticterm}, it follows that
\begin{align*}
&Q_{\varepsilon_k}(\XXk+t \NNk \mid \XXk) - \mathcal{J}_{\varepsilon_k}(\XXk)\\  
\le &- t \left( \frac{1-\eta_r}{1+\eta_r}    - 1/4  \right) \nucnorm{\NNk} + \frac{9}{4} t \besterrNuc{\Xzero}{r} + t^2 \frac{c_q\eta_1 \nucnorm{ \NNk }^2  }{2\varepsilon_k}\\
=& - t \left(\frac{3}{4}  - \frac{2 \eta_r}{1+\eta_r}  \right) \nucnorm{\NNk} 
+ \frac{9}{4} t  \besterrNuc{\Xzero}{r} + t^2 \frac{c_q \eta_1 \nucnorm{ \NNk }^2  }{2 \varepsilon_k},
\end{align*}
where $c_q$ is the constant \cref{eq:c_q:def}. Consequently, we obtain that
\begin{equation*}
Q_{\varepsilon_k}(\XXk+t \NNk \mid \XXk) - \mathcal{J}_{\varepsilon_k}(\XXk)  \le - t \cdot \bracing{=:b}{ \left( \gamma - \frac{2\eta_r}{1+\eta_r}  \right) \nucnorm{\NNk} } + t^2 \cdot \bracing{=:a}{ \frac{c_q \eta_1 \nucnorm{ \NNk }^2  }{2 \varepsilon_k}},
\end{equation*}
where we have used the assumption $\besterrNuc{\Xzero}{r} \le \frac{1}{9} \nucnorm{\NNk}$. 
The right-hand side is minimized by $t:= \frac{b}{2a}$. 
We obtain that
\begin{align*}
Q_{\varepsilon_k}(\XXk+t \NNk \mid \XXk) - \mathcal{J}_{\varepsilon_k}(\XXk)  
\le  
\frac{-b^2}{4a} =  \frac{- \left( \gamma - \frac{2\eta_r}{1+\eta_r}  \right)^2   \varepsilon_k}{ 2 c_q \eta_1  }.
\end{align*}
Together with inequality \eqref{eq:mainidea} this yields that
\begin{equation*}
\begin{split}
\mathcal{J}_{\varepsilon_{k+1}}( \XXkplus ) - \mathcal{J}_{\varepsilon_{k}}(\XXk)
\leq   \frac{- \left( \gamma - \frac{2\eta_r}{1+\eta_r}  \right)^2   \varepsilon_k}{ 2 c_q \eta_1  }.
\end{split}
\end{equation*}
In particular, we obtain that
\begin{equation}\label{eq:function_J_decreasing}
\mathcal{J}_{\varepsilon_{k+1}}( \XXkplus ) -  \nucnorm{\Xzero} 
\le 
\mathcal{J}_{\varepsilon_{k}}(\XXk) -  \nucnorm{\Xzero} -  \frac{ \left( \gamma - \frac{2\eta_r}{1+\eta_r}  \right)^2  }{ 2 c_q \eta_1  }  \varepsilon_k.
\end{equation}
In order to proceed, we need to bound $\varepsilon_k$ from below.
For that, we note that 
\begin{equation*}
\varepsilon_k = \min\left(\varepsilon_{k-1} ,  \frac{\besterrNuc{\XXk}{r}}{d} \right) = \frac{  \besterrNuc{\XXl}{r}    }{d} 
\end{equation*}
for some $\ell \le k$.
Using \Cref{lemma:epscontrol}, we obtain the inequality chain
\begin{align*}
d \varepsilon_k 
= \besterrNuc{\XXl}{r}  
&\ge   \frac{1}{3/2+\eta_r} \left(\mathcal{J}_{\varepsilon_{\ell}} \left( \XXl     \right) - \nucnorm{\Xzero} \right)\ge   \frac{1}{3/2+\eta_r} \left(\mathcal{J}_{\varepsilon_{k}} \left(\XXk \right) - \nucnorm{\Xzero} \right),
\end{align*}
where in the second inequality we have used that  $\mathcal{J}_{\varepsilon_k} \left( \XXk  \right) \le  \mathcal{J}_{\varepsilon_{\ell}} \left( \XXl  \right) $, which follows from the monotonicity of  $\mathcal{J}_{\varepsilon_k} \left( \XXk  \right) $ in $k$, see \eqref{eq:J:monotonicity:1}.
Plugging this into \cref{eq:function_J_decreasing} leads to 
\begin{align*}
\mathcal{J}_{\varepsilon_{k+1}}(\XXkplus) -  \nucnorm{\Xzero} 
&\le 
\left(  1-  \frac{\left( \gamma - \frac{2\eta_r}{1+\eta_r}  \right)^2 }{2 c_q (3/2+\eta_r) \eta_1  d }\right)\left(     \mathcal{J}_{\varepsilon_{k}}(\XXk ) -  \nucnorm{\Xzero} \right) \\
& = \left(  1-  \frac{\left( \gamma - \frac{2\eta_r}{1+\eta_r}  \right)^2 }{c_q (3+ 2 \eta_r) \eta_1  d }\right)\left(     \mathcal{J}_{\varepsilon_{k}}(\XXk ) -  \nucnorm{\Xzero} \right). 
\end{align*}
The constant $c_q$ of \cref{eq:c_q:def} is equal to $1$ for one-sided, harmonic-mean and power mean weights with $q \in [-1, 1]$, and satisfies $1 < c_q \leq 3$ for power mean weights with $q \in (1, \infty]$, which finishes the proof of \Cref{prop:p1:linearrate}.
\end{proof}

Now, using \Cref{prop:p1:linearrate}, we can prove the main result concerning global linear convergence of IRLS in the case that $\Xzero$ is exactly low-rank, which was stated as \Cref{mainresult:lowrank}.
\begin{proof}[Proof of  \Cref{mainresult:lowrank} ]
Set
\begin{equation*}
C_{\eta_r}
:= C_{\eta_r,\frac{3}{4}} =
\frac{\left( \frac{3}{4} - \frac{2\eta_r}{1+\eta_r}  \right)^2 }{3+2\eta_r}.
\end{equation*}
Since $k$ is an iteration carried out by \Cref{alg:algo1}, we have $\varepsilon_{j} > 0$ in each of the preceding iterations $j = 0,\ldots,k-1$, so that \Cref{prop:p1:linearrate} is applicable in these iterations. Chaining its assertion, inequality \cref{equ:linearconvergence1} follows as $\eta_r < 3/5$ if and only if $3/4 > \frac{2\eta_r}{1+\eta_r}$.
Next, we show inequality \eqref{ineq:globalconvergenerate1}. We note that with $c_q$ as in \cref{eq:c_q:def}, we have that
\begin{align*}
	\frac{1-\eta_r}{1+\eta_r}  \nucnorm{\XXk -\Xzero }  
	&\overleq{(a)} 
	\mathcal{J}_{\varepsilon_k} \left( \XXk\right) - \nucnorm{\Xzero }\\
	&\overleq{(b)}  
	\left(1-   \frac{  C_{\eta_r}  }{c_q \eta_1  d } \right)^k  
	\left(   \mathcal{J}_{\varepsilon_0} \left( \XX^{(0)}\right) - \nucnorm{\Xzero }   \right)\\
 &\overleq{(c)}    \left(\frac{3}{2}+\eta_r\right) \left(1-   \frac{ C_{\eta_r}  }{c_q \eta_1  d }  \right)^k \besterrNuc{\XX^{(0)}}{r} \\
&\overleq{(d)}    \left(\frac{3}{2}+\eta_r\right) \left(1-   \frac{C_{\eta_r}  }{c_q \eta_1  d }  \right)^k \nucnorm{ \XX^{(0)} - \Xzero   },
\end{align*}
where in inequalities $(a)$ and $(c)$ we used Lemma \ref{lemma:epscontrol}. Inequality $(b)$ follows from inequality \eqref{equ:linearconvergence1} and in inequality $(d)$ we used the Eckart--Young theorem. Multiplying both sides by $\frac{1+\eta_r}{1-\eta_r}$ yields \eqref{ineq:globalconvergenerate1}.
\end{proof}
\subsubsection{Proof of \Cref{mainresult:approximatelowrank} } \label{sec:proofglobalconvergence:approximatelowrank}
In order to show \Cref{mainresult:approximatelowrank}, 
the global linear convergence of \Cref{alg:algo1} under the assumption that $\Xzero$ is approximately low-rank, 
we need a slightly more involved argument compared to the proof of \Cref{mainresult:lowrank}. We show the details below.
\begin{proof}[Proof of \Cref{mainresult:approximatelowrank}]
Recall from the statement of  \Cref{mainresult:approximatelowrank} that
\begin{equation*}
\hat{k}=  
\min \left\{ k \in \mathbb{N}_0 :  \besterrNuc{\Xzero}{r} > \frac{1}{9}\nucnorm{ \Xzero -  \XXk } \right\} ,
 \end{equation*}
with the convention $\min\varnothing=\infty$ (so $\hat{k}=\infty$ whenever $\besterrNuc{\Xzero}{r}=0$).
As in the proof of \Cref{mainresult:lowrank}, all indices $k$ considered below are iterations carried out by \Cref{alg:algo1}, so that $\varepsilon_j > 0$ in each preceding iteration $j < k$ and \Cref{prop:p1:linearrate} is applicable in these iterations.

Since $\eta_r < 1/3$, we have $\frac{1}{2} > \frac{2\eta_r}{1+\eta_r}$, so that the constant
\[
\widetilde{C}_{\eta_r} := C_{\eta_r,1/2} =
\frac{ \left(  \frac{1}{2} - \frac{2\eta_r}{1+\eta_r}   \right)^2}{ 3+2\eta_r}
\]
is larger than $0$. Abbreviating $\mu := 1-  \frac{\widetilde{C}_{\eta_r}}{c_q \eta_1 d} \in [0,1)$,
where $c_q$ is the weight operator dependent constant \cref{eq:c_q:def}, we claim that
\begin{equation}\label{ineq:intern4}
\mathcal{J}_{\varepsilon_{k}}( \XXk ) -  \nucnorm{\Xzero}
\le \mu^{\min(k,\hat{k})}
\left(     \mathcal{J}_{\varepsilon_{0}}( \XX^{(0)} ) -  \nucnorm{\Xzero} \right)
\end{equation}
for all $k \in \mathbb{N}_0$.
Indeed, by the minimality in the definition of $\hat{k}$, the condition
$\besterrNuc{\Xzero}{r} \le \frac{1}{9}\nucnorm{ \Xzero -  \XX^{(j)} }$
of \Cref{prop:p1:linearrate} is satisfied for every $j < \hat{k}$, so that \Cref{prop:p1:linearrate},
applied with $\gamma = 1/2$, yields
\begin{equation}\label{ineq:intern4:onestep}
\mathcal{J}_{\varepsilon_{j+1}}( \XX^{(j+1)} ) -  \nucnorm{\Xzero}
\le \mu \left( \mathcal{J}_{\varepsilon_{j}}( \XX^{(j)} ) -  \nucnorm{\Xzero} \right)
\qquad \text{for all } j < \hat{k}.
\end{equation}
(In the case $\besterrNuc{\Xzero}{r} = 0$, we have $\hat{k} = \infty$ and \Cref{prop:p1:linearrate}
applies with $\gamma = 3/4$; since $C_{\eta_r,3/4} \ge \widetilde{C}_{\eta_r}$, inequality
\cref{ineq:intern4:onestep} holds for all $j \in \mathbb{N}_0$ a fortiori.)
For $k \le \hat{k}$, inequality \cref{ineq:intern4} now follows by chaining
\cref{ineq:intern4:onestep} for $j = 0,\ldots,k-1$, whereas for $k > \hat{k}$, it follows from the
case $k = \hat{k}$ together with $\mathcal{J}_{\varepsilon_{k}}( \XXk ) \le
\mathcal{J}_{\varepsilon_{\hat{k}}}( \XX^{(\hat{k})} )$, which is a consequence of the monotonicity
of the sequence $\left\{ \mathcal{J}_{\varepsilon_{k}} \left( \XXk \right) \right\}_k$,
see \cref{eq:J:monotonicity:1}.
Hence, we have shown inequality \cref{equ:linearconvergence2}.

In order to show inequality \eqref{ineq:approxsparse1}, we note first that for all $k \in \mathbb{N}_0$
\begin{align*}
  \frac{1-\eta_r}{1+\eta_r}  
  \nucnorm{\XXk -\Xzero}  -  2 \besterrNuc{\Xzero}{r}
  & \overleq{(a)} \mathcal{J}_{\varepsilon_{k}}( \XXk ) -  \nucnorm{\Xzero}\\
	&\overleq{(b)} \left(  1-  \frac{\widetilde{C}_{\eta_r}}{c_q \eta_1 d}   \right)^{\min(k, \hat{k})} 
\left(  \mathcal{J}_{\varepsilon_{0}}( \XX^{(0)}  ) -  \nucnorm{\Xzero} \right)  \\
& \overleq{(c)}   \left(\frac{3}{2}+\eta_r\right) \left(1-  \frac{\widetilde{C}_{\eta_r}}{c_q \eta_1 d}   \right)^{ \min( k, \hat{k}) }  
\besterrNuc{\XX^{(0)}}{r},
\end{align*}
	where inequalities $(a)$ and $(c)$ follow from Lemma \ref{lemma:epscontrol} and 
	inequality $(b)$ follows from inequality \cref{ineq:intern4}.
By rearranging terms, it follows that
\begin{align}\label{ineq:intern18}
  \nucnorm{\XXk -\Xzero}    
  &\le  A_{\eta_r} \left(  1-  \frac{\widetilde{C}_{\eta_r}}{c_q \eta_1 d}   \right)^{ \min (k, \hat{k})} 
  \besterrNuc{\XX^{(0)}}{r}    + \frac{2(1+\eta_r)}{1-\eta_r} \besterrNuc{\Xzero}{r},
\end{align}
where $A_{\eta_r}$ is as in \eqref{eq:A_eta_r:def}. Denote by $\XX_{\star,r}$ the best rank-$r$ approximation of the matrix $\Xzero$.
Then it follows from the Eckart--Young theorem that
\begin{equation}\label{ineq:intern2}
\besterrNuc{\XX^{(0)}}{r} \le   \nucnorm{\XX^{(0)}  -\XX_{\star,r} }  
 \le  \nucnorm{\XX^{(0)}  -\Xzero } + \besterrNuc{\Xzero}{r},
 \end{equation}
 where in the second inequality we used the triangle inequality.
 Combining inequalities \eqref{ineq:intern18} and \eqref{ineq:intern2} shows that
 \begin{equation*}
  \nucnorm{\XXk -\Xzero}  
  \le  
  A_{\eta_r} \left(  1-  \frac{\widetilde{C}_{\eta_r}}{c_q \eta_1 d}   \right)^{ \min (k, \hat{k})} 
  \nucnorm{\XX^{(0)}  -\Xzero }  
  + B_{\eta_r}  \besterrNuc{\Xzero}{r},
 \end{equation*}
with constant $B_{\eta_r} = \frac{\left(\frac{7}{2}+\eta_r\right)\left(1+\eta_r\right)}{1-\eta_r}$ as in \eqref{eq:B_eta_r:def}, which corresponds to inequality \cref{ineq:approxsparse1}. 
 In order to finish the proof, it remains to show inequality \eqref{ineq:approxsparse2}. For $k \ge \hat{k}$ we can compute that
 \begin{align*}
 \mathcal{J}_{\varepsilon_{k}}( \XXk ) -  \nucnorm{\Xzero} &\overleq{(a)}  \mathcal{J}_{\varepsilon_{\hat{k}}}( \XX^{ (\hat{k} ) } ) -  \nucnorm{\Xzero}  \\
 &\overleq{(b)}  \left( \frac{3}{2}+\eta_r \right) \besterrNuc{\XX^{(\hat{k})}}{r} \\
 &\overleq{(c)} \left( \frac{3}{2}+\eta_r \right) \nucnorm{\XX^{(\hat{k})}  -\Xzero } + \left( \frac{3}{2}+\eta_r \right) \besterrNuc{\Xzero}{r} \\
  &\overleq{(d)} 10 \left( \frac{3}{2}+\eta_r \right) \besterrNuc{\Xzero}{r}.
 \end{align*}
Inequality $(a)$ is due to the monotonicity of the sequence $ \left\{  \mathcal{J}_{\varepsilon_{k}} \left( \XXk \right) \right\}_k $ and inequality $(b)$ follows from Lemma \ref{lemma:epscontrol}.
Furthermore, inequality $(c)$ can be obtained by arguing as in \eqref{ineq:intern2}  and inequality $(d)$ is a direct consequence of the definition of $\hat{k}$, which implies that $\nucnorm{\XX^{(\hat{k})}  -\Xzero } \leq 9 \besterrNuc{\Xzero}{r}$.
Next, Lemma \ref{lemma:epscontrol} 
combined with the above inequality chain
implies that
\begin{align*}
\frac{1-\eta_r}{1+\eta_r}  \nucnorm{\XXk -\Xzero}  -  2 \besterrNuc{\Xzero}{r}
\le 
10\left( \frac{3}{2}+\eta_r \right)
\besterrNuc{\Xzero}{r}.
\end{align*}
By rearranging terms, we obtain
\[
	\nucnorm{\XXk -\Xzero} \leq \frac{1+ \eta_r}{1-\eta_r}\left(10\left( \frac{3}{2}+\eta_r \right) + 2 \right)  \besterrNuc{\Xzero}{r} \leq 41 \besterrNuc{\Xzero}{r},
\] using the assumption that $\eta_r < 1/3$ in the last inequality, which corresponds to inequality \cref{ineq:approxsparse2} for $k \ge \hat{k}$.

It remains to establish the upper bound on $\hat{k}$ stated after the theorem. If $\besterrNuc{\Xzero}{r}=0$, then $\hat{k}=\infty$ and there is nothing to prove, so assume $\besterrNuc{\Xzero}{r}>0$. The bound holds trivially if $\hat{k} = 0$, so assume $\hat{k} \ge 1$. By definition of $\hat{k}$, the strict inequality $\besterrNuc{\Xzero}{r} > \frac{1}{9}\nucnorm{\XXk -\Xzero}$ fails for every $k < \hat{k}$, so in particular
\[
9 \besterrNuc{\Xzero}{r} \le \nucnorm{ \XX^{(\hat{k}-1)} - \Xzero}.
\]
Applying inequality \cref{ineq:approxsparse1} at $k = \hat{k}-1$ (so that $\min(k,\hat{k}) = \hat{k}-1$) and combining the two inequalities yields
\[
(9-B_{\eta_r})\, \besterrNuc{\Xzero}{r}
\le
A_{\eta_r} \left(  1-  \frac{\widetilde{C}_{\eta_r}}{c_q \eta_1  d}   \right)^{\hat{k}-1}
\nucnorm{ \XX^{(0)} -\Xzero}.
\]
The assumption $\eta_r < 1/3$ implies $B_{\eta_r} < 23/3 < 9$, so the left-hand side is positive. Taking logarithms and using $-\log(1-x) \ge x$ for $x \in (0,1)$ gives
\[
\hat{k} \le 1 +
\frac{c_q \eta_1 d}{\widetilde{C}_{\eta_r}}
\log_{+}\left(   \frac{A_{\eta_r}}{9-B_{\eta_r}} \cdot
\frac{\nucnorm{ \XX^{(0)}  -\Xzero }}{\besterrNuc{\Xzero}{r}} \right),
\]
as claimed, where $\log_{+}(x)=\max(0,\log(x))$.
This completes the proof of \Cref{mainresult:approximatelowrank}.
\end{proof}

%% file: proofs_fastlocallinearp1.tex
\subsection{Proofs of \Cref{thm:locallinearp1,thm:counterexample:leftsided:weight:operator} (Local Linear Convergence and Counterexample)} \label{sec:prooffastlocallinearp1}
A limitation of the global linear convergence rate proofs is that they rely on estimates for the quadratic term $ \innerproduct{ \Xzero - \XX , \W{\XX}{\varepsilon} ( \Xzero - \XX) } $ of the quadratic model mismatch such as \Cref{lemma:quadraticterm}, which scale with $1/\varepsilon$. 
In this section, we overcome this limitation in the case of harmonic-mean (and related $q$-power mean) weight operators and obtain a faster linear convergence rate by using a more precise estimate for the quadratic term in the case that $\XX$ is close enough to the ground truth $\Xzero$ in \Cref{sec:quadraticformupperbounds:harmonicmean}, which leads to the proof of \Cref{thm:locallinearp1} in \Cref{sec:fastlocalrate:harmonicmean:proof}. 
On the other hand, we provide in \Cref{sec:counterexample:leftsided:weight:operator} a counterexample exhibiting that this is not possible for one-sided weight operators, such as those with left-sided or right-sided core matrices \cref{eq:leftsided:mean,eq:rightsided:mean}, even within arbitrarily smaller local neighborhoods of $\Xzero$ than the one defined by \cref{assump:localconvergencerate}, which establishes \Cref{thm:counterexample:leftsided:weight:operator}. 
\subsubsection{Tight Upper Bounds on Harmonic-Mean Quadratic Forms} \label{sec:quadraticformupperbounds:harmonicmean}
We proceed with a technical result on a tight upper bound on the quadratic form $\innerproduct{\Xzero-  \XX , \W{\XX}{\varepsilon} ( \Xzero-  \XX ) }$ implied by $q$-power mean weight operators with $q \in [-1,0)$, which applies if the iterate $\XX$ on which the weight operator is based is close enough to the rank-$r$ ground truth $\Xzero$.

\begin{lemma}\label{lemma:quadratictermlocal}
Assume that the linear measurement operator $\mathcal{A}: \mathbb{R}^{d_1 \times d_2} \rightarrow \mathbb{R}^m$ satisfies the NSP of order $r$ with constant $ \eta_r \le  3/5 $, and that for some $\Xzero$ of rank $r$, it holds that
 $ \mathcal{A}(\XX) = \mathcal{A} (\Xzero) $ for some $ \XX \in \mathbb{R}^{d_1 \times d_2}$. Assume that the weight operator $\W{\XX}{\varepsilon}$ is a $q$-power mean weight operator \cref{eq:power:mean:core:matrix} with $q \in [-1,0)$\footnote{This includes the harmonic-mean weight operator as special case with $q = -1$.}, that
\begin{equation}\label{ineq:assump1}
	\nucnorm{\XX - \Xzero} 
	\le 
	\frac{ \sigma_r \left( \Xzero \right) }{ \max \left( \eta_1 \sqrt{ d}, 2 \right)  },  
\end{equation}
where $0 < \eta_1 \leq \eta_r$ denotes again the order-one NSP constant of $\mathcal{A}$, and that
\begin{equation*}
\varepsilon \ge \vartheta \frac{\besterrNuc{\XX}{r} }{d}
\end{equation*}
for some $\vartheta>0$.
Then for the constant $D_{\eta_r} = \frac{3+\eta_r}{1-\eta_r}$, it holds that 
\begin{equation*}
	\innerproduct{ \Xzero-  \XX , \W{\XX}{\varepsilon} (\Xzero-  \XX ) }
	\le 
	\left(2 + 2^{-1/q}+\frac{D_{\eta_r}^2}{\vartheta}\right)  \nucnorm{\XX-\Xzero}.
\end{equation*}
If $\vartheta = \frac{1-\eta_r}{(1+\eta_r)\left(\frac{3}{2}+\eta_r\right)}$, the bound reduces to
\begin{equation} \label{ineq:claim11}
	\innerproduct{ \Xzero-  \XX , \W{\XX}{\varepsilon} ( \Xzero-  \XX ) }
	\le 
	\left(2 + 2^{-1/q}+E_{\eta_r}\right)  \nucnorm{\XX-\Xzero}.
\end{equation}
with $E_{\eta_r} := \frac{(3+\eta_r)^2 (1+\eta_r)\left(\frac{3}{2}+\eta_r\right)}{(1-\eta_r)^3}$.
\end{lemma}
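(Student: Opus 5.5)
Throughout, set $\NN := \Xzero - \XX$ and write $\UU_{\XX} = [\UU_1\ \UU_2]$, $\VV_{\XX} = [\VV_1\ \VV_2]$, where $\UU_1,\VV_1$ hold the first $r$ singular vectors of $\XX$. Put $\f{M} := \UU_{\XX}^\top \NN \VV_{\XX}$ with blocks $\f{M}_{11},\f{M}_{12},\f{M}_{21},\f{M}_{22}$. As in the proof of \Cref{lem:power_mean_loewner},
\[
\innerproduct{\NN,\W{\XX}{\varepsilon}(\NN)} = \sum_{i,j} (\f{H}^{(q)}_{\ssigma,\varepsilon})_{ij}\,\f{M}_{ij}^2 ,
\]
so it suffices to bound the weights entrywise, block by block.

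\textbf{Preliminary facts.} First, by Weyl's inequality, $\sigma_i(\XX) \ge \sigma_r(\Xzero) - \specnorm{\NN} \ge \sigma_r(\Xzero)/2$ for $i \le r$, using \cref{ineq:assump1} and $\max(\cdot,2)\ge 2$. Second, for $q<0$ and $a,b>0$ we have $\mathcal{M}_q(a,b) \le 2^{-1/q}\min(a,b)$: indeed $(a^q+b^q)/2 \ge \min(a,b)^q/2$, and raising to the negative power $1/q$ reverses the inequality. Hence every entry with $i\le r$ or $j\le r$ satisfies $(\f{H}^{(q)}_{\ssigma,\varepsilon})_{ij} \le 2^{-1/q}\cdot 2/\sigma_r(\Xzero)$. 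On the block $\f{M}_{11}$ the better bound $\mathcal{M}_q \le \max \le 2/\sigma_r(\Xzero)$ holds. Third, from \Cref{lemma:NSPl1min}, combined with the sharpness estimate \cref{eq:NSP:sharpness} exactly as in the proof of \Cref{lemma:epscontrol}, I will extract $\nucnorm{\NN} \le D_{\eta_r}\besterrNuc{\XX}{r}$ with $D_{\eta_r} = \frac{3+\eta_r}{1-\eta_r}$; verifying this constant is a routine rearrangement. Conversely, Eckart--Young gives $\besterrNuc{\XX}{r} \le \nucnorm{\NN}$.

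\textbf{Tangent blocks.} The blocks $11$, $12$, $21$ contribute at most
\[
\frac{2}{\sigma_r(\Xzero)}\left(\fronorm{\f{M}_{11}}^2 + 2^{-1/q}\bigl(\fronorm{\f{M}_{12}}^2+\fronorm{\f{M}_{21}}^2\bigr)\right).
\]
Bounding each squared Frobenius norm by $\specnorm{\NN}\nucnorm{\NN} \le \tfrac{\sigma_r(\Xzero)}{2}\nucnorm{\NN}$ should yield a contribution of order $(1+2^{-1/q})\nucnorm{\NN}$. The stated constant $2 + 2^{-1/q}$ leaves slack for cruder bookkeeping, e.g.\ for absorbing the $\f{M}_{11}$ block together with a cross term.

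\textbf{Block 22 (main obstacle).} This is where the weights reach $1/\varepsilon \le d/(\vartheta\besterrNuc{\XX}{r})$, and a naive bound $\fronorm{\f{M}_{22}}^2 \le \eta_1\nucnorm{\NN}^2$ loses a factor $\eta_1 d$. The plan is the exact decomposition
\[
\f{M}_{22} = -\SSigma_2 + \f{E}, \qquad \f{E} := \UU_2^\top \Xzero \VV_2,
\]
where $\SSigma_2$ is the diagonal tail of singular values of $\XX$. The diagonal part is benign: $\sum_{i>r} H_{ii}\sigma_i^2 = \sum_{i>r}\sigma_i^2/\max(\sigma_i,\varepsilon) \le \besterrNuc{\XX}{r} \le \nucnorm{\NN}$. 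For $\f{E}$, I will use the standard second-order perturbation bound: since $\Xzero$ has rank $r$ and $\UU_2,\VV_2$ are orthogonal to the top singular spaces of $\XX$, one expects $\fronorm{\f{E}} \lesssim \fronorm{\NN}^2/\sigma_r(\Xzero)$. This would follow from writing $\Xzero = \Xzero\Xzero^\dagger\Xzero$ and controlling $\UU_2^\top\Xzero$ and $\Xzero\VV_2$ by $\NN$ via Wedin-type arguments. Then the order-one NSP gives $\fronorm{\NN}^2 \le \specnorm{\NN}\nucnorm{\NN} \le \eta_1\nucnorm{\NN}^2$, and the assumption $\eta_1\sqrt{d}\,\nucnorm{\NN} \le \sigma_r(\Xzero)$ yields
\[
\frac{d}{\vartheta\besterrNuc{\XX}{r}}\fronorm{\f{E}}^2 \lesssim \frac{\nucnorm{\NN}^2}{\vartheta\besterrNuc{\XX}{r}} \le \frac{D_{\eta_r}}{\vartheta}\nucnorm{\NN}.
\]
This is precisely where the $\sqrt{d}$ in the basin is consumed.

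\textbf{Cross term and conclusion.} The cross term between $\SSigma_2$ and $\f{E}$ will be handled via $(a+b)^2 \le 2a^2+2b^2$, or via Cauchy--Schwarz in the weighted inner product, which accounts for $D_{\eta_r}^2$ in place of $D_{\eta_r}$. Summing the three contributions gives the first claim. The second claim \cref{ineq:claim11} is then pure substitution of the stated $\vartheta$ into $D_{\eta_r}^2/\vartheta$.

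I expect the main difficulty to be the second-order estimate on $\f{E}$ with clean constants. If the Wedin route gets messy, I would instead bound $\specnorm{\UU_2^\top\Xzero}$ and $\specnorm{\Xzero\VV_2}$ each by $\specnorm{\NN}$, using $\UU_2^\top\Xzero = \UU_2^\top\NN + \UU_2^\top\XX$, and then factor through $\Xzero^\dagger$, whose norm is $1/\sigma_r(\Xzero)$.
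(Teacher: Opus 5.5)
Your outline is essentially the paper's proof. It uses the same split of the entries of $\UU_{\XX}^\top\NN\VV_{\XX}$ into those with $i\le r$ or $j\le r$ versus the lower-right block, and the same bound $\mathcal{M}_q\le 2^{-1/q}\min$ on the first group. It also uses the same decomposition $\f{M}_{22}=\UU_2^\top\Xzero\VV_2-\SSigma_2$, with the diagonal part controlled by $\sigma_i/\max(\sigma_i,\varepsilon)\le 1$. The second-order part uses the same ingredients $1/\varepsilon\le d/(\vartheta\besterrNuc{\XX}{r})$, $d\eta_1^2\nucnorm{\NN}^2\le\sigma_r(\Xzero)^2$ and $\nucnorm{\NN}\le D_{\eta_r}\besterrNuc{\XX}{r}$.

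\textbf{Correction: the additive $2$ is not slack.} With $(a+b)^2\le 2a^2+2b^2$ on block $22$, it is used up entirely by $2\sum_{i>r}\sigma_i^2/\max(\sigma_i,\varepsilon)\le 2\besterrNuc{\XX}{r}\le 2\nucnorm{\NN}$. The first claim must hold for every $\vartheta>0$, and large $\varepsilon$ is allowed. So the term $D_{\eta_r}^2/\vartheta$ cannot absorb any excess in the $\vartheta$-independent part. Bounding $\fronorm{\f{M}_{11}}^2$, $\fronorm{\f{M}_{12}}^2$, $\fronorm{\f{M}_{21}}^2$ separately by $\tfrac{\sigma_r(\Xzero)}{2}\nucnorm{\NN}$ gives $(1+2^{1-1/q})\nucnorm{\NN}$ for these blocks. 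Even your estimate $(1+2^{-1/q})$ would overshoot the stated constant by $1$.

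The fix is to bound the three blocks jointly. All their weights are at most $2^{-1/q}/\sigma_r(\XX)$; on $\f{M}_{11}$ this holds because $\mathcal{M}_q\le\max\le 1/\sigma_r(\XX)$. Moreover $\fronorm{\f{M}_{11}}^2+\fronorm{\f{M}_{12}}^2+\fronorm{\f{M}_{21}}^2\le\fronorm{\NN}^2\le\specnorm{\NN}\nucnorm{\NN}$, with $\specnorm{\NN}\le\sigma_r(\Xzero)/2\le\sigma_r(\XX)$. This gives exactly $2^{-1/q}\nucnorm{\NN}$, as in the paper.

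\textbf{Two local differences from the paper.} First, the paper does not split the square. Writing $\f{H}$ for the core matrix on block $22$, it expands $\innerproduct{\f{H},\f{M}_{22}\circ\f{M}_{22}}$ and drops $-\innerproduct{\f{H},\SSigma_2\circ\SSigma_2}\le 0$. It then bounds the cross term by $2\specnorm{\f{H}\circ\SSigma_2}\nucnorm{\f{M}_{22}}\le 2\nucnorm{\NN}$, so the $\f{E}$-term is not doubled. Second, the paper's $D_{\eta_r}^2$ does not come from the cross term. It comes from bounding $\tfrac{1}{\varepsilon}\nucnorm{\f{E}}\specnorm{\f{E}}$ with both $\nucnorm{\UU_2^\top\Xzero}$ and $\specnorm{\UU_2^\top\Xzero}$ estimated by $D_{\eta_r}\besterrNuc{\XX}{r}$, plus Wedin for $\specnorm{\f{V}_{\star}^\top\VV_2}$.

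Doubling the paper's bound $\tfrac{4D_{\eta_r}^2}{(2+\eta_1)^2\vartheta}\nucnorm{\NN}$ would exceed $D_{\eta_r}^2\nucnorm{\NN}/\vartheta$, since $(2+\eta_1)^2<8$. Your split works only because your second-order estimate is sharper. Your fallback route gives $\fronorm{\f{E}}\le\specnorm{\Xzero^\dagger}\fronorm{\UU_2^\top\Xzero}\specnorm{\Xzero\VV_2}\le 4\specnorm{\NN}\fronorm{\NN}/\sigma_r(\Xzero)$, using $\fronorm{\SSigma_2}\le\fronorm{\NN}$ and $\sigma_{r+1}(\XX)\le\specnorm{\NN}$. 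Together with $\specnorm{\NN}\le\tfrac{\eta_1}{1+\eta_1}\nucnorm{\NN}$, this yields $\tfrac{1}{\varepsilon}\fronorm{\f{E}}^2\le\tfrac{32\eta_1(1+\eta_r)}{(1+\eta_1)^3(1-\eta_r)}\cdot\tfrac{\nucnorm{\NN}}{\vartheta}$. Twice this stays below $D_{\eta_r}^2\nucnorm{\NN}/\vartheta$ for $\eta_1\le\eta_r\le 3/5$.

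With the joint tangent bound and this sharper estimate on $\f{E}$, your argument proves the lemma with the stated constant.
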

Note that the right-hand side in inequality \eqref{ineq:claim11} is up to constants a factor of $\frac{\eta_1 \nucnorm{\XX-\Xzero}}{\varepsilon}$ smaller than the right-hand side of inequality \eqref{ineq:claim12}, the corresponding inequality in the global linear convergence proof.
In particular, since there is no $\varepsilon$-dependence anymore, we can improve the convergence rate by a factor of $d$ compared to \Cref{mainresult:lowrank}.

For $q=-1$, which corresponds to the harmonic-mean weight operator \cref{eq:W:operator:action}, and an NSP constant of $\eta_r = 1/10$, the right hand side of \cref{ineq:claim11} amounts to $\approx 27.20 \nucnorm{\XX-\Xzero}$, whereas for $\eta_r = 1/2$, it amounts to $298 \nucnorm{\XX-\Xzero}$.

For proving \Cref{lemma:quadratictermlocal}, we use the following elementary inequality about power means (recall \Cref{def:powermean}), as well as an elementary norm bound for matrices in the null space of a measurement operator equipped with the NSP (\Cref{lemma:normbound:nullspace}).  
\begin{proposition} \label{prop:powermeanineq}
For $a,b > 0$ and $q \in [-1,0)$, it holds that
\begin{equation*}
	\min(a,b) \leq \mathcal{M}_q(a,b) \leq 2^{-1/q} \min(a,b).
\end{equation*}
\end{proposition}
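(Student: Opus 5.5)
We argue by reducing to the case $a \leq b$, writing $b = ta$ with $t \geq 1$, and using that $\mathcal{M}_q$ is symmetric and positively homogeneous of degree one: $\mathcal{M}_q(a,ta) = a\,\mathcal{M}_q(1,t)$. Hence it suffices to show $1 \leq \mathcal{M}_q(1,t) \leq 2^{-1/q}$ for all $t \geq 1$ and $q \in [-1,0)$.

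The lower bound follows from the general ordering of power means. Since $\mathcal{M}_{-\infty}(1,t) = \min(1,t) = 1$ and $q \geq -\infty$, the monotonicity \citep[Section III.3, Theorem 1]{Bullen03} already recalled before \Cref{lem:power_mean_loewner} gives $\mathcal{M}_q(1,t) \geq 1$. Alternatively, one can check it directly: $t^q \leq 1$, so $(1+t^q)/2 \leq 1$, and raising to the negative power $1/q$ reverses the inequality.

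For the upper bound, write $\mathcal{M}_q(1,t) = 2^{-1/q}(1+t^q)^{1/q}$. Since $q<0$ and $t \geq 1$, we have $t^q \in (0,1]$, so $1+t^q \geq 1$. The map $s \mapsto s^{1/q}$ is decreasing on $(0,\infty)$ because $1/q < 0$, hence $(1+t^q)^{1/q} \leq 1$. This yields $\mathcal{M}_q(1,t) \leq 2^{-1/q}$.

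Multiplying both bounds by $a = \min(a,b)$ gives the claim. No step is a real obstacle; the only care needed is the sign bookkeeping when raising to the negative exponent $1/q$. The restriction $q \geq -1$ is not actually needed for the inequality itself; it only matters later, where $2^{-1/q} \leq 2$ keeps the constant in \Cref{lemma:quadratictermlocal} bounded.
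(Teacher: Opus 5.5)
Your argument is correct and is essentially the paper's proof: factor out $\min(a,b)$, observe that $(1+t^q)^{1/q}\le 1$ (the paper writes this equivalently as $\bigl(2/(1+t^q)\bigr)^{-1/q}\le 2^{-1/q}$), and obtain the lower bound from internality/monotonicity of power means. One correction to your closing aside: on $[-1,0)$ one has $2^{-1/q}\ge 2$, with equality only at $q=-1$ and $2^{-1/q}\to\infty$ as $q\to 0^-$ (which is why $c_{\eta_r}$ degenerates there), so $q\ge -1$ does not keep this constant bounded; the restriction is imposed because majorization of the quadratic model, which the local contraction argument relies on, is guaranteed only for $q\ge -1$ by \Cref{thm:power_means_majorize}.
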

\begin{proof}[{Proof of \Cref{prop:powermeanineq}}]
	The first inequality follows from the simple internality property \citep[Section III.1, Theorem 2(a)]{Bullen03}. For the second inequality, we note that for any $q \in [-1,0)$ and $a,b > 0$, it holds that
	\begin{equation*}
		\begin{split}
		\mathcal{M}_q(a,b) = \left( \frac{a^q + b^q}{2} \right)^{1/q} =  \left( \frac{1 + \left(\frac{\max(a,b)}{\min(a,b)}\right)^q}{2} \right)^{1/q}  \min(a,b) &= \left( \frac{2}{1 + \left(\frac{\max(a,b)}{\min(a,b)}\right)^q} \right)^{-1/q}  \min(a,b) \\
		&\le 2^{-1/q} \min(a,b),
		\end{split}
	\end{equation*}
	using that $-1/q > 0$ and that $\frac{\max(a,b)}{\min(a,b)} \ge 1$.
\end{proof}

\begin{lemma} \label{lemma:normbound:nullspace}
Assume that the linear measurement operator $\mathcal{A}: \mathbb{R}^{d_1 \times d_2} \rightarrow \mathbb{R}^m$ satisfies the NSP of order $1$ with constant $0 < \eta_1 < 1$. Then it holds that for any $\f{N} \in \ker(\mathcal{A})$, 
\begin{equation} \label{ineq:normbound:nullspace}
	\specnorm{\f{N}} \leq \frac{\eta_1}{1+\eta_1}\nucnorm{\f{N}}.
\end{equation}
\end{lemma}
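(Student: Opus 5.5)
We need to prove $\specnorm{\f{N}} \le \frac{\eta_1}{1+\eta_1}\nucnorm{\f{N}}$ for $\f{N}\in\ker(\mathcal{A})$. The plan is to apply the order-one NSP of \Cref{def:NSP:statement} directly to $\f{N}$ and then rearrange.

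First, with $r=1$, the NSP states that
\[
\sigma_1(\f{N}) \le \eta_1 \sum_{i=2}^{d}\sigma_i(\f{N}).
\]
Second, we rewrite the tail sum in terms of the nuclear norm:
\[
\sum_{i=2}^{d}\sigma_i(\f{N}) = \nucnorm{\f{N}} - \sigma_1(\f{N}).
\]
Substituting this gives
\[
\sigma_1(\f{N}) \le \eta_1\bigl(\nucnorm{\f{N}} - \sigma_1(\f{N})\bigr),
\]
or equivalently
\[
(1+\eta_1)\,\sigma_1(\f{N}) \le \eta_1 \nucnorm{\f{N}}.
\]
Finally, we divide by $1+\eta_1>0$ and use $\specnorm{\f{N}}=\sigma_1(\f{N})$, which yields \cref{ineq:normbound:nullspace}.

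No step poses a real obstacle. The only point to check is that the division is legitimate, which holds since $\eta_1>0$. The case $\f{N}=\f{0}$ needs no separate treatment, because both sides are then zero.
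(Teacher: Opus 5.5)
Your proof is correct and is the same as the paper's argument: apply the order-one NSP to $\f{N}$, rewrite the tail sum as $\nucnorm{\f{N}}-\specnorm{\f{N}}$, and rearrange. One small point: the division is by $1+\eta_1$, so it only requires $1+\eta_1>0$, not $\eta_1>0$.
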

\begin{proof}[Proof of \Cref{lemma:normbound:nullspace}]
	From the NSP inequality \cref{eq:NSP:definition} of order $1$, it follows that
	\[
	\specnorm{\f{N}} = \sigma_1(\f{N}) \leq \eta_1 \sum_{i=2}^{d} \sigma_i(\f{N}) = \eta_1 \left(\nucnorm{\f{N}} - \specnorm{\f{N}}\right),
	\]
	which is equivalent to the desired inequality \eqref{ineq:normbound:nullspace}.
\end{proof}

\begin{proof}[Proof of \Cref{lemma:quadratictermlocal}]
We first observe that from Weyl's inequality and assumption \cref{ineq:assump1}, it follows that
\begin{equation*}
	\sigma_r \left( \XX \right) 
	\ge \sigma_r \left( \Xzero  \right) - \specnorm{  \XX - \Xzero}
	\ge \sigma_r \left( \Xzero  \right) - \nucnorm{  \XX - \Xzero} \ge \sigma_r \left( \Xzero  \right)   \left(\frac{\max(\eta_1 \sqrt{d},2)-1}{\max(\eta_1 \sqrt{d},2) }\right).
\end{equation*}
Denote by $\XX = \UU_{\XX} \diag (\ssigma) \VV_{\XX}^\top  $ a full singular value decomposition of $\XX$. From the definition of the weight operator $\W{\XX}{\varepsilon}(\cdot): = \W{\f{X}}{\varepsilon}^{(q)}(\cdot)$ \cref{eq:power:mean:weight:operator} and \cref{eq:power:mean:core:matrix}, it follows that
\begin{align*}
	\innerproduct{ \Xzero - \XX , \W{\XX}{\varepsilon} ( \Xzero - \XX ) }  &= \innerproduct{ \XX - \Xzero , \W{\XX}{\varepsilon} ( \XX - \Xzero ) } \\
	&= \innerproduct{\XX - \Xzero, \UU_{\XX} ( \f{H}_{\ssigma, \varepsilon}^{(q)} \circ ( \UU_{\XX}^\top (\XX - \Xzero)  \VV_{\XX} )) \VV_{\XX}^\top }\\
	&= \innerproduct{\UU_{\XX}^\top ( \XX - \Xzero ) \VV_{\XX} ,  
	\f{H}_{\ssigma, \varepsilon}^{(q)} \circ ( \UU_{\XX}^\top (\XX - \Xzero)  \VV_{\XX} ) }\\
	&= \innerproduct{ (\UU_{\XX}^\top ( \XX - \Xzero ) \VV_{\XX} ) \circ ( \UU_{\XX}^\top ( \XX - \Xzero ) \VV_{\XX} ),  
	\f{H}_{\ssigma, \varepsilon}^{(q)} }.
\end{align*}
Define the set of entries
\begin{equation*}
	S:= 
	\left\{ 
	 (i,j) \in [d_1] \times [d_2]
		:
		1 \le i \le r \text{ or } 1 \le j \le r
	 \right\}
\end{equation*}
and its complement $S^c := [d_1] \times [d_2] \setminus S$.
Denote by $\mathcal{P}_S, \mathcal{P}_{S^c}: \mathbb{R}^{d_1 \times d_2} \rightarrow \mathbb{R}^{d_1 \times d_2}$ the orthogonal projections which sets all entries not belonging to $S$ or $S^c$ to zero, respectively.
It follows that 
\begin{align}
	 &\innerproduct{ (\UU_{\XX}^\top ( \XX - \Xzero ) \VV_{\XX} ) \circ ( \UU_{\XX}^\top ( \XX - \Xzero ) \VV_{\XX} ),  \f{H}_{\ssigma, \varepsilon}^{(q)} }\nonumber \\
	 =&
	 \innerproduct{ \mathcal{P}_S \left(  (\UU_{\XX}^\top ( \XX - \Xzero ) \VV_{\XX} ) \circ ( \UU_{\XX}^\top ( \XX - \Xzero ) \VV_{\XX} )\right),  
	 \f{H}_{\ssigma, \varepsilon}^{(q)} }\nonumber\\
	 &+
	 \innerproduct{ \mathcal{P}_{S^c} \left( (\UU_{\XX}^\top ( \XX - \Xzero ) \VV_{\XX} ) \circ 
	 ( \UU_{\XX}^\top ( \XX - \Xzero ) \VV_{\XX} ) \right) ,  \f{H}_{\ssigma, \varepsilon}^{(q)} }\nonumber\\
	 \overeq{(a)}&
	 \bracing{=:(\mathrm{I})}{\innerproduct{ \mathcal{P}_S   (\UU_{\XX}^\top ( \XX - \Xzero ) \VV_{\XX} )  
	 \circ \mathcal{P}_S ( \UU_{\XX}^\top ( \XX - \Xzero ) \VV_{\XX} ),  
	 \f{H}_{\ssigma, \varepsilon}^{(q)} }}\nonumber\\
	 &+
	 \bracing{=:(\mathrm{II})}{\innerproduct{ \mathcal{P}_{S^c}  (\UU_{\XX}^\top ( \XX - \Xzero ) \VV_{\XX} ) 
	 \circ \mathcal{P}_{S^c} ( \UU_{\XX}^\top ( \XX - \Xzero ) \VV_{\XX} )  ,  \f{H}_{\ssigma,\varepsilon}^{(q)} }}. \label{ineq:intern7}
\end{align}
In equality $(a)$, we used 
that 
$ \mathcal{P}_S (\f{A} \circ \f{B}) = \mathcal{P}_S (\f{A}) \circ \mathcal{P}_S (\f{B}) $ 
for all matrices $\f{A}, \f{B} \in \mathbb{R}^{d_1 \times d_2}$.
We bound the two summands individually.

For bounding the summand $(\mathrm{I})$, we note that \Cref{prop:powermeanineq} implies that for any $(i,j) \in S$, it holds that
\begin{equation} \label{ineq:powermeanineq}
	( \f{H}_{\ssigma,\varepsilon}^{(q)} )_{i,j} \le 2^{-1/q} \min \left( \max(\sigma_i (\XX), \varepsilon)^{-1},  \max(\sigma_j (\XX), \varepsilon)^{-1} \right) \leq 2^{-1/q} \sigma_r (\XX)^{-1}.
\end{equation}
From \cref{ineq:powermeanineq}, it follows that
\begin{align*}
	(\mathrm{I})=&\innerproduct{    \mathcal{P}_S ( \UU_{\XX}^\top ( \XX - \Xzero ) \VV_{\XX} )   \circ \mathcal{P}_S   (\UU_{\XX}^\top ( \XX - \Xzero ) \VV_{\XX} ),  
	 \f{H}_{\ssigma, \varepsilon}^{(q)}     } \nonumber \\
	=& \sum_{(i,j) \in S}   (\f{H}_{\ssigma, \varepsilon}^{(q)})_{i,j}     (\UU_{\XX}^\top ( \XX - \Xzero ) \VV_{\XX} )_{i,j}^2 \nonumber \\
	\overleq{\cref{ineq:powermeanineq}} & \frac{2^{-1/q}}{\sigma_r \left(\XX \right)} \sum_{(i,j) \in S} (\UU_{\XX}^\top ( \XX - \Xzero ) \VV_{\XX} )_{i,j}^2 \nonumber \\
	\leq & \frac{2^{-1/q}}{\sigma_r \left(\XX \right)} \sum_{i=1}^{d_1} \sum_{j=1}^{d_2} (\UU_{\XX}^\top ( \XX - \Xzero ) \VV_{\XX} )_{i,j}^2 \nonumber \\
    = & \frac{2^{-1/q}}{\sigma_r \left(\XX \right)} \innerproduct{   \UU_{\XX}^\top ( \XX - \Xzero ) \VV_{\XX}, 
	 \UU_{\XX}^\top ( \XX - \Xzero ) \VV_{\XX}} \nonumber \\
    \le & \frac{2^{-1/q}}{\sigma_r \left(\XX \right)} \specnorm{  \UU_{\XX}^\top ( \XX - \Xzero ) \VV_{\XX}} 
	\nucnorm{  \UU_{\XX}^\top ( \XX - \Xzero ) \VV_{\XX}} \nonumber \\
	=& \frac{2^{-1/q}}{\sigma_r \left(\XX \right)} \specnorm{ \XX - \Xzero } \nucnorm{ \XX - \Xzero  } \nonumber
\end{align*}
Now, note that as $\mathcal{A}$ satisfies the NSP of order $r$ with $0 < \eta_r \leq 3/5$, it also satisfies the NSP of order $1$ with $0 < \eta_1 \leq 3/5$. Thus, due to Weyl's inequality and \Cref{lemma:normbound:nullspace}, we note that 
\begin{equation}\label{ineq:intern5}
\begin{split}
\frac{\specnorm{ \XX - \Xzero  }}{\sigma_r(\XX)} &\leq \frac{\specnorm{ \XX - \Xzero  }}{\sigma_r(\Xzero) - \specnorm{ \XX - \Xzero  }} \leq \frac{\frac{\eta_1}{(1+ \eta_1)\max(\eta_1 \sqrt{d},2)} \sigma_r(\Xzero)}{\sigma_r(\Xzero) - \frac{\eta_1}{(1+ \eta_1)\max(\eta_1 \sqrt{d},2)} \sigma_r(\Xzero)} \\&= \frac{\eta_1}{(1+\eta_1)\max(\eta_1 \sqrt{d},2) -\eta_1} \leq \frac{1}{\max(\eta_1 \sqrt{d},2)},
\end{split}
\end{equation}
using also the assumption \cref{ineq:assump1} in the second inequality, 
while the last inequality follows straightforwardly from the fact that it is equivalent to
\[
\eta_1 \max(\eta_1\sqrt{d},2)
\le
(1+\eta_1)\max(\eta_1\sqrt{d},2)-\eta_1
\quad\Longleftrightarrow\quad
0\le
\max(\eta_1\sqrt{d},2)-\eta_1.
\]

In order to deal with summand $(\mathrm{II})$ of \cref{ineq:intern7}, we compute that
\begin{align}
	 &\innerproduct{  \mathcal{P}_{S^c} ( \UU_{\XX}^\top ( \XX - \Xzero ) \VV_{\XX} )   
	 \circ \mathcal{P}_{S^c}  (\UU_{\XX}^\top ( \XX - \Xzero ) \VV_{\XX} ),  \f{H}_{\ssigma, \varepsilon}^{(q)} }\nonumber \\
	 =& \innerproduct{ \mathcal{P}_{S^c} ( \UU_{\XX}^\top  \XX   \VV_{\XX} ) 
	 \circ\mathcal{P}_{S^c} ( \UU_{\XX}^\top  \XX   \VV_{\XX} ), \f{H}_{\ssigma, \varepsilon}^{(q)}} 
	 - 2 \innerproduct{ \mathcal{P}_{S^c} ( \UU_{\XX}^\top  \XX   \VV_{\XX} )  
	 \circ \mathcal{P}_{S^c} ( \UU_{\XX}^\top   \Xzero  \VV_{\XX} ), \f{H}_{\ssigma, \varepsilon}^{(q)}}\nonumber\\
	 &+ \innerproduct{ \mathcal{P}_{S^c} ( \UU_{\XX}^\top   \Xzero \VV_{\XX} )  
	 \circ \mathcal{P}_{S^c} ( \UU_{\XX}^\top  \Xzero  \VV_{\XX} ), \f{H}_{\ssigma, \varepsilon}^{(q)} }\nonumber\\
	 =& \innerproduct{- \mathcal{P}_{S^c} ( \UU_{\XX}^\top  \XX   \VV_{\XX} ) 
	 \circ\mathcal{P}_{S^c} ( \UU_{\XX}^\top  \XX   \VV_{\XX} ) -2 \mathcal{P}_{S^c} ( \UU_{\XX}^\top  \XX   \VV_{\XX} ) 
	 \circ   \mathcal{P}_{S^c} ( \UU_{\XX}^\top   (\Xzero - \XX )  \VV_{\XX} ), \f{H}_{\ssigma, \varepsilon}^{(q)} } \nonumber \\
	 &+ \innerproduct{ \mathcal{P}_{S^c} ( \UU_{\XX}^\top   \Xzero \VV_{\XX} ) 
	 \circ \mathcal{P}_{S^c} ( \UU_{\XX}^\top  \Xzero  \VV_{\XX} ), \f{H}_{\ssigma, \varepsilon}^{(q)} }\nonumber\\
	 \le&  
	 -  \innerproduct{2 \mathcal{P}_{S^c} ( \UU_{\XX}^\top  \XX   \VV_{\XX} ) 
	 \circ   \mathcal{P}_{S^c} ( \UU_{\XX}^\top   (\Xzero - \XX )  \VV_{\XX} )+ \mathcal{P}_{S^c} ( \UU_{\XX}^\top   \Xzero \VV_{\XX} )  
	 \circ \mathcal{P}_{S^c} ( \UU_{\XX}^\top  \Xzero  \VV_{\XX} ), \f{H}_{\ssigma, \varepsilon}^{(q)} } \nonumber \\
	 =&
	  2 
	  \bracing{=:(i)}{\innerproduct{  \mathcal{P}_{S^c} ( \diag (\ssigma) ) 
	  \circ   \mathcal{P}_{S^c} ( \UU_{\XX}^\top   (\XX - \Xzero)  \VV_{\XX} ), \f{H}_{\ssigma, \varepsilon}^{(q)} }}\nonumber \\
	 &+ 
	 \bracing{=:(ii)}{\innerproduct{ \mathcal{P}_{S^c} ( \UU_{\XX}^\top   \Xzero \VV_{\XX} )
	 \circ \mathcal{P}_{S^c} ( \UU_{\XX}^\top  \Xzero  \VV_{\XX} ) , \f{H}_{\ssigma, \varepsilon}^{(q)} }}. 
	 \label{intern:locallinear1}
\end{align}
We estimate the two terms individually.
For the first term $(i)$,
we obtain that
\begin{align*}
	&\vert \innerproduct{  \mathcal{P}_{S^c} ( \diag (\ssigma) )
	\circ \mathcal{P}_{S^c}  ( \UU_{\XX}^\top (\XX - \Xzero ) \VV_{\XX} ), \f{H}_{\ssigma, \varepsilon}^{(q)} } \vert \\ 
	\le & \vert \innerproduct{ \f{H}_{\ssigma, \varepsilon}^{(q)} \circ \mathcal{P}_{S^c} ( \diag (\ssigma) ), 
	\mathcal{P}_{S^c}  ( \UU_{\XX}^\top (\XX - \Xzero ) \VV_{\XX} )  } \vert\\ 
	\le & \specnorm{\f{H}_{\ssigma, \varepsilon}^{(q)} \circ \mathcal{P}_{S^c} ( \diag (\ssigma) )} \nucnorm{ \mathcal{P}_{S^c}  
	( \UU_{\XX}^\top (\XX - \Xzero ) \VV_{\XX} )} \\
	\le & \specnorm{\f{H}_{\ssigma, \varepsilon}^{(q)} \circ \mathcal{P}_{S^c} ( \diag (\ssigma) )} \nucnorm{ \XX - \Xzero }. 
\end{align*}
Since $ \f{H}_{\ssigma, \varepsilon}^{(q)} \circ \mathcal{P}_{S^c} ( \diag (\ssigma) )$ 
has only non-zero entries on its diagonal it follows that
\begin{equation*}
	\specnorm{\f{H}_{\ssigma, \varepsilon}^{(q)} \circ \mathcal{P}_{S^c} ( \diag (\ssigma) )}
    = \underset{i \in [d]\setminus [r]}{\max} \left( \f{H}_{\ssigma, \varepsilon}^{(q)} \right)_{ (i,i) }  \sigma_i (\XX) \\
    = \underset{i \in [d]\setminus [r]}{\max} \left( \frac{ \sigma_i (\XX) }{\max  \left(  \sigma_i (\XX), \varepsilon \right) } \right) \\
	\le 1.
\end{equation*}
Thus, we have shown that
\begin{equation}
	\vert(i)\vert
	=
	\vert \innerproduct{  \mathcal{P}_{S^c} ( \diag (\ssigma) ) \circ 
	\mathcal{P}_{S^c}  ( \UU_{\XX}^\top (\XX - \Xzero ) \VV_{\XX} ), \f{H}_{\ssigma, \varepsilon} } \vert 
	\le \nucnorm{ \XX -  \Xzero  }.\label{intern:locallinear2}
\end{equation}
For the second summand $(ii)$ in \eqref{intern:locallinear1}, we note that 
\begin{align}
	 &\innerproduct{  \mathcal{P}_{S^c} ( \UU_{\XX}^\top  \Xzero \VV_{\XX} ) 
	 \circ \mathcal{P}_{S^c}  (\UU_{\XX}^\top \Xzero  \VV_{\XX} ) ,  \f{H}_{\ssigma, \varepsilon}  } \nonumber \\
	 =
	 &\innerproduct{  \mathcal{P}_{S^c} ( \UU_{\XX}^\top  \Xzero \VV_{\XX} )  ,  \f{H}_{\ssigma, \varepsilon} 
	 \circ \mathcal{P}_{S^c}  (\UU_{\XX}^\top \Xzero  \VV_{\XX} ) } \nonumber \\
	 \le 
	 &\nucnorm{ \mathcal{P}_{S^c} ( \UU_{\XX}^\top  \Xzero \VV_{\XX} ) }
	 \specnorm{\f{H}_{\ssigma, \varepsilon} \circ \mathcal{P}_{S^c}  (\UU_{\XX}^\top \Xzero  \VV_{\XX} )} \nonumber \\
	 =
	 &\nucnorm{ \mathcal{P}_{S^c} ( \UU_{\XX}^\top  \Xzero \VV_{\XX} ) }
	 \specnorm{\mathcal{P}_{S^c} (\f{H}_{\ssigma, \varepsilon}  ) \circ \mathcal{P}_{S^c}  (\UU_{\XX}^\top \Xzero  \VV_{\XX} )} \nonumber \\
	 \overleq{(a)} 
	 &\left( \underset{i \in \{r+1,\ldots,d\} }{\max}  \max(\sigma_i(\XX),\varepsilon)^{-1} \right)
	 \nucnorm{ \mathcal{P}_{S^c} ( \UU_{\XX}^\top  \Xzero \VV_{\XX} ) }
	 \specnorm{ \mathcal{P}_{S^c}  (\UU_{\XX}^\top \Xzero  \VV_{\XX} )} \nonumber \\
	 \le 
	 &\frac{1}{\varepsilon}
	 \nucnorm{ \mathcal{P}_{S^c} ( \UU_{\XX}^\top  \Xzero \VV_{\XX} ) }
	 \specnorm{ \mathcal{P}_{S^c}  (\UU_{\XX}^\top \Xzero  \VV_{\XX} )} \nonumber \\
	 \overeq{(b)} &\frac{1}{\varepsilon}
	 \nucnorm{ (\UU_{\XX,r+1:d_1})^\top \UUstar \SSigma_{\star} \VVstar^\top  \VV_{\XX,r+1:d_2} }
	 \specnorm{   (\UU_{\XX,r+1:d_1})^\top \UUstar \SSigma_{\star} \VVstar^\top  \VV_{\XX,r+1:d_2} } \nonumber \\
	 \le &\frac{1}{\varepsilon}
	 \nucnorm{ (\UU_{\XX,r+1:d_1})^\top \UUstar \SSigma_{\star}} 
	 \specnorm{   (\UU_{\XX,r+1:d_1})^\top \UUstar \SSigma_{\star} } 
	 \specnorm{ \VVstar^\top  \VV_{\XX,r+1:d_2} }^2. \label{ineq:intern3}
\end{align}
Inequality $(a)$ follows from \Cref{lemma:normbound:hadamard}, and in equation $(b)$, we used the decomposition
$\UU = [\UU_{\XX,1:r} \ \UU_{\XX,r+1:d_1}]$ and $\VV = [\VV_{\XX,1:r} \ \VV_{\XX,r+1:d_2}]$.
In order to proceed we denote by $\XX_r$ the best rank-$r$ approximation of $\XX$. 
It follows that
\begin{align*}
	 \nucnorm{ (\UU_{\XX,r+1:d_1})^\top \UUstar \SSigma_{\star}} 
	 &=
	 \nucnorm{(\UU_{\XX,r+1:d_1})^\top \Xzero }
	 =
	 \nucnorm{(\UU_{\XX,r+1:d_1})^\top (\Xzero - \XX_r ) }\\
	 &\le 
	 \nucnorm{ \Xzero - \XX_r  }
	 \le
	 \nucnorm{\Xzero - \XX} +
	 \besterrNuc{\XX}{r},
\end{align*}
where we recall that by definition
$ \besterrNuc{\XX}{r} = \nucnorm{\XX - \XX_r} $.
Note that due to the reverse triangle inequality, see Lemma \ref{lemma:NSPl1min}, it follows that
\begin{equation*}
	\nucnorm{\Xzero - \XX}
	\le
	\frac{1+\eta_r}{1-\eta_r} \left( \nucnorm{\Xzero} - \nucnorm{\XX} + 2 \besterrNuc{\XX}{r} \right)
	\le 
	\frac{2(1+\eta_r)}{1-\eta_r} \besterrNuc{\XX}{r},
\end{equation*}
where in the second inequality we have used that $\nucnorm{\Xzero} \le \nucnorm{\XX}$
which follows from the fact that $\Xzero$ is the nuclear norm minimizer. 
Combining the last two inequality chains we obtain that 
\begin{equation*}
	 \nucnorm{ \Up^\top \UUstar \SSigma_{\star}} 
     \le
	D_{\eta_r} \besterrNuc{\XX}{r},
\end{equation*}
using the definition of $\Up = \UU_{\XX,r+1:d_1}$.
This inequality also implies that
\begin{equation*}
	 \specnorm{ \Up^\top \UUstar \SSigma_{\star} }
	 \le \nucnorm{ \Up^\top \UUstar \SSigma_{\star}} 
	 \le D_{\eta_r} \beta_{r} ( \XX )_{\ast}
	 \le D_{\eta_r} \nucnorm{\XX - \Xzero},
\end{equation*}
where in the last inequality we used the Eckart--Young theorem.
Moreover, it follows from Wedin's $\sin\theta$ theorem 
\citep[see, e.g.,][]{Wedin72,SpectralMethods_ChenChiFanMa2021}
that the inner product matrix of $\VVstar$ with $\Vp = \VV_{\XX,r+1:d_2}$ can be bounded in spectral norm such that
\begin{align*}
	\specnorm{ \VVstar^\top  \Vp }
	&\le
	\frac{ \specnorm{ \XX - \Xzero }}{ \sigma_r (\Xzero) - \specnorm{ \XX - \Xzero } }\\
	&\overleq{(a)}
	\frac{2(1+\eta_1)}{2+\eta_1}
	\frac{\specnorm{ \XX - \Xzero } }{ \sigma_r (\Xzero) }\\
	&\overleq{(b)}
	\frac{2\eta_1}{2+\eta_1}
	\frac{\nucnorm{ \XX - \Xzero } }{ \sigma_r (\Xzero) }.
\end{align*}
For inequality $(a)$, set
\[
\alpha
:=
\frac{\specnorm{\XX-\Xzero}}{\sigma_r(\Xzero)}.
\]
Since $\XX-\Xzero \in \ker(\mathcal{A})$, \Cref{lemma:normbound:nullspace} and assumption \eqref{ineq:assump1} imply
\[
\alpha
\le
\frac{\eta_1}{(1+\eta_1)\max(\eta_1\sqrt{d},2)}
\le
\frac{\eta_1}{2(1+\eta_1)}.
\]
Therefore,
\[
\frac{1}{1-\alpha}
\le
\frac{1}{1-\frac{\eta_1}{2(1+\eta_1)}}
=
\frac{2(1+\eta_1)}{2+\eta_1},
\]
and consequently
\[
\frac{ \specnorm{ \XX - \Xzero }}{\sigma_r(\Xzero)-\specnorm{ \XX - \Xzero }}
=
\frac{\alpha}{1-\alpha}
\le
\frac{2(1+\eta_1)}{2+\eta_1}
\frac{\specnorm{ \XX - \Xzero }}{\sigma_r(\Xzero)}.
\]
This proves $(a)$. Inequality $(b)$ follows again from \Cref{lemma:normbound:nullspace}.
Inserting the last three inequalities into \eqref{ineq:intern3} we obtain that
\begin{align*}
	\innerproduct{  \mathcal{P}_{S^c} ( \UU_{\XX}^\top  \Xzero \VV_{\XX} ) 
	\circ \mathcal{P}_{S^c}  (\UU_{\XX}^\top \Xzero  \VV_{\XX} ) ,  \f{H}_{\ssigma,\varepsilon}  }
	&\le 
    \frac{4D_{\eta_r}^2\eta_1^2 \besterrNuc{\XX}{r}^2\nucnorm{ \XX - \Xzero }^2 }{(2+\eta_1)^2\varepsilon \cdot  \sigma_r (\Xzero)^2}\\
	&\overleq{(a)}
    \frac{4D_{\eta_r}^2 d \eta_1^2 \nucnorm{ \XX - \Xzero }^3 }{ (2+\eta_1)^2\vartheta \sigma_r (\Xzero)^2},
\end{align*}
where we used that by assumption $\varepsilon \ge \vartheta \besterrNuc{\XX}{r}/d$ and the Eckart--Young theorem in the last inequality.
Using assumption \eqref{ineq:assump1} we have
\begin{equation*}
\frac{d\eta_1^2\nucnorm{\XX-\Xzero}^2}{\sigma_r(\Xzero)^2} \le 1,
\end{equation*}
because either $\eta_1\sqrt d \ge 2$, in which case $\nucnorm{\XX - \Xzero} \le \sigma_r(\Xzero)/(\eta_1\sqrt d)$, or $\eta_1\sqrt d < 2$, in which case $d\eta_1^2<4$ and $\nucnorm{\XX-\Xzero}\le \sigma_r(\Xzero)/2$.
Hence, we obtain that
\begin{equation}
	(ii)
	=
	\innerproduct{  \mathcal{P}_{S^c} ( \UU_{\XX}^\top  \Xzero \VV_{\XX} ) 
	\circ \mathcal{P}_{S^c}  (\UU_{\XX}^\top \Xzero  \VV_{\XX} ) ,  \f{H}_{\ssigma, \varepsilon}  }
	\le
    \frac{4D_{\eta_r}^2}{(2+\eta_1)^2\vartheta} \nucnorm{\XX - \Xzero }.
	\label{intern:locallinear3}
\end{equation}
Inserting inequalities \eqref{intern:locallinear2} and \eqref{intern:locallinear3} into inequality \eqref{intern:locallinear1}, we obtain for summand $(\mathrm{II})$ that
\begin{equation}\label{ineq:intern6}
\innerproduct{  \mathcal{P}_{S^c} ( \UU_{\XX}^\top ( \XX - \Xzero ) \VV_{\XX} )   
\circ \mathcal{P}_{S^c}  (\UU_{\XX}^\top ( \XX - \Xzero ) \VV_{\XX} ),  \f{H}_{\ssigma, \varepsilon} }
\le  \left(2+\frac{D_{\eta_r}^2}{\vartheta}\right) \nucnorm{\XX - \Xzero}.
\end{equation}
Inserting our estimates for summand $(\mathrm{I})$ and summand $(\mathrm{II})$, see inequalities \eqref{ineq:intern5} and \eqref{ineq:intern6}, into \eqref{ineq:intern7}, we obtain that
\begin{equation*}
	\innerproduct{ \XX - \Xzero ,\W{\XX}{\varepsilon}( \XX - \Xzero ) }
	\le
	\left(2 + 2^{-1/q}+\frac{D_{\eta_r}^2}{\vartheta}\right) \nucnorm{ \XX - \Xzero }.
\end{equation*}
This completes the proof of the lemma.
\end{proof}
Recall that by definition of $\varepsilon_k$ we have that $\varepsilon_k \le \frac{\besterrNuc{\XXk}{r}}{d}$.
The following technical lemma gives an explicit lower bound for $\varepsilon_k$.
In particular, \Cref{lemma:epslowerbound} verifies the condition on $\varepsilon_k$ in \Cref{lemma:quadratictermlocal} with an explicit $\eta_r$-dependent constant as used in the bound \cref{ineq:claim11}.
\begin{lemma}\label{lemma:epslowerbound}
    Assume that the linear measurement operator $\mathcal{A}: \mathbb{R}^{d_1 \times d_2} \rightarrow \mathbb{R}^m$ satisfies the NSP of order~$r$ with constant $ \eta_r < 1 $.
	Moreover, assume that $\Xzero$ has rank~$r$.
    Let $\left(  \XXk\right)_{k\geq 0}$ and $ \left(\varepsilon_{k} \right)_{k \geq 0}$ be the iterates and smoothing parameters of $\texttt{MatrixIRLS}$ with input $\f{y} = \mathcal{A}(\Xzero)$, arbitrary initial weight operator $\f{W}^{(0)}$, $\widetilde{r} = r$, based on $q$-power mean weight operators with $q \in [-1,\infty]$ or on one-sided weight operators $\W{\XXk}{\varepsilon_k}(\cdot)$.
	Then, for all natural numbers $k \ge 1$, it holds that 
	\begin{equation*}
		\epsk \ge  \vartheta \frac{\besterrNuc{\XXk}{r}}{d},
	\end{equation*}
	where
	\begin{equation*}
	\vartheta
	:=
	\frac{1-\eta_r}{\left(1+\eta_r\right)\left(\frac{3}{2}+\eta_r\right)}.
	\end{equation*}
\end{lemma}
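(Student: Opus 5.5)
The plan rests on the structure of the update rule \cref{eq:IRLS:step_2}: $\varepsilon_k = \min(\varepsilon_{k-1}, \besterrNuc{\XXk}{r}/d)$. So either $\varepsilon_k = \besterrNuc{\XXk}{r}/d$, in which case the claim holds trivially because $\vartheta \le 1$, or $\varepsilon_k = \besterrNuc{\XX^{(\ell)}}{r}/d$ for some earlier index $\ell < k$. In the second case it suffices to show
\begin{equation*}
\besterrNuc{\XX^{(\ell)}}{r} \ge \vartheta\, \besterrNuc{\XXk}{r} \quad \text{for all } \ell \le k.
\end{equation*}
If $\ell$ is the initial index with $\varepsilon_{-1}=\infty$, then $\varepsilon_0 = \besterrNuc{\XX^{(0)}}{r}/d$, so $\ell$ is always a genuine iterate index $\ge 0$.

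To compare the two tails, we chain \Cref{lemma:epscontrol} with the monotonicity \cref{eq:J:monotonicity:1}, which rests on the majorization results (\Cref{thm:majorization}, \Cref{thm:power_means_majorize}, and the one-sided analogue) for the admissible weight operators. Since $\Xzero$ has rank $r$, we have $\besterrNuc{\Xzero}{r}=0$. Applying the lower bound in \cref{ineq:aux1} at iterate $k$, which is valid since $\varepsilon_k \le \besterrNuc{\XXk}{r}/d$, gives
\begin{equation*}
\frac{1-\eta_r}{1+\eta_r}\nucnorm{\XXk-\Xzero} \le \mathcal{J}_{\varepsilon_k}(\XXk)-\nucnorm{\Xzero}.
\end{equation*}
Monotonicity then bounds the right-hand side by $\mathcal{J}_{\varepsilon_\ell}(\XX^{(\ell)})-\nucnorm{\Xzero}$. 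The upper bound in \cref{ineq:aux1} at iterate $\ell$, also valid since $\varepsilon_\ell \le \besterrNuc{\XX^{(\ell)}}{r}/d$, bounds this in turn by $(\tfrac32+\eta_r)\besterrNuc{\XX^{(\ell)}}{r}$.

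Finally, we use Eckart--Young: since $\Xzero$ has rank at most $r$, $\besterrNuc{\XXk}{r} \le \nucnorm{\XXk-\Xzero}$. Combining the three inequalities yields
\begin{equation*}
\frac{1-\eta_r}{1+\eta_r}\besterrNuc{\XXk}{r} \le \left(\tfrac32+\eta_r\right)\besterrNuc{\XX^{(\ell)}}{r},
\end{equation*}
which is exactly $\besterrNuc{\XX^{(\ell)}}{r} \ge \vartheta \besterrNuc{\XXk}{r}$. Dividing by $d$ gives $\varepsilon_k \ge \vartheta \besterrNuc{\XXk}{r}/d$.

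The main point needing care is justifying monotonicity of $\mathcal{J}_{\varepsilon_j}(\XX^{(j)})$ from $\ell$ to $k$ for every admissible weight type, which requires the corresponding majorization result in each case. A secondary point is the boundary case where some $\varepsilon_j=0$; there the algorithm has already terminated and the convention $\mathcal{J}_0=\nucnorm{\cdot}$ covers it. Beyond these two checks, the argument is routine.
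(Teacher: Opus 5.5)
Your proposal is correct and follows essentially the same route as the paper's proof: choose $\ell \le k$ with $\varepsilon_k = \besterrNuc{\XX^{(\ell)}}{r}/d$, then chain the upper bound of \Cref{lemma:epscontrol} at iterate $\ell$, the monotonicity \cref{eq:J:monotonicity:1}, the lower bound of \Cref{lemma:epscontrol} at iterate $k$ (using $\besterrNuc{\Xzero}{r}=0$), and Eckart--Young. Your separate treatment of the case $\varepsilon_k = \besterrNuc{\XXk}{r}/d$ is harmless but unnecessary, since the same chain with $\ell = k$ already covers it.
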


\begin{proof}
Recall that by definition $ \epsk= \min \left( \varepsilon_{k-1} ,\frac{\besterrNuc{\XXk}{r}}{d} \right)$.
Choose $\ell \le k$ such that $ \epsk = \epsl = \frac{\besterrNuc{\Xk{(\ell)}}{r}}{d} $.
Then we obtain that 
\begin{align*}
	\epsk 
	&= \frac{\besterrNuc{\Xk{(\ell)}}{r}}{d}\\
	&\overgeq{(a)}  \frac{ \mathcal{J}_{\epsl} (\XX^{(\ell)})  - \nucnorm{\Xzero} }{\left(\frac{3}{2}+\eta_r\right)d}\\
	&\overgeq{(b)} \frac{ \mathcal{J}_{\epsk} (\XX^{(k)})  - \nucnorm{\Xzero} }{\left(\frac{3}{2}+\eta_r\right)d}\\
	&\overgeq{(c)} \frac{1-\eta_r}{\left(1+\eta_r\right)\left(\frac{3}{2}+\eta_r\right)d} \nucnorm{\XXk -\Xzero}\\
	&\overgeq{(d)} \frac{1-\eta_r}{\left(1+\eta_r\right)\left(\frac{3}{2}+\eta_r\right)d} \besterrNuc{\XXk}{r}.
\end{align*}
Inequality $(a)$ follows from Lemma \ref{lemma:epscontrol} and inequality $(b)$ follows from the monotonicity of the sequence $ \left\{  \mathcal{J}_{\varepsilon_k} \left( \XX^{(k)} \right) \right\}_{k \ge 1} $.
For inequality $(c)$ we again used Lemma \ref{lemma:epscontrol}.
Inequality $(d)$ follows from the Eckart--Young theorem since $\Xzero$ has rank $r$.
\end{proof}

\subsubsection{{Proof of \Cref{thm:locallinearp1} (Dimension-Free Fast Linear Rate of \MatrixIRLSHeading{})}} \label{sec:fastlocalrate:harmonicmean:proof}
Now we have all ingredients in place to prove the key result \Cref{prop:p1:locallinearrate}, which shows that $ \mathcal{J}_{\varepsilon_k} \left( \XXk \right) $ decreases linearly at a dimension-free rate when the iterate $\XXk$ is close enough to the true solution $\Xzero$, before providing the complete proof of \Cref{thm:locallinearp1}. As discussed in \Cref{sec:local:linear:convergence}, we can prove suitable statements for harmonic-mean weight operators and for $q$-power mean weight operators with $q \in [-1,0)$.

\begin{proposition}[One-Step Local Contraction]\label{prop:p1:locallinearrate}
	Let $\Xzero \in \mathbb{R}^{d_1 \times d_2}$ have rank $r$.
	Assume that the measurement operator 
	$\mathcal{A}: \mathbb{R}^{d_1 \times d_2} \longrightarrow \R^m $ 
	satisfies the NSP of order $r$ with constant $0<\eta_r<3/5$
	and that 
	$ \f{y} = \mathcal{A} \left(  \Xzero \right) $.
	Let the IRLS iterates  
	$\left\{   \XXk \right\}_k$ and $ \left\{ \varepsilon_{k} \right\}_k$ 
	be defined by \cref{eq:IRLS:step_1} and \cref{eq:IRLS:step_2} 
	with rank estimate $\widetilde r=r$ and with fixed $q$-power mean weight operators for some $q\in[-1,0)$, and assume that \Cref{alg:algo1} does not return in iteration $k$, i.e., that $\varepsilon_k > 0$, so that the iterate $\XXkplus$ is defined.
	Assume that
	\begin{equation}\label{intern:localconvergence1}
	    \nucnorm{\XXk - \Xzero}  
		\le  
		\frac{  \sigma_r ( \Xzero ) }{\max \left( \eta_1 \sqrt{d}, 2 \right)}. 
    \end{equation}
	where $0 < \eta_1 \leq \eta_r$ denotes again the order-one NSP constant of $\mathcal{A}$. Set
	\begin{equation*}
	K_{\eta_r,q}
	:=
	2+2^{-1/q}+
	\frac{(3+\eta_r)^2(1+\eta_r)\left(\frac{3}{2}+\eta_r\right)}{(1-\eta_r)^3}
	\end{equation*}
	and
	\begin{equation*}
	\widehat c_{\eta_r,q}
	:=
	\frac{(3-5\eta_r)^{2}}{16(1+\eta_r)^{2}(3+2\eta_r)K_{\eta_r,q}}.
	\end{equation*}
	Then it holds that
	\begin{equation}
		\mathcal{J}_{\varepsilon_{k+1}}( \XXkplus ) -  \nucnorm{\Xzero} 
	\le 
    \left(
	1-  \widehat c_{\eta_r,q} \right)
	\left(     \mathcal{J}_{\varepsilon_{k}} \left( \XXk  \right) -  \nucnorm{ \Xzero } \right).
	\end{equation}
	In particular, for the harmonic-mean weight operator, i.e., $q=-1$, this implies the same estimate with
	\begin{equation*}
	c_{\eta_r}
	=
	\frac{(3-5\eta_r)^{2}(1-\eta_r)^{3}}{8(1+\eta_r)^{2}(3+2\eta_r)\Bigl[8(1-\eta_r)^{3}+(3+\eta_r)^{2}(1+\eta_r)(3+2\eta_r)\Bigr]}
	\end{equation*}
	in place of $\widehat c_{\eta_r,-1}$.
\end{proposition}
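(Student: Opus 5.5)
The plan is to repeat the one-step argument of \Cref{prop:p1:linearrate}, keeping its treatment of the linear term and replacing only the bound on the quadratic term by the local estimate of \Cref{lemma:quadratictermlocal}. That estimate has no $1/\varepsilon_k$ factor, which is what removes the factor $d$ from the contraction.

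\textbf{Step 1 (majorization and choice of direction).} Set $\NNk := \Xzero - \XXk$. Since $q \ge -1$, \Cref{thm:power_means_majorize} (via \Cref{thm:majorization} and \Cref{lem:power_mean_loewner}) shows that $Q_{\varepsilon_k}(\cdot\mid\XXk)$ majorizes $\mathcal{J}_{\varepsilon_k}$. Combining this with $\varepsilon_{k+1}\le\varepsilon_k$ and the optimality of $\XXkplus$ in \cref{eq:IRLS:step_1}, exactly as in \cref{eq:mainidea}, gives for every $t\ge 0$
\[
\mathcal{J}_{\varepsilon_{k+1}}(\XXkplus) \le \mathcal{J}_{\varepsilon_k}(\XXk) + t\,\innerproduct{\nabla\mathcal{J}_{\varepsilon_k}(\XXk),\NNk} + \tfrac{t^2}{2}\innerproduct{\NNk,\W{\XXk}{\varepsilon_k}(\NNk)}.
\]

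\textbf{Step 2 (bounding the two terms).} Since $\Xzero$ has rank $r$, we have $\besterrNuc{\Xzero}{r}=0$. \Cref{lemma:linearterm} then bounds the linear term by $-b\nucnorm{\NNk}$, where
\[
b := \tfrac{1-\eta_r}{1+\eta_r}-\tfrac14 = \tfrac{3-5\eta_r}{4(1+\eta_r)} > 0 \quad\text{because } \eta_r<3/5 .
\]
For the quadratic term I invoke \Cref{lemma:quadratictermlocal}. Its closeness hypothesis \cref{ineq:assump1} is exactly \cref{intern:localconvergence1}. Its lower-bound hypothesis on $\varepsilon_k$ is supplied, for $k\ge1$, by \Cref{lemma:epslowerbound} with $\vartheta = \frac{1-\eta_r}{(1+\eta_r)(3/2+\eta_r)}$. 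For $k=0$ we have $\varepsilon_0=\besterrNuc{\XX^{(0)}}{r}/d$, so $\vartheta=1$ works, and this is at least as good since $\vartheta<1$. The lemma then yields
\[
\innerproduct{\NNk,\W{\XXk}{\varepsilon_k}(\NNk)} \le K_{\eta_r,q}\nucnorm{\NNk},
\]
with $K_{\eta_r,q}=2+2^{-1/q}+E_{\eta_r}$ as in \cref{ineq:claim11}.

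\textbf{Step 3 (optimizing over $t$ and converting back to the objective gap).} The right-hand side is now at most $-tb\nucnorm{\NNk}+\tfrac{t^2}{2}K_{\eta_r,q}\nucnorm{\NNk}$. Choosing $t=b/K_{\eta_r,q}$ gives a decrease of at least $\frac{b^2}{2K_{\eta_r,q}}\nucnorm{\NNk}$. To express this in terms of the objective gap, I use the upper bound of \Cref{lemma:epscontrol} (valid because $\varepsilon_k\le\besterrNuc{\XXk}{r}/d$) together with Eckart--Young and $\rank(\Xzero)=r$:
\[
\mathcal{J}_{\varepsilon_k}(\XXk)-\nucnorm{\Xzero} \le (\tfrac32+\eta_r)\besterrNuc{\XXk}{r} \le (\tfrac32+\eta_r)\nucnorm{\NNk}.
\]
Therefore the gap contracts by the factor
\[
1-\frac{b^2}{K_{\eta_r,q}(3+2\eta_r)} = 1-\frac{(3-5\eta_r)^2}{16(1+\eta_r)^2(3+2\eta_r)K_{\eta_r,q}} = 1-\widehat c_{\eta_r,q}.
\]

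\textbf{Step 4 (the harmonic-mean constant).} For $q=-1$ we have $2^{-1/q}=2$, so $K=4+E_{\eta_r}$. Then
\[
4+E_{\eta_r} = \frac{8(1-\eta_r)^3+(3+\eta_r)^2(1+\eta_r)(3+2\eta_r)}{2(1-\eta_r)^3},
\]
and substituting this into $\widehat c_{\eta_r,-1}$ gives exactly $c_{\eta_r}$.

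\textbf{Main obstacle.} The delicate step is verifying the hypotheses of \Cref{lemma:quadratictermlocal} at the current iterate, in particular the lower bound on $\varepsilon_k$. This requires the monotonicity \cref{eq:J:monotonicity:1} (used inside \Cref{lemma:epslowerbound}) and the separate treatment of $k=0$. The rest of the argument is bookkeeping of constants.
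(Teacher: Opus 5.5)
Your proof is correct and follows the paper's own argument step for step. The paper also combines majorization with the optimality of $\XXkplus$, bounds the linear term by \Cref{lemma:linearterm} and the quadratic term by \Cref{lemma:quadratictermlocal} with $\vartheta$ from \Cref{lemma:epslowerbound}, optimizes over $t$, and converts back to the objective gap via the upper bound of \Cref{lemma:epscontrol}. Your separate treatment of $k=0$ is a small addition the paper passes over, since \Cref{lemma:epslowerbound} is stated only for $k\ge 1$, although its proof also works verbatim at $k=0$.
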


\begin{proof}
The proof of this proposition is structurally similar to the proof of \Cref{prop:p1:linearrate}.
Again, we define $ \NNk := \Xzero - \XXk $.
By the monotonicity of $\varepsilon \mapsto \mathcal{J}_{\varepsilon}(\XXkplus)$, the majorization property for $q$-power mean weights with $q\ge -1$, and the optimality of $\XXkplus$ in \cref{eq:IRLS:step_1}, for any $t \in \mathbb{R}$ we have that
\begin{equation}\label{ineq:intern51}
\mathcal{J}_{\varepsilon_{k+1}}(\XXkplus) \leq Q_{\varepsilon_k} (\XXkplus \mid \XXk) \leq Q_{\varepsilon_k}(\XXk+t \NNk \mid \XXk).
\end{equation}
Moreover, by the definition of the quadratic objective $Q_{\varepsilon_k}(\cdot \mid \XXk)$, see \cref{eq:smoothedell1:IRLSmajorizer}, it holds that
\begin{equation*}
\begin{split}
&Q_{\varepsilon_k}(\XXk+t \NNk,\XXk) - \mathcal{J}_{\varepsilon_k}(\XXk)\\ 
=&  t \, \innerproduct{ \nabla \mathcal{J}_{\varepsilon_k}(\XXk), \NNk  } 
\end{split}
\end{equation*}	
As in the proof of \Cref{prop:p1:linearrate}, our goal is to minimize the difference 
\begin{equation*}
Q_{\varepsilon_k}(\XXk+t \NNk \mid \XXk) - \mathcal{J}_{\varepsilon_k}(\XXk)
\end{equation*}
by choosing $t$ accordingly.
Note that the assumptions of Lemma \ref{lemma:quadratictermlocal} are fulfilled due to assumption \eqref{intern:localconvergence1} and Lemma \ref{lemma:epslowerbound}, with the choice
\[
\vartheta_{\eta_r}:=
\frac{1-\eta_r}{\left(1+\eta_r\right)\left(\frac{3}{2}+\eta_r\right)}.
\]
Set
\begin{equation*}
\delta_{\eta_r}
:=
\frac{1-\eta_r}{1+\eta_r} - \frac{1}{4}.
\end{equation*}
Thus, we apply Lemma \ref{lemma:linearterm} and Lemma \ref{lemma:quadratictermlocal} and obtain that
\begin{align*}
Q_{\varepsilon_k}(\XXk+t \NNk \mid \XXk) - \mathcal{J}_{\varepsilon_k}(\XXk)  
&\le - t \bracing{=:b}{\delta_{\eta_r}\nucnorm{\NNk}} + \bracing{=:a}{\frac{K_{\eta_r,q}}{2} \nucnorm{ \NNk }} \cdot t^2.
\end{align*}
The right-hand side is minimized by setting $t:= \frac{b}{2a}$. This yields that
\begin{align*}
Q_{\varepsilon_k}(\XXk+t \NNk \mid \XXk) - \mathcal{J}_{\varepsilon_k}(\XXk)  
\le 
\frac{-b^2}{4a}
=
\frac{- \delta_{\eta_r}^2 \nucnorm{\NNk} }{2K_{\eta_r,q} }.
\end{align*}
Combining this estimate with inequality \eqref{ineq:intern51} it follows that 
\begin{equation*}
\mathcal{J}_{\varepsilon_{k+1}}(\XXkplus)    
- \nucnorm{\Xzero}
\le 
\mathcal{J}_{\varepsilon_k}(\XXk)
-\nucnorm{\Xzero}
-
\frac{ \delta_{\eta_r}^2 \nucnorm{\Xzero - \XXk} }{2K_{\eta_r,q}}.
\end{equation*}
Now note that Lemma \ref{lemma:epscontrol} and the fact that $\Xzero$ has rank $r$ imply
\begin{equation*}
	\mathcal{J}_{\varepsilon_k} (\XXk) - \nucnorm{\Xzero}
	\le
	\left(\frac{3}{2}+\eta_r\right)\besterrNuc{\XXk}{r}
	\le
	\left(\frac{3}{2}+\eta_r\right)\nucnorm{\XXk-\Xzero}.
\end{equation*}
Equivalently,
\[
\nucnorm{\Xzero-\XXk}
\ge
\frac{\mathcal{J}_{\varepsilon_k} (\XXk) - \nucnorm{\Xzero}}{\frac{3}{2}+\eta_r}.
\]
Inserting this into the above inequality, we obtain that
\begin{equation*}
\mathcal{J}_{\varepsilon_{k+1}}(\XXkplus)    
- \nucnorm{\Xzero}
\le 
\left(
	1- \frac{ \delta_{\eta_r}^2 }{2K_{\eta_r,q}\left(\frac{3}{2}+\eta_r\right) }
\right)
\left(
\mathcal{J}_{\varepsilon_k}(\XXk)-\nucnorm{\Xzero}
\right).	
\end{equation*}
It remains to verify that the bracketed contraction factor equals $1-\widehat c_{\eta_r,q}$. Combining the two fractions inside $\delta_{\eta_r}$,
\[
\delta_{\eta_r}
=
\frac{1-\eta_r}{1+\eta_r}-\frac{1}{4}
=
\frac{4(1-\eta_r)-(1+\eta_r)}{4(1+\eta_r)}
=
\frac{3-5\eta_r}{4(1+\eta_r)},
\]
so that $\delta_{\eta_r}^{2}=\frac{(3-5\eta_r)^{2}}{16(1+\eta_r)^{2}}$. Writing $\frac{3}{2}+\eta_r=\frac{3+2\eta_r}{2}$ cancels the factor $2$ in front of $K_{\eta_r,q}$, and we arrive at
\[
\frac{\delta_{\eta_r}^{2}}{2K_{\eta_r,q}\left(\frac{3}{2}+\eta_r\right)}
=
\frac{(3-5\eta_r)^{2}}{16(1+\eta_r)^{2}(3+2\eta_r)\,K_{\eta_r,q}}
=
\widehat c_{\eta_r,q},
\]
which establishes the asserted contraction.

For the harmonic-mean weight operator $q=-1$, we have $2^{-1/q}=2$ and hence
\[
K_{\eta_r,-1}
=
4+\frac{(3+\eta_r)^{2}(1+\eta_r)\left(\frac{3}{2}+\eta_r\right)}{(1-\eta_r)^{3}}
=
\frac{8(1-\eta_r)^{3}+(3+\eta_r)^{2}(1+\eta_r)(3+2\eta_r)}{2(1-\eta_r)^{3}},
\]
where the second equality follows by putting both summands over the common denominator $(1-\eta_r)^{3}$ and using $\frac{3}{2}+\eta_r=\frac{3+2\eta_r}{2}$. Inserting this into $\widehat c_{\eta_r,-1}$ yields
\[
\widehat c_{\eta_r,-1}
=
\frac{(3-5\eta_r)^{2}\,(1-\eta_r)^{3}}{8\,(1+\eta_r)^{2}\,(3+2\eta_r)\bigl[8(1-\eta_r)^{3}+(3+\eta_r)^{2}(1+\eta_r)(3+2\eta_r)\bigr]}
=
c_{\eta_r},
\]
which is the displayed harmonic-mean form. This completes the proof.
\end{proof}

With \Cref{prop:p1:locallinearrate} in place, we can prove the main result regarding local linear convergence,  \Cref{thm:locallinearp1}, with a dimension-free rate, for \texttt{MatrixIRLS} using harmonic-mean weight operators. 
\begin{proof}[Proof of \Cref{thm:locallinearp1}]
We will prove this theorem by induction.
Recall that
\begin{equation*}
A_{\eta_r}
:=
\frac{\left(\frac{3}{2}+\eta_r\right)\left(1+\eta_r\right)}{1-\eta_r}.
\end{equation*}
Since $\eta_r<3/5$, we have
\begin{equation}\label{ineq:Aetar:less:nine}
	A_{\eta_r}
	<
	\frac{\left(\frac{3}{2}+\frac{3}{5}\right)\left(1+\frac{3}{5}\right)}{1-\frac{3}{5}}
	=
	\frac{42}{5}
	<9.
\end{equation}
Moreover, the displayed formula for $c_{\eta_r}$ gives $0<c_{\eta_r}<1$.
We immediately observe that in the base case $k= \tilde{k}$
both inequalities \eqref{equ:locallinearconvergence1} and \eqref{ineq:localconvergenerate1} hold.

Now assume that the statement holds for some $k \ge \tilde{k}$ for which \Cref{alg:algo1} carries out iteration $k+1$, i.e., for which $\varepsilon_k > 0$.
To show the induction step we first note that due to assumption \eqref{assump:localconvergencerate}, inequality \eqref{ineq:localconvergenerate1}, \eqref{ineq:Aetar:less:nine}, and $(1-c_{\eta_r})^{k-\tilde k}\le 1$, we have that 
\begin{equation*}
	\nucnorm{\XXk - \Xzero}
	\le
	A_{\eta_r}\left(1-c_{\eta_r}\right)^{k-\tilde k}
	\frac{\sigma_r(\Xzero)}{9\max  \big( \eta_1 \sqrt{d}, 2\big)}
	\le
	\frac{\sigma_r (\Xzero ) }{\max  \big( \eta_1 \sqrt{d}, 2\big) }.
\end{equation*}
Thus, we can apply \Cref{prop:p1:locallinearrate} with $q=-1$ and obtain that
\begin{equation} \label{ineq:intern59}
	\begin{split}
		\mathcal{J}_{\varepsilon_{k+1}}( \XXkplus ) -  \nucnorm{\Xzero} 
	&\le 
    \left(
	1-c_{\eta_r} \right)
	\left(     \mathcal{J}_{\varepsilon_{k}} \left( \XXk  \right) -  \nucnorm{ \Xzero } \right)\\
	&\le 
    \left(
	1-c_{\eta_r} 
	\right)^{k+1- \tilde{k}}
	\left(     \mathcal{J}_{\varepsilon_{\tilde{k}}} \left( \XX^{(\tilde{k})}  \right) -  \nucnorm{ \Xzero } \right)
	\end{split}
\end{equation}
In the second inequality we used the induction hypothesis.
This proves inequality \eqref{equ:locallinearconvergence1} for $k+1$.
It remains to prove inequality \eqref{ineq:localconvergenerate1}. 
For that, we note that 
\begin{align*}
	\nucnorm{\XX^{(k+1)} -\Xzero}
	&\overleq{(a)}
	\frac{1+\eta_r}{1-\eta_r} \left( \mathcal{J}_{\varepsilon_{k+1}} (\XXkplus) - \nucnorm{\Xzero} \right)\\
	&\overleq{(b)}
	\frac{1+\eta_r}{1-\eta_r} \left( 1- c_{\eta_r} \right)^{k+1-\tilde{k}} \left( \mathcal{J}_{\varepsilon_{\tilde{k}}} (\XX^{(\tilde{k})}) - \nucnorm{\Xzero} \right)\\
	&\overleq{(c)}
	A_{\eta_r} \left( 1- c_{\eta_r} \right)^{k+1-\tilde{k}} \besterrNuc{\XX^{(\tilde{k})}}{r} \\
	&\overleq{(d)}
	A_{\eta_r} \left( 1- c_{\eta_r} \right)^{k+1-\tilde{k}} \nucnorm{ \XX^{(\tilde{k})} -\Xzero}.
\end{align*}
For inequality $(a)$ and inequality $(c)$, we used \Cref{lemma:epscontrol}, whereas inequality $(b)$ is due to \eqref{ineq:intern59}.
Finally, inequality $(d)$ is a consequence of the Eckart--Young theorem and the fact that the matrix $\Xzero$ has rank $r$.
This proves inequality \eqref{ineq:localconvergenerate1} for $k+1$.
Thus, the proof is complete.
\end{proof}

\subsubsection{Proof of \Cref{thm:counterexample:leftsided:weight:operator} (Counterexample for One-Sided IRLS)}
\label{sec:counterexample:leftsided:weight:operator}
In this section, we provide an explicit counterexample substantiating \Cref{thm:counterexample:leftsided:weight:operator}, which shows that it is in general not possible to obtain a dimension-independent upper bound on $\innerproduct{ \Xzero - \XX , \W{\XX}{\varepsilon} (\Xzero - \XX  ) }$ if a one-sided weight operator such as \cref{eq:leftsided:mean,eq:rightsided:mean} is used. Leveraging a lower bound on $\innerproduct{ \XX - \Xzero , \W{\XX}{\varepsilon} ( \XX - \Xzero ) }$, we show that in this example, the nuclear norm error $\nucnorm{ \XX^+ - \Xzero}$ of the next IRLS iterate $\XX^+$ cannot decrease by more than a factor of $(1- c r /d)$ for some constant $c>0$ compared to the nuclear norm error before the IRLS step.

We provide the argument for left-sided weight operators and note that it can be easily adapted to right-sided weight operators by transposition of the underlying matrices.

\begin{proof}[Proof of \Cref{thm:counterexample:leftsided:weight:operator}]
	We give an explicit construction. 
	Let $d \ge 220 r$ and set $h:=10r$.
	Let $\left\{\f{e}_{1},\ldots,\f{e}_{d}\right\}$ and
	$\left\{\f{f}_{1},\ldots,\f{f}_{d}\right\}$ be orthonormal bases of
	$\R^d$.
	Fix $\alpha>0$. We choose an angle $\theta>0$ satisfying
	\begin{equation}\label{eq:counterexample:theta:choice}
	0<\theta\le
	\min\left(
	\frac{\pi}{2},
	\frac{1}{11r\max\left(\frac{\sqrt d}{11r-1},2\right)}
	\right).
	\end{equation}
	For $i=1,\ldots,r$, define
	\[
	\f{u}_{i}:=\cos(\theta)\f{e}_{i}+\sin(\theta)\f{e}_{r+i},
	\qquad
	\f{p}_{i}:=-\sin(\theta)\f{e}_{i}+\cos(\theta)\f{e}_{r+i}.
	\]
	Set $a:=2\alpha\sin(\theta/2)$ and define
	\[
	\Xzero:=\alpha\sum_{i=1}^{r}\f{e}_{i}\f{f}_{i}^{\top},
	\qquad
	\XX:=
	\alpha\sum_{i=1}^{r}\f{u}_{i}\f{f}_{i}^{\top}
	+
	a\sum_{j=1}^{h}\f{e}_{2r+j}\f{f}_{r+j}^{\top}.
	\]
	Since $d\ge 220r$, all indices used in this definition are at most $d$.
	Moreover, \eqref{eq:counterexample:theta:choice} implies
	$a\le \alpha\theta\le \alpha/(22r)<\alpha$, so the nonzero singular values of
	$\XX$ are $\alpha$ repeated $r$ times and $a$ repeated $h=10r$ times.  Thus
	\begin{equation}\label{eq:counterexample:betaeps}
	\besterrNuc{\XX}{r}=10ra,
	\qquad
	\varepsilon=\frac{\besterrNuc{\XX}{r}}{d}=\frac{10ra}{d}.
	\end{equation}
	
	Let $\DDelta:=\XX-\Xzero$.  Then
	\[
	\DDelta=
	\alpha\sum_{i=1}^{r}\left(\f{u}_{i}-\f{e}_{i}\right)\f{f}_{i}^{\top}
	+
	a\sum_{j=1}^{h}\f{e}_{2r+j}\f{f}_{r+j}^{\top}.
	\]
	The rank-one terms in this decomposition are mutually orthogonal on both the
	left and the right, and
	\[
	\left\|\alpha\left(\f{u}_{i}-\f{e}_{i}\right)\right\|_{2}
	=2\alpha\sin(\theta/2)=a.
	\]
	Consequently, the singular values of $\DDelta$ are $a$ repeated $r+h=11r$
	times, and hence
	\begin{equation}\label{eq:counterexample:singularvalues:Delta}
	\sum_{i=1}^{r}\sigma_i(\DDelta)=ra,
	\qquad
	\sum_{i=r+1}^{d}\sigma_i(\DDelta)=10ra,
	\qquad
	\nucnorm{\DDelta}=11ra.
	\end{equation}
	
	We now define the measurement operator. Let
	$\f{G}_{1},\ldots,\f{G}_{d^{2}-1}$ be an orthonormal basis of the
	Frobenius-orthogonal complement of $\DDelta$ in $\R^{d\times d}$, and set
	\[
	\mathcal{A}(\f{Z})
	:=
	\left(\innerproduct{\f{Z},\f{G}_{\ell}}\right)_{\ell=1}^{d^2-1}.
	\]
	Then $\ker(\mathcal{A})=\operatorname{span}\{\DDelta\}$ and, in particular,
	$\XX-\Xzero=\DDelta\in\ker(\mathcal{A})$.  For every nonzero
	$\NN\in\ker(\mathcal{A})$, we have $\NN=t\DDelta$ for some $t\ne0$.
	Thus \eqref{eq:counterexample:singularvalues:Delta} gives
	\[
	\sum_{i=1}^{r}\sigma_i(\NN)
	=
	\frac{1}{10}\sum_{i=r+1}^{d}\sigma_i(\NN),
	\]
	so $\mathcal{A}$ satisfies the NSP of order $r$ with constant $\eta_r=1/10$.
	The same singular value calculation shows that $\mathcal{A}$ satisfies the NSP
	of order $1$ with constant
	\[
	\eta_1=\frac{1}{11r-1},
	\]
	because $\sigma_1(\DDelta)=a$ and
	$\sum_{i=2}^{d}\sigma_i(\DDelta)=(11r-1)a$.
	
	It remains to check the local neighborhood condition.  Since
	$\sigma_r(\Xzero)=\alpha$, \eqref{eq:counterexample:theta:choice} implies
	\[
	\nucnorm{\XX-\Xzero}
	=11ra
	\le 11r\alpha\theta
	\le
	\frac{\alpha}{\max\left(\frac{\sqrt d}{11r-1},2\right)}
	=
	\frac{\sigma_r(\Xzero)}
	{\max\left(\eta_1\sqrt d,2\right)}.
	\]
	
	\medskip
	\noindent\textbf{Lower bound on the left-sided quadratic form.}
	Let $\W{\XX}{\varepsilon}(\cdot)$ be the weight operator with the left-sided
	core matrix \eqref{eq:leftsided:mean}.  The vectors $\f{p}_1,\ldots,\f{p}_r$
	are orthogonal to the column space of $\XX$; hence they can be chosen as part of
	the zero left-singular-vector block in a full SVD of $\XX$.  On these rows the
	left-sided weight is equal to $1/\varepsilon$.  Furthermore,
	\[
	\f{p}_{i}^{\top}\DDelta\f{f}_{i}
	=
	\alpha\f{p}_{i}^{\top}\left(\f{u}_{i}-\f{e}_{i}\right)
	=
	\alpha\sin\theta
	=
	a\cos(\theta/2).
	\]
	Writing $\XX=\UU_{\XX}\diag(\ssigma)\VV_{\XX}^{\top}$ and
	$\f{M}:=\UU_{\XX}^{\top}\DDelta\VV_{\XX}$, the left-sided definition gives
	\[
	\innerproduct{\DDelta,\W{\XX}{\varepsilon}(\DDelta)}
	=
	\sum_{k,\ell}
	\frac{\left|M_{k\ell}\right|^{2}}{\max(\sigma_k(\XX),\varepsilon)}.
	\]
	All summands are nonnegative.  Keeping only the entries corresponding to the
	rows $\f{p}_i$ and columns $\f{f}_i$ yields
	\[
	\innerproduct{\DDelta,\W{\XX}{\varepsilon}(\DDelta)}
	\ge
	\frac{1}{\varepsilon}\sum_{i=1}^{r}
	\left|\f{p}_{i}^{\top}\DDelta\f{f}_{i}\right|^{2}
	=
	\frac{ra^{2}\cos^{2}(\theta/2)}{\varepsilon}.
	\]
	Using \eqref{eq:counterexample:betaeps} and
	\eqref{eq:counterexample:singularvalues:Delta}, we obtain
	\[
	\innerproduct{\DDelta,\W{\XX}{\varepsilon}(\DDelta)}
	\ge
	\frac{d\,a\,\cos^{2}(\theta/2)}{10}
	=
	\frac{d}{110r}\cos^{2}(\theta/2)\nucnorm{\DDelta}.
	\]
	Since $\theta\le\pi/2$, $\cos^{2}(\theta/2)\ge1/2$, and therefore
	\[
	\innerproduct{\XX-\Xzero,\W{\XX}{\varepsilon}(\XX-\Xzero)}
	\ge
	\frac{d}{220r}\nucnorm{\XX-\Xzero}.
	\]
	
	\medskip
	\noindent\textbf{One-step lower bound.}
	The feasible set of the weighted least-squares problem is the affine line
	\[
	\{\f{Z}:\mathcal{A}(\f{Z})=\mathcal{A}(\Xzero)\}
	=
	\{\Xzero+\lambda\DDelta:\lambda\in\R\}.
	\]
	Set
	\[
	A_{\DDelta}:=\innerproduct{\DDelta,\W{\XX}{\varepsilon}(\DDelta)},
	\qquad
	b_{\DDelta}:=\innerproduct{\W{\XX}{\varepsilon}(\XX),\DDelta}.
	\]
	The minimizer on the affine line is
	\[
	\XX^+=\Xzero+\lambda_+\DDelta,
	\qquad
	\lambda_+=1-\frac{b_{\DDelta}}{A_{\DDelta}}.
	\]
	Since all nonzero singular values of $\XX$ are larger than $\varepsilon$,
	\[
	\W{\XX}{\varepsilon}(\XX)
	=
	\sum_{i=1}^{r}\f{u}_{i}\f{f}_{i}^{\top}
	+
	\sum_{j=1}^{h}\f{e}_{2r+j}\f{f}_{r+j}^{\top}.
	\]
	Consequently,
	\[
	b_{\DDelta}
	=
	r\alpha(1-\cos\theta)+ha
	=
	ra\sin(\theta/2)+10ra
	\le 11ra.
	\]
	Combining this with
	$A_{\DDelta}\ge d\,a\cos^{2}(\theta/2)/10$ gives
	\[
	1-\lambda_+
	=
	\frac{b_{\DDelta}}{A_{\DDelta}}
	\le
	\frac{110r}{d\cos^{2}(\theta/2)}
	\le
	\frac{220r}{d}.
	\]
	Since $d\ge 220r$, this shows $\lambda_+\in[0,1]$.  Hence
	\[
	\nucnorm{\XX^+-\Xzero}
	=
	\lambda_+\nucnorm{\DDelta}
	\ge
	\left(1-\frac{220r}{d}\right)\nucnorm{\XX-\Xzero},
	\]
	which proves the stated lower bound for the subsequent IRLS iterate.
	\end{proof}

%% file: literature_proofs.tex
\section{Complementary Proofs} \label{sec:appendix:complementary}
In this section, we provide proofs for statements that are known in the literature (explicitly or implicitly) and point out an incorrect proof in the literature. In particular, we recall a proof for the properties of \Cref{proposition:IRLS:basicproperties} of the IRLS quadratic model in \Cref{sec:appendix:proofbasiclemma}, establish Lipschitz continuity of $\nabla \mathcal{J}_{\varepsilon}(\cdot)$ in \Cref{sec:appendix:lipschitz:gradients}, and provide existing proofs for majorization of one-sided IRLS quadratics in \Cref{sec:appendix:prior:art:majorization:proofs}. In \Cref{sec:appendix:challenges:harmonic:majorization_proof}, we discuss to what extent the majorization result has remained elusive for previous works, in particular, by pointing out a key inaccuracy in the corresponding arguments of \citet{Kummerle-JMLR2018}.
\subsection{Proof of \Cref{proposition:IRLS:basicproperties} (Properties of the IRLS Quadratic Model)} \label{sec:appendix:proofbasiclemma}
As mentioned in \Cref{sec:algo_IRLS}, the proof of  \Cref{proposition:IRLS:basicproperties} is rather straightforward, and applies for the quadratic model functions $Q_{\varepsilon}(\cdot \mid \XX)$ corresponding to any of the considered weight operator notions considered in this paper, including one-sided weight operators, the harmonic-mean weight operator, and all $q$-mean weight operators. Two ingredients are used: (i) the diagonal elements of the core matrix $\f{H}_{\ssigma, \varepsilon} \in \Rdd$ satisfy $(\f{H}_{\ssigma, \varepsilon})_{ii} = \max(\sigma_i,\varepsilon)^{-1}$ for all $i \in [d]$, which yields the gradient condition and can be easily verified for all considered weight operator notions; and (ii) the Hadamard form \cref{eq:W:operator:action} of \Cref{def:optimalweightoperator}, which implies that $\W{\XX}{\varepsilon}(\cdot)$ is self-adjoint and thus yields the simplified expression \cref{eq:QZX:equality}.

\begin{proof}[Proof of \Cref{proposition:IRLS:basicproperties}]
Let us start by computing the gradient $\nabla \mathcal{J}_{\varepsilon}(\f{X})$ 
of the smoothed nuclear norm surrogate $\mathcal{J}_{\varepsilon}(\f{X})$
defined in \cref{eq:smoothedell1:objective}. 
Note that $\mathcal{J}_{\varepsilon}(\f{X})$ is a convex spectral function of $\f{X}$, 
and thus by \cite[Proposition 6.2]{LewisSendov}, 
the gradient itself is given by the spectral formula
    \[
    \nabla \mathcal{J}_{\varepsilon}(\f{X}) 
    = \UU_{\XX} \diag\big( j'_\varepsilon(\sigma_i (\XX) ) \big)_{i=1}^{d} \VV_{\XX}^\top,
    \]
    where $\f{X} = \UU_{\XX} \diag(\ssigma) \VV_{\XX}^\top$ is the singular value decomposition of $\f{X}$. We note that
    \begin{equation*}
    j'_\varepsilon(\sigma) =
    \begin{cases}
    	1, & \sigma > \varepsilon,\\[4pt]
    	\sigma/\varepsilon, & \sigma \leq \varepsilon.
    \end{cases} = \frac{\sigma}{\max(\sigma, \varepsilon)},
    \end{equation*}
    It follows that 
    \begin{equation} \label{eq:smoothedsurrogate:gradientformula}
    \nabla \mathcal{J}_{\varepsilon}(\f{X}) 
    = \UU_{\XX} \dg\left( \frac{\sigma_i (\XX) }{\max(\sigma_i (\XX),\varepsilon)} \right)_{i=1}^{d} \VV_{\XX}^\top.
    \end{equation}  
	Note that, 
    regardless whether harmonic-mean weights or one-sided weights are used,
    the diagonal elements of $\f{H}_{\ssigma, \varepsilon} \in \Rdd$ 
    are given by $(\f{H}_{\ssigma, \varepsilon})_{ii} = \max(\sigma_i,\varepsilon)^{-1}$.
	Then the claim follows by simply inserting the definition of the weight operator 
    $\W{\XX}{\varepsilon}(\cdot)$, see \cref{eq:W:operator:action}.
    Indeed, we have that
	\begin{align*}
	\W{\XX}{\varepsilon}(\f{X}) 
    &= \UU_{\XX} \left[\f{H}_{\ssigma, \varepsilon} \circ (\UU_{\XX}^{\top} \f{X} \VV_{\XX})\right] \VV_{\XX}^{\top}= \UU_{\XX}\left[\f{H}_{\ssigma, \varepsilon} \circ \dg(\ssigma) \right] \VV_{\XX}^{\top}\\
    &= \UU_{\XX} \dg\left( \frac{\sigma_i (\XX) }{\max(\sigma_i (\XX),\varepsilon)} \right)_{i=1}^{d} \VV_{\XX}^{\top}=  \nabla \mathcal{J}_{\varepsilon}(\f{X}).
	\end{align*}
	This proves the gradient condition \eqref{eq:gradientcondition}.
	
	We now derive the specific form of $Q_{\varepsilon} (\cdot \mid \f{X})$ in equation \eqref{eq:QZX:equality}.
	First, note that \cref{eq:W:operator:action} implies that $\W{\XX}{\varepsilon}(\cdot)$ is linear and self-adjoint with respect to the Frobenius inner product.
	Indeed, for all $\f{A},\f{B} \in \Rdd$, orthogonality of $\UU_{\XX}$ and $\VV_{\XX}$ gives
	\begin{align*}
	    \innerproduct{\W{\XX}{\varepsilon}(\f{A}),\f{B}}
	    &=
	    \innerproduct{
	        \f{H}_{\ssigma,\varepsilon}\circ(\UU_{\XX}^{\top}\f{A}\VV_{\XX}),
	        \UU_{\XX}^{\top}\f{B}\VV_{\XX}
	    }\\
	    &=
	    \innerproduct{
	        \UU_{\XX}^{\top}\f{A}\VV_{\XX},
	        \f{H}_{\ssigma,\varepsilon}\circ(\UU_{\XX}^{\top}\f{B}\VV_{\XX})
	    }
	    =
	    \innerproduct{\f{A},\W{\XX}{\varepsilon}(\f{B})},
	\end{align*}
	where the middle identity holds for every real core matrix $\f{H}_{\ssigma,\varepsilon}$ (symmetry of $\f{H}_{\ssigma,\varepsilon}$ is not required), since Hadamard multiplication is entrywise and thus self-adjoint for the Frobenius product.
	Using the definition \cref{eq:smoothedell1:IRLSmajorizer}, the gradient condition \eqref{eq:gradientcondition}, and this self-adjointness, we obtain
	\begin{align*}
	Q_{\varepsilon}(\ZZ\mid\XX)
	&=
	\mathcal{J}_{\varepsilon}(\XX)
	+
	\innerproduct{\W{\XX}{\varepsilon}(\XX),\ZZ-\XX}
	+
	\frac{1}{2}
	\innerproduct{\ZZ-\XX,\W{\XX}{\varepsilon}(\ZZ-\XX)}\\
	&=
	\mathcal{J}_{\varepsilon}(\XX)
	+
	\innerproduct{\W{\XX}{\varepsilon}(\XX),\ZZ}
	-
	\innerproduct{\W{\XX}{\varepsilon}(\XX),\XX}\\
	&\quad
	+
	\frac{1}{2}
	\innerproduct{\ZZ,\W{\XX}{\varepsilon}(\ZZ)}
	-
	\innerproduct{\ZZ,\W{\XX}{\varepsilon}(\XX)}
	+
	\frac{1}{2}
	\innerproduct{\XX,\W{\XX}{\varepsilon}(\XX)}\\
	&=
	\mathcal{J}_{\varepsilon}(\XX)
	+
	\frac{1}{2}
	\innerproduct{\ZZ,\W{\XX}{\varepsilon}(\ZZ)}
	-
	\frac{1}{2}
	\innerproduct{\XX,\W{\XX}{\varepsilon}(\XX)}.
	\end{align*}
	This proves \eqref{eq:QZX:equality}. Finally, the statement $Q_{\varepsilon}(\XX\mid\XX) = \mathcal{J}_{\varepsilon}(\XX)$ follows immediately as 
	$\frac{1}{2}
	\innerproduct{\XX,\W{\XX}{\varepsilon}(\XX)}
	-
	\frac{1}{2}
	\innerproduct{\XX,\W{\XX}{\varepsilon}(\XX)} = 0$.
\end{proof}

\subsection{Lipschitz Gradients of Smoothed Nuclear Norm} \label{sec:appendix:lipschitz:gradients}
In \Cref{sec:algo_IRLS}, it was claimed that the $\varepsilon$-smoothed nuclear norm $\mathcal{J}_{\varepsilon}(\cdot)$ is differentiable with a Lipschitz-continuous gradient. We now provide a proof of this claim.

In particular, we observe that the gradient $\X \mapsto \nabla \mathcal{J}_{\varepsilon}(\X)$ of \cref{eq:smoothedsurrogate:gradientformula} is a non-Hermitian Loewner operator \citep{Loewner34,SunSun08,Ding18}. The framework of \citep{andersson2016operator} provides an exact quantification of the Lipschitz properties of such operators if the function that is applied to each singular value is the same, which is the case here with $g_\varepsilon: \sigma \mapsto \frac{\sigma}{\max(\sigma,\varepsilon)}$. $g_\varepsilon$ is Lipschitz continuous with Lipschitz constant $L = \frac{1}{\varepsilon}$, so by \cite[Theorem 1.1]{andersson2016operator}, the gradient $\X \mapsto \nabla \mathcal{J}_{\varepsilon}(\X)$ is Lipschitz continuous with Lipschitz constant $L = \frac{1}{\varepsilon}$ with respect to the Frobenius norm, which proves the claim.

\subsection{Prior Art of Majorization Proofs for Low-Rank IRLS Algorithms} \label{sec:appendix:prior:art:majorization:proofs}

For quadratic model functions defined by one-sided weight operators \cref{eq:leftsided:mean,eq:rightsided:mean} and associated IRLS methods \citep{Fornasier11,mohan_fazel}, \cref{eq:minimizer:majorization} follows from the \emph{global majorization property} as established in
previous works \citep[Section 2.3.2]{Fornasier11,christian_thesis}, which we restate below for completeness.
\begin{proposition}[{Global Majorization of One-Sided Quadratic Models}] \label{prop:global:majorization:onesided}
	Let $\varepsilon > 0$, let $\mathcal{J}_{\varepsilon}: \Rdd  \to \R$ be the $\varepsilon$-smoothed nuclear norm \cref{eq:smoothedell1:objective} and $Q_{\varepsilon}(\cdot \mid \XX): \Rdd \to \R$ be the quadratic model function of \cref{eq:smoothedell1:IRLSmajorizer} defined by the left-sided or right-sided weight operator \cref{eq:W:operator:action} using core matrix \cref{eq:leftsided:mean} or \cref{eq:rightsided:mean}, respectively. Then, $Q_{\varepsilon}(\cdot \mid \XX)$ majorizes $\mathcal{J}_{\varepsilon}$ globally:
	\begin{equation} \label{eq:majorization_property:proofsection}
		Q_{\varepsilon}(\ZZ \mid\XX) \geq \mathcal{J}_{\varepsilon}(\ZZ) \qquad \text{for each } \ZZ,\XX \in \Rdd.
	\end{equation}
\end{proposition}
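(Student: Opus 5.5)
The plan is to reuse the scheme of the harmonic-mean proof of \Cref{thm:majorization}, which becomes much simpler for one-sided weights. We treat the left-sided core matrix \cref{eq:leftsided:mean}; the right-sided case then follows by transposing $\XX$ and $\ZZ$. Transposition leaves singular values, and hence $\mathcal{J}_{\varepsilon}$, unchanged, and it exchanges the two one-sided operators.

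\textbf{Step 1: simplify the model.} Since the diagonal of the core matrix equals $\max(\sigma_i,\varepsilon)^{-1}$, \Cref{proposition:IRLS:basicproperties} applies. Repeating the computation that led to \cref{eq:Q_epsilon_intermediate} gives
\[
Q_{\varepsilon}(\ZZ\mid\XX)=\tfrac12\sum_{k=1}^d\lambda_k+\tfrac12\innerproduct{\ZZ,\W{\XX}{\varepsilon}(\ZZ)},
\]
where $\lambda_k=\max(\sigma_k,\varepsilon)$.

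\textbf{Step 2: evaluate the weighted term.} By \cref{eq:onesided:action}, $\W{\XX}{\varepsilon}(\ZZ)=\LL_{\UU}^{-1}\ZZ$. Write $\ZZ=\sum_k\sigma_k(\ZZ)\uu_k\vv_k^\top$. Then
\[
\innerproduct{\ZZ,\W{\XX}{\varepsilon}(\ZZ)}=\sum_{k=1}^d\sigma_k^2(\ZZ)\innerproduct{\LL_{\UU}^{-1},\uu_k\uu_k^\top}.
\]
No pinching is needed here, because the right singular vectors drop out exactly. Next apply Cauchy--Schwarz in the form $1=(\uu_k^\top\uu_k)^2\le(\uu_k^\top\LL_{\UU}^{-1}\uu_k)(\uu_k^\top\LL_{\UU}\uu_k)$, or equivalently Jensen for the convex map $t\mapsto 1/t$ over the spectral measure of $\uu_k$. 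This gives
\[
\innerproduct{\LL_{\UU}^{-1},\uu_k\uu_k^\top}\ge\frac{1}{\alpha_k},\qquad \alpha_k:=\innerproduct{\LL_{\UU},\uu_k\uu_k^\top}.
\]

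\textbf{Step 3: control the constant term.} By Ky Fan / Wielandt, exactly as in step (a)--(c) of the proof of \Cref{thm:majorization}, $\sum_{k=1}^d\alpha_k\le\sum_{k=1}^d\lambda_k$. Combining the three steps,
\[
Q_{\varepsilon}(\ZZ\mid\XX)\ge\sum_{k=1}^d\frac{\alpha_k^2+\sigma_k^2(\ZZ)}{2\alpha_k}.
\]

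\textbf{Step 4: scalar case analysis.} Since $\alpha_k\ge\varepsilon$, the scalar argument from the end of the proof of \Cref{thm:majorization} applies to each summand. If $\sigma_k(\ZZ)\ge\varepsilon$, AM--GM gives a lower bound of $\sigma_k(\ZZ)$. Otherwise, monotonicity of $x\mapsto(x^2+s^2)/(2x)$ for $x\ge s$ gives a lower bound of $\sigma_k^2(\ZZ)/(2\varepsilon)+\varepsilon/2$. Summing over $k$ yields $\mathcal{J}_{\varepsilon}(\ZZ)$.

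\textbf{Main obstacle.} The only delicate point is Step 3 when $d_1>d$: the vectors $\uu_1,\dots,\uu_d$ are $d$ orthonormal vectors in $\R^{d_1}$, and we must make sure their Ky Fan sum is bounded by the $d$ \emph{largest} eigenvalues $\lambda_1,\dots,\lambda_d$ and not by all $d_1$ of them. This holds because the eigenvalues of $\LL_{\UU}$ are nonincreasing. The padded eigenvalues equal $\varepsilon$ and only enter as a lower bound.
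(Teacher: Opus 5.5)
Your proposal is correct and coincides with the paper's third proof of this proposition (\Cref{sec:appendix:proof_majorization_one_sided_weights}). That proof specializes the harmonic-mean argument exactly as you do: Cauchy--Schwarz gives $\innerproduct{\LL_{\UU}^{-1},\uu_k\uu_k^\top}\ge 1/\innerproduct{\LL_{\UU},\uu_k\uu_k^\top}$ in place of the pinching lemma, Wielandt's minimax principle controls the constant term, and the same scalar case distinction with $\alpha_k=\innerproduct{\LL_{\UU},\uu_k\uu_k^\top}\ge\varepsilon$ finishes it. The paper also gives two other proofs, one via concavity of the trace function $G_\varepsilon(\ZZ\ZZ^\top)$ and one via a variational envelope over an auxiliary matrix, but yours is the one that mirrors the harmonic-mean proof.
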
 
It is easy to see that \Cref{prop:global:majorization:onesided} implies \cref{eq:minimizer:majorization} by choosing $\ZZ = \Xk{(k)}$ and $\XX = \Xk{(k-1)}$. For one-sided weight operators as used in the algorithms of \citet{Fornasier11} and \citet{mohan_fazel}, \Cref{prop:global:majorization:onesided} can be shown, for example, using variational arguments via the definition of a suitable auxiliary functional, which is in line with the literature on IRLS methods for separable objectives such as $\ell_1$-type norms \citep{GemanReynolds92,CharbonnierBFAB97,Daubechies-CPAM2010,ochs_dosovitskiy_brox_pock}.

In the remainder of this section, we provide two different proofs for \cref{eq:majorization_property:proofsection} in the case of one-sided weights in \Cref{sec:appendix:concavity:majorization_proof} and \Cref{sec:appendix:variational:majorization_proof}, respectively, and a third proof in \Cref{sec:appendix:proof_majorization_one_sided_weights} via a specialization of the proof strategy of \Cref{sec:appendix:harmonic:majorization_proof}. Finally, we discuss why these proof strategies do not extend to the case of harmonic-mean weights in \Cref{sec:appendix:challenges:harmonic:majorization_proof}.

\subsubsection{Proof of \Cref{prop:global:majorization:onesided} Using Concavity Arguments} \label{sec:appendix:concavity:majorization_proof}
Apart from low-rank matrix optimization problems, the vast majority of the theory of iteratively reweighted least squares algorithms is specialized to coordinatewise separable objectives such as $\ell_1$-norms or other functions that are sums of coordinatewise vector functions. For such cases, a variety of works have established frameworks for how to derive suitable quadratic auxiliary objectives $Q_{\varepsilon}(\cdot \mid \f{X})$ that provably majorize a given (smoothed) surrogate function. We refer to \citet{GemanReynolds92} and \citet{CharbonnierBFAB97} for classical works in the image processing literature (in which this methodology is commonly referred to as \emph{half-quadratic minimization}) and the work of \citet{ochs_dosovitskiy_brox_pock} for an exposition tailored to computer vision.

It turns out that the proof strategies tailored to separable objectives can be adapted 
to show the majorization property \cref{eq:majorization_property:proofsection} 
for one-sided weight operators with core matrices \cref{eq:leftsided:mean} or \cref{eq:rightsided:mean} relatively straightforwardly. 
The key idea is to use the concavity of the term $\mathcal{J}_{\varepsilon}(\ZZ)$ in a matrix variable after a suitable change of variables, 
and has been detailed by \citet[Section 2.3.2]{christian_thesis} for rank surrogate optimization. We present the proof below.

\begin{proof}[First Proof of \cref{eq:majorization_property:proofsection} for One-Sided Weights]
We show the statement without loss of generality for the case of left-sided weight operators. We recall that the smoothed nuclear norm surrogate \cref{eq:smoothedell1:objective} is given by $\mathcal{J}_{\varepsilon}(\ZZ) := \sum_{i=1}^d j_{\varepsilon}(\sigma_i(\ZZ))$ with real-valued penalization functions $j_{\varepsilon}: \R \to \R$ such that
\begin{equation*}
    j_{\varepsilon}(\sigma) := \begin{cases}
         |\sigma|, & \text{ if } |\sigma| > \varepsilon, \\
         \frac{\sigma^2}{2 \varepsilon}+ \frac{\varepsilon}{2}, & \text{ if } |\sigma| \leq \varepsilon.
     \end{cases}
\end{equation*} 
Accordingly, we define the function $g_\varepsilon: \R_{\geq 0} \to \R$ by
\begin{align*}
g_\varepsilon(\lambda) 
:=
j_{\varepsilon}(\sqrt{\lambda})
= 
\begin{cases}
	\sqrt{\lambda}, & \text{ if } \lambda > \varepsilon^2, \\
	\frac{\lambda}{2 \varepsilon}+ \frac{\varepsilon}{2}, & \text{ if } 0 \leq \lambda \leq \varepsilon^2.
\end{cases}
\end{align*}
For a PSD matrix $\f{M} \in \R^{d_1 \times d_1}$, 
let $G_\varepsilon(\f{M}) := \mathrm{tr}[ g_\varepsilon(\f{M})]$,
where $g_\varepsilon(\f{M})$ is defined via the standard functional calculus for symmetric matrices,
i.e.,
if $\f{M} = \UU \diag(\lambda_1, \ldots, \lambda_{d_1}) \UU^\top$ is the eigendecomposition of $\f{M}$,
then $g_\varepsilon(\f{M}) = \UU \diag(g_\varepsilon(\lambda_1), \ldots, g_\varepsilon(\lambda_{d_1})) \UU^\top$.
Next, we notice that for any $\f{Z} \in \mathbb{R}^{d_1 \times d_2}$ we have
\begin{equation} \label{eq:J:and:G:connection}
\mathcal{J}_{\varepsilon}(\ZZ) 
=
\sum_{i=1}^d j_{\varepsilon}(\sigma_i (\ZZ))
=
\sum_{i=1}^d g_\varepsilon(\sigma_i^2 (\ZZ))
=
G_\varepsilon(\ZZ\ZZ^\top) - \max(0, d_1-d_2) \frac{\varepsilon}{2}.
\end{equation}
Since $g_\varepsilon'(\lambda) = \frac{1}{2 \max(\sqrt{\lambda},\varepsilon)}$ is decreasing on the interval $(0, \infty)$, 
the function $g_\varepsilon$ is concave on this interval, and so is the induced spectral trace function $G_\varepsilon$ \citep[see, e.g.,][Theorem 2.10]{carlen2010trace}.    
We also have that 
$\nabla G_\varepsilon(\XX\XX^\top) = \UU_{\XX} \diag(g_\varepsilon'(\sigma^2_i (\XX) ))_{i=1}^{d_1} \UU_{\XX}^\top$
by \citet[Proposition 6.2]{LewisSendov}, 
where we recall that 
\begin{align*}
\XX\XX^\top = \UU_{\XX} \diag (\sigma_1^2 (\XX), \ldots, \sigma_{d_1}^2 (\XX)) \UU_{\XX}^\top
\end{align*}
is an eigendecomposition of $\XX\XX^\top$ with square matrix $\UU_{\XX} \in \R^{d_1 \times d_1}$ and the convention that $\sigma_i(\XX) = 0$ for $i > d = \min(d_1, d_2)$.
Thus,
we obtain that
\begin{align*}
\nabla G_\varepsilon(\XX\XX^\top) 
&=
\frac{1}{2} \UU_{\XX} \diag\left( \frac{1}{ \max(\sigma_i (\XX),\varepsilon)} \right)_{i=1}^{d_1} \UU_{\XX}^\top\\
&=
\frac{1}{2} 
\left(
\UU_{\XX} \diag\left( \max(\sigma_i (\XX),\varepsilon) \right)_{i=1}^{d_1} \UU_{\XX}^\top
\right)^{-1}
= \frac{1}{2} \LL_\UU^{-1},
\end{align*}
where $\LL_{\UU} = \UU_{\XX} \diag ( \max(\sigma_1(\XX),\varepsilon), \ldots, \max(\sigma_{d_1}(\XX),\varepsilon) ) \UU_{\XX}^\top \in \R^{d_1 \times d_1}$, cf. also \Cref{sec:algo_IRLS}. Due to the concavity of the entrywise functions $g_\varepsilon$ and \citep[Proposition 6.1]{LewisSendov}, we know that $G_\varepsilon: \R^{d_1 \times d_1} \to \R$ is also a concave function. Consequently, we can upper bound $G_\varepsilon(\ZZ\ZZ^\top)$ by its linearization in $\XX\XX^\top$ such that
\begin{align*}
     \mathcal{J}_{\varepsilon}(\ZZ)
&= G_\varepsilon(\ZZ\ZZ^\top) - \max(0, d_1-d_2) \frac{\varepsilon}{2} \\
&\leq G_\varepsilon(\XX\XX^\top) 
+ \langle \nabla G_\varepsilon(\XX\XX^\top), \ZZ\ZZ^\top - \XX\XX^\top\rangle - \max(0, d_1-d_2) \frac{\varepsilon}{2} \\
&=
G_\varepsilon(\XX\XX^\top)
+
\frac{1}{2} \innerproduct{ \LL_{\UU}^{-1} \ZZ , \ZZ }
-
\frac{1}{2} \innerproduct{ \LL_{\UU}^{-1} \XX , \XX } - \max(0, d_1-d_2) \frac{\varepsilon}{2} \\
&\overeq{(a)}
\mathcal{J}_{\varepsilon}(\XX)
+
\frac{1}{2} \innerproduct{ W_{\XX,\varepsilon} (\ZZ) , \ZZ }
-
\frac{1}{2} \innerproduct{ W_{\XX,\varepsilon} (\XX) , \XX }\\
&\overeq{(b)}  Q_\varepsilon(\ZZ \mid \XX),
\end{align*}
where we used \cref{eq:J:and:G:connection} for $\ZZ = \XX$ in equality (a)
as well as that $\WW_{\XX,\varepsilon} (\ZZ) = \LL_{\UU}^{-1} \ZZ$
for all $\ZZ \in \Rdd$.
In equality (b), we used the consequence \cref{eq:QZX:equality} of the gradient condition \cref{eq:gradientcondition}. This concludes the proof.
\end{proof}

Extending this proof strategy to weight operators $\W{\XX}{\varepsilon} (\cdot)$ with harmonic-mean core matrices \cref{eq:harmonic:mean} has remained elusive to the authors; a simple change of variables does not suffice as the weight operator cannot be represented as simply a left- (or right-)matrix multiplication in this case.

\subsubsection{Proof of \Cref{prop:global:majorization:onesided} Using Variational Calculus} \label{sec:appendix:variational:majorization_proof}
The one-sided majorization property can also be derived from the auxiliary variational formulation
underlying the IRLS-M algorithm of \citet{Fornasier11}.
We spell out the argument for the left-sided weights; the right-sided case follows by applying the
same argument to the transposed matrices.

\begin{proof}[Second Proof of \cref{eq:majorization_property:proofsection} for One-Sided Weights]
Let $\XX \in \Rdd$ be fixed and assume that the left-sided weight core matrix
\cref{eq:leftsided:mean} is used.
For a symmetric positive definite matrix $\f{M}\in\R^{d_1\times d_1}$, define
\begin{equation} \label{eq:variational:left:auxiliary:functional}
    \Phi_{\varepsilon}^{L}(\ZZ,\f{M})
    :=
    \frac{1}{2}\innerproduct{\f{M}\ZZ,\ZZ}
    +
    \frac{1}{2}\trace(\f{M}^{-1}),
\end{equation}
on the restricted domain for the second matrix variable $\f{M}$ such that $0 \prec \f{M} \preceq \varepsilon^{-1}\Id_{d_1}$. This is the left-sided auxiliary functional used by \citet[Section 5]{Fornasier11}. If $\ZZ = \UU_{\ZZ}\dg(\ssigma(\ZZ))\VV_{\ZZ}^{\top}$ is a full singular value decomposition and
we use the convention $\sigma_i(\ZZ)=0$ for $i>d=\min(d_1,d_2)$, then minimizing the preceding
functional over $\f{M}$ gives
\begin{equation} \label{eq:variational:left:minimizer}
    \f{M}_{\ZZ,\varepsilon}^{L}
    =
    \UU_{\ZZ}
    \diag\left(
        \frac{1}{\max(\sigma_i(\ZZ),\varepsilon)}
    \right)_{i=1}^{d_1}
    \UU_{\ZZ}^{\top}.
\end{equation}
To see this, rotate the auxiliary variable into the left singular-vector basis and write
$\f{N}:=\UU_{\ZZ}^{\top}\f{M}\UU_{\ZZ}$.
Then $0\prec \f{N}\preceq \varepsilon^{-1}\Id_{d_1}$ and, with
$a_i:=\sigma_i^2(\ZZ)$ for $i=1,\ldots,d_1$,
\begin{equation*}
    \Phi_{\varepsilon}^{L}(\ZZ,\f{M})
    =
    \frac{1}{2}\sum_{i=1}^{d_1} a_i \f{N}_{ii}
    +
    \frac{1}{2}\trace(\f{N}^{-1}).
\end{equation*}
For every positive definite $\f{N}$ and every coordinate vector $\f{e}_i$, the Cauchy--Schwarz
inequality gives
\begin{equation*}
    1
    =
    \left(
    (\f{N}^{-1/2}\f{e}_i)^\top(\f{N}^{1/2}\f{e}_i)
    \right)^2
    \leq
    (\f{N}^{-1})_{ii}\f{N}_{ii},
\end{equation*}
and hence $(\f{N}^{-1})_{ii}\geq 1/\f{N}_{ii}$.
Since $0<\f{N}_{ii}\leq \varepsilon^{-1}$, it follows that
\begin{align*}
    \Phi_{\varepsilon}^{L}(\ZZ,\f{M})
    &\geq
    \frac{1}{2}
    \sum_{i=1}^{d_1}
    \left(
        a_i \f{N}_{ii}
        +
        \frac{1}{\f{N}_{ii}}
    \right)\\
    &\geq
    \frac{1}{2}
    \sum_{i=1}^{d_1}
    \min_{0<m\leq \varepsilon^{-1}}
    \left(
        a_i m
        +
        \frac{1}{m}
    \right).
\end{align*}
The scalar minimizer is $m_i^{\star}=1/\max(\sigma_i(\ZZ),\varepsilon)$: if
$\sigma_i(\ZZ)\geq\varepsilon$, this is the unconstrained critical point
$m=1/\sigma_i(\ZZ)$, whereas if $\sigma_i(\ZZ)<\varepsilon$ the minimum over
$(0,\varepsilon^{-1}]$ is attained at the boundary $m=\varepsilon^{-1}$.
Choosing $\f{N}=\diag(m_1^{\star},\ldots,m_{d_1}^{\star})$ makes the preceding lower bounds
equalities. Transforming back from $\f{N}$ to $\f{M}$ proves
\cref{eq:variational:left:minimizer}.
At this minimizer, the $i$-th scalar contribution is
$j_{\varepsilon}(\sigma_i(\ZZ))$ for $i\leq d$ and $\varepsilon/2$ for $i>d$.
Consequently,
\begin{equation} \label{eq:variational:left:envelope}
    \min_{0 \prec \f{M} \preceq \varepsilon^{-1}\Id_{d_1}}
    \Phi_{\varepsilon}^{L}(\ZZ,\f{M})
    =
    \mathcal{J}_{\varepsilon}(\ZZ)
    +
    \frac{d_1-d}{2}\varepsilon .
\end{equation}
The additional constant appears only when $d_1>d_2$, because the remaining
$d_1-d$ zero eigenvalue directions of $\ZZ\ZZ^\top$ each contribute $\varepsilon/2$.

We now freeze the auxiliary variable at the minimizer associated with the base point $\XX$.
Writing $\LL_{\UU}
    :=
    \UU_{\XX}
    \diag\left(\max(\sigma_i(\XX),\varepsilon)\right)_{i=1}^{d_1}
    \UU_{\XX}^{\top}$, the minimizer \eqref{eq:variational:left:minimizer} for $\ZZ=\XX$ is
$\f{M}_{\XX,\varepsilon}^{L}=\LL_{\UU}^{-1}$. It turns out that this coincides exactly (see also \Cref{sec:algo_IRLS}) with the action of the left-sided weight operator $\W{\XX}{\varepsilon}(\cdot)$ \cref{eq:leftsided:mean,eq:W:operator:action} such that $
    \W{\XX}{\varepsilon}(\ZZ)=\LL_{\UU}^{-1}\ZZ
    =
    \f{M}_{\XX,\varepsilon}^{L}\ZZ$.
Using \eqref{eq:variational:left:envelope} first at $\ZZ$ and then at $\XX$, we obtain
\begin{align*}
    \mathcal{J}_{\varepsilon}(\ZZ)
    +
    \frac{d_1-d}{2}\varepsilon
    &\leq
    \Phi_{\varepsilon}^{L}(\ZZ,\f{M}_{\XX,\varepsilon}^{L})\\
    &=
    \frac{1}{2}\innerproduct{\W{\XX}{\varepsilon}(\ZZ),\ZZ}
    +
    \frac{1}{2}\trace(\LL_{\UU})\\
    &=
    \frac{1}{2}\innerproduct{\W{\XX}{\varepsilon}(\ZZ),\ZZ}
    +
    \mathcal{J}_{\varepsilon}(\XX)
    +
    \frac{d_1-d}{2}\varepsilon
    -
    \frac{1}{2}\innerproduct{\W{\XX}{\varepsilon}(\XX),\XX}.
\end{align*}
After cancelling the dimension-dependent constant, \cref{eq:QZX:equality} yields
\begin{align*}
    \mathcal{J}_{\varepsilon}(\ZZ)
    &\leq
    \mathcal{J}_{\varepsilon}(\XX)
    +
    \frac{1}{2}\innerproduct{\W{\XX}{\varepsilon}(\ZZ),\ZZ}
    -
    \frac{1}{2}\innerproduct{\W{\XX}{\varepsilon}(\XX),\XX} =
    Q_{\varepsilon}(\ZZ\mid\XX),
\end{align*}
which proves \cref{eq:majorization_property:proofsection} for left-sided weights.
The right-sided case follows analogously by replacing $\ZZ\ZZ^\top$ by $\ZZ^\top\ZZ$ and
$\LL_{\UU}^{-1}\ZZ$ by $\ZZ\LL_{\VV}^{-1}$.
\end{proof}

\subsubsection{Proof of \Cref{prop:global:majorization:onesided} Based on \Cref{sec:appendix:harmonic:majorization_proof}}\label{sec:appendix:proof_majorization_one_sided_weights}

For completeness, we provide an application of the proof strategy of \Cref{sec:appendix:harmonic:majorization_proof} to the quadratic model majorization result of \Cref{prop:global:majorization:onesided} for one-sided weight operators, providing an alternative for the existing proofs presented in the preceding sections. Compared to the harmonic-mean arguments of \Cref{sec:appendix:lb_weighted_ip}, the argument is shortened significantly.

\begin{proof}[Third Proof of \cref{eq:majorization_property:proofsection} for One-Sided Weights]
We show the argument for left-sided weights; the proof for right-sided weights follows by transposition.
The computation of $\innerproduct{\W{\XX}{\varepsilon}(\XX),\XX}$ and the resulting representation of $Q_\varepsilon(\ZZ\mid \XX)$ are identical to the beginning of the proof of \Cref{thm:majorization} in \Cref{sec:appendix:harmonic:majorization_proof}, because all admissible weight cores have diagonal entries
$(\f{H}_{\ssigma,\varepsilon})_{ii}=1/\max(\sigma_i(\XX),\varepsilon)$.
Thus, the only ingredient specific to the one-sided case is the lower bound for
$\innerproduct{\W{\XX}{\varepsilon}(\ZZ),\ZZ}$. We recall from \Cref{sec:algo_IRLS} that the action of the left-sided weight operator $\W{\XX}{\varepsilon}(\cdot)$ is given by $$\W{\XX}{\varepsilon} (\ZZ) =
\LL_{\UU}^{-1} \ZZ $$
with 
\begin{equation*}
\LL_{\UU}
=
\UU_{\XX} \diag (\lambda_1, \ldots, \lambda_{d_1}) \UU_{\XX}^\top,
\end{equation*}
where $\UU_{\XX}$ is the matrix of left singular vectors of $\XX$ and $\lambda_i = \max(\sigma_i (\XX), \varepsilon)$ for $1 \le i \le d_1$.
Let $\ZZ=\sum_{k=1}^d \sigma_k(\ZZ)\uu_k\vv_k^\top$ be an SVD of $\ZZ$.
Then
\begin{align*}
\innerproduct{ \W{\XX}{\varepsilon} (\ZZ), \ZZ }
    =
    \sum_{k=1}^d \sigma_k^2 (\ZZ) \innerproduct{ \LL_{\UU}^{-1}, \uu_k \uu_k^\top }
    \ge
    \sum_{k=1}^d
    \frac{\sigma_k^2 (\ZZ)}{ \innerproduct{\LL_{\UU}, \uu_k \uu_k^\top } }
\end{align*}
where the inequality follows by the Cauchy-Schwarz estimate
$1\le \|\LL_{\UU}^{-1/2}\uu_k\|_2\|\LL_{\UU}^{1/2}\uu_k\|_2$ taken to the power two.
This is precisely the one-sided analogue of Lemma \ref{lemma:lower_bound_weighted_inner_product}.
The remaining spectral and scalar estimates are the same as in
the previous section, with the averaged quantity
$\innerproduct{\LL_{\UU},\uu_k\uu_k^\top}/2+\innerproduct{\LL_{\VV},\vv_k\vv_k^\top}/2$
replaced by $\innerproduct{\LL_{\UU},\uu_k\uu_k^\top}$.
Indeed, Wielandt's minimax principle gives
\[
    \sum_{k=1}^d \lambda_k
    \ge
    \sum_{k=1}^d \innerproduct{\LL_{\UU},\uu_k\uu_k^\top},
\]
and the scalar case distinction from the harmonic proof applies with
$\alpha_k=\innerproduct{\LL_{\UU},\uu_k\uu_k^\top}$.
This establishes the desired majorization
$Q_{\varepsilon}(\ZZ\mid\XX)\ge \mathcal{J}_{\varepsilon}(\ZZ)$.

\end{proof}

\subsubsection{Challenges for Harmonic-Mean Majorization Proofs} \label{sec:appendix:challenges:harmonic:majorization_proof}
It is natural to ask whether the variational proof from
\Cref{sec:appendix:variational:majorization_proof} can be adapted to harmonic-mean weights.
A proof attempt in this direction was made by \citet[Definition~13 and Lemma~14]{Kummerle-JMLR2018}
for a smoothed Schatten-$p$ surrogate, which is a generalization of the smoothed nuclear norm $\mathcal{J}_{\varepsilon}(\cdot)$ of \cref{eq:smoothedell1:objective} to include nonconvex Schatten-$p$ quasi-norms corresponding to $0 < p \leq 1$ (the Schatten-$1$ norm coincides with the nuclear norm). We briefly outline this proof strategy and point out why it does not directly provide a complete proof of the harmonic-mean majorization property.

The idea \citep[by][pp.~26--27]{Kummerle-JMLR2018} is to introduce, for fixed $\XX$ and smoothing parameter $\varepsilon$, an auxiliary matrix variable $\f{M}$ and the weight operator matrix $\widetilde{\f{W}}(\f{M})$ defined as
\[
    \widetilde{\f{W}}(\f{M})
    =
    2\big[\Id_{d_2}\otimes(\f{M}\f{M}^\top)^{1/2}\big]
    \big[(\f{M}\f{M}^\top)^{1/2}\oplus(\f{M}^\top\f{M})^{1/2}\big]^{-1}
    \big[(\f{M}^\top\f{M})^{1/2}\otimes\Id_{d_1}\big],
\]
where $\oplus$ denotes the Kronecker sum and $\otimes$ the Kronecker product. Here, $\widetilde{\f{W}}(\f{M})$ is of size $(d_1 d_2) \times (d_1 d_2)$. In the square non-singular case, this is rewritten as the harmonic mean $2(\widetilde{\f{W}}_1^{-1}+\widetilde{\f{W}}_2^{-1})^{-1}$, where $\widetilde{\f{W}}_1:\R^{d_1 d_2} \to \R^{d_1 d_2}$ is the matrix representation of the left-sided matrix multiplication operator $\Id_{d_2}\otimes(\f{M}\f{M}^\top)^{1/2}$ and $\widetilde{\f{W}}_2:\R^{d_1 d_2} \to \R^{d_1 d_2}$ of the right-sided matrix multiplication operator $(\f{M}^\top\f{M})^{1/2}\otimes\Id_{d_1}$, respectively. With this notation, an analogue of \cref{eq:variational:left:auxiliary:functional} can be defined as
\begin{equation} \label{eq:jmlr:auxiliary:functional:schematic}
    \Phi_{\varepsilon}(\f{Z},\f{M})
    :=
    \frac{1}{2}
    \left\langle
        \ZZ_{\operatorname{vec}},
        \widetilde{\f{W}}(\f{M})\ZZ_{\operatorname{vec}}
    \right\rangle
    +
    \frac{\varepsilon^2}{2}
    \sum_{i=1}^{d}\sigma_i(\f{M})
    +
    \frac{1}{2}
    \sum_{i=1}^{d}\sigma_i(\f{M})^{-1},
\end{equation}
which is essentially Definition 13 of \citet{Kummerle-JMLR2018} for $p=1$.
The claimed minimizer of $\Phi_{\varepsilon}(\f{Z},\cdot)$ with respect to $\f{M}$
\citep[stated by][Lemma~14]{Kummerle-JMLR2018} is aligned with the singular vectors of $\ZZ$,
namely
\begin{equation} \label{eq:jmlr:auxiliary:minimizer}
    \f{M}_{\operatorname{opt}}
    =
    \UU_{\ZZ}
    \diag\left(
        (\sigma_i(\ZZ)^2+\varepsilon^2)^{-1/2}
    \right)_{i=1}^{d}
    \VV_{\ZZ}^{\top}.
\end{equation}
If this variational characterization were available, freezing the auxiliary variable at the
minimizer $\f{M}_{\operatorname{opt}}$ associated with the base point $\f{Z}$ would provide an alternating-minimization explanation for the harmonic-mean weight update and prove a majorization statement akin to \cref{eq:majorization_property:proofsection} for a quadratic model function $Q_{\varepsilon}(\cdot\mid\ZZ)$ using a harmonic-mean weight operator \citep[with the minor technical difference that][use a slightly different Schatten-$p$ smoothing than $\mathcal{J}_{\varepsilon}(\cdot)$ from this paper]{Kummerle-JMLR2018}.

However, it turns out that \citet{Kummerle-JMLR2018} do not provide a complete proof of the optimality of \cref{eq:jmlr:auxiliary:minimizer} with respect to $\Phi_{\varepsilon}(\f{Z},\cdot)$ of \cref{eq:jmlr:auxiliary:functional:schematic}. The main issue is that the harmonic-mean weight depends simultaneously on the left and right singular spaces through an inverse Kronecker-sum, or equivalently through a Sylvester operator. Consequently, the auxiliary
minimization in $\f{M}$ does not decouple into independent scalar minimizations after one change of
basis, in contrast to the one-sided functional $\Phi_{\varepsilon}^{L}$ in
\Cref{sec:appendix:variational:majorization_proof}.

Specifically, there are at least two substantive gaps in the critical-point calculation of \citet[Lemma 14]{Kummerle-JMLR2018}: First, the differentiation of matrix square roots and inverse square
roots is treated as if the scalar chain rule applied directly to
$(\f{M}^\top\f{M})^{-1/2}$ and $(\f{M}\f{M}^\top)^{-1/2}$ \citep[see][p.~42, eq.~(58)]{Kummerle-JMLR2018}. This is
not valid for noncommuting matrix perturbations; the Fr\'echet derivative of a matrix power
involves divided differences or an equivalent Sylvester-type operator \citep{DaletskiiKrein-1965Integration} and \citep[Theorem 3.8]{Noferini17}. Thus, the stationarity equation \citep[Eq. (60)]{Kummerle-JMLR2018} used later in the proof of \citep[Section B.2]{Kummerle-JMLR2018} is not justified by the cited matrix calculus rules.

The second substantive gap is that, after deriving the stationarity equation, the proof of \citet[Section B.2]{Kummerle-JMLR2018} uses
informal linear-algebra implications to conclude that the singular vectors of the auxiliary variable $\f{M}$ must align with
those of $\ZZ$. For example, the commutation relation
$\SSigma^4\f{B}\f{B}^\top=\f{B}\f{B}^\top\SSigma^4$ obtained by \citet[Eq. (80)]{Kummerle-JMLR2018} does not force $\f{B}\f{B}^\top$ to be diagonal unless additional nondegeneracy assumptions on the diagonal entries of $\SSigma$ are imposed. Even diagonality of
$\f{B}\f{B}^\top$ would not by itself force $\f{B}$ to be diagonal, contrary to the third paragraph after eq.~(80) of \citet{Kummerle-JMLR2018}. Hence, the key alignment claim needed to identify the supposed global minimizer $\f{M}_{\operatorname{opt}}$ is not established.

Thus, while the variational strategy correctly identifies the formal harmonic-mean weight update,
it leaves open the essential inequality needed for majorization. The work done in \Cref{sec:appendix:harmonic:majorization_proof} provides a direct lower bound for the weighted inner product $\innerproduct{\W{\XX}{\varepsilon}(\ZZ),\ZZ}$, instead of relying exclusively on such a variational envelope.

We believe that the statement of \citet[Lemma 14]{Kummerle-JMLR2018} is correct for $p=1$, but likely incorrect for $0 < p < 1$. As the focus of this paper is the nuclear norm objective corresponding to $p=1$, we leave a clarification of majorization properties of IRLS-type quadratic models for nonconvex rank surrogates to future work.